\newcommand{\subparagraph}{}
\def\UrlSpecials{\do\~{\kern -.15em\lower .7ex\hbox{~}\kern .04em}} \catcode`~=13 
\newcommand{\vecz}{\mathbf{0}}
\newcommand{\norm}[1]{\left\Vert#1\right\Vert}
\newcommand{\normt}[1]{\Vert#1\Vert}
\newcommand{\abs}[1]{\left\lvert#1\right\rvert}
\newcommand{\lrang}[1]{\left\langle#1\right\rangle}
\newcommand{\nn}{\nonumber}
\newcommand{\defeq}{\triangleq}
\newcommand{\tilell}{\widetilde{\ell}}
\newcommand{\convu}{\xrightarrow{{\rm u}}}
\newcommand{\inter}{\mathbf{int}\,}
\newcommand{\bdr}{\mathbf{bd}\,}
\newcommand{\cl}{\mathbf{cl}\,}
\newcommand{\convcl}{\overline{\mathbf{conv}\,}}
\newcommand{\diam}{\mathbf{diam}\,}
\newcommand{\dist}{\mathbf{dist}\,}
\newcommand{\hpartial}{\hat{\partial}}
\newcommand{\scHN}{\mathscr{H\hspace{-.15cm}N\hspace{-.15cm}}}
\newcommand{\barbV}{\overline{\bf V}}
\newcommand{\calA}{\mathcal{A}}
\newcommand{\calB}{\mathcal{B}}
\newcommand{\calC}{\mathcal{C}}
\newcommand{\calD}{\mathcal{D}}
\newcommand{\calF}{\mathcal{F}}
\newcommand{\calG}{\mathcal{G}}
\newcommand{\calH}{\mathcal{H}}
\newcommand{\calI}{\mathcal{I}}
\newcommand{\calK}{\mathcal{K}}
\newcommand{\calL}{\mathcal{L}}
\newcommand{\calN}{\mathcal{N}}
\newcommand{\calP}{\mathcal{P}}
\newcommand{\calQ}{\mathcal{Q}}
\newcommand{\calS}{\mathcal{S}}
\newcommand{\calT}{\mathcal{T}}
\newcommand{\calU}{\mathcal{U}}
\newcommand{\calV}{\mathcal{V}}
\newcommand{\calW}{\mathcal{W}}
\newcommand{\calX}{\mathcal{X}}
\newcommand{\calY}{\mathcal{Y}}
\newcommand{\calZ}{\mathcal{Z}}
\newcommand{\barcalD}{\overline{\calD}}
\newcommand{\bA}{\mathbf{A}}
\newcommand{\bg}{\mathbf{g}}
\newcommand{\bG}{\mathbf{G}}
\newcommand{\bh}{\mathbf{h}}
\newcommand{\bH}{\mathbf{H}}
\newcommand{\bM}{\mathbf{M}}
\newcommand{\bN}{\mathbf{N}}
\newcommand{\bT}{\mathbf{T}}
\newcommand{\bv}{\mathbf{v}}
\newcommand{\bV}{\mathbf{V}}
\newcommand{\bW}{\mathbf{W}}
\newcommand{\bx}{\mathbf{x}}
\newcommand{\bX}{\mathbf{X}}
\newcommand{\by}{\mathbf{y}}
\newcommand{\bY}{\mathbf{Y}}
\newcommand{\bZ}{\mathbf{Z}}
\newcommand{\rmd}{\mathrm{d}}
\newcommand{\bbE}{\mathbb{E}}
\newcommand{\bbN}{\mathbb{N}}
\newcommand{\bbP}{\mathbb{P}}
\newcommand{\bbR}{\mathbb{R}}
\newcommand{\scA}{\mathscr{A}}
\newcommand{\scB}{\mathscr{B}}
\newcommand{\scF}{\mathscr{F}}
\newcommand{\scG}{\mathscr{G}}
\newcommand{\scN}{\mathscr{N}}
\newcommand{\scP}{\mathscr{P}}
\newcommand{\scU}{\mathscr{U}}
\DeclareMathAlphabet{\mathbsf}{OT1}{cmss}{bx}{n}
\DeclareMathAlphabet{\mathssf}{OT1}{cmss}{m}{sl}
\DeclareSymbolFont{bsfletters}{OT1}{cmss}{bx}{n}  
\DeclareSymbolFont{ssfletters}{OT1}{cmss}{m}{n}
\DeclareMathSymbol{\bsfGamma}{0}{bsfletters}{'000}
\DeclareMathSymbol{\ssfGamma}{0}{ssfletters}{'000}
\DeclareMathSymbol{\bsfDelta}{0}{bsfletters}{'001}
\DeclareMathSymbol{\ssfDelta}{0}{ssfletters}{'001}
\DeclareMathSymbol{\bsfTheta}{0}{bsfletters}{'002}
\DeclareMathSymbol{\ssfTheta}{0}{ssfletters}{'002}
\DeclareMathSymbol{\bsfLambda}{0}{bsfletters}{'003}
\DeclareMathSymbol{\ssfLambda}{0}{ssfletters}{'003}
\DeclareMathSymbol{\bsfXi}{0}{bsfletters}{'004}
\DeclareMathSymbol{\ssfXi}{0}{ssfletters}{'004}
\DeclareMathSymbol{\bsfPi}{0}{bsfletters}{'005}
\DeclareMathSymbol{\ssfPi}{0}{ssfletters}{'005}
\DeclareMathSymbol{\bsfSigma}{0}{bsfletters}{'006}
\DeclareMathSymbol{\ssfSigma}{0}{ssfletters}{'006}
\DeclareMathSymbol{\bsfUpsilon}{0}{bsfletters}{'007}
\DeclareMathSymbol{\ssfUpsilon}{0}{ssfletters}{'007}
\DeclareMathSymbol{\bsfPhi}{0}{bsfletters}{'010}
\DeclareMathSymbol{\ssfPhi}{0}{ssfletters}{'010}
\DeclareMathSymbol{\bsfPsi}{0}{bsfletters}{'011}
\DeclareMathSymbol{\ssfPsi}{0}{ssfletters}{'011}
\DeclareMathSymbol{\bsfOmega}{0}{bsfletters}{'012}
\DeclareMathSymbol{\ssfOmega}{0}{ssfletters}{'012}
\newcommand{\tild}{\widetilde{d}}
\newcommand{\hatf}{\widehat{f}}
\newcommand{\hatbv}{\widehat{\bv}}
\newcommand{\tilW}{\widetilde{W}}
\newcommand{\hatbW}{\widehat{\bW}}
\newcommand{\tilZ}{\widetilde{Z}}
\newcommand{\bard}{\overline{d}}
\newcommand{\bart}{\overline{t}}
\newcommand{\barv}{\overline{v}}
\newcommand{\barx}{\overline{x}}
\newcommand{\barG}{\overline{G}}
\newcommand{\barW}{\overline{W}}
\newcommand{\barZ}{\overline{Z}}
\newcommand{\bUpsilon}{\bm{\Upsilon}}
\newcommand{\iid}{i.i.d.\ }
\newcommand{\convas}{\xrightarrow{\mathrm{a.s.}}}
\newcommand{\lrangle}[2]{\left\langle{#1},{#2}\right\rangle}
\newcommand{\eqa}{\stackrel{(a)}{=}}
\newcommand{\eqb}{\stackrel{(b)}{=}}
\newcommand{\leb}{\stackrel{(b)}{\le}}
\DeclareMathOperator*{\argmax}{arg\,max}
\DeclareMathOperator*{\argmin}{arg\,min}
\DeclareMathOperator{\st}{s.t.\;}
\newtheorem{theorem}{Theorem} 
\newtheorem*{theorem*}{Theorem}
\newtheorem{lemma}{Lemma}
\newtheorem{corollary}{Corollary}
\newtheorem*{assump}{Assumptions}
\theoremstyle{definition}
\newtheorem{definition}{Definition} 
\theoremstyle{remark}
\newtheorem{remark}{Remark}
\newenvironment{psketch}{\noindent{\em Proof Sketch.}\hspace*{1em}}{\qed\bigskip\\}
\newcommand{\qednew}{\nobreak \ifvmode \relax \else
      \ifdim\lastskip<1.5em \hskip-\lastskip
      \hskip1.5em plus0em minus0.5em \fi \nobreak
      \vrule height0.75em width0.5em depth0.25em\fi}
\newcommand{\ol}{{\sf OL} }
\newcommand{\batch}{{\sf Batch} }
\begin{document}
\title{\vspace{-.5cm} Online Nonnegative Matrix Factorization with General Divergences}

\author{
Renbo~Zhao,~\IEEEmembership{Member,~IEEE,}
        ~Vincent~Y.~F.~Tan,~\IEEEmembership{Senior~Member,~IEEE,}
        ~Huan~Xu   \vspace{-1cm}     

\thanks{R.~Zhao and V.~Y.~F.~Tan are with the Department of Electrical and Computer Engineering and the Department of Mathematics, National University of Singapore (NUS). They are supported in part by an NUS Young Investigator Award (R-263-000-B37-133). H.~Xu is with the Department of Industrial and Systems Engineering, NUS. He is supported by an A*STAR SERC PSF Grant (R-266-000-101-305).}
}

\markboth{IEEE Transactions on Signal Processing,~Vol.~, No.~, 2016}%
{Shell \MakeLowercase{\textit{Zhao et al.}}: Online Nonnegative Matrix Factorization with General Divergences}

\maketitle

\begin{abstract}
We develop a unified and systematic framework for performing online nonnegative matrix factorization under a wide variety of important divergences. The online nature of our algorithm makes it particularly amenable to large-scale data.  We prove that the sequence of learned dictionaries converges almost surely to the set of critical points of the expected loss function. We do so by leveraging the theory of stochastic approximations and projected dynamical systems. This result substantially generalizes the previous results obtained only for the squared-$\ell_2$ loss.
Moreover, the novel techniques involved in our analysis open new avenues for analyzing similar matrix factorization problems.
The computational efficiency and the quality of the learned dictionary of our algorithm are verified empirically on both synthetic and real datasets. 
In particular, on the tasks of topic learning, shadow removal and image denoising, our algorithm achieves superior trade-offs between the quality of learned dictionary and running time over the batch and other online NMF algorithms. 
\end{abstract}

\begin{IEEEkeywords}
Nonnegative Matrix Factorization, Online Learning, General Divergences, Stochastic Approximations,  Projected Dynamical Systems 
\end{IEEEkeywords}

\vspace{-.3cm}
\section{Introduction}
Over the past 20 years, Nonnegative Matrix Factorization (NMF) has been a popular dimensionality reduction and data analysis technique. This is, in part, due to its non-subtractive and parts-based interpretation on the learned basis \cite{Lee_99}. Given a nonnegative matrix $\bV$ with dimension ${F\times N}$, one seeks a nonnegative dictionary matrix $\bW$ 
and a nonnegative coefficient matrix $\bH$ 
such that $\bV\approx\bW\bH$, by solving 
\begin{align}
&\min \left[D(\bV\Vert\bV^o)\defeq\sum_{n=1}^N d(\bv_n\Vert\bv^o_n)\right]\label{eq:batch_NMF}\\
&\st  \bV^o = \bW\bH, \bW\ge 0, \bH\ge 0,\nn
\end{align}
where $\bv_n$ (resp.\ $\bv^o_n$) denotes the $n$-th column of $\bV$ (resp.\ $\bV^o$) and $d(\cdot\Vert\cdot)$ denotes a divergence between two nonnegative vectors. In the NMF literature~\cite{Dhillon_06,Cichoc_08,Cichoc_11,Fev_11,Li_12,Ding_11,Wang_13,Shen_14,Gao_15}, in addition to the squared-$\ell_2$ loss, i.e., $d(\bx\Vert\by)=\norm{\bx-\by}^2_2$, a vast variety of divergences have been proposed for two main purposes. Firstly, from the statistical inference viewpoint, given the observation noise of a particular distribution, there exists a divergence such that solving \eqref{eq:batch_NMF} correspond to the maximum-likelihood (ML) estimation of ground-truth $\bV^o$ under observation $\bV$. For example, if the distribution of the observation noise belongs to the exponential family, then the corresponding divergence belongs to the class of Bregman divergences \cite{Banerjee_05}. Moreover, it has been shown empirically that if the divergence used in \eqref{eq:batch_NMF} does not match the distribution of the noise, the results will be inferior~\cite{Fev_09}; thus it is imperative to use the correct divergence. Therefore, various divergences have been proposed to optimize the empirical performances of NMF on numerous applications, including music analysis \cite{Fev_09}, source separation \cite{Durrieu_11}, topic modeling \cite{Lee_99}, hyperspectral unmixing \cite{Yuan_15} and gene expression analysis \cite{Tasla_12}. 
Secondly, many robust divergences (or more specifically, metrics) have been proposed in order to overcome the sensitivity of the squared-$\ell_2$ loss to outliers in the data matrix $\bV$. 
These metrics include the $\ell_1$ loss\cite{Ding_11}, the $\ell_2$ loss\cite{Ding_11}, the Huber loss\cite{Wang_13} and the capped $\ell_2$ loss\cite{Gao_15}. With the robust metrics , NMF has been successfully applied to image inpainting \cite{Ding_11}, visual tracking \cite{Wang_13} and outlier image detection \cite{Gao_15}. Despite the success of the NMF algorithms with the aforementioned {\em general divergences}, the {\em batch} data processing mode intrinsic to the algorithms prohibits them from being applied to {\em large-scale} data, i.e., finite data collections with a large number of data samples or even streaming data.\footnote{In this work, we do not simultaneously consider the data with high ambient dimensions. An attempt on this problem in the context of dictionary learning with the squared-$\ell_2$ loss has been made in \cite{Mensch_16}.}
The reasons are twofold---one being that the storage space might be insufficient to store the entire set of   samples, and the other being that the computational complexity of each iteration increases at least linearly with the number of   samples $N$. 
However, due to the significant advances in data acquisition, 
large-scale data is increasingly ubiquitous today, e.g., 
the Netflix movie-rating database and Google images. Therefore, it is important to devise   scalable NMF algorithms under general divergences that is able to 
handle large-scale data efficiently, both in terms of computation and storage.
\vspace{-.2cm}
\subsection{Prior Works}
Many variants of   batch NMF algorithms have been proposed in previous works to tackle large-scale data. They can be classified into three types, namely {\em online NMF}  \cite{Guan_12,Lef_11}, {\em distributed NMF} \cite{Gemulla_11,Chen_15} and {\em compressed NMF} \cite{Halko_11,Tepper_16}. See \cite[Section~I-A]{Zhao_16b} for an overview of the three types of algorithms. In particular, the online NMF algorithms aim to continuously refine the dictionary $\bW$ each time new data samples arrive without storing past data samples, thereby overcoming both computational and storage barriers. 
However, most of the works focus on the case where $d(\cdot\Vert\cdot)$ in \eqref{eq:batch_NMF} is the squared-$\ell_2$ loss, with only a few exceptions, e.g., \cite{Lef_11,Chen_15}. In these works, the Itakura-Saito (IS) divergence  and the Huber loss are also considered. 
Due to the wide applicability of   other divergences, it is natural to devise a {\em unified} framework 
that   performs NMF under a wide range of divergences systematically for large-scale data. The proposed algorithm should also enjoy  some form of convergence    guarantees. 
However, no such theoretical framework has yet been developed thus far.

\vspace{-.2cm}
\subsection{Challenges and Main Contributions}
In this paper, we develop  a framework termed {\em online NMF with general divergences} that learns the dictionary $\bW$ in \eqref{eq:batch_NMF} in an online manner under a variety of divergences, 
including    Csisz\'{a}r $f$-divergences,  Bregman divergences and various robust metrics. In  online NMF, an effective framework called {\em stochastic Majorization-Minimization (MM)} \cite{Mairal_10,Mairal_13,Raza_16} has been proposed to gracefully tackle online matrix factorization problems (including NMF) with the squared-$\ell_2$ loss. However, this method, based on {\em sample average approximation} (SAA), cannot be applied to most of the general divergences we consider, since crucially, sufficient statistics in the method cannot be formed. (See Section~\ref{sec:discuss_algo} for details.) Therefore, we leverage the {\em stochastic approximation} (SA) framework \cite{Ben_90,Kushner_03,Borkar_08} to develop an algorithm that does not need to compute the sufficient statistics, so it can effectively handle general divergences while being amenable to convergence analysis. 
Our analysis shows that the sequence of learned dictionaries converges almost surely to the set of critical points of the expected loss function \eqref{eq:exp_loss}. This serves as a substantial generalization of the results in \cite{Mairal_10,Raza_16}. However, note that since stochastic MM and our proposed algorithm are based on two different frameworks (SAA and SA), the convergence analyses of the two methods are vastly different. Despite the differences, our analysis can indeed acts as an alternative when the divergence specializes to the squared-$\ell_2$ loss. 
To illustrate the computational  and storage  efficiency and the quality of the learned dictionary of our algorithm, we conduct numerical experiments on both  synthetic and real datasets. 
Our     results demonstrate that on the tasks of topic learning, shadow removal and image denoising, our algorithm achieves superior trade-offs between the quality of learned dictionary and running time over the batch and other  online NMF algorithms. 

In sum, our main contributions are twofold:
\begin{enumerate}
\item We propose a computationally efficient framework for online NMF with general divergences. This framework systematically generalizes the prior works on online NMF with the squared-$\ell_2$ loss. In particular, our framework provides an approach for performing {\em online robust NMF} using robust metrics. This complements the previous approach based on the $\ell_1$ regularization on outliers\cite{Zhao_16b}. 

\item We perform convergence analysis of the proposed framework based on the theory on optimal-value functions~\cite{Shimizu_97}, stochastic approximations, projected dynamical systems \cite{Dupuis_93}, Lyapunov stability criterion \cite{Haddad_08,Teschl_12} and variational analysis \cite{Rock_98}. The result of our analysis 
substantially generalizes previous known results~\cite{Mairal_10,Guan_12,Shen_14b}. Moreover, our analysis opens new avenues for analyzing similar online matrix factorization problems. Note that due to the nonconvex nature of the NMF problem, global convergence results are few and far between, in both batch and online settings (especially for divergences beyond the squared-$\ell_2$ loss). The present work establishes a global almost sure convergence result for the sequence of learned dictionaries in the online setting.\footnote{In the batch (deterministic) setting \eqref{eq:batch_NMF}, global convergence results pertain to the convergence of the sequence of iterates $\{(\bW_k,\bH_k)\}_{k=1}^\infty$. In the online (stochastic) setting, the corresponding results pertain to the (almost sure) convergence of the discrete-time stochastic process $\{\bW_t\}_{t=1}^\infty$, since the coefficient vectors are not stored (hence their convergence is not of interest).}  
\end{enumerate}

\section{Notations and Overview of Divergences}
\subsection{Notations}
We use boldface capital letters, boldface lowercase letters and plain lowercase letters to denote matrices, column vectors and scalars respectively. In particular, we use $\vecz$ to denote a zero vector or a zero matrix. 
Given a matrix $\bX\in\bbR^{F\times K}$, we denote its $i$-th row as $\bX_{i:}$, $j$-th column as $\bX_{:j}$ and $(i,j)$-th entry by $x_{ij}$.
For a column vector $\bx\in\bbR^F$, its $i$-th entry is denoted by $x_i$ or $(\bx)_i$.  
We use $\norm{\cdot}$ to denote both the Euclidean norm of a vector and the Frobenius norm of a matrix. We denote the $\ell_1$ norm of $\bx$ as $\norm{\bx}_1$ and the spectral norm of $\bX$ as $\norm{\bX}_2$. We use $\lrangle{\cdot}{\cdot}$ to denote the Euclidean inner product between vectors or matrices. 
We denote the (Euclidean) projection operator onto a set $\calS$ as $\Pi_\calS$. 
We denote the set of nonnegative real numbers, the set of positive real numbers and the set of natural numbers (excluding zero) as $\bbR_+$, $\bbR_{++}$ and  $\bbN$ respectively. For $N\in\bbN$, define $[N]\defeq\{1,2,\ldots,N\}$.
Moreover, given a sequence of functions $\{f_n\}_{n\in\bbN}$ and a function $f$, $f_n\convu f$ 
denotes the uniform convergence of $f_n$ to $f$. 
In the context of this work, we use $\bv$, $\bW$ and $\bh$ to denote the (nonnegative) data vector, dictionary matrix and coefficient vector respectively. 
The ambient and latent data dimensions are denoted as $F$ and $K$ respectively, and are assumed to be {\em independent of time}. 

Due to space constraints, some proofs and additional figures are relegated to the supplemental material. All the equations, lemmas, definitions and sections with indices beginning with an `S' will appear in the supplemental material.
\subsection{Overview of Divergences} \label{sec:intro_div}
In this section, we briefly overview the basic definitions and properties of various divergences employed in the literature of NMF. We classify these divergences into three categories: the Csisz\'{a}r $f$-divergence, the Bregman divergence and the robust metrics. 
For further details, see \cite[Chapter~2]{Cichoc_09}. 
\subsubsection{Csisz\'{a}r $f$-divergence}
The Csisz\'{a}r $f$-divergence with generating function $\varphi:\bbR_{++}\to\bbR$ between two vectors $\bx,\by\in\bbR_{++}^n$, 
$d^{\rm(c)}_\varphi(\bx\Vert\by) \defeq \sum_{i\in[n]} y_i \varphi\left(x_i/y_i\right)$,
where $\varphi$ is convex on $\bbR_{++}$ such that $\varphi(1)=0$. Several important instances of Csisz\'{a}r $f$-divergence have been used in NMF, including the $\ell_1$ distance \cite{Guan_12b} and the family of $\alpha$-divergences \cite{Cichoc_08,Cichoc_11}. In particular, the $\alpha$-divergences include Hellinger distance ($\alpha=1/2$) \cite{Hanlon_15,Hanlon_16} and the (generalized) Kullback-Leibler (KL) divergence ($\alpha\to 1$) \cite{Lee_00,Yang_11}.  Moreover, for these special cases, $d^{\rm (c)}_\varphi(\bx\Vert\cdot)$ is convex on $\bbR_{++}^n$. The expressions of these divergences are summarized in Table~\ref{table:csiszar}. 
\subsubsection{Bregman divergence}
The Bregman divergence with generating function $\phi:\bbR_{++}^n\to\bbR$ between two vectors $\bx,\by\in\bbR_{++}^n$, 
$d^{\rm (b)}_\phi(\bx\Vert\by) \defeq \phi(\bx) - \phi(\by) - \lrangle{\nabla\phi(\by)}{\bx-\by}$,
where $\phi$ is strictly convex and continuously differentiable on $\bbR_{++}^n$. In the literature on NMF, several important instances of Bregman divergence have been used, including Mahalanobis distance \cite{Paa_94} and the family of $\beta$-divergence \cite{Fev_11,Nakano_10}. In particular, the $\beta$-divergence includes 
IS divergence ($\beta\to 0$) \cite{Fev_09}, the (generalized) KL divergence ($\beta\to 1$) \cite{Lee_00,Yang_11} and the squared-$\ell_2$ loss ($\beta=2$) \cite{Lee_00,Guan_12c}. Moreover, $d^{\rm (b)}_\phi(\bx\Vert\cdot)$ is convex on $\bbR_{++}^n$ for $1\le \beta\le 2$ and analytic on $\bbR_{++}^n$ for $\beta\in\bbR$. The expressions of the important divergences are summarized in Table~\ref{table:bregman}. 
\subsubsection{Robust metrics}
In the literature of robust NMF (and dictionary learning), various robust metrics, denoted as $d^{\rm (r)}(\cdot\Vert\cdot):\bbR_+^n\times\bbR_+^n\to\bbR$, have been proposed to tackle potential outliers in the data matrices. Important cases of these metrics include the $\ell_1$ distance \cite{Kasi_12,Pan_14}, the $\ell_2$ distance \cite{Ding_11}, 
and the Huber loss \cite{Wang_13,Zhang_15b,Chen_15}. The expressions of the important robust metrics are summarized in Table~\ref{table:robust_metrics}. 

In the sequel, we aim to treat the Csisz\'{a}r $f$-divergence, the Bregman divergence and the robust metrics in a unified way. 
To simplify discussions, we focus on the important cases shown in Table~\ref{table:csiszar}, \ref{table:bregman} and \ref{table:robust_metrics} as these divergences share some regularity properties (see Remark~\ref{rmk:def_div}). 
We denote the set of these divergences as $\calD$ with common domain $\bbR_{++}^n\times\bbR_{++}^n$, and use $d(\cdot\Vert\cdot)$ to denote any divergence in $\calD$. 
\begin{remark} \label{rmk:def_div}
We first remark on the regularity properties of the divergence class $\calD$. 
We notice that $\calD$ is the union of two divergence classes, $\calD_1$ and $\calD_2$ defined respectively as
\begin{align*}
\calD_1 &= \{d(\cdot\Vert\cdot)\in\calD\,|\,\forall\,\bx\in\bbR_{++}^n,\forall\,\mbox{compact }\calY\subseteq\bbR_{++}^n,\\
&d(\bx\Vert\cdot)\mbox{ is differentiable and } \nabla d(\bx\Vert\cdot) \mbox{ is Lipschitz on } \calY \},\\
\calD_2 &= \{d(\cdot\Vert\cdot)\in\calD\,|\,\forall\,\bx\in\bbR_{++}^n,d(\bx\Vert\cdot) \mbox{ is convex on } \bbR^n_{++}\}.
\end{align*}
The class $\calD_1$ includes 
all the divergences in $\calD$ except the $\ell_1$ and $\ell_2$ distances, whereas 
the class $\calD_2$ includes all the divergences in $\calD$ except the sub-family of the $\beta$-divergence with $\beta\in(-\infty,1)\cup(2,\infty)$. 
Second, the domain of $d(\cdot\Vert\cdot)$ can be relaxed to $\bbR_+^n\times\bbR_+^n$ for some specific cases, e.g., the robust metrics.
\end{remark}


\begin{table}\footnotesize
\caption{Expressions of some important cases of Csisz\'{a}r $f$-divergence} 
\label{table:csiszar}
\centering
\hspace*{-.2cm}\setlength\tabcolsep{1.5pt}
\begin{tabular}{|c|c|}\hline
Name & $d^{\rm (c)}_\varphi(\bx\Vert\by)$ \\\hline
$\ell_1$ distance  & $\sum_i \abs{x_i-y_i}$\\\hline
$\alpha$-divergence $\left(\alpha\in \bbR\setminus\{0,1\}\right)$ & $\frac{1}{\alpha(\alpha-1)}\sum_i\left(y_i\left[\left(\frac{x_i}{y_i}\right)^\alpha \!-\! 1\right]-\alpha(x_i\!-\!  y_i)\right)$\\\hline
Hellinger distance ($\alpha=\frac{1}{2}$) & $2\sum_i(\sqrt{x_i}-\sqrt{y_i})^2$\\\hline
 KL divergence ($\alpha\to 1$) & $\sum_i x_i\log(x_i/y_i)-x_i+y_i$\\\hline
\end{tabular}\vspace{-.3cm}
\end{table}


\begin{table}\footnotesize
\caption{Expressions of some important cases of Bregman divergence}
\label{table:bregman}
\centering
\setlength\tabcolsep{2pt}
\begin{tabular}{|c|c|}\hline
Name &   $d_\phi^{\rm(b)}(\bx\Vert\by)$ \\\hline
Mahalanobis distance  & $(\bx-\by)^T\bA(\bx-\by)/2$\\\hline
$\beta$-divergence $\left(\beta\in \bbR\setminus\{0,1\}\right)$ &  $\frac{1}{\beta(\beta-1)}\sum_i \left(x_i^{\beta}-y_i^{\beta}- \beta y_i^{\beta-1}(x_i-y_i)\right)$\\\hline
IS divergence ($\beta\to 0$)  & $\sum_i \left(\log(y_i/x_i) + x_i/y_i -1\right)$  \\\hline
KL divergence ($\beta\to 1$)  &  $\sum_i x_i\log(x_i/y_i)-x_i+y_i$ \\\hline
Squared $\ell_2$ distance ($\beta=2$)  &$\norm{\bx-\by}_2^2/2$\\\hline
\end{tabular}\vspace{-.4cm}
\end{table}

\begin{table}[t!]\footnotesize
\centering
\begin{threeparttable}
\caption{Expressions of some important robust metrics} \label{table:robust_metrics}
\begin{tabular} {|>{\centering\arraybackslash}m{3cm}|>{\centering\arraybackslash}m{4cm}|}\hline
Name & $d^{\rm (r)}(\bx\Vert\by)$ \\\hline
$\ell_1$ distance & $\norm{\bx-\by}_1$\\\hline
$\ell_2$ distance & $\norm{\bx-\by}$\\\hline
Huber loss\footnotemark & $\sum_i h_\alpha(x_i-y_i)$  
\\\hline
\end{tabular}
\begin{tablenotes}
\item[1] The (scalar) Huber loss function, $h_\alpha:\bbR\to\bbR$ is defined as $
h_\alpha(u)=\left\{\hspace{-.2cm}\begin{array}{ll}
u^2/2,&\abs{u}\le \alpha\\
\alpha\left(\abs{u}-\alpha/2\right), &\text{otherwise} 
\end{array},\right.
$ where $\alpha>0$. 
\end{tablenotes}
\end{threeparttable}\vspace{-.5cm}
\end{table}

\vspace{-.35cm}
\section{Related Works}
\subsection{Online Matrix Factorization Beyond the Squared-$\ell_2$ Loss} \label{sec:OMF_beyond_l2}
In the literature of online matrix factorization \cite{Mairal_10,Guan_12,Feng_13,Shen_14,Shen_16}, it is  assumed that \iid (independent and identically distributed)  data samples $\{\bv_t\}_{t\in\bbN}$ (drawn from a common distribution $\bbP$) arrive in a streaming manner, and the storage space does not scale with time. 
Under such setting, we aim to solve the following {\em stochastic program}, i.e., minimize the expected loss 
\begin{equation}
\min_{\bW\in\calC} \bbE_{\bv\sim\bbP}[\tilell(\bv,\bW)],
\end{equation}
where $\bv$ is the (random) data vector with distribution $\bbP$, $\bW$ the basis matrix constrained in the set $\calC$ and $\tilell$ the loss function with respect to (w.r.t.) a single data sample $\bv$. 
Most of the literature on online matrix factorization (including online NMF \cite{Guan_12}, online dictionary learning \cite{Mairal_10} and online low-rank representation \cite{Shen_16}) focus on the case where the data fidelity term is the squared $\ell_2$ loss, i.e., $\tilell$ is defined as 
\begin{equation}
\tilell(\bv,\bW)  \defeq \min_{\bh\in\calH}\norm{\bv-\bW\bh}^2 + \lambda\psi(\bh),
\end{equation}
where $\bh$ is the coefficient vector constrained in the set $\calH$ and $\lambda\psi(\bh)$ is some regularizer on $\bh$ with penalty parameter $\lambda>0$. However, the literature with other forms of data fidelity terms is relative scarce. Among them, some works on real-time music signal processing \cite{Lef_11,Dessein_11} consider minimizing the IS divergence in an online manner. Other works on visual tracking \cite{Wang_13,Zhang_15b} consider the online minimization of the Huber loss. However, almost all of methods proposed in these works are {\em heuristic} in nature, in the sense that the global convergence of the sequence (or any subsequence) of the dictionaries $\{\bW_t\}_{t\in\bbN}$ cannot be guaranteed (either a.s.\ or with high probability). Furthermore, since most of these works are conducted on an {\em ad hoc} basis, the approaches {therein cannot be easily generalized to other divergences in a straightforward manner. As different divergences are suited to different applications in practice (see Section~\ref{sec:intro_div}), a unified framework 
is needed to systematically study the convergence properties of NMF for various divergences.}

\subsection{Stochastic Projected Subgradient Descent (SPSGD) Applied to Online Matrix Factorization}
As discussed in Section~\ref{sec:OMF_beyond_l2}, only {\em differentiable} data fidelity terms (the IS divergence and the Huber loss) are considered in the literature of online matrix factorization. Thus, only the stochastic projected gradient descent (SPGD) method  has been employed   in the prior works \cite{Chen_15,Mairal_10,Wang_13,Zhang_15b,Guan_12}.
In particular, the efficacy of such method with the squared-$\ell_2$ loss and the Huber loss has been empirically verified in \cite{Mairal_10} and \cite{Wang_13,Zhang_15b}  respectively. 
In \cite{Chen_15}, SPGD was employed on online dictionary learning over distributed models, with both squared-$\ell_2$ loss and the Huber loss. In \cite{Guan_12}, the authors leverage the {robust stochastic approximation} method \cite{Nemi_09}, a variant of SPGD, and consider both  the squared $\ell_2$ loss and the IS divergence. However, for all the abovementioned works, convergence guarantees on the sequence (or any subsequence) of the dictionaries $\{\bW_t\}_{t\in\bbN}$ generated by the SPGD algorithm have not been established. 

\section{Problem Formulation}
As introduced in Section~\ref{sec:OMF_beyond_l2}, consistent with the problem formulation in the literature, we consider the problem of learning the (nonnegative) dictionary $\bW$ {in a  streaming data setting}. Specifically, we assume the data stream $\{\bv_t\}_{t\in\bbN}\subseteq\bbR_+^F$ is generated \iid from a distribution $\bbP$.\footnote{Most of real data do not strictly satisfy the independence assumption, since they may be  {weakly dependent}. However, we {make the \iid assumption here for convenience of analysis}.} We also assume limited storage space, i.e., the memory size does not scale with time. {Under such a setting}, we aim to minimize the expected loss 
\begin{equation}
\min_{\bW\in\calC} \left[f(\bW)\defeq \bbE_{\bv\sim\bbP}[\ell(\bv,\bW)]\right], \label{eq:exp_loss}
\end{equation}
where $\calC \defeq \{\bW\in\bbR_+^{F\times K}\,|\,\norm{\bW_{i:}}_1\ge\epsilon, \norm{\bW_{:j}}_\infty\le 1, \forall\,(i,j)\in[F]\times [K]\}$ for some $0<\epsilon<1$ and 
\begin{equation}
\ell(\bv,\bW) \defeq \min_{\bh\in\calH} d(\bv\Vert\bW\bh). \label{eq:single_loss}
\end{equation}
Here 
$\calH \defeq \{\bh\in\bbR_+^K\,|\, \epsilon'\le h_i \le U, \forall\,i\in[K]\}$ 
for some positive constants $\epsilon'$ and $U$. 
In words, $\ell(\bv,\bW)$ is the loss function of $\bW$ w.r.t.\ a single random data sample $\bv\sim\bbP$. 
As introduced in Section~\ref{sec:intro_div}, $d(\cdot\Vert\cdot)$ can be any divergence in  $\calD$. 
\begin{remark} \label{rmk:prob_form}
Several remarks are in order. First we notice in general, $d(\cdot\Vert\cdot)$ is asymmetric about its arguments. In this work, we only consider minimizing $d(\bv\Vert\bW\bh)$ in $\bh$ (and $\bW$) since this corresponds to an ML estimation of the ground-truth data from the noisy data sample $\bv$ under various statistical models. 
For simplicity we omit regularizations on $\bW$ in \eqref{eq:exp_loss} and on $\bh$ in \eqref{eq:single_loss}. See \cite[Section~5]{Mairal_10} for possible regularizations.
Next, we explain the rationale behind the choice of the constraint sets $\calC$ and $\calH$. The constraints $\norm{\bW_{:j}}_\infty\le 1$, for all $j\in[K]$ and $\norm{\bh}_\infty\le U$ bound the scale of $\bW$ and $\bh$ respectively. Similar constraints are common in previous works \cite{Mairal_10,Guan_12,Bao_15}. Based on real applications, $U<\infty$ can be set to suitably large values. 
Furthermore, since the domain of $d(\cdot\Vert\cdot)$ is $\bbR_{++}^F\times\bbR_{++}^F$, for the sake of numerical stability, we require $\left(\bW\bh\right)_i\ge\epsilon''$, for all $i\in[F]$ and some small number $\epsilon''\in(0,1)$ within the precision tolerance of the numerical software. Thus, the constraints $\norm{\bW_{i:}}_1\ge\epsilon$, for all $i\in[F]$ and $h_i\ge\epsilon'$, for all $i\in[K]$ decouple the above constraint on $(\bW,\bh)$
with $\epsilon\epsilon'\ge \epsilon''$. 
For the reasons above, in this work we set $\epsilon=\epsilon'=1\times 10^{-8}$ and $U=1\times 10^{8}$. 
These two constraints can be removed if the domain of $d(\cdot\Vert\cdot)$ can be relaxed to $\bbR_+^F\times\bbR_+^F$ for some specific divergences. 
Furthermore, the constructions of $\calC$ and $\calH$ also enable efficient projections onto both sets. See Section~\ref{sec:proj_CH} for details. 
Finally, since $d(\bv\Vert\cdot)$ is in general not well-defined if $\bv$ has zero entries, we can simply set these zero entries to small positive numbers so that $\bv\in\bbR_{++}^F$. 
\end{remark}

\vspace{-.5cm}
\section{Algorithm}
The outline of the algorithm for online NMF with general divergences is shown in Algorithm~\ref{algo:ONMFG}. In the following we first define some important concepts and functions in Algorithm~\ref{algo:ONMFG}. Then we explain the choice of input arguments in Algorithm~\ref{algo:ONMFG}. Next we illustrate the details of Algorithm~\ref{algo:solve_h}, the algorithm for learning the coefficient vector $\bh_t$. Finally we discuss the rationale for using SPSGD method and compare it with other possible methodologies. 

\vspace{-.25cm}
\subsection{Definitions}
In this section, we let $\calX$ be a finite-dimensional real Banach space, e.g., $\bbR^F$ or $\bbR^{F\times N}$. We denote the topological dual space of $\calX$ as $\calX^*$. We also consider a function $f:\calX\to\bbR$. 
\begin{definition}[{\cite[Section~1.1]{Kruger_03}}]
The  {\em Fr\'{e}chet subdifferential} at $x\in\calX$, $\hpartial f(x)$ is defined as
\begin{equation*}
\hpartial f(x) \defeq \left\{g\in\calX^*\,\Big\vert\,\liminf_{y\to x,y\in\calX} \frac{f(y)-f(x)-\lrangle{g}{y-x}}{\norm{y-x}}\ge 0\right\}, \label{eq:def_frechet}
\end{equation*}
where 
$\lrangle{\cdot}{\cdot}$ defines the canonical pairing between $\calX$ and $\calX^*$. 
\end{definition} 
\begin{remark}\label{rmk:property_frechet}
If $f$ is differentiable at $x\in\calX$, then $\hpartial f(x) = \{\nabla f(x)\}$. The Fr\'{e}chet subdifferential serves as a generalization of the subdifferential in the convex analysis, i.e., if $f$ is convex on $\calX$, then for any $x\in\calX$, $\hpartial f(x) = \partial f(x)$, where 
\begin{equation*}
\partial f(x) \defeq \left\{g\in\calX^*\,\Big\vert\,f(x)+\lrangle{g}{y-x}\le f(y), \forall\,y\in\calX\right\}.
\end{equation*}
\end{remark}
\begin{definition}[{\cite{Shap_90}}]\label{def:direc_deriv}
The  {\em G\^{a}teaux directional derivative} of $f$ at $x\in\calX$ along direction $d\in\calX$, $f'(x;d)$ is defined as
\begin{equation}
f'(x;d) \defeq \lim_{\delta\downarrow 0} \frac{f(x+\delta d)-f(x)}{\delta}.
\end{equation} 
Furthermore, $f$ is called (G\^{a}teaux) directionally differentiable if $f'(x;d)$ exists for any $x\in\calX$  and any $d\in\calX$. 
\end{definition}
\begin{definition}[{\cite[Section~2]{Raza_13}}]
Assume $f$ to be directionally differentiable.
Let $\calK\subseteq\calX$ be a convex set. 
A point $x^*\in\calK$ is a {\em critical point} of the constrained optimization problem $\min_{x\in\calK} f(x)$ if 
for any $x\in\calK$, 
\begin{equation}
f'(x^*;x-x^*)\ge 0. \label{eq:cond_stat_pt}
\end{equation}
\end{definition}
\begin{remark}
If $f$ is differentiable on $\calX$, then \eqref{eq:cond_stat_pt} is equivalent to the variational inequality $\lrangle{\nabla f(x^*)}{x-x^*}\ge 0$. Now assume $\calK$ to be open. If $f$ is convex, then \eqref{eq:cond_stat_pt} degenerates to $0\in\partial f(x^*)$. If $f$ is differentiable, then \eqref{eq:cond_stat_pt} becomes $\nabla f(x^*)=0$.
\end{remark}

In addition to the concepts, for any $t\in\bbN$, we also define two important functions $\tild_t:\calC\to\bbR$ and $\bard_t:\calH\to\bbR$ as
\begin{align*}
\tild_t(\bW)&\defeq d(\bv_t\Vert\bW\bh_t), \forall\,\bW\in\calC,\\
\bard_t(\bh)&\defeq d(\bv_t\Vert\bW_{t-1}\bh), \forall\,\bh\in\calH,
\end{align*}
where $\{\bv_t,\bW_t,\bh_t\}_{t\in\bbN}$ are generated per Algorithm~\ref{algo:ONMFG}.
\begin{remark}
From Remark~\ref{rmk:def_div}, we observe that $\tild_t$ or $\bard_t$ is either differentiable or convex (or both). Thus by Remark~\ref{rmk:property_frechet}, finding a Fr\'{e}chet subgradient of $\tild_t$ or $\bard_t$ is straightforward. 
\end{remark}

\vspace{-.5cm}
\subsection{Choice of Input Arguments}
The  {matrix used for initializing our algorithm  $\bW_0$ can be chosen to be any element of $\calC$}. The number of iterations $T$ is usually chosen {to be} the size of the  dataset. 
The step size sequence $\{\eta_t\}_{t\in\bbN}$ is chosen to satisfy
\begin{equation}
\sum_{t=1}^\infty \eta_t = \infty \quad \mbox{and} \quad \sum_{t=1}^\infty \eta_t^2 < \infty. \label{eq:step_size}
\end{equation}
\begin{remark}
In this work we use the classical (diminishing) step size policy \eqref{eq:step_size} as first proposed in \cite{Rob_51}. 
We notice that in the literature \cite{Wang_13,Zhang_15b}, the constant step policy has been used. However, as shown in \cite{Luo_91}, the (projected) SSGD algorithm may diverge even if the objective function $f$ is convex. We also note that for (strongly) convex stochastic programs, many variants of the SSGD algorithms have been proposed, including trajectory averaging \cite{Polyak_92}, gradient averaging \cite{Roux_12,Blatt_07}, robust step size policy \cite{Nemi_09} and second-order method \cite{Bottou_10}. These methods {typically enjoy} faster convergence rates than the original SSGD algorithm. However, since our objective function $f$ is nonconvex, the acceleration of convergence may not be applicable to our {problem setting}.\footnote{The convergence rate in the convex case is w.r.t.\ the Lyapunov criterion $\bbE[f(\bW_t)-f^*]$, where $f^*$ denotes the global minimum of $f$ on $\calC$. However for nonconvex $f$, this criterion is ill-defined since it is generally hard to find $f^*$ within a reasonable amount of time.}
Although for nonconvex stochastic programs, other step size policies have also been proposed \cite{Tseng_98,George_06}, for simplicity of analysis, we use a {policy that satisfies \eqref{eq:step_size}.} 
\end{remark}

\vspace{-.5cm}
\subsection{Learning Coefficient Vectors}
The algorithm for learning $\bh_t$, based on projected subgradient descent (PSGD),  is shown in Algorithm~\ref{algo:solve_h}. 
In this algorithm, the initial coefficient vector $\bh_t^0$ can be chosen {to be} any point in $\calH$. 
For the divergences $d(\cdot\Vert\cdot)\in\calD_1$, the corresponding function $\bard_t$ is differentiable on $\calH$ and $\nabla \bard_t$ is Lipschitz on $\calH$ with Lipschitz constant $L_t>0$.
For {these} cases, there are two ways to choose the step sizes $\{\beta_t^k\}_{k\in\bbN}$ such that the sequence of iterates $\{\bh^k_t\}_{k\in\bbN}$ converges to the set of critical points of \eqref{eq:solve_h} as $k\to\infty$.\footnote{Given a metric space $(\calX,d)$, a sequence $(x_n)$ in $\calX$ is said to converge to a set $\calA\subseteq\calX$ if $\lim_{n\to\infty} \inf_{a\in\calA}d(x_n,a)= 0$.} The first approach is the well-known Armijo rule, which applies to all the continuously differentiable $g_t$ (see \cite[Theorem~2.4]{Calamai_87} for details).  The implementation of Armijo rule is shown in Algorithm~\ref{algo:armijo}, where we set $\alpha=0.01$ and $\gamma= 0.1$ following the suggestions in \cite{Lin_07b}. We also set $q=10$. 
The second approach is to use constant step sizes, i.e.,  $\beta^k_t = \beta_t$, for all $k\in\bbN$. If $\beta_t\in(0,1/L_t]$, then Algorithm~\ref{algo:solve_h} can be interpreted as an MM algorithm \cite{Parikh_14}, and the convergence is guaranteed by \cite[Theorem~1]{Raza_13}. (In this work, we set $\beta_t = 1/L_t$ for simplicity.) We now provide some guidelines for choosing between these two approaches. 
The second approach is suitable for the functions $\nabla \bard_t$ whose smallest Lipschitz constant  on any subset $\calU\subseteq\calH$,\footnote{For any $t\in\bbN$, the smallest Lipschitz constant of $\nabla \bard_t$ on $\calU$, $L^*_t(\calU)\defeq\inf\{L\,|\,\normt{\nabla \bard_t(\bh_1)-\nabla \bard_t(\bh_2)}\le L\norm{\bh_1-\bh_2},\forall\,\bh_1,\bh_2\in\calU\}$.} $L^*_t(\calU)$ does not vary much across all the subsets of $\calH$. Examples of the corresponding divergences include the Huber loss and the squared $\ell_2$ loss. However, the gradients $\nabla \bard_t$ corresponding to some other divergences (e.g., the IS and the KL divergences) 
in general have much larger $L^*_t(\calU)$ when $\calU$ is in the vicinity of $\bdr\calH$ than elsewhere. 
Since $L_t\ge \sup_{\calU\subseteq\calH}L^*_t(\calU)$, the constant step size $\beta_t$ will be very small even when $\bh^k_t$ lies in the ``center'' of $\calH$, where $\bard_t$ is relatively smooth. Under such scenario, it is more appropriate to use Armijo rule  especially when the evaluation of $\bard_t$ is not expensive. 
Now we consider the divergences $d(\cdot\Vert\cdot)\not\in\calD_1$, i.e., the $\ell_1$ and $\ell_2$ losses. For the $\ell_2$ loss, the first approach above is still applicable since $\norm{\cdot}$ is non-differentiable only at $\vecz$. For the $\ell_1$ loss, we employ the {\em modified Polyak's step size policy} with tolerance parameter $\delta_{\rm tol}$ (set to $0.01$ in this work) \cite[Section~6.3.1]{Angelia_08,Bert_99} due to efficiency considerations. Although this step size policy can only guarantee $\liminf_{k\to\infty} \bard_t(\bh_t^k)\le\min_{\bh\in\calH} \bard_t(\bh)+\delta_{\rm tol}$, as shown in Section~\ref{sec:numericals}, 
it performs reasonably well empirically.

\subsection{Discussions}\label{sec:discuss_algo}
In this work we employ the seemingly rudimentary SPSGD method to learn the dictionary in an online manner. In some previous works on online matrix factorization (with squared $\ell_2$ loss) \cite{Mairal_10,Feng_13,Shen_14b}, a different approach has been employed to update the dictionary matrix. Namely, at time $t$, $\bW_t$ is the matrix that minimizes $f_t:\calC\to\bbR$, the majorant\footnote{For a function $g$ with domain $\calG$, its majorant at $\kappa\in\calG$, $G$ is the function that satisfies i) $G\ge g$ on $\calG$ and ii) $G(\kappa) = g(\kappa)$.} for the SAA of $f$, $\hatf_t:\calC\to\bbR$ defined as
\begin{equation}
\hatf_t(\bW) \defeq \frac{1}{t}\sum_{i=1}^t \ell(\bv_i,\bW).
\end{equation}
At {a high} level, this approach belongs to the class of stochastic MM algorithms \cite{Mairal_13,Raza_16}. As noted in \cite[Section~3]{Mairal_13}, direct minimization of $f_t$ is possible only when $f_t$ can be parameterized by variables of small and constant size (known as sufficient statistics in \cite{Mairal_10}) for each $t\in\bbN$. Unfortunately this condition does not hold for most divergences beyond the squared $\ell_2$ loss, including those in class $\calD$. 
However, if we assume for each $\bv$, $\ell(\bv,\cdot)$ has Lipschitz gradient on $\calC$ and choose $f_t$ as a quadratic majorant of $\hatf_t$, then the recursive update form of $\bW_t$ via the stochastic MM approach can be regarded as {\em a special case} of our method. See \cite[Section~4]{Raza_16} for details.


\begin{algorithm}[t]
\caption{Online NMF with General Divergences} \label{algo:ONMFG}
\begin{algorithmic} 
\State {\bf Input}: 
Initial dictionary matrix $\bW_0$,  number of iterations $T$, sequence of step sizes $\{\eta_t\}_{t\in\bbN}$
\State {\bf for} $t$ = 1 to $T$ {\bf do}
\State \quad 1) Draw a data sample $\bv_t$ from $\bbP$.
\State \quad 2) 
Learn the coefficient vector $\bh_t$ such that $\bh_t$ is a critical point  of the optimization problem
\begin{equation}
\min_{\bh\in\calH} \left[\bard_t(\bh)\defeq d(\bv_t\Vert\bW_{t-1}\bh)\right]. \label{eq:solve_h}
\end{equation}
\State \quad 3) Update the dictionary matrix from $\bW_{t-1}$ to $\bW_t$ 
\begin{equation}
\bW_t := \Pi_\calC\Big\{\bW_{t-1} - \eta_t\bG_t\Big\}, \label{eq:upd_dict}
\end{equation}
\quad\quad\; where $\bG_t$ is any element in $\hpartial \tild_t(\bW_{t-1})$.
\State {\bf end for}
\State {\bf Output}: Final dictionary matrix $\bW_T$
\end{algorithmic}
\end{algorithm}

\begin{algorithm}[t]
\caption{Learning $\bh_t$} \label{algo:solve_h}
\begin{algorithmic} 
\State {\bf Input}: Dictionary matrix $\bW_{t-1}$, data sample $\bv_t$, initial coefficient vector $\bh_t^0$, sequence of step size $\{\beta_t^k\}_{k\in\bbN}$
\State {\bf Initialize} $k:=0$
\State {\bf repeat}
\begin{align}
\bh_t^k &:= \Pi_\calH\Big\{\bh^{k-1}_t - \beta^k_t\bg_t^{k}\Big\},\;\mbox{where}\nn\\
&\hspace{2.2cm}\bg_t^{k}\mbox{ is any element in }\hpartial\bard_t(\bh^{k-1}_t)\label{eq:PGD_h}\\
k &:= k+1\nn\end{align}
\State {\bf until} some convergence criterion is met
\State {\bf Output}: Final coefficient vector $\bh_t$
\end{algorithmic}
\end{algorithm}

\begin{algorithm}[t]
\caption{Armijo rule for step size selection} \label{algo:armijo}
\begin{algorithmic} 
\State {\bf Input}: Dictionary matrix $\bW_{t-1}$, data sample $\bv_t$, coefficient vector $\bh_t^k$, maximum number of iterations $q$
\State {\bf Initialize} $\alpha\in(0,0.5),\gamma\in(0,1),i:=0,\xi^0:=1$
\State {\bf while} $\bard_t(\bh_t^k-\xi^i\nabla\bard_t(\bh_t^k))>\bard_t(\bh_t^k)-\alpha\xi^i\normt{\nabla\bard_t(\bh_t^k)}^2$\\
\hspace{7cm} {\bf and} $i\le q$
\begin{align*}
\xi^{i+1} := \gamma\xi^i,\quad i := i+1
\end{align*}
\State {\bf end} 
\State {\bf Output}: Final step size $\beta_t^k\defeq\xi^i$
\end{algorithmic}
\end{algorithm}

\section{Main Convergence Theorem}

Our main convergence theorem concerns the divergences in class $\calD_2$ (see Remark~\ref{rmk:def_div}), i.e., the divergences $d(\cdot\Vert\cdot)$ that are convex in the second argument. The technical difficulties (and possible approaches) for proving such convergence results for divergences in class $\calD_1\setminus\calD_2$ are discussed in Section~\ref{sec:conv_discussions}.\footnote{However, the efficacy of our algorithm for this case will be empirically verified in Section~\ref{sec:numericals}.}

Before presenting our main theorem, we first make the following assumptions. 
\begin{assump}\quad
\begin{enumerate}
\item The support set $\calV\subseteq\bbR_{++}^F$ for the data generation distribution $\bbP$ is compact. \label{assum:comp_supp}
\item For all $(\bv,\bW)\in\calV\times\calC$, $d(\bv\Vert\bW\bh)$ is $m$-strongly convex in $\bh$ for some constant $m>0$ if $d(\cdot\Vert\cdot)\in\calD_2$.\label{assum:sc_h}
\end{enumerate}
\end{assump}

\begin{remark}
The abovementioned two assumptions are reasonable in the following sense. Assumption~\ref{assum:comp_supp} naturally holds for real data, which are uniformly bounded entrywise. We have $\calV\subseteq\bbR_{++}^F$ as per discussion in Remark~\ref{rmk:prob_form}. 
Assumption~\ref{assum:sc_h} is a classical assumption in literature \cite{Mairal_10,Feng_13,Shen_14b}. It ensures the minimizer of \eqref{eq:solve_h} is unique. 
This assumption can be satisfied by simply adding a Tikhonov regularizer $\frac{m}{2}\norm{\bh}_2^2$ to $d(\bv\Vert\bW\bh)$, but we omit such {a regularization term in the objective function in our analysis}. 
\end{remark}

We now state our main theorem. 
\begin{theorem}\label{thm:main}
As $t\to\infty$, the sequence of dictionaries $\{\bW_t\}_{t\in\bbN}$ converges almost surely to the set of critical points of \eqref{eq:exp_loss} formulated with any divergence in class $\calD_2$. 
\end{theorem}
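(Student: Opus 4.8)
The plan is to cast the update \eqref{eq:upd_dict} as a stochastic approximation (SA) recursion and to analyze it via the ODE method, here in its projected-dynamical-system form. The first step is to identify the mean field. Since every $d\in\calD_2$ is convex---and, by Assumption~\ref{assum:sc_h}, $m$-strongly convex---in its second argument, the critical point $\bh_t$ of \eqref{eq:solve_h} is the \emph{unique} minimizer $\bh_t=\argmin_{\bh\in\calH} d(\bv_t\Vert\bW_{t-1}\bh)$. By the calculus of optimal-value (marginal) functions \cite{Shimizu_97}, a Danskin-type result, this uniqueness lets us transfer a Fréchet subgradient of the inner map $\bW\mapsto d(\bv_t\Vert\bW\bh_t)$ at $\bW_{t-1}$ to $\ell(\bv_t,\cdot)$: any $\bG_t\in\hpartial\tild_t(\bW_{t-1})$ satisfies $\bG_t\in\hpartial\ell(\bv_t,\bW_{t-1})$. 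Interchanging expectation and subdifferentiation via variational analysis \cite{Rock_98}, I would then decompose
\begin{equation*}
\bG_t = \bg(\bW_{t-1}) + \bM_t,
\end{equation*}
where $\bg(\bW_{t-1})\defeq\bbE[\bG_t\,|\,\calF_{t-1}]$ is a measurable selection of $\partial f(\bW_{t-1})$ (using that $\bv_t$ is independent of $\calF_{t-1}$) and $\{\bM_t\}$ is a martingale-difference sequence for the natural filtration $\{\calF_t\}$.

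Next I would verify the hypotheses of SA convergence \cite{Kushner_03,Borkar_08}. The step sizes satisfy \eqref{eq:step_size} by construction. Because $\calV$, $\calC$ and $\calH$ are compact and the subgradients of each $d\in\calD_2$ are bounded on compact subsets of its domain, the $\{\bG_t\}$ are uniformly bounded almost surely, which in turn gives $\bbE[\norm{\bM_t}^2\,|\,\calF_{t-1}]\le C$ for some finite $C$. The associated limiting dynamics is the projected subgradient flow
\begin{equation*}
\dot{\bW}(s)\in\Pi_{T_{\calC}(\bW(s))}\!\big(-\partial f(\bW(s))\big),
\end{equation*}
a projected dynamical system (a differential inclusion when $\partial f$ is set-valued) whose well-posedness follows from the Dupuis--Nagurney theory \cite{Dupuis_93}, using that $\calC$ is closed and convex and that $\partial f$ is upper semicontinuous with convex, compact values. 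With the boundedness and noise conditions in place, the piecewise-linear interpolation of $\{\bW_t\}$ (with increments given by the $\eta_t$) is an asymptotic pseudotrajectory of this system.

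The final ingredient is a Lyapunov argument for the limiting flow, taking $f$ itself as the Lyapunov function. Along any trajectory,
\begin{equation*}
\tfrac{d}{ds}\,f(\bW(s)) = -\big\|\Pi_{T_{\calC}(\bW(s))}(-\partial f(\bW(s)))\big\|^2 \le 0,
\end{equation*}
by the orthogonal-decomposition property of projection onto the (convex) tangent cone. Hence $f$ is nonincreasing, and the right-hand side vanishes \emph{precisely} on the set of critical points of \eqref{eq:exp_loss}, i.e. the points obeying the variational inequality \eqref{eq:cond_stat_pt}. By LaSalle's invariance principle for differential inclusions \cite{Haddad_08,Teschl_12}, every trajectory converges to the largest invariant set inside $\{\tfrac{d}{ds}f=0\}$, which is exactly this critical set. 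Combining the asymptotic-pseudotrajectory property with the invariance principle yields the almost-sure convergence of $\{\bW_t\}_{t\in\bbN}$ to the critical set.

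I expect the main obstacle to be the first step: rigorously establishing the regularity of the marginal function $\ell(\bv,\cdot)$, and of $f$, when $d\in\calD_2$ is nonsmooth (e.g. the $\ell_1$ or $\ell_2$ metrics). One must justify (i) the Danskin-type inclusion $\hpartial\tild_t(\bW_{t-1})\subseteq\hpartial\ell(\bv_t,\bW_{t-1})$ through a measurable-selection argument for the unique---but possibly boundary---inner minimizer $\bh_t$, where the optimality condition reads $\vecz\in\partial_\bh d(\bv_t\Vert\bW_{t-1}\bh_t)+N_{\calH}(\bh_t)$; and (ii) the interchange $\partial\,\bbE[\ell(\bv,\cdot)]=\bbE[\partial\ell(\bv,\cdot)]$, so that $\bg(\bW_{t-1})$ genuinely lands in $\partial f(\bW_{t-1})$. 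These steps, together with verifying the upper semicontinuity of $\partial f$ required by the differential-inclusion machinery, carry the bulk of the technical weight.
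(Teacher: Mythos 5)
Your proposal is correct and follows essentially the same route as the paper: an unbiased (sub)gradient estimator obtained via the unique inner minimizer and a Danskin-type argument plus interchange of expectation and differentiation, the stochastic-approximation/ODE method driving the iterates to the limit set of the projected dynamical system (projected differential inclusion in the nonsmooth case), and a Lyapunov/invariance argument with $f$ itself identifying that limit set with the critical points satisfying the variational inequality. The only cosmetic difference is that you phrase the interpolation step in asymptotic-pseudotrajectory language, whereas the paper uses the Kushner--Yin constant interpolations with asymptotic equicontinuity and the generalized Arzel\`a--Ascoli theorem; the technical obstacles you flag are precisely the content of the paper's regularity and unbiasedness lemmas.
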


\begin{remark}
We notice that the same convergence guarantees have been proved in previous works in which {the divergence term in the NMF objective function is} the squared $\ell_2$ loss \cite{Mairal_10,Shen_14b}. Therefore, our result here can be considered as a substantial {\em generalization} of the previous results, since the class $\calD_2$ 
covers many more important divergences, as discussed in Remark~\ref{rmk:def_div}. At a higher level, our problem falls within the scope of {\em stochastic (block) nonconvex  optimization}. 
Without additional assumptions on the regularity of the problem, 
convergence guarantees to the global optima are in general out-of-reach.
Indeed, the state-of-the-art convergence guarantees on such problems 
\cite{Ghad_13,Ghad_16a,Ghad_16b,Reddi_16} are stated in terms of the critical points. Although being suboptimal, the critical points subsume   global minima and are often empirically appealing, especially for matrix factorization problems \cite{Mairal_08,Mairal_10,Bao_15}.
\end{remark}


\section{Convergence analysis} \label{sec:conv_analyses}
This section is devoted to the proof of Theorem~\ref{thm:main}. For simplicity and ease of understanding, {here we focus on the divergences in class $\calD_1\cap\calD_2$.} The proof for the divergences in $\calD_2\setminus\calD_1$ can be similarly established, but with slightly more involved mathematical machinery. We defer the proof for the  divergences in  $\calD_2\setminus\calD_1$ to the supplemental material. 

This section is organized as follows. We first introduce some important notations, concepts, preliminary lemmas, as well as continuous-time interpolations of some discrete-time stochastic processes in Algorithm~\ref{algo:ONMFG}. Then we state the key lemmas that lead to the theorem, together with {sketches of their proofs}. Some technical discussions are provided {at} the end. 

\vspace{-.25cm}
\subsection{Notations and Concepts}
We denote the underlying probability space for the whole stochastic process $\{\bv_t,\bW_t,\bh_t\}_{t\in\bbN}$ generated {by} Algorithm~\ref{algo:ONMFG} as $(\Omega,\scB,\mu)$. 
In the sequel, we need to perform continuous-time interpolations for some discrete-time processes. To distinguish between these two types of processes, we use $t\in\bbN$ 
as the discrete time index and $s\in\bbR_+$ as the continuous time index.
For any $\omega\in\Omega$, we use $\bX_t(\omega)$ and $\bX(\omega,s)$ to denote the values of $\bX_t$ and $\bX(s)$ evaluated at $\omega$ respectively. 

Next we introduce some important concepts   in the analysis.
\begin{definition}[Equicontinuity and asymptotic equicontinuity \cite{Yin_05}]
A sequence of functions $\{f_n\}_{n\in\bbN}$, defined on a common real Banach space $(\calX,\norm{\cdot}_\calX)$ and mapped to a common real Banach space $(\calY,\norm{\cdot}_\calY)$, is {\em equicontinuous} (e.c.) at $x\in\calX$ if for any $\epsilon>0$, there exists $\delta>0$ such that
\begin{equation}
\sup_{n\in\bbN} \;\sup_{x'\in\calX:\norm{x-x'}_\calX<\delta} \norm{f_n(x)-f_n(x')}_\calY < \epsilon
\end{equation} 
and  {\em asymptotically equicontinuous} (a.e.c.) at $x\in\calX$ if
\begin{equation}
\limsup_{n\to\infty} \;\sup_{x'\in\calX:\norm{x-x'}_\calX<\delta} \norm{f_n(x)-f_n(x')}_\calY < \epsilon. 
\end{equation} 
If $\{f_n\}_{n\ge 1}$ is e.c. (resp. a.e.c.) at each $x\in\calX$, then $\{f_n\}_{n\ge 1}$ is e.c. (resp. a.e.c.) on $\calX$. 
\end{definition}

\begin{definition}[Projected dynamical system, limit set and stationary points \cite{Dupuis_93,Teschl_12}] \label{def:PDS}
Given a closed and convex set $\calK$ in a (finite-dimensional) real Banach space $\calX$, and a continuous function $g:\calK\to\calX$, the projected dynamical system (PDS) (on an interval $\calI\subseteq\bbR_+$) associated with $\calK$ and $g$ with initial value $x_0\in\calK$ is defined as
\begin{equation}
\frac{{\rm d}}{{\rm d}s} x(s) = \pi_\calK\Big[x(s),g(x(s))\Big], \; x(0) = x_0,\;s\in\calI, \label{eq:PDS}
\end{equation}
where 
\begin{equation}
\pi_\calK[x,v] \defeq \lim_{\delta\downarrow 0} \frac{\Pi_\calK(x+\delta v)-x}{\delta}, \forall\,x\in\calK,\;\forall\,v\in\calX.
\end{equation}
Denote $\calP(g,\calK,x_0)$ as the solution set of \eqref{eq:PDS}. The limit set of \eqref{eq:PDS}, $\calL(g,\calK,x_0)$ is defined as
\begin{align*}
&\calL(g,\calK,x_0) \defeq \bigcup_{x(\cdot)\in\calP(g,\calK,x_0)} \Big\{y\in\calK\,\Big|\,\exists\,\{s_n\}_{n\in\bbN}\subseteq\bbR_+,\;\\ 
&\hspace{5.5cm} \; s_n\uparrow\infty,\,x(s_n)\to y\Big\}.
\end{align*}
Moreover, the set of stationary points associated with $g$ and $\calK$, $\calS(g,\calK)$ is defined as
\begin{equation}
\calS(g,\calK) \defeq \left\{x\in\calK\,\Big|\,\pi_\calK\Big[x,g(x)\Big]=0\right\}.
\end{equation}
\end{definition}
Finally, since we focus only on the divergences in $\calD_1\cap\calD_2$, the Fr\'{e}chet subdifferential $\hpartial \tild_t(\bW_{t-1})=\{\nabla_\bW d(\bv_t\Vert\bW_{t-1}\bh_t)\}$ (see Algorithm~\ref{algo:ONMFG}).  
\subsection{Preliminary Lemmas}
We first present two lemmas that together establish that 
the stochastic (noisy) gradient $\nabla_\bW d(\bv_t\Vert\bW_{t-1}\bh_t)$ in Algorithm~\ref{algo:ONMFG} acts as an unbiased estimator of the ``true'' gradient $\nabla f(\bW_{t-1})$, for any $t\in\bbN$.

\begin{lemma} \label{lem:regularity}
Given any $(\bv,\bW)\in\calV\times\calW$, 
$\ell$ is differentiable at $(\bv,\bW)$ and $(\bv,\bW)\mapsto\nabla_\bW\ell(\bv,\bW)$ is continuous at $(\bv,\bW)$. Moreover, let $\bh^*(\bv,\bW)\defeq\min_{\bh\in\calH}d(\bv\Vert\bW\bh)$ (by Assumption~\ref{assum:sc_h}), then $\nabla_\bW\ell(\bv,\bW) = \nabla_\bW d (\bv\Vert\bW\bh^*(\bv,\bW))$. Consequently, there exists $M\in (0,\infty)$ such that $\normt{\nabla_\bW\ell(\bv,\bW)}\le M$, 
for all $(\bv,\bW)\in\calV\times\calW$. 
\end{lemma}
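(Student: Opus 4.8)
The plan is to recognize $\ell(\bv,\bW)=\min_{\bh\in\calH}d(\bv\Vert\bW\bh)$ as the \emph{optimal value} of the inner minimization over the fixed, compact, convex set $\calH$, and to derive its differentiability from the sensitivity (envelope / Danskin-type) theory for optimal-value functions \cite{Shimizu_97}. First I would record the joint regularity available for $d(\cdot\Vert\cdot)\in\calD_1\cap\calD_2$: the map $(\bv,\bW,\bh)\mapsto d(\bv\Vert\bW\bh)$ is continuously differentiable (for the explicit families in Tables~\ref{table:csiszar}--\ref{table:bregman} it is in fact jointly analytic on the relevant positive orthant, so $\nabla_\bW d(\bv\Vert\bW\bh)$ is jointly continuous in all three arguments), and by Assumption~\ref{assum:sc_h} it is $m$-strongly convex in $\bh$. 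Strong convexity over the compact set $\calH$ guarantees that the inner minimizer $\bh^*(\bv,\bW)$ exists and is \emph{unique}.

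Second, I would show that $(\bv,\bW)\mapsto\bh^*(\bv,\bW)$ is continuous. Since $\calH$ is compact and does \emph{not} depend on the parameters $(\bv,\bW)$, Berge's maximum theorem gives upper hemicontinuity of the argmin correspondence, and uniqueness (from strong convexity) upgrades this to continuity of the single-valued map $\bh^*$. Equivalently, a direct stability argument works: writing the two variational inequalities at $\bh_1^*=\bh^*(\bv_1,\bW_1)$ and $\bh_2^*=\bh^*(\bv_2,\bW_2)$ and adding, $m\normt{\bh_1^*-\bh_2^*}^2$ is bounded by the modulus of continuity of $d(\bv\Vert\bW\bh)$ in $(\bv,\bW)$ uniformly over the compact $\bh\in\calH$, which tends to $0$ as $(\bv_2,\bW_2)\to(\bv_1,\bW_1)$.

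Third, with the inner minimizer unique and the objective jointly $C^1$, the optimal-value-function theory yields that $\ell(\bv,\bW)$ is differentiable in $\bW$ together with the envelope formula $\nabla_\bW\ell(\bv,\bW)=\nabla_\bW d(\bv\Vert\bW\bh^*(\bv,\bW))$, which is the asserted identity. Continuity of $\nabla_\bW\ell$ then follows by composition: $\nabla_\bW d(\bv\Vert\bW\bh)$ is jointly continuous and $\bh^*$ is continuous by the previous step. For the final bound, $\calV$ is compact by Assumption~\ref{assum:comp_supp} and $\calW$ is compact (closed, and bounded via the column constraint $\normt{\bW_{:j}}_\infty\le 1$), so $\calV\times\calW$ is compact; a continuous function on a compact set is bounded, giving $M\defeq\sup_{(\bv,\bW)\in\calV\times\calW}\normt{\nabla_\bW\ell(\bv,\bW)}<\infty$.

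I expect the crux to be the rigorous justification of the envelope formula when $\bh^*(\bv,\bW)$ activates constraints of $\calH$, i.e.\ lies on $\bdr\calH$. There one cannot simply invoke the chain rule, because $\bh\mapsto\bh^*(\bv,\bW)$ need not be differentiable at the parameter value in question; the point is that the variation of $\bh^*$ contributes nothing to the derivative of the value function, as the admissible variations lie in directions along which the inner objective is stationary relative to the normal cone of $\calH$. This is precisely where the machinery of \cite{Shimizu_97}, rather than naive differentiation, is required, and where the parameter-independence of $\calH$ and the uniqueness guaranteed by strong convexity (Assumption~\ref{assum:sc_h}) are indispensable.
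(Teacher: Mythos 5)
Your proposal is correct and follows essentially the same route as the paper: continuity of the unique minimizer $\bh^*(\bv,\bW)$ via the maximum theorem (with uniqueness from Assumption~\ref{assum:sc_h}), the envelope formula $\nabla_\bW\ell(\bv,\bW)=\nabla_\bW d(\bv\Vert\bW\bh^*(\bv,\bW))$ via Danskin's theorem over the compact, parameter-independent set $\calH$, and the bound $M$ from compactness of $\calV\times\calC$. Your closing remark about why the boundary case of $\bh^*$ is harmless is exactly the content of the Danskin-type lemma the paper invokes, so nothing further is needed.
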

\begin{proof}
It is easy to check that i) $(\bv,\bW) \mapsto d(\bv\Vert\bW\bh)$ is differentiable on $\calV\times\calC$, for each $\bh\in\calH$, ii) $(\bv,\bW, \bh) \mapsto d(\bv\Vert\bW\bh)$ is continuous on $\calV\times\calC\times\calH$ and iii)  $(\bv,\bW, \bh) \mapsto \nabla_\bW d(\bv\Vert\bW\bh)$ and $(\bv,\bW, \bh) \mapsto \nabla_\bv d (\bv\Vert\bW\bh)$ are both continuous on $\calV\times\calC\times\calH$. 
Furthermore, Assumption~\ref{assum:sc_h} implies $\bh^*(\bv,\bW)$ is a unique minimizer of \eqref{eq:solve_h} for each $(\bv,\bW)\in\calV\times\calC$. 
Then by the compactness of $\calH$ and the maximum theorem (see Lemma~\ref{lem:maximum}), $\bh^*(\bv,\bW)$ is continuous on $\calV\times\calW$.
By Danskin's theorem (see Lemma~\ref{lem:Danskin}) and again by the compactness of $\calH$, $\ell(\bv,\bW)$ is differentiable on $\calV\times\calW$ and $\nabla_\bW\ell(\bv,\bW) = \nabla_\bW d (\bv\Vert\bW\bh^*(\bv,\bW))$, which is continuous on $\calV\times\calW$. Since $\calV\times\calW$ is compact (by Assumption~\ref{assum:comp_supp}), 
there exists $M\in(0,\infty)$ such that $\normt{\nabla_\bW \ell(\bv,\bW)}\le M$, for all $(\bv,\bW)\in\calV\times\calW$. 
\end{proof}

\begin{lemma}\label{lem:unbiased}
The expected loss (objective) function $f$ is continuously differentiable on $\calC$ and $\nabla f(\bW) = \bbE_{\bv\sim\bbP}\left[\nabla_\bW\ell(\bv,\bW)\right]$ for each $\bW\in\calC$. 
\end{lemma}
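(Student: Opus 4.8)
The plan is to prove Lemma~\ref{lem:unbiased} by differentiating under the integral (expectation) sign, with the interchange of $\nabla$ and $\bbE$ justified by the dominated convergence theorem. I would invoke two consequences of Lemma~\ref{lem:regularity}: for each fixed $\bv\in\calV$, the map $\bW\mapsto\ell(\bv,\bW)$ is differentiable and $(\bv,\bW)\mapsto\nabla_\bW\ell(\bv,\bW)$ is jointly continuous on $\calV\times\calW$; and there is a uniform bound $\normt{\nabla_\bW\ell(\bv,\bW)}\le M<\infty$ over $\calV\times\calW$. Since $\bbP$ is a probability measure supported on the compact set $\calV$ (Assumption~\ref{assum:comp_supp}), the constant $M$ is an integrable dominating function, which is the crucial enabler of the whole argument.

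First I would fix $\bW\in\calC$ and an arbitrary direction $\bU\in\bbR^{F\times K}$, and form the difference quotient
\[
\frac{f(\bW+\tau\bU)-f(\bW)}{\tau} = \bbE_{\bv\sim\bbP}\!\left[\frac{\ell(\bv,\bW+\tau\bU)-\ell(\bv,\bW)}{\tau}\right],
\]
which is legitimate for $\tau$ small enough that the segment from $\bW$ to $\bW+\tau\bU$ stays inside the open set $\calW$ (possible since $\calC\subseteq\calW$). Applying the mean value theorem to $\ell(\bv,\cdot)$ along this segment, for each $\bv$ there is $\theta=\theta(\bv,\tau)\in(0,1)$ so that the integrand equals $\lrangle{\nabla_\bW\ell(\bv,\bW+\theta\tau\bU)}{\bU}$, whose magnitude is at most $M\norm{\bU}$ by the uniform bound. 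As $\tau\downarrow 0$ this integrand converges pointwise in $\bv$ to $\lrangle{\nabla_\bW\ell(\bv,\bW)}{\bU}$ by continuity of $\nabla_\bW\ell$, while being dominated by the constant $M\norm{\bU}$. The dominated convergence theorem then permits passing the limit inside the expectation, and linearity of the inner product gives
\[
\lim_{\tau\to 0}\frac{f(\bW+\tau\bU)-f(\bW)}{\tau} = \lrangle{\bbE_{\bv\sim\bbP}[\nabla_\bW\ell(\bv,\bW)]}{\bU}.
\]
Since this directional derivative exists for every $\bU$ and is linear in $\bU$, $f$ is G\^{a}teaux differentiable at $\bW$ with gradient $\bbE_{\bv\sim\bbP}[\nabla_\bW\ell(\bv,\bW)]$.

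Next I would establish that $\bW\mapsto\nabla f(\bW)=\bbE_{\bv\sim\bbP}[\nabla_\bW\ell(\bv,\bW)]$ is continuous on $\calC$: for any $\bW_n\to\bW$, joint continuity of $\nabla_\bW\ell$ gives $\nabla_\bW\ell(\bv,\bW_n)\to\nabla_\bW\ell(\bv,\bW)$ for every $\bv\in\calV$, and the uniform bound $M$ again dominates, so a second application of dominated convergence yields $\nabla f(\bW_n)\to\nabla f(\bW)$. A G\^{a}teaux-differentiable map whose derivative is continuous on an open set is automatically continuously Fr\'{e}chet differentiable there; this delivers the $C^1$ claim and completes the proof.

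The step requiring the most care---the \emph{main obstacle}---is the uniform domination of the difference quotients that licenses the interchange of limit and expectation. This is precisely where the mean value theorem combined with the uniform gradient bound $M$ from Lemma~\ref{lem:regularity} does the work, with compactness of $\calV$ and finiteness of $\bbP$ guaranteeing that the dominating function $M\norm{\bU}$ is integrable. A secondary technical point is keeping the differentiation segment within the open domain $\calW$ on which $\ell(\bv,\cdot)$ is differentiable, which is ensured by $\calC\subseteq\calW$ so that small perturbations of any $\bW\in\calC$ remain admissible in both directions.
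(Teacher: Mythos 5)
Your proposal is correct and follows essentially the same route as the paper: the paper also differentiates under the expectation sign, citing its Leibniz integral rule (Lemma~\ref{lem:Leib_int}) together with the continuity and uniform boundedness of $\nabla_\bW\ell$ from Lemma~\ref{lem:regularity} and the compactness of $\calV$, and deduces continuity of $\nabla f$ from the joint continuity of $\nabla_\bW\ell$. You merely unpack the interchange-of-limit step into the mean value theorem plus dominated convergence rather than invoking the Leibniz rule as a black box, which if anything supplies the domination hypothesis more explicitly than the paper does.
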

\begin{proof}
Since both $\bv\mapsto\ell(\bv,\bW)$ and $\bv\mapsto\nabla_\bW\ell(\bv,\bW)$ are continuous on $\calV$ (by Lemma~\ref{lem:regularity}) and $\calV$ is compact, both of them are Lebesgue integrable. Thus, by Leibniz integral rule (see Lemma~\ref{lem:Leib_int}), we have $\nabla f(\bW) = \bbE_\bv\left[\nabla_\bW\ell(\bv,\bW)\right]$ for each $\bW\in\calC$. The continuity of $\nabla f$ on $\calC$ is implied by the continuity of $\nabla_\bW\ell(\bv,\bW)$ on $\calV\times\calW$.
\end{proof}

\begin{corollary}\label{cor:bounded}
We have $\sup_{t\in\bbN}\bbE\left[\normt{\nabla_\bW\ell(\bv_t,\bW_{t-1})}^2\right]\le M^2$. Moreover, there exists  $M'\in(0,\infty)$ such that for each $\omega\in\Omega$,  $\sup_{t\in\bbN}\normt{\nabla f(\bW_{t-1}(\omega))}\le M'$. 
\end{corollary}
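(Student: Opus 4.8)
The plan is to obtain both bounds as direct consequences of the uniform gradient bound established in Lemma~\ref{lem:regularity}, namely that $\normt{\nabla_\bW\ell(\bv,\bW)}\le M$ for all $(\bv,\bW)\in\calV\times\calW$. The single preliminary observation needed is that the relevant iterates always lie in the sets on which this bound is valid. Since $\bW_0\in\calC$ and step~3 of Algorithm~\ref{algo:ONMFG} applies the projection $\Pi_\calC$, every $\bW_{t-1}$ lies in $\calC$ (hence in $\calW$); and each $\bv_t$ is drawn from $\bbP$, whose support is $\calV$ by Assumption~\ref{assum:comp_supp}, so $\bv_t\in\calV$ almost surely.

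For the first bound I would argue pointwise. On the almost-sure event that $\bv_t\in\calV$ we have $(\bv_t,\bW_{t-1})\in\calV\times\calW$, so Lemma~\ref{lem:regularity} gives $\normt{\nabla_\bW\ell(\bv_t,\bW_{t-1})}\le M$, and therefore $\normt{\nabla_\bW\ell(\bv_t,\bW_{t-1})}^2\le M^2$ almost surely. Taking expectations yields $\bbE\big[\normt{\nabla_\bW\ell(\bv_t,\bW_{t-1})}^2\big]\le M^2$ for every $t\in\bbN$, and taking the supremum over $t$ delivers the claimed inequality. No martingale or independence argument is required, since the bound already holds realization-by-realization.

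For the second bound I would invoke Lemma~\ref{lem:unbiased}, which identifies $\nabla f(\bW)=\bbE_{\bv\sim\bbP}[\nabla_\bW\ell(\bv,\bW)]$ for every $\bW\in\calC$. Because the Frobenius norm is convex, Jensen's inequality gives $\normt{\nabla f(\bW)}\le\bbE_{\bv}\big[\normt{\nabla_\bW\ell(\bv,\bW)}\big]\le M$ for all $\bW\in\calC$, again using the uniform bound (valid since $\bv\in\calV$ holds $\bbP$-almost surely and $\bW\in\calC\subseteq\calW$). This is a deterministic statement about $f$ valid at every point of $\calC$; in particular, since $\bW_{t-1}(\omega)\in\calC$ for each $\omega\in\Omega$ and each $t$, we obtain $\normt{\nabla f(\bW_{t-1}(\omega))}\le M$ uniformly in $t$. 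Thus $M'=M$ serves as the required constant.

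There is no genuinely hard step: the corollary is a bookkeeping consequence of the uniform bound $M$ together with Lemmas~\ref{lem:regularity} and~\ref{lem:unbiased}. The only points warranting care are confirming that the iterates remain in $\calV\times\calW$ so that Lemma~\ref{lem:regularity} applies (guaranteed by the projection step for $\bW_{t-1}$ and by the compact support of $\bbP$ for $\bv_t$), and noting that the \emph{same} constant $M$ simultaneously controls the second moment of the stochastic gradient and the deterministic gradient $\nabla f$, the latter via the convexity of the norm.
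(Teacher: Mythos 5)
Your proof is correct and is essentially the argument the paper intends: the corollary is stated without proof precisely because it follows immediately from the uniform bound $\normt{\nabla_\bW\ell(\bv,\bW)}\le M$ of Lemma~\ref{lem:regularity} (applicable since $\bv_t\in\calV$ almost surely and $\bW_{t-1}\in\calC$ by the projection step) together with the representation $\nabla f(\bW)=\bbE_\bv[\nabla_\bW\ell(\bv,\bW)]$ of Lemma~\ref{lem:unbiased} and Jensen's inequality. Your observation that one may take $M'=M$ is a small bonus beyond the mere existence claim in the statement.
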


Now, define the ``noise'' part in the stochastic gradient\footnote{$\bN_t$ is a function of both $\bv_t$ and $\bW_{t-1}$, but we omit such dependence to make notations uncluttered.} $\nabla_\bW d (\bv_t\Vert\bW_{t-1}\bh_t)$ in \eqref{eq:upd_dict}, $\bN_t$ as
\begin{equation}
\bN_t\defeq \nabla_\bW\ell(\bv_t,\bW_{t-1}) - \nabla f(\bW_{t-1}).
\end{equation}
We also define a filtration $\{\scF_t\}_{t\ge 0}$ such that $\scF_t \defeq \sigma\{\bv_i,\bW_i,\bh_i\}_{i=1}^t$ for all $t\ge 1$ and $\scF_0 = \{\emptyset,\Omega\}$. 

\begin{lemma}
There exists a constant $M''\in(0,\infty)$ such that 
\begin{equation}
\sup_{t\in\bbN}\bbE\left[\normt{\bN_t}^2\right]\le {M''}^2.  \label{eq:bounded_noise}
\end{equation}  
Moreover, $\{\bN_t\}_{t\ge 1}$ is a martingale difference sequence adapted to $\{\scF_t\}_{t\ge 0}$.
\end{lemma}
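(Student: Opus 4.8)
The plan is to verify the two assertions separately, drawing entirely on the three preceding results: the pointwise gradient bound of Lemma~\ref{lem:regularity}, the unbiasedness identity $\nabla f(\bW)=\bbE_{\bv\sim\bbP}[\nabla_\bW\ell(\bv,\bW)]$ of Lemma~\ref{lem:unbiased}, and the boundedness/second-moment estimates of Corollary~\ref{cor:bounded}. Throughout I use that, by construction in Algorithm~\ref{algo:ONMFG}, $\bW_{t-1}\in\calC$ for every $t$ (each dictionary update ends with a projection $\Pi_\calC$, and $\bW_0\in\calC$ by choice), while $\bv_t\in\calV$ almost surely by Assumption~\ref{assum:comp_supp}.

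For the martingale-difference claim I would first settle adaptedness. Since by Assumption~\ref{assum:sc_h} the problem \eqref{eq:solve_h} has a unique minimizer, $\bh_t$ is a measurable function of $(\bv_t,\bW_{t-1})$, so each $\bW_t$ is a measurable function of $(\bv_1,\dots,\bv_t)$; hence $\scF_{t-1}=\sigma\{\bv_i,\bW_i,\bh_i\}_{i=1}^{t-1}=\sigma\{\bv_1,\dots,\bv_{t-1}\}$, and $\bN_t$ is $\scF_t$-measurable because $(\bv,\bW)\mapsto\nabla_\bW\ell(\bv,\bW)$ is continuous by Lemma~\ref{lem:regularity}. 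The key step is the conditional-mean computation. As the samples are \iid, $\bv_t\indep\scF_{t-1}$, while $\bW_{t-1}$ is $\scF_{t-1}$-measurable; applying the ``freezing'' (independence) lemma for conditional expectations to the bounded continuous map $\nabla_\bW\ell(\cdot\Vert\cdot)$ gives
\begin{equation*}
\bbE\!\left[\nabla_\bW\ell(\bv_t,\bW_{t-1})\,\middle|\,\scF_{t-1}\right]=\left.\bbE_{\bv\sim\bbP}\!\left[\nabla_\bW\ell(\bv,\bW)\right]\right|_{\bW=\bW_{t-1}}=\nabla f(\bW_{t-1}),
\end{equation*}
where the last equality is Lemma~\ref{lem:unbiased}. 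Since $\nabla f(\bW_{t-1})$ is $\scF_{t-1}$-measurable, subtracting it yields $\bbE[\bN_t\mid\scF_{t-1}]=\vecz$, which is precisely the martingale-difference property.

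For the uniform second-moment bound the cleanest route is a crude pointwise estimate. Because $\bv_t\in\calV$ and $\bW_{t-1}\in\calC$ almost surely, Lemma~\ref{lem:regularity} gives $\normt{\nabla_\bW\ell(\bv_t,\bW_{t-1})}\le M$ a.s., while Corollary~\ref{cor:bounded} gives $\normt{\nabla f(\bW_{t-1})}\le M'$ for each $\omega$; the triangle inequality then yields $\normt{\bN_t}\le M+M'$ almost surely, so \eqref{eq:bounded_noise} holds with $M''\defeq M+M'$ uniformly in $t$. (One may instead sharpen this via the $L^2$-Pythagorean identity $\bbE\normt{\bN_t}^2=\bbE\normt{\nabla_\bW\ell(\bv_t,\bW_{t-1})}^2-\bbE\normt{\nabla f(\bW_{t-1})}^2\le M^2$, invoking the second-moment bound of Corollary~\ref{cor:bounded}, but the coarser bound already suffices and also delivers integrability of $\bN_t$.)

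The argument is largely routine given the preceding lemmas; the only genuinely delicate point is the conditional-expectation ``freezing'' step, which requires (i) the independence $\bv_t\indep\scF_{t-1}$, valid precisely because $\scF_{t-1}$ is generated by $\bv_1,\dots,\bv_{t-1}$ together with the \iid assumption, and (ii) enough joint measurability, continuity, and integrability of $(\bv,\bW)\mapsto\nabla_\bW\ell(\bv,\bW)$ to justify disintegrating the expectation and evaluating it at the frozen value $\bW=\bW_{t-1}$, both of which are supplied by Lemma~\ref{lem:regularity} and the compactness in Assumption~\ref{assum:comp_supp}.
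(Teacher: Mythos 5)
Your proposal is correct and follows essentially the same route as the paper: the second-moment bound is obtained from Corollary~\ref{cor:bounded} (your explicit choice $M''=M+M'$ via the triangle inequality just makes the paper's ``immediate consequence'' concrete), and the martingale-difference property comes from conditioning on $\scF_{t-1}$, using the independence of $\bv_t$ from the past together with Lemma~\ref{lem:unbiased}. You merely spell out the adaptedness and the ``freezing'' justification that the paper leaves implicit.
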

\begin{proof}
The bound in \eqref{eq:bounded_noise} is an immediate consequence of Corollary~\ref{cor:bounded}. It also implies for each $t\in\bbN$, $\bbE[\normt{\bN_t}]<\infty$. Moreover, with probability one,
\begin{align*}
\bbE\left[\bN_t\vert\scF_{t-1}\right] &= \bbE\left[\nabla_\bW\tilell(\bv_t,\bW_{t-1})\vert\scF_{t-1}\right] - \nabla f(\bW_{t-1}) \\
&= \nabla f(\bW_{t-1}) - \nabla f(\bW_{t-1}) 
= \vecz. \qedhere
\end{align*}
\end{proof}

\subsection{Continuous-time Interpolations}
Observe that \eqref{eq:upd_dict}, which lies at the central part in our analysis, is a discrete-time PDS. We find it more convenient to  analyze a continuous-time analogue of it, so we perform (continuous-time) constant interpolation on  \eqref{eq:upd_dict}. 
Specifically, we first explicitly model the projection $\Pi_\calC$ in \eqref{eq:upd_dict} in terms of an additive noise term $\bZ_t$, i.e.,
\begin{align}
\bW_t 
&:= \bW_{t-1} - \eta_t \nabla f(\bW_{t-1}) - \eta_t\bN_t + \eta_t\bZ_t, \label{eq:diff_eqn}
\end{align}
where 
\begin{align}
\bZ_t &\defeq \frac{1}{\eta_t}\Pi_\calC\Big\{\bW_{t-1} - \eta_t \nabla_\bW\tilell(\bv_t,\bW_{t-1})\Big\}\nn\\
&\hspace{1.5cm} - \frac{1}{\eta_t}\Big\{\bW_{t-1} - \eta_t\nabla_\bW\tilell(\bv_t,\bW_{t-1})\Big\}. \label{eq:def_Z}
\end{align}
Then we define three sequences of 
functions $\{F^t\}_{t\in\bbN}$, $\{N^t\}_{t\in\bbN}$ and $\{Z^t\}_{t\in\bbN}$ with common domain $\bbR_{+}$ as 
\begin{align}
F^t(s) &\defeq -\sum_{i=t}^{m(s_t+s)-1} \eta_{i+1}\nabla f(\bW_{i}),\\
N^t(s) &\defeq -\sum_{i=t+1}^{m(s_t+s)} \eta_i\bN_i, \\
Z^t(s) &\defeq \sum_{i=t+1}^{m(s_t+s)} \eta_i\bZ_i,  \label{eq:def_Zt_CT}
\end{align}
for $s>0$ and $F^t(0) = N^t(0) = Z^t(0)\defeq 0$, where 
\[ s_t \defeq  \left\{\hspace{-.2cm}\begin{array}{ll}
0, \; &t = 0\\
\sum_{i=1}^{t}\eta_i, \; &t\ge 1
\end{array},\right. \quad
m(s) \defeq \left\{\hspace{-.2cm}\begin{array}{ll}
0, \; &s=0\\
t, \; &s \in (s_{t-1},s_t]
\end{array}.\right.
\]
For illustration purpose, one realization of $Z^t(s)$, $Z^t(\omega,s)$ is plotted in Figure~\ref{fig:plot_CT}. 
Define $W^t(s) \defeq \bW_{m(s_t+s)-1}$. 
By \eqref{eq:diff_eqn}, for any $t\in\bbN$ we have for all $s\ge 0$,
\begin{equation}
W^t(s) = W^t(0) + F^{t-1}(s) + N^{t-1}(s) + Z^{t-1}(s). \label{eq:def_Wt}
\end{equation}

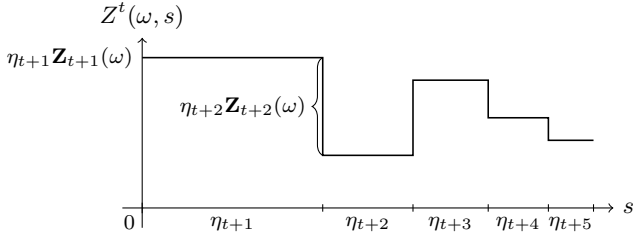
\begin{figure}[t]
\centering
\begin{tikzpicture}
\datavisualization [school book axes,
                    visualize as line,
                    y axis={ticks={major={at={0}}},label={$Z^t(\omega,s)$}},
                    x axis={ticks={minor={at = {2.4,3.6,4.6,5.4,6}},major={at={0}}},label={$s$}}
                    ]

data {
       x, y
       0, 2
       2.4, 2
       2.4, 0.7
       3.6, 0.7
       3.6, 1.7
       4.6, 1.7
       4.6, 1.2
       5.4, 1.2
       5.4, 0.9
       6, 0.9
      };
      
info{
\node[below] at (1.2,0) {\footnotesize $\eta_{t+1}$};
\node[below] at (3,0) {\footnotesize $\eta_{t+2}$};
\node[below] at (4.1,0) {\footnotesize $\eta_{t+3}$};
\node[below] at (5,0) {\footnotesize $\eta_{t+4}$};
\node[below] at (5.7,0) {\footnotesize $\eta_{t+5}$};
\node[left] at (2.3,1.35) {\footnotesize $\eta_{t+2}\bZ_{t+2}(\omega)$};
\node[left] at (0,2) {\footnotesize $\eta_{t+1}\bZ_{t+1}(\omega)$};
\draw[decorate,decoration={brace,amplitude=4pt}] (2.4,0.7) -- (2.4,2);
};
\end{tikzpicture}\vspace{-.3cm}
\caption{Plot of $Z^t(\omega,\cdot)$ on $\bbR_+$ for some $\omega\in\Omega$. Here $\bZ_t(\omega)$ refers to a scalar. Also, $Z^t(\omega,\cdot)$ is always left-continuous by \eqref{eq:def_Zt_CT}, but we omit such less important details in the plot.} \vspace{-.5cm}\label{fig:plot_CT}
\end{figure}

\vspace{-.5cm}
\subsection{Key Lemmas}

This section consists of three lemmas. In particular, our main theorem is an immediate consequence of Lemmas~\ref{lem:conv_limit_set} and \ref{lem:char_limit_set}. 
Before presenting the two lemmas, we first present Lemma~\ref{lem:asym_equi} since it lays the foundations for proving Lemma~\ref{lem:conv_limit_set}. 
We provide the proof sketch of Lemma~\ref{lem:conv_limit_set} and  defer the detailed proofs of all the lemmas to the supplemental material. 

\begin{lemma}[Almost sure asymptotic equicontinuity of important functions]\label{lem:asym_equi}
For any $t\in\bbN$, define 
\begin{align}
G^t(s) &\defeq -\int_0^s \nabla f(W^t(\tau)) \,d\tau, \;\;s\ge 0, \label{eq:def_tilG}\\
Y^t(s) &\defeq \int_0^s Z^t(\tau) \,d\tau,\;\;s\ge 0. 
\end{align}
Then we have
\begin{enumerate}
\item $N^t\convu \vecz$ on $\bbR_+$ almost surely. 
\item $\Delta_1^t\defeq F^t-G^t\convu \vecz$ on $\bbR_+$ almost surely.
\item $\{G^t\}_{t\in\bbN}$ is equicontinuous on $\bbR_+$ almost surely. 
\item $\Delta_2^t\defeq Z^t-Y^t\convu \vecz$  and $\{Y^t\}_{t\in\bbN}$ is equicontinuous on $\bbR_+$ almost surely. 
\end{enumerate}
Consequently, $\{N^t\}_{t\in\bbN}$, $\{F^t\}_{t\in\bbN}$, $\{Z^t\}_{t\in\bbN}$ and $\{W^t\}_{t\in\bbN}$ are asymptotically equicontinuous on $\bbR_+$ almost surely.
\end{lemma}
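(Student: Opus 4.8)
The plan is to treat the four items as one genuinely probabilistic estimate (item~1, via martingale convergence) together with three deterministic estimates (items~2--4) that all hold on the probability-one event where the uniform bounds of Corollary~\ref{cor:bounded} and Lemma~\ref{lem:regularity} are valid; the concluding asymptotic-equicontinuity assertion then follows from elementary closure properties. Throughout I would use three uniform bounds: $\normt{\nabla_\bW\ell(\bv,\bW)}\le M$, the derived bound $\normt{\bZ_t}\le M$ (because $\Pi_\calC$ is non-expansive and $\bW_{t-1}\in\calC$, so the projection correction in \eqref{eq:def_Z} has norm at most $\eta_t\normt{\nabla_\bW\tilell(\bv_t,\bW_{t-1})}\le\eta_t M$), and $\sup_t\normt{\nabla f(\bW_{t-1})}\le M'$. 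I would also record at the outset that $\sum_t\eta_t^2<\infty$ forces $\eta_t\to 0$, so $\rho_t\defeq\sup_{i>t}\eta_i\to 0$.

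For item~1, I would note that $S_n\defeq\sum_{i=1}^n\eta_i\bN_i$ is a martingale adapted to $\{\scF_n\}$ (the $\bN_i$ forming a martingale difference sequence), with $\bbE\normt{S_n}^2=\sum_{i=1}^n\eta_i^2\bbE\normt{\bN_i}^2\le {M''}^2\sum_i\eta_i^2<\infty$. Hence $\{S_n\}$ is $L^2$-bounded and converges almost surely. The key point is that $N^t(s)=-(S_{m(s_t+s)}-S_t)$ for every $s$, so
\[
\sup_{s\ge 0}\normt{N^t(s)}\le \sup_{n\ge t}\normt{S_n-S_t},
\]
and the right-hand side is the Cauchy tail of the a.s.-convergent sequence $\{S_n\}$, which vanishes as $t\to\infty$. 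This yields $N^t\convu\vecz$ on all of $\bbR_+$ on a single probability-one event, i.e.\ the required uniform-in-$s$ statement.

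For items~2 and 3, I would exploit that $F^t$ and $G^t$ are the discrete (summed) and continuous (integrated) versions of the same data. A direct bookkeeping over the partition $\{(s_{j-1},s_j]\}$ shows $F^t(s)$ and $G^t(s)$ agree except on the final, possibly fractional, subinterval, so that $\normt{F^t(s)-G^t(s)}=|s_{m(s_t+s)}-s_t-s|\,\normt{\nabla f(\bW_{m(s_t+s)-1})}\le M'\rho_t$ uniformly in $s$, giving item~2 as $t\to\infty$. Item~3 is immediate since $G^t(s_2)-G^t(s_1)=-\int_{s_1}^{s_2}\nabla f(W^t(\tau))\,d\tau$ has norm at most $M'|s_2-s_1|$, so $\{G^t\}$ is uniformly $M'$-Lipschitz in $s$, hence equicontinuous. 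Item~4 is the exact analogue, with $\nabla f(W^t(\cdot))$ replaced by the piecewise-constant interpolation $\tau\mapsto\bZ_{m(s_t+\tau)}$ of the reflection terms (the reflection counterpart of the gradient step function): the summed object $Z^t$ and its integrated counterpart $Y^t$ differ by at most one fractional step, bounded by $M\rho_t\to0$, which gives $\Delta_2^t\convu\vecz$, while $Y^t$ is uniformly $M$-Lipschitz in $s$, giving equicontinuity, both using $\normt{\bZ_t}\le M$.

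Finally, the ``Consequently'' assertion follows from three stability facts for (asymptotic) equicontinuity, each proved by the triangle inequality and subadditivity of $\limsup$: (a) a sequence converging uniformly to $\vecz$ is asymptotically equicontinuous; (b) if $\{g_n\}$ is equicontinuous and $f_n-g_n\convu\vecz$, then $\{f_n\}$ is asymptotically equicontinuous; and (c) sums of asymptotically equicontinuous sequences are asymptotically equicontinuous, and adding an $s$-constant term (such as $W^t(0)=\bW_{t-1}$) changes nothing. Applying (a) to $N^t$, (b) to $F^t=G^t+\Delta_1^t$ and to $Z^t=Y^t+\Delta_2^t$, and then (c) to the decomposition $W^t(s)=W^t(0)+F^{t-1}(s)+N^{t-1}(s)+Z^{t-1}(s)$ in \eqref{eq:def_Wt} yields asymptotic equicontinuity of $\{N^t\}$, $\{F^t\}$, $\{Z^t\}$ and $\{W^t\}$. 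I expect the main obstacle to be item~1: upgrading pointwise a.s.\ martingale convergence to convergence that is uniform in the continuous parameter $s$ on a common probability-one event --- handled above by rewriting $N^t$ as a Cauchy tail --- together with verifying the bound $\normt{\bZ_t}\le M$ that makes the reflection term of item~4 behave exactly like the gradient term of item~2.
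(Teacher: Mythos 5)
Your proposal is correct and follows essentially the same route as the paper's proof: item~1 via square-integrable martingale convergence and the Cauchy-tail rewriting of $N^t$, items~2 and~4 via the single-fractional-subinterval discrepancy bounded by $M'\eta_{j+1}$ (resp.\ $M\eta_{j+1}$), item~3 and the equicontinuity of $\{Y^t\}$ via uniform Lipschitz bounds, and the final claim via the same closure properties (uniform convergence implies asymptotic equicontinuity, and finite sums preserve it). The only cosmetic difference is that you justify $\normt{\bZ_t}\le M$ explicitly through non-expansiveness of $\Pi_\calC$, a step the paper states without elaboration.
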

\begin{proof}
See Section~\ref{sec:proof_asym_equi} in the supplemental material. 
\end{proof}
\begin{lemma}[Almost sure convergence to the limit set] \label{lem:conv_limit_set}
The stochastic process $\{\bW_t\}_{t\in\bbN}$ generated in Algorithm~\ref{algo:ONMFG} converges almost surely to $\calL(-\nabla f,\calC,\bW_0)$, the limit set of the following projected dynamical system 
\begin{equation}
\frac{{\rm d}}{{\rm d}s}W(s) = \pi_\calC\Big[W(s),-\nabla f(W(s))\Big], \;W(0) = \bW_0, \;s\ge 0. \label{eq:PDS_W0}
\end{equation}
\end{lemma}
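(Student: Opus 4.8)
The plan is to run the ODE method for stochastic approximation in its projected form, reading the decomposition \eqref{eq:def_Wt} as a perturbed Euler scheme for the PDS \eqref{eq:PDS_W0} and showing that every limiting continuous-time trajectory solves that PDS. All the analytic ingredients are furnished by Lemma~\ref{lem:asym_equi}. I would fix an $\omega$ in the full-measure event on which its four conclusions hold: $N^t\convu\vecz$; $F^t-G^t\convu\vecz$ with $G^t(s)=-\int_0^s\nabla f(W^t(\tau))\,d\tau$; the reflection interpolation $Z^t$ tracks the equicontinuous $Y^t$; and the families $\{W^t\}$, $\{F^t\}$, $\{N^t\}$, $\{Z^t\}$ are asymptotically equicontinuous.

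First I would extract limits. Each $W^t(\omega,\cdot)$ is valued in the compact set $\calC$, so the family is uniformly bounded; combined with asymptotic equicontinuity, a generalized Arzel\`a--Ascoli theorem for a.e.c.\ families \cite{Yin_05} shows that every $t_k\uparrow\infty$ admits a subsequence (not relabelled) along which $W^{t_k}$, $F^{t_k-1}$, $N^{t_k-1}$ and $Z^{t_k-1}$ converge uniformly on each compact subinterval of $\bbR_+$ to continuous limits $W$, $F$, $\vecz$ and $Z$. Because the time-origins $s_t$ and $s_{t-1}$ differ by $\eta_t\to 0$, the processes indexed by $t_k$ and $t_k-1$ share the same limits, so $W^{t_k-1}\to W$ as well.

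The heart of the proof is to identify $W$ as a solution of \eqref{eq:PDS_W0}. Passing to the limit in \eqref{eq:def_Wt} gives $W(s)=W(0)+F(s)+Z(s)$, the martingale term having vanished. Since $F^{t}-G^{t}\convu\vecz$, the continuity of $\nabla f$ (Lemma~\ref{lem:unbiased}) together with $W^{t_k-1}\to W$ yields $F(s)=-\int_0^s\nabla f(W(\tau))\,d\tau$. It then remains to show that the reflection limit $Z$ is exactly the boundary correction that turns the drift $-\nabla f$ into the projected field $\pi_\calC[\,\cdot\,,-\nabla f(\cdot)]$, namely that $Z$ has bounded variation, stays idle while $W(s)\in\inter\calC$, and otherwise grows only along the inward normal cone of $\calC$ at $W(s)$, keeping $W$ in $\calC$. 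I expect this reflection-term identification to be the main obstacle, since it is precisely where a projected dynamical system departs from an ordinary ODE; I would treat it by exploiting the explicit form \eqref{eq:def_Z} of each increment $\bZ_t$ as a projection residual --- so that it points in a normal direction with magnitude controlled by the step --- and by invoking the normal-cone characterization of $\Pi_\calC$ and $\pi_\calC$ on the convex set $\calC$.

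Finally I would upgrade this subsequential statement to convergence of the iterates. The interpolated process starts at $\bW_0$ and, by the construction above, behaves asymptotically like a solution of \eqref{eq:PDS_W0}; its set of accumulation points is therefore invariant for the PDS and contained in $\calL(-\nabla f,\calC,\bW_0)$. A convenient way to conclude is by contradiction: if $\dist(\bW_t(\omega),\calL(-\nabla f,\calC,\bW_0))\not\to 0$, then some subsequence of iterates would remain bounded away from the limit set, yet the compactness-and-identification argument would realize its accumulation point as a point lying on a PDS trajectory inside $\calL(-\nabla f,\calC,\bW_0)$ --- a contradiction. Hence $\{\bW_t\}$ converges almost surely to $\calL(-\nabla f,\calC,\bW_0)$, which Lemma~\ref{lem:char_limit_set} subsequently identifies with the set of critical points of \eqref{eq:exp_loss}.
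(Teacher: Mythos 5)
Your proposal follows essentially the same route as the paper's proof: interpolate the iterates, invoke Lemma~\ref{lem:asym_equi} and the generalized Arzel\`a--Ascoli theorem to extract uniform subsequential limits, pass to the limit in \eqref{eq:def_Wt} to obtain the integral equation with drift $-\int_0^s\nabla f(W(\tau))\,d\tau$, identify the reflection limit via the normal-cone structure of the projection residuals $\bZ_t$ (the paper does this through \cite[Theorem 3.1, Chapter 4]{Kushner_03} together with the upper semicontinuity of the inward-normal correspondence $\calN$), and then relate accumulation points of the discrete iterates to the limit set of the PDS. The step you flag as the main obstacle is indeed where the paper expends most of its effort, and your sketched treatment of it matches the paper's strategy.
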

\begin{psketch}
First we rewrite \eqref{eq:PDS_W0} as
\begin{equation}
\frac{{\rm d}}{{\rm d}s}W(s) = -\nabla f(W(s)) + z(s), \;W(0) = \bW_0, \;s\ge 0, \label{eq:PDS2_W0}
\end{equation}
where $z(s)\defeq \pi_\calC\Big[W(s),-\nabla f(W(s))\Big] + \nabla f(W(s)), s\in\bbR_+$. 
For convenience, we will analyze the integral form of \eqref{eq:PDS2_W0}:
\begin{align}
&W(s) = -\int_0^s\nabla f(W(\tau)) {\rm d}\tau + \int_0^s z(\tau) \rmd\tau,\nn \\
&\hspace{4.5cm}\;W(0) = \bW_0, \;s\ge 0. \label{eq:PDS2_W0_int}
\end{align}
By Lemma~\ref{lem:asym_equi}, there exists an almost sure set\footnote{An almost sure set $\calA\subseteq\Omega$ is the set with probability one, i.e., $\mu(\calA)=1$.} $\calA\subseteq\Omega$ such that for each $\omega\in\calA$, $\{W^t(\omega,\cdot)\}_{t\in\bbN}$ and $\{Z^t(\omega,\cdot)\}_{t\in\bbN}$ are a.e.c. and uniformly bounded. Fix any $\omega\in\calA$ and $S\in(0,\infty)$. By the (generalized) Arzel\`{a}-Ascoli Theorem (see Lemma~\ref{lem:arzela}), there exists a subsequence $\{t_k\}_{k\in\bbN}$ of $\{1,2,3,\ldots\}$ with $t_k\uparrow\infty$ and continuous functions $\barW(\omega,\cdot)$ and $\barZ(\omega,\cdot)$ (both defined on $[0,S]$) such that $W^{t_k}(\omega,\cdot)\convu \barW(\omega,\cdot)$ and $Z^{t_k}(\omega,\cdot)\convu \barZ(\omega,\cdot)$ on $[0,S]$. The continuity of $\nabla f$ implies $\nabla f(W^{t_k}\left(\omega,\cdot)\right) \convu \nabla f\left(\barW(\omega,\cdot)\right)$ on $[0,S]$. This further implies  $G^{t_k}(\omega,\cdot)\convu \barG(\omega,\cdot)$ on $[0,S]$, where
\begin{equation}
\barG(\omega,s)\defeq-\int_0^s \nabla f\left(\barW(\omega,\tau)\right)\,d\tau, \;s\ge 0.
\end{equation}
Since $\Delta_1^{t_k}(\omega,\cdot)\convu \vecz$ and $N^{t_k}(\omega,\cdot)\convu \vecz$ on $[0,S]$, 
we have
\begin{align}
&\barW(\omega,s) = \barW(\omega,0) - \int_0^s \nabla f\left(\barW(\omega,\tau)\right)\,d\tau + \barZ(\omega,s), \nn\\
&\hspace{6cm}\;s\in[0,S].  \label{eq:barWZ_int}
\end{align}
Now,  to show $\left\{\barW(\omega,\cdot),\barZ(\omega,\cdot)\right\}$ satisfies \eqref{eq:PDS2_W0_int} on $[0,S]$, it remains to show 
\begin{equation}
\barZ(\omega,s) = \int_0^s z(\tau)\;d\tau, s\in[0,S]. \label{eq:barZ_int}
\end{equation}
By \cite[Theorem 3.1, Chapter 4]{Kushner_03} (see Lemma~\ref{lem:suff_pode}), it suffices to show  $\barW(\omega,\cdot)$ is Lipschitz on $[0,S]$ and for any $\tau\in[0,S]$ the following two conditions hold:
\begin{enumerate}
\item $\barZ(\omega,\tau)=\vecz$ if $\barW(\omega,s)\in\inter\calC$ for almost all $s\in[0,\tau]$, 
\item $\barZ(\omega,\tau)\in \convcl \left[\bigcup_{s\in[0,\tau]}\calN\left(\barW(\omega,s)\right)\right]$,
\end{enumerate}
where $\convcl\calX$ denotes the closed convex hull of a set $\calX$ and the correspondence (see Definition~\ref{def:correspondence}) $\calN:\calC\rightrightarrows\bbR^{F\times K}$ is defined as $\calN(\bW) \defeq \{\bN\in\bbR^{F\times K}\,|\,\norm{\bN}\le M,\,\lrangle{\bN}{\bW'-\bW}\ge 0,\forall\,\bW'\in\calC\}$. 
In words, $\calN(\bW)$ indicates the set of (bounded) inward normals at $\bW\in\calC$. 

The Lipschitzness of $\barW(\omega,\cdot)$ on $[0,S]$ follows from i) $\barZ(\omega,\cdot)$ is Lipschitz on $[0,S]$ and ii) $\nabla f\left(\barW(\omega,\cdot)\right)$ is bounded on $[0,S]$. Moreover, condition (1) above essentially follows from the continuity of $\barZ(\omega,\cdot)$ on $[0,S]$. To show condition (2), we make use of the upper semicontinuity (see Definition~\ref{def:usc}) of $\calN$. To show this property, by \cite[Section~1.1]{Aubin_84} (see Lemma~\ref{lem:suff_usc}), it suffices to show 
\begin{equation}
\bigcap_{\delta>0}\convcl\Big[\bigcup_{\bW'\in\calB_\delta(\bW)}\calN(\bW')\Big] \subseteq \calN(\bW),\,\forall\,\bW\in\calC, \label{eq:suff_usc}
\end{equation}
 where $\calB_\delta(\bW)\defeq \{\bW'\in\calC\,|\,\norm{\bW-\bW'}<\delta\}$. 

Based on \eqref{eq:barWZ_int} and \eqref{eq:barZ_int},
we can choose a sequence of nested intervals $[0,S_n]$ (with $S_n\uparrow\infty$) and by repeatedly passing to further subsequences, we can show there exists a subsequence $\{\bart_k\}_{k\in\bbN}$ of $\{t_k\}_{k\in\bbN}$ and continuous functions $\tilW(\omega,\cdot)$ and $\tilZ(\omega,\cdot)$ (both defined on $\bbR_+$) such that $W^{\bart_k}(\omega,\cdot)\convu \tilW(\omega,\cdot)$ and $Z^{\bart_k}(\omega,\cdot)\convu \tilZ(\omega,\cdot)$ on $\bbR_+$. Moreover, $\left\{\tilW(\omega,\cdot),\tilZ(\omega,\cdot)\right\}$ satisfies \eqref{eq:PDS2_W0_int} on $\bbR_+$. Then it follows that each subsequential limit of $\bW_t(\omega)$ belongs to $\calL(-\nabla f,\calC,\bW_0)$. 
\end{psketch}
\vspace{-.8cm}
\begin{lemma}[Characterization of the limit set]\label{lem:char_limit_set}
In \eqref{eq:PDS_W0}, we have $\calL(-\nabla f,\calC,\bW_0)\subseteq \calS(-\nabla f,\calC)$, i.e., every limit point of \eqref{eq:PDS_W0} is a stationary point associated with $-\nabla f$ and $\calC$. Moreover, each $\bW\in\calS(-\nabla f,\calC)$ satisfies the following variational inequality 
\begin{equation}
\lrangle{\nabla f(\bW)}{\bW'-\bW} \ge 0, \,\forall\,\bW'\in\calC. \label{eq:variat}
\end{equation}
This implies each stationary point in  $\calS(-\nabla f,\calC)$ is a critical point of \eqref{eq:exp_loss}. 
\end{lemma}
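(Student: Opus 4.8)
The plan is to prove the three assertions in turn, with essentially all the work concentrated in the first, namely the containment $\calL(-\nabla f,\calC,\bW_0)\subseteq\calS(-\nabla f,\calC)$, which I would treat as a Lyapunov/LaSalle argument with $f$ itself as the Lyapunov function. The cornerstone is the identification of $\pi_\calC[\bW,\bv]$ with the Euclidean projection $\Pi_{T_\calC(\bW)}(\bv)$ of $\bv$ onto the tangent cone $T_\calC(\bW)$ of the closed convex (indeed compact polyhedral) set $\calC$ at $\bW$; this is standard in the theory of projected dynamical systems, so I would cite \cite{Dupuis_93}. From this fact together with the Moreau decomposition for projection onto a closed convex cone I would extract two elementary identities valid for every $\bW\in\calC$: (a) $\lrangle{-\nabla f(\bW)}{\pi_\calC[\bW,-\nabla f(\bW)]}=\normt{\pi_\calC[\bW,-\nabla f(\bW)]}^2$, obtained by pairing $-\nabla f(\bW)=\Pi_{T_\calC(\bW)}(-\nabla f(\bW))+\Pi_{N_\calC(\bW)}(-\nabla f(\bW))$ with its tangential component; and (b) $\pi_\calC[\bW,-\nabla f(\bW)]=\vecz$ if and only if $-\nabla f(\bW)$ lies in the normal cone $N_\calC(\bW)=T_\calC(\bW)^{\circ}$, which unwinds precisely to the variational inequality \eqref{eq:variat}.

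For the containment, I would use Lemma~\ref{lem:unbiased} (so that $f\in C^1$ on $\calC$ and the chain rule applies) and identity (a) to compute, along any solution $W(\cdot)$ of \eqref{eq:PDS_W0}, that $\frac{\rmd}{\rmd s}f(W(s))=\lrangle{\nabla f(W(s))}{\pi_\calC[W(s),-\nabla f(W(s))]}=-\normt{\pi_\calC[W(s),-\nabla f(W(s))]}^2\le 0$. Thus $f$ decreases along trajectories and is strictly decreasing off $\calS(-\nabla f,\calC)$. Since $\calC$ is compact and $f$ continuous, $f$ is bounded below and $f(W(s))$ decreases to a finite limit $f^\star$. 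Fixing any $y$ in the limit set, so that $W(s_n)\to y$ for some $s_n\uparrow\infty$, continuity gives $f(y)=f^\star$. I would then argue by contradiction: if $y\notin\calS(-\nabla f,\calC)$, then by (a)–(b) the PDS started at $y$ strictly decreases $f$ over a fixed horizon $\Delta$, and by continuous dependence of PDS solutions on the initial condition (again \cite{Dupuis_93}) the trajectories $W(s_n+\cdot)$ converge uniformly on $[0,\Delta]$ to this one, forcing $f(W(s_n+\Delta))\le f^\star-\rho$ for large $n$ and contradicting $f(W(s))\downarrow f^\star$. This is the LaSalle invariance principle specialized to the present Lyapunov function, for whose underlying stability machinery I would cite \cite{Haddad_08,Teschl_12}.

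The second assertion is then immediate from identity (b): any $\bW\in\calS(-\nabla f,\calC)$ satisfies $\pi_\calC[\bW,-\nabla f(\bW)]=\vecz$, hence $-\nabla f(\bW)\in N_\calC(\bW)$, i.e.\ $\lrangle{\nabla f(\bW)}{\bW'-\bW}\ge 0$ for all $\bW'\in\calC$, which is \eqref{eq:variat}. The third assertion follows by combining Lemma~\ref{lem:unbiased} with the remark after the definition of a critical point: since $f$ is differentiable on $\calC$, the directional-derivative condition \eqref{eq:cond_stat_pt} is equivalent to the variational inequality \eqref{eq:variat}, so each stationary point in $\calS(-\nabla f,\calC)$ is a critical point of \eqref{eq:exp_loss}.

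The step I expect to be the main obstacle is the containment $\calL\subseteq\calS$, specifically making the LaSalle argument rigorous for a \emph{projected} system: the map $\bW\mapsto\pi_\calC[\bW,-\nabla f(\bW)]$ need not be continuous, because the tangent cone can change abruptly on $\bdr\calC$, so I must ensure that the strict decrease of $f$ starting from a non-stationary $y$ does not instantaneously vanish. I would resolve this using lower semicontinuity of $\bW\mapsto\normt{\pi_\calC[\bW,-\nabla f(\bW)]}$ (equivalently, outer semicontinuity of the tangent-cone correspondence) together with continuity of the trajectory, guaranteeing that the integrand $\normt{\pi_\calC[W(\tau),-\nabla f(W(\tau))]}^2$ stays bounded away from zero on a short initial interval. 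The remaining ingredients—boundedness of $f$ below, continuous dependence on initial data, and the cone identities—are standard and can be cited rather than rederived.
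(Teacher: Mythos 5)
Your proposal is correct and follows the same overall route as the paper: take $f$ (up to an additive constant) as a Lyapunov function for the projected dynamical system \eqref{eq:PDS_W0}, invoke a LaSalle-type invariance argument to place $\calL(-\nabla f,\calC,\bW_0)$ inside the set where the Lie derivative vanishes, and then translate the vanishing of the Lie derivative into stationarity and the variational inequality \eqref{eq:variat}. Where you differ is in the execution, and your version is in places sharper. The paper shows only that $\lrangle{\nabla f(\bW)}{\pi_\calC[\bW,-\nabla f(\bW)]}=0$ forces $\pi_\calC[\bW,-\nabla f(\bW)]=\vecz$, via an ad hoc squared-distance computation, and then derives \eqref{eq:variat} through supporting hyperplanes; your Moreau-decomposition identity $\lrangle{-\nabla f(\bW)}{\pi_\calC[\bW,-\nabla f(\bW)]}=\normt{\pi_\calC[\bW,-\nabla f(\bW)]}^2$ yields the stronger statement $\frac{\rmd}{\rmd s}f(W(s))=-\normt{\pi_\calC[W(s),-\nabla f(W(s))]}^2$ and makes both the ``zero Lie derivative implies stationary'' step and the passage to the normal-cone form of \eqref{eq:variat} immediate. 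The trade-off is in the LaSalle step itself: the paper cites \cite[Theorem~6.15]{Teschl_12} as a black box, whereas you reprove it via continuous dependence of PDS trajectories on initial data; that property rests on Lipschitz continuity of $\nabla f$ on $\calC$ (needed for uniqueness in the sense of \cite{Dupuis_93}), which the paper establishes only as continuity, so if you keep the self-contained argument you should either verify Lipschitzness of $\nabla f$ (plausible under Assumption~\ref{assum:sc_h} but not proved in the paper) or fall back on the same citation. Your concern about lower semicontinuity of $\bW\mapsto\normt{\pi_\calC[\bW,-\nabla f(\bW)]}$ is well placed, and your resolution via inner semicontinuity of the tangent-cone map of a convex set is correct.
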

\begin{proof}
See Section~\ref{sec:proof_char_limit_set} in the supplemental material. 
\end{proof}

\begin{remark}
Lemma~\ref{lem:conv_limit_set} and \ref{lem:char_limit_set} together imply a two-step approach to prove Theorem~\ref{thm:main}. Specifically, Lemma~\ref{lem:conv_limit_set} shows $\{\bW_t\}_{t\in\bbN}$ converges almost surely to the limit set $\calL(-\nabla f,\calC,\bW_0)$  and Lemma~\ref{lem:char_limit_set} characterizes every element in the limit set $\calL(-\nabla f,\calC,\bW_0)$  as a critical point of \eqref{eq:exp_loss}. 
\end{remark}

\vspace{-.5cm}
\subsection{Discussions} \label{sec:conv_discussions}
We first remark that for the divergences in class $\calD_1\cap\calD_2$, it might be possible to analyze the convergence of Algorithm~\ref{algo:ONMFG} under the stochastic MM framework, by choosing the quadratic majorant of the sample average of $f$, as per discussion in \cite[Section~4]{Raza_16}. However, our analysis based on stochastic approximation theory and projected dynamical systems \cite{Dupuis_93,Teschl_12} serve as a more direct approach, since we need not transform Algorithm~\ref{algo:ONMFG} as a stochastic MM algorithm a priori. 
Next, we discuss the difficulties to tackle the divergences in class $\calD_1\setminus\calD_2$. In such case \eqref{eq:solve_h} may not have a unique minimizer even if some strongly convex regularizer is added. Thus, $f$ would be nonsmooth and nonconvex. Without additional assumptions,\footnote{For example, in \cite{Ghad_16a,Ghad_16b,Reddi_16,Raza_16}, the authors assume the objective function $f$ in \eqref{eq:exp_loss} can be decomposed into two parts, one being nonconvex but smooth and the other being nonsmooth but convex. However, such assumption does not cover our case.} proving asymptotic convergence guarantees to stationary points is still an open question in the literature. Moreover, nonconvexity makes solving \eqref{eq:solve_h} NP-hard. If we assume there exists an oracle that can solve \eqref{eq:solve_h}, a possible approach to prove convergence to critical points would be to generalize the convergence analysis for the SPSGD method to the nonconvex problems.\footnote{In such case, the subgradient should be defined as in the context of nonconvex analysis, e.g., see \cite[Section~3.1]{Bao_15}.}

\vspace{-.4cm}
\section{Numerical Experiments and Applications} \label{sec:numericals}
\subsection{Experimental Setup}
The experiments in this section consist of three parts. We tested our online algorithm (denoted as {\sf OL}) on synthetic data 
with a broad class of divergences $\barcalD$, such that at least one divergence is from the Csisz\'{a}r, Bregman or robust categories. (The divergences in $\barcalD$ are listed in Table~\ref{table:div_numericals}.) 
Next we applied \ol to real applications  in which large-scale data is commonplace. In particular, we applied \ol with the KL divergence to topic learning on text datasets and \ol with the Huber loss to shadow and noise removal on face datasets. 
For all the experiments and each divergence in $\barcalD$, we compared the performances of our online algorithm \ol to that of its batch counterpart, {\sf Batch}. All the batch algorithms have been derived based on the multiplicative updates (MU) in previous works \cite{Fev_11,Wang_13,Ding_11}. For the IS, squared-$\ell_2$ and Huber losses, we additionally compared \ol with other online algorithms proposed previously \cite{Lef_11,Guan_12,Wang_13}, denoted as {\sf OL-Lef}, {\sf OL-Guan} and {\sf OL-Wang} respectively. 
All the experiments were run in 64-bit $\mbox{Matlab}^\circledR$ (R2015b) on a machine with $\mbox{Intel}^\circledR$ $\mbox{Core}$ i7-4790 3.6 GHz CPU and 8 GB RAM.  {We intend to make the code publicly available if and when the paper is accepted.}

\begin{table*}[t]\footnotesize
\centering
\begin{threeparttable}
\caption{Divergences in class $\barcalD$ and their corresponding noise generation procedures}\label{table:div_numericals}
\begin{tabular}{|c|c|c|c|}\hline
 & Data Generation &Expressions of Distributions\textsuperscript{a} & Parameter Value\\\hline
IS & $\barv_{ij}\sim\scG(v_{ij};\kappa,v^o_{ij}/\kappa)$ & $\scG(x;\kappa,\theta) \defeq x^{\kappa-1}e^{-x/\theta}/(\theta^\kappa\Gamma(\kappa)),x\in\bbR_+$ &$\kappa = 1000$\\ \hline
KL & $\barv_{ij}\sim\scP(v_{ij};v^o_{ij})$ &$\scP(k;\lambda) \defeq \lambda^k e^{-\lambda}/k!,\,k\in\bbN\cup\{0\}$ & --- \\\hline
Squared-$\ell_2$ & $\barv_{ij}\sim\scN(v_{ij};v^o_{ij},\varsigma^2)$ &$\scN(x;\mu,\varsigma^2)\defeq1/\sqrt{2\pi\varsigma^2}\exp\{-(x-\mu)^2/(2\varsigma^2)\},x\in\bbR$ &$\varsigma=30$\\ \hline
Huber, $\ell_1$, $\ell_2$ &$\barv_{ij}\sim\scU(v_{ij};v^o_{ij},\lambda),(i,j)\in\calQ$ &$\scU(v_{ij};v^o_{ij},\lambda)\defeq 1/(2\lambda),x\in[v^o_{ij}-\lambda,v^o_{ij}+\lambda]$ & $\lambda=2000$\\ \hline
\end{tabular}
\begin{tablenotes}
\item[a] The function $\Gamma(\cdot)$ in the expressions of distributions denotes the Gamma function. 
\end{tablenotes}
\end{threeparttable}\vspace{-.5cm}
\end{table*}

\vspace{-.1cm}
\subsection{Heuristics}\label{sec:heuristics}
In the practical implementations of online matrix factorization algorithms, many heuristics have been proposed in previous works\cite{Mairal_10,Guan_12}. In the following  experiments we mainly used three heuristics---namely,   mini-batch input, dataset aggregation and random permutation. Mini-batch input refers to the practice to input $\tau\in\bbN$ data samples at each time. This helps to improve the stability of the dictionary $\bW$ by preventing it from being updated too frequently. Moreover, in reality it may be difficult to find suitable {\em benchmarking} datasets of sufficiently large size to be considered as ``Big Data''. Therefore, one can first replicate an available dataset $p\in\bbN$ times and then add \iid observation noise to each element in the replicated dataset. In this work, $p$ is chosen such that the number of data samples $N\approx 1\times10^5$.
Next, these noisy data samples are randomly permuted and projected onto a compact set $\calV$. In such way, the data samples can be considered to be generated in an \iid fashion from a continuous distribution\footnote{The continuity follows from that most types of observation noise (such as Gaussian, Gamma, etc.) have continuous support, except Poisson noise.} $\bbP$ with compact support. 

\vspace{-.5cm}
\subsection{Parameter Settings}\label{sec:param_setting}
We describe the choices of some important parameters in our algorithms. These parameters include the mini-batch size $\tau$, the latent dimension $K$ and the sequence of step sizes $\{\eta_t\}_{t\in\bbN}$. The {\em canonical setting} of these parameters includes $\tau = 1\times10^{-4}N$, $K=40$ and $\eta_t= a/(\tau t+b)$, where $a=2\times 10^4$, $b=2\times 10^4$ and $t\in\bbN$. This setting will be used in all experiments unless otherwise mentioned. 

We next explain the choices of the parameter values. The mini-batch size $\tau$ controls the frequency of dictionary update. In the online NMF literature there are no principled ways to select $\tau$, since this parameter is typically data dependent \cite{Mairal_10}. Here we followed the rule-of-thumb proposed in \cite{Zhao_16b}, which suggests to choose $\tau\le 4\times10^{-4}N$. 
For the latent dimension $K$, there are several ways to choose it. The most direct way leverages domain knowledge. For example, if the data matrix is the term-document matrix (see Section~\ref{sec:shadow_rmv_denoising} for detailed descriptions), then $K$ corresponds to the number of topics (categories) that the documents belong to (such that each document can be viewed as a linear combination of keywords in each topic). Since this number is known for most text datasets, the value of $K$ can be directly obtained. Otherwise, some works \cite{Tan_13a,Bishop_98} propose to choose $K$ using Bayesian modeling. However, the computational burden introduced by the complex modeling is prohibitive especially for large-scale data. Hence,  we set $K=40$ unless a more accurate estimate can be obtained from the domain knowledge. 
Lastly we discuss the choice of the step size $\eta_t$. From \eqref{eq:step_size}, a straightforward expression for $\eta_t$ would be $\eta_t = a/(\tau t+b)$, where 
$a$ and $b$ are both positive numbers. In the initial phase where $t$ is small, the step size approximately equals the constant $a/b$. Therefore the value of $b$ determines the duration of this phase. Similar to $\tau$, the choice of $b$ is also data-dependent and lacks clear guidelines. As such, we fixed $b=2\times 10^4$ as we found this value gave us satisfactory results in practice. Moreover, we also set $a=2\times 10^4$. 
In Section~\ref{sec:synthetic}, we will show our online algorithm {\sf OL} is insensitive to $\tau$, $K$ and $a$ when we varied these parameters over wide ranges for all the divergences in Table~\ref{table:div_numericals}.

\vspace{-.3cm}
\subsection{Synthetic Experiments} \label{sec:synthetic}
\subsubsection{Data Generation}
To generate the (noisy) data matrix $\bV$, we first generated the ground-truth data matrix $\bV^o\defeq\bW^o\bH^o$, where $\bW^o\in\bbR_+^{F\times K^o}$ and $\bH^o\in\bbR_+^{K^o\times N}$ denotes the ground-truth dictionary and coefficient matrices respectively. We set the ground-truth latent dimension $K^o=40$.\footnote{Note that in general $K\ne K^o$, i.e., the latent dimension $K$ given a priori in the algorithm may not match the ground-truth $K^o$.} The entries of $\bW^o$ and $\bH^o$ were generated \iid from the shifted half-normal distribution $\scHN_\varkappa(\sigma^2)$.\footnote{$\scHN_\varkappa\left(\sigma^2\right)$ denotes the shifted half-normal distribution with scale parameter $\sigma^2$ and offset $\varkappa>0$, i.e., $\scHN_\varkappa(y ; \sigma^2)=\frac{\sqrt{2}}{\sigma\sqrt{\pi}}\exp(-\frac{(y-\varkappa)^2}{2\sigma^2})$ for $y\ge \varkappa$ and $0$ otherwise.} We set $\varkappa=1$ to prevent entries of $\bW^o$ and $\bH^o$ from being arbitrarily small. 
Next we contaminated $\bV^o$ with entrywise \iid noise to obtain $\barbV$. For the IS, KL and squared-$\ell_2$ divergences, the distributions of the noise were chosen to be multiplicative Gamma, Poisson and additive Gaussian respectively, so that 
the ML estimation of $\bV^o$ from $\barbV$ is equivalent  to solving the (batch) NMF problem \eqref{eq:batch_NMF} \cite{Fev_09}. The parameters of these distributions were chosen such that the signal-to-noise ratio (SNR), ${\rm SNR}\defeq 20\log_{10}(\normt{\bV^o}/\normt{\barbV-\bV^o})$ was approximately $30$ dB. In particular, we chose $\sigma=5$ to ensure the SNR for the Poisson noise satisfied the condition.\footnote{By assuming the entries of $\bV^o$ are \iid and using the law of large numbers, all the distribution parameters (and $\sigma$) can be analytically estimated. See \cite{Tan_13a} for details.} Since the other divergences considered (Huber, $\ell_1$ and $\ell_2$) are mainly used in the robust NMF, we added  outliers to the ground-truth $\bV^o$ as follows. We first randomly selected an index set $\calQ\defeq\Pi_{i\in[N]}\calQ_i$ such that for any $i\in[N]$, $\calQ_i\in[F]\times\{i\}$ and $\abs{\calQ} = 0.3F$. Then each entry $v^o_{ij}$ with $(i,j)\in\calQ$ was contaminated with (symmetric) uniform noise with magnitude $\lambda$. We chose $\lambda = 2\bbE[v^o_{ij}] = 2K^o(\varkappa+\sigma\sqrt{2/\pi})^2\approx 2000$. 
The noise generation procedures for all the abovementioned divergences are summarized in Table~\ref{table:div_numericals}. The final data matrix $\bV$ was obtained by projecting $\barbV$ onto a compact set $\calV\defeq [0,4000]^{F\times N}$ since $4000\approx4\bbE[v^o_{ij}]$.

\subsubsection{Efficiency Comparison with Other NMF Algorithms}
We compared the convergence speeds of \ol with the batch (and other online) algorithms for each divergence in $\barcalD$. 
The convergence speeds are demonstrated in the plots of objective values versus time. The objective values of the batch algorithms at each iteration are well-defined. For the online algorithms, at time $t$, the objective value was defined as the {\em empirical loss} w.r.t.\ the $\bW_t\in\calC$, i.e., $1/t\sum_{i\in[t]} d(\bv_i\Vert\bW_t\bh_i)$. Moreover, in all the comparisons, we used the canonical parameter setting for \ol.  The results are shown in Figure~\ref{fig:comparison}. From the results we observe in general, our online algorithm \ol converges significantly faster than \batch for all the divergences. Furthermore, it either converges as fast as (for the Huber loss)   or significantly faster than (for the IS and squared-$\ell_2$ losses) other state-of-the-art online NMF algorithms. This demonstrates the superior {\em computational efficiency} of \ol compared to the batch algorithms and some other online algorithms for all the divergences in $\barcalD$.
Note that the results in Figure~\ref{fig:comparison} were obtained using one initialization of $\bW$ for each divergence. We observed that different initializations led to similar results.
\subsubsection{Insensitivity to Key Parameters} \label{sec:insensitivity}
To examine the sensitivity of \ol to the key parameters $\tau$, $K$ and $a$, for each divergence in $\barcalD$, we varied one parameter at each time in log-scale while keeping the other two fixed as in the canonical setting. From the plots of objective values versus time with different values of $\tau$, $K$ and $a$ (shown in Figures~\ref{fig:param_tau}, \ref{fig:param_K} and \ref{fig:param_a} in the supplemental material respectively), we observe that the convergence speeds of \ol for all the divergences exhibit small (or even unnoticeable) variations across different values of $\tau$, $K$ and $a$. This shows the performance of our online algorithm is relatively insensitive to these key parameters. Therefore, in the following experiments on real data, we will use the canonical values of $\tau$, $K$ and $a$ unless mentioned otherwise. Note that similarly to Figure~\ref{fig:comparison}, the results shown in Figures~\ref{fig:param_tau}, \ref{fig:param_K} and \ref{fig:param_a} are also relatively insensitive to different initializations of $\bW$. 

\vspace{-.3cm}
\subsection{Application I: Topic Learning}\label{sec:topic_learn} 
We applied \ol with the KL divergence to the topic learning task on two text datasets, {\tt BBCNews} \cite{Greene_06} and {\tt 20NewsGroups} \cite{Lang_95}, since it has been shown empirically in  \cite{Lee_99,Yang_11} that the batch NMF algorithm with the KL divergence achieves promising performance on such task. The small-scale {\tt BBCNews} dataset has $2225$ documents from five categories, while the large-scale {\tt 20NewsGroups} dataset has $18774$ documents from $20$ categories. (For both datasets, the labels of ground-truth categories are provided.) Therefore, the latent dimension $K$ was set to $5$ and $20$ respectively for the {\tt BBCNews} and the {\tt 20NewsGroups} datasets, as discussed in Section~\ref{sec:param_setting}.
The five categories 
in the {\tt BBCNews} dataset are rather distinct. However, some of the $20$ categories in the {\tt 20NewsGroups} dataset are highly correlated, {thus resulting in  the learning of topics  of} this dataset more difficult. Each dataset consists of a term-document matrix $\bUpsilon\in\bbR_+^{m\times n}$ such that $\upsilon_{ij}$ denotes the frequency that the $i$-th word appears in the $j$-th document. We transformed  $\bUpsilon$ into the term frequency-inverse document frequency (TF-IDF) matrix $\bUpsilon'$, such that for any $(i,j)\in[m]\times[n]$, $\upsilon'_{ij} \defeq (1+\log\upsilon_{ij})\log(n/\norm{\bUpsilon_{i:}}_0)$ if $\upsilon_{ij}\ne 0$ and $\upsilon'_{ij}=0$ otherwise. Then for both datasets, we further selected $1000$ most frequent words  (in terms of TF-IDF) in the corpus, so the resulting matrix $\bUpsilon''$ consists of 1000 rows of $\bUpsilon'$ with largest $\ell_1$ norm. Next, we scaled $\bUpsilon''$ entrywise by factor $l$ and contaminated entries of $\bUpsilon''$ with \iid Poisson noise such that the $\mbox{SNR}\approx 30$ dB.\footnote{The SNR is controlled by the scaling factor $l$.} Finally, we replicated the resulting matrix column-wise followed by random permutation and entrywise projection onto the compact interval $[0,2l\max_{i,j}\upsilon''_{ij}]$ per the discussion in Section~\ref{sec:heuristics}.

Next, we compare the performances of \ol and \batch in terms of both quality of learned dictionary $\bW$ and running time. To show the quality of $\bW$, we select eight entries in each column of $\bW$ with largest coefficients and display the corresponding keywords (in decreasing order of their coefficients) in Table~\ref{table:BBCNews} and \ref{table:20News} for the {\tt BBCNews} and the {\tt 20NewsGroups} datasets respectively. As shown in \cite{Lee_99}, the eight selected keywords for each column suffice to indicate the topic that the column corresponds to. 
Due to space constraints, we only show five columns of $\bW$ learned from the {\tt 20NewsGroups} dataset. 
From Table~\ref{table:BBCNews}, we observe that both \ol and \batch are able to learn five distinct topics from the {\tt BBCNews} dataset, and the topics learned exactly coincide with the ground truth. Due to the existence of critical points, the set and order of top eight keywords learned for each topic by \ol and \batch are slightly different. However, the small differences cause no ambiguities for topic identifications.
For the {\tt 20NewsGroups} dataset, as some of the categories in this dataset are highly coupled, the keywords shown in Table~\ref{table:20News} only correspond to the ``general'' (ground-truth) categories. However, this suffices for most practical purposes of topic modeling. Similar to Table~\ref{table:BBCNews}, the results in Table~\ref{table:20News} also indicate the comparable quality of the learned dictionaries by \ol and {\sf Batch}. Despite the similar quality of the learned dictionaries, the running times of \ol are significantly shorter than those of \batch on both datasets, as shown in Table~\ref{table:running_time_text}. This suggests that on the topic learning tasks, \ol can achieve similar results as \batch with much greater computational efficiency. 

\vspace{-.3cm}
\subsection{Application II: Shadow Removal and Image Denoising}\label{sec:shadow_rmv_denoising}
In this section we applied \ol with the Huber loss to shadow and salt-and-pepper noise removal on  the {\tt YaleB} face dataset \cite{YaleB_01}. This dataset consists of 8-bit gray-scale face images of 38 subjects with different poses and illumination conditions. In particular, almost all subjects have $64$ face images and shadows of different areas prevalently exist among these images. Since it is well-known that shadows can be treated as outliers \cite{Candes_11}, we use the Huber loss as the robust loss function in both \ol and {\sf Batch}. Due to storage constraints (for implementing {\sf Batch}), we downsampled all the face images to resolution $32\times 32$. 
To increase the difficulty of image reconstruction, for each image, we uniformly randomly selected $30\%$ of pixels and added \iid salt-and-pepper noise with (symmetric) uniform distribution $\scU(0,255)$ to these pixels. The contaminated pixels were then projected to $[0,255]$. As discussed in Section~\ref{sec:heuristics}, for each subject (now with $64$ contaminated images), we replicated his/her images $p=1500$ times followed by random permutation. The images were then vectorized as columns and stacked to form our data matrix $\bV$. 
We  reconstructed a particular image (vector) $\hatbv_i$ from {\sf OL} by $\hatbv_i=\hatbW\bh_i$, where $\hatbW$ denotes the final output dictionary and $\bh_i$ denotes the coefficient vector at time $i$. In this way, the quality of image reconstruction serves as a good indicator of the quality of $\bW$. 

Now, we randomly pick four subjects and show the reconstruction results of their images in Figure~\ref{fig:image_recons}. For each subject, we select three images with different illuminations and show the images reconstructed by the online algorithms \ol and {\sf OL-Wang} and the batch  algorithm {\sf Batch}. From Figure~\ref{fig:image_recons}, we observe that for all subjects, all the three algorithms are able to remove the salt-and-pepper noise. 
For shadow removal, \batch achieves the best visual quality, in the sense that shadows are completely removed and artifacts (e.g., glares and distortions) introduced are minimal. In contrast, for subjects A, B and C, {\sf OL-Wang} fails to remove shadows. Compared with {\sf OL-Wang}, \ol removes the shadows in their entirety, although it introduces some glares in the foreheads and cheeks. For subject D however, all the three algorithms achieve almost the same results, with shadows removed and little artifacts introduced. Overall, in terms of shadow removal and image denoising, \ol and \batch have similar performances and they greatly outperform {\sf OL-Wang}. Next we turn attention to the running times of the three algorithms. It is clear that for all the four subjects, \ol has the shorted running times and is significantly faster than {\sf Batch}. Also, {\sf OL-Wang} runs slightly slower than {\sf OL}. Therefore, in terms of both efficiency and quality of image reconstruction, we conclude that \ol achieves the best trade-off. 

\begin{remark}
Note that for both experiments on the real datasets (Section~\ref{sec:topic_learn} and \ref{sec:shadow_rmv_denoising}), the running times were averaged over ten random initializations of $\bW$ (with standard deviations shown in the parentheses). Other results shown were obtained with one initialization of $\bW$, but 
they were observed to be insensitive to different initializations. 
\end{remark}

\begin{table}[t!]\footnotesize\centering
\caption{Topics learned from the {\tt BBCNews} dataset with the {\sf  OL} and the {\sf Batch} algorithms with the KL divergence.}\vspace{-.4cm}\label{table:BBCNews}
\subfloat[{\sf OL}]{
\begin{tabular}{|c|c|c|c|c|}\hline
Business & Sports & Technology & Entertainment & Politics\\\hline
        compani &            game &           peopl &            film &          govern\\ 
           firm &            plai &          servic &            show &           elect\\ 
         market &             win &       technolog &            best &          minist\\ 
           2004 &         against &            user &           music &          labour\\ 
          share &          player &             net &            star &           parti\\ 
          price &         england &           phone &           award &           blair\\ 
         growth &            club &           mobil &              tv &            tori\\ 
        economi &            team &          comput &            band &        campaign\\ \hline
\end{tabular}
}\\\vspace{-.3cm}
\subfloat[{\sf Batch}]{
\begin{tabular}{|c|c|c|c|c|}\hline
Business & Sports & Technology & Entertainment & Politics\\\hline
        growth &            game &        technolog &    film &                  elect\\ 
           bank &             win &        user &    award &                     labour\\ 
        compani &            club &         phone &    star &                     parti\\ 
         market &            plai &        mobil &    music &                    govern\\ 
          share &           match &         comput &    best &                   minist\\ 
          price &          player &         softwar &  actor &                    blair\\ 
           firm &            team &          network &  band &                     tori\\ 
         profit &        champion &          internet & album &                     law\\ \hline
\end{tabular}
}\\\vspace{-.6cm}
\end{table}

\begin{table}[t!]\footnotesize
\centering
\caption{Topics learned from the {\tt 20NewsGroups} dataset with the {\sf OL} and the {\sf Batch} algorithms with the KL divergence.}\vspace{-.4cm} \label{table:20News}
\subfloat[{\sf OL}]{
\begin{tabular}{|c|c|c|c|c|}\hline
Space & Sports & Religion & Hardware & Sale \\\hline
nasa & games & jesus & drive & sale \\
gov & players & jewish  & vga & condition \\
look & hockey & god & computer  & buy \\
usa & season & israel & disks & excellent \\
test & team & christians & dx & original \\
engineering &  fans & bible & machine & fax\\
space & league & believe & port & pay\\
sun & month & who & system & mail\\\hline
\end{tabular}
}\\\vspace{-.3cm}
\subfloat[{\sf Batch}]{
\begin{tabular}{|c|c|c|c|c|}\hline
Space & Sport & Religion & Hardware & Sale \\\hline
space & game & god & drive & sale \\
gov & games & state  & hard & price \\
dod & team  & who & system  & offer \\
nasa & baseball & jesus & scsi & shipping \\
earth & play & believe & use & sell \\
washington &  hockey & christian & server & mail \\
sun & season & religion & drives & condition \\
look & show & bible & port & interested \\\hline
\end{tabular}
}\\\vspace{-.3cm}
\end{table}

\begin{table}[t!]\footnotesize\centering
\caption{Average running times (in seconds) of the {\sf OL} and the {\sf Batch} algorithms on the {\tt BBCNews} and the {\tt 20NewsGroups} datasets (with standard deviations shown in the parentheses).} \label{table:running_time_text}
\begin{tabular}{|c|c|c|}\hline
 & {\tt BBCNews} & {\tt 20NewsGroups}\\\hline
 {\sf OL} & 43.76 (3.42) & 71.63 (5.38) \\\hline
 {\sf Batch} &  312.58 (10.84) & 306.76 (12.49) \\\hline
\end{tabular}\vspace{-.35cm}
\end{table}

\begin{figure}[t!] 
\subfloat[IS]{\includegraphics[width=.475\columnwidth]{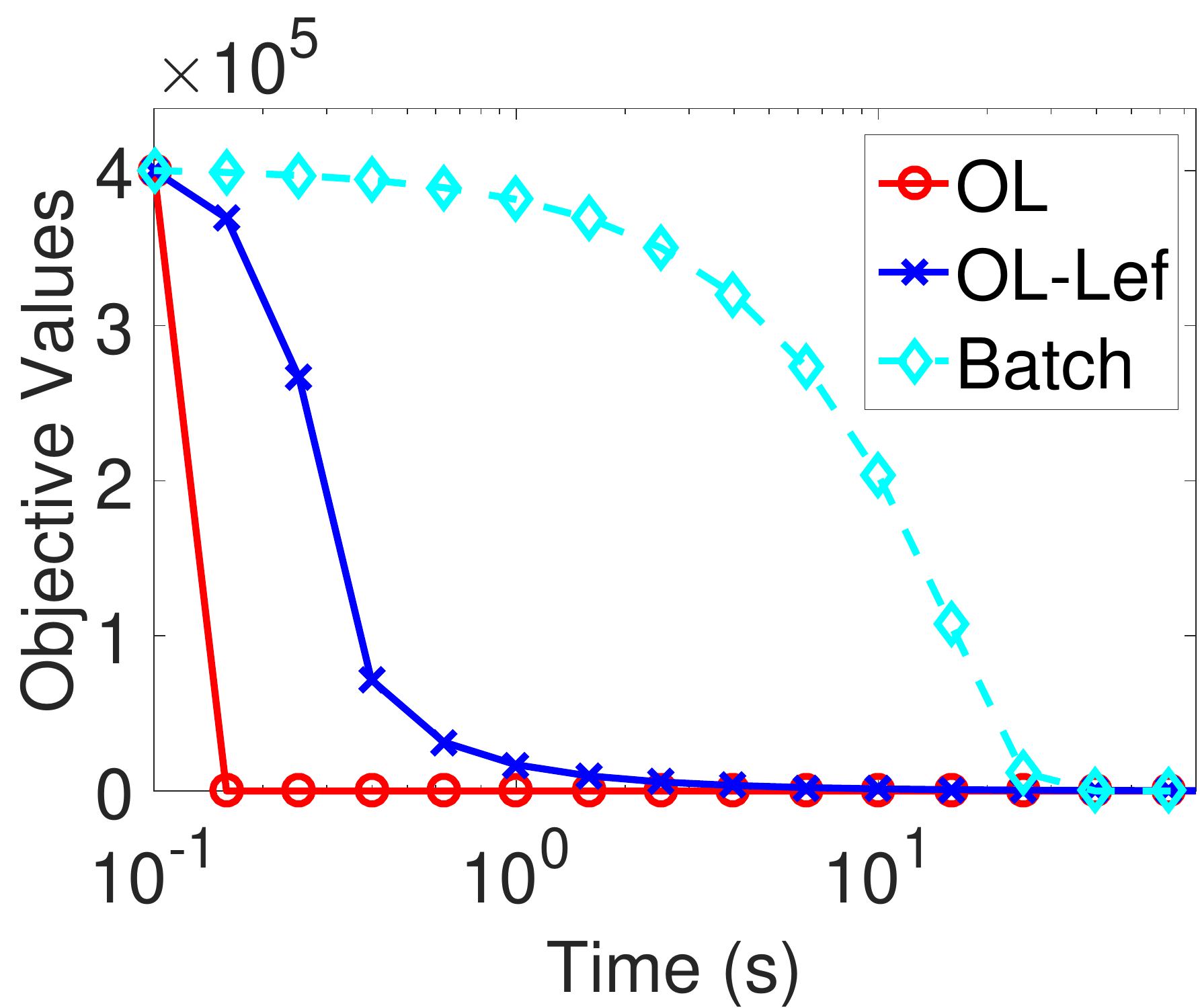}}\hfill
\subfloat[KL]{\includegraphics[width=.475\columnwidth,height=.36\columnwidth]{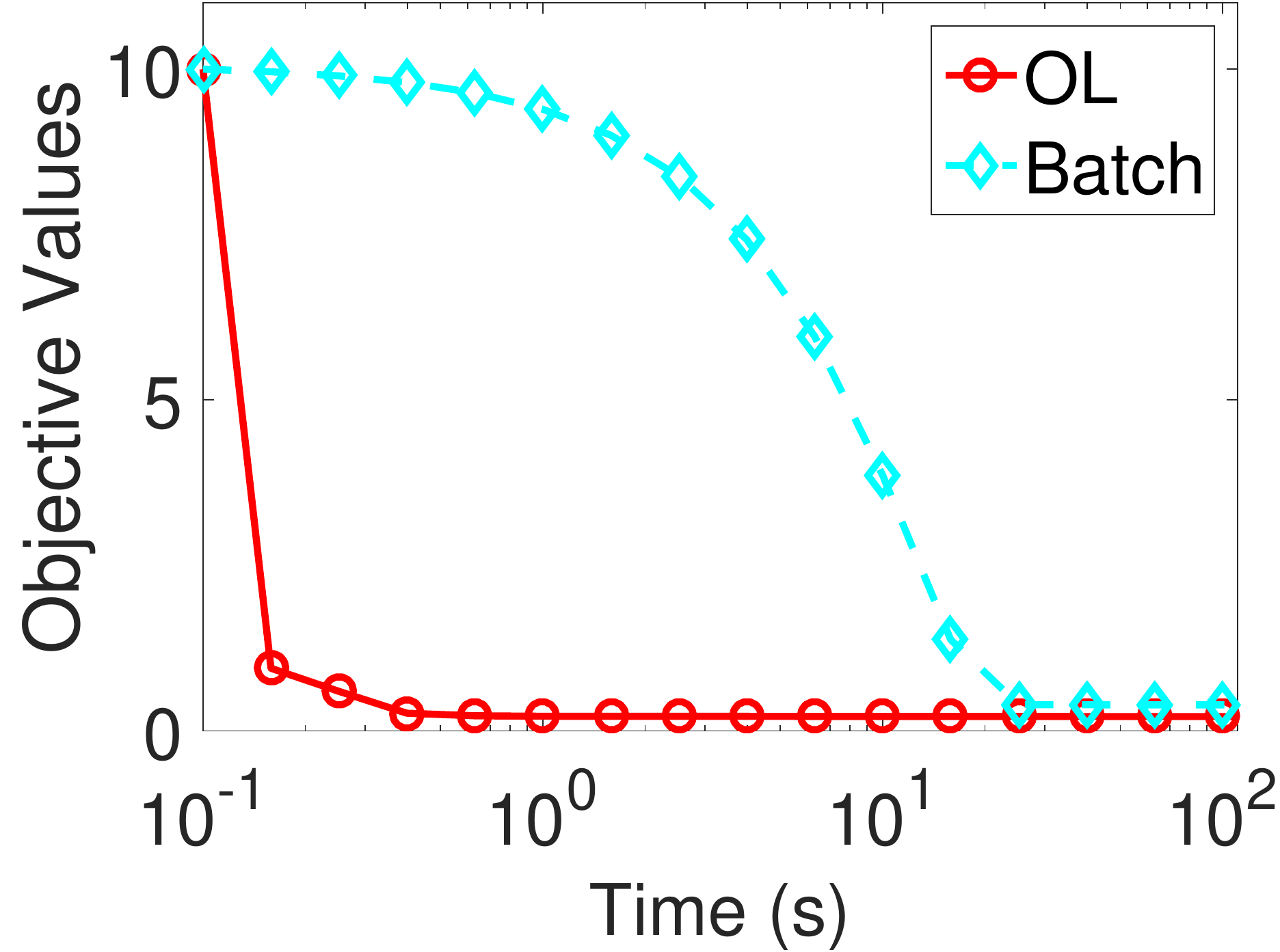}}\vspace{-.4cm}\\
\subfloat[Squared-$\ell_2$]{\includegraphics[width=.475\columnwidth]{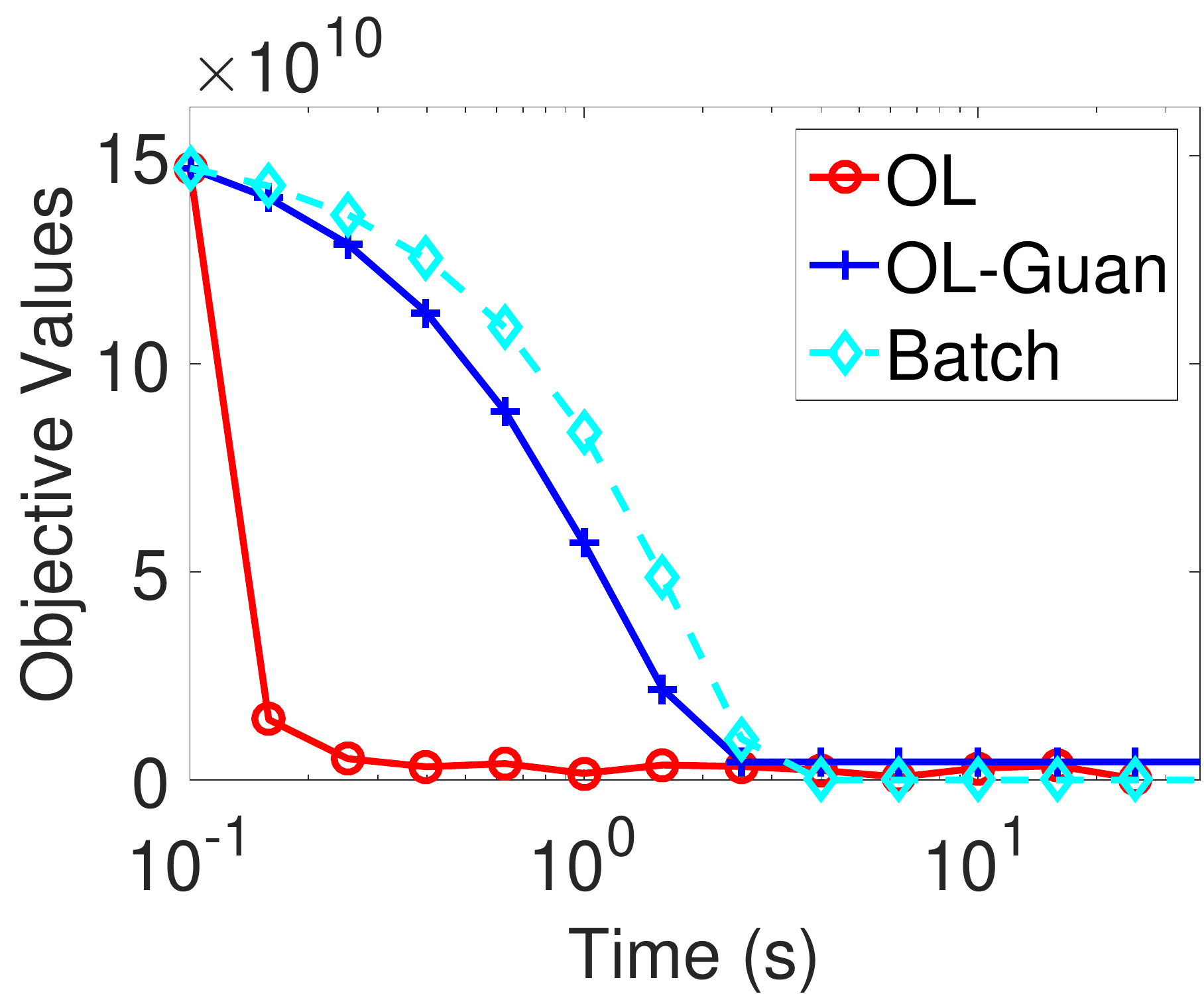}}\hfill
\subfloat[Huber]{\includegraphics[width=.475\columnwidth]{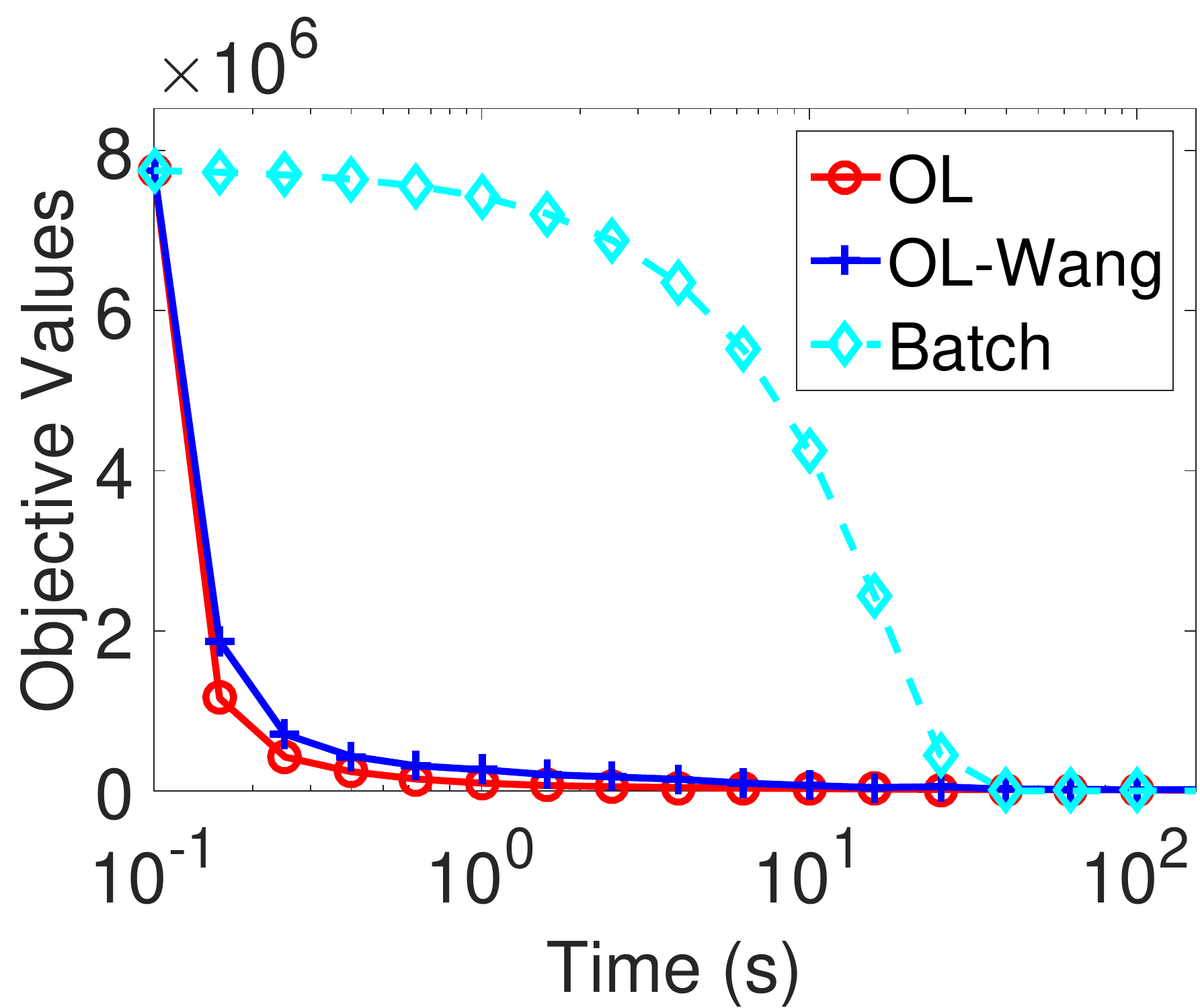}}\vspace{-.4cm}\\
\subfloat[$\ell_1$]{\includegraphics[width=.475\columnwidth]{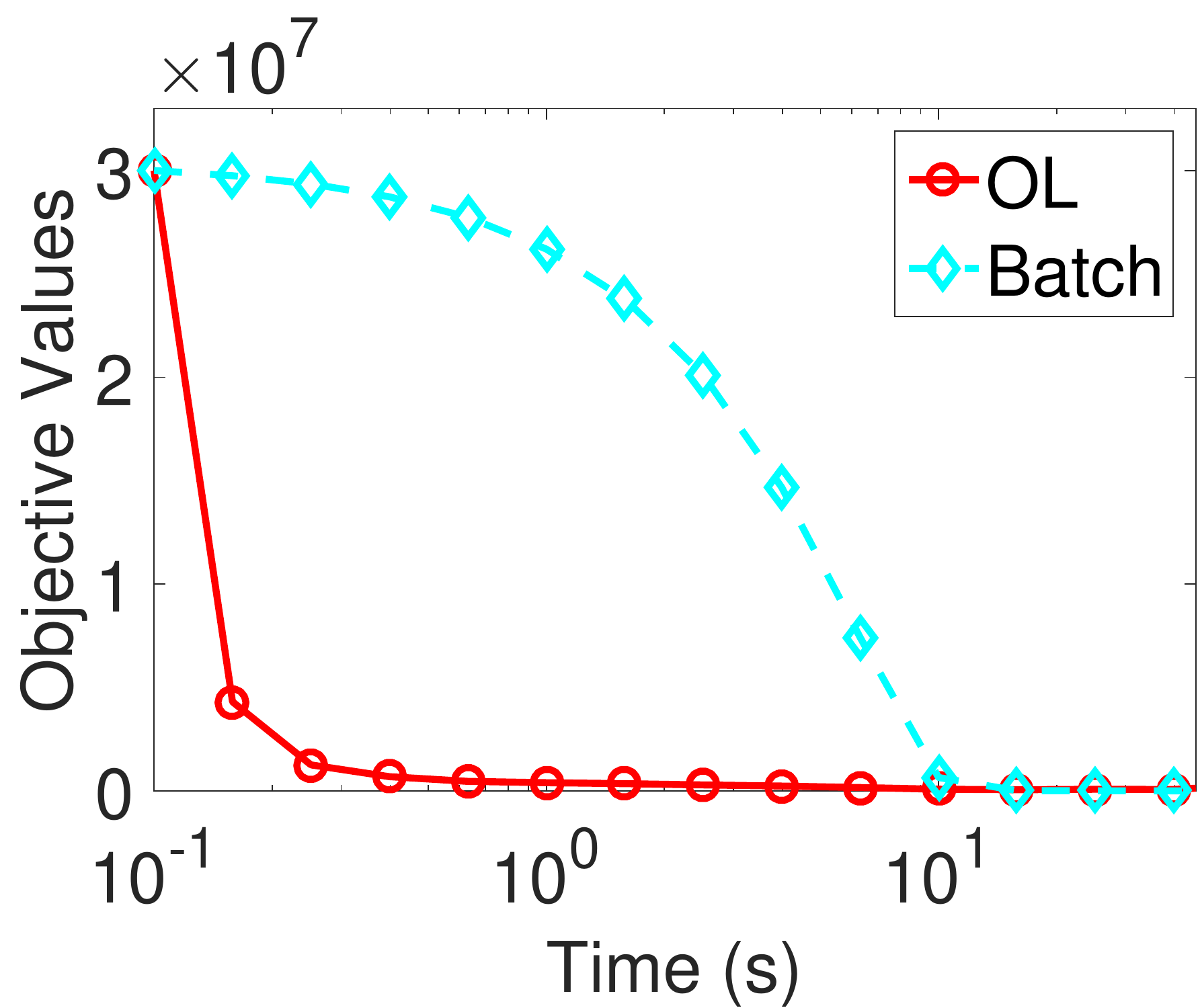}}\hfill
\subfloat[$\ell_2$]{\includegraphics[width=.475\columnwidth,height=.395\columnwidth]{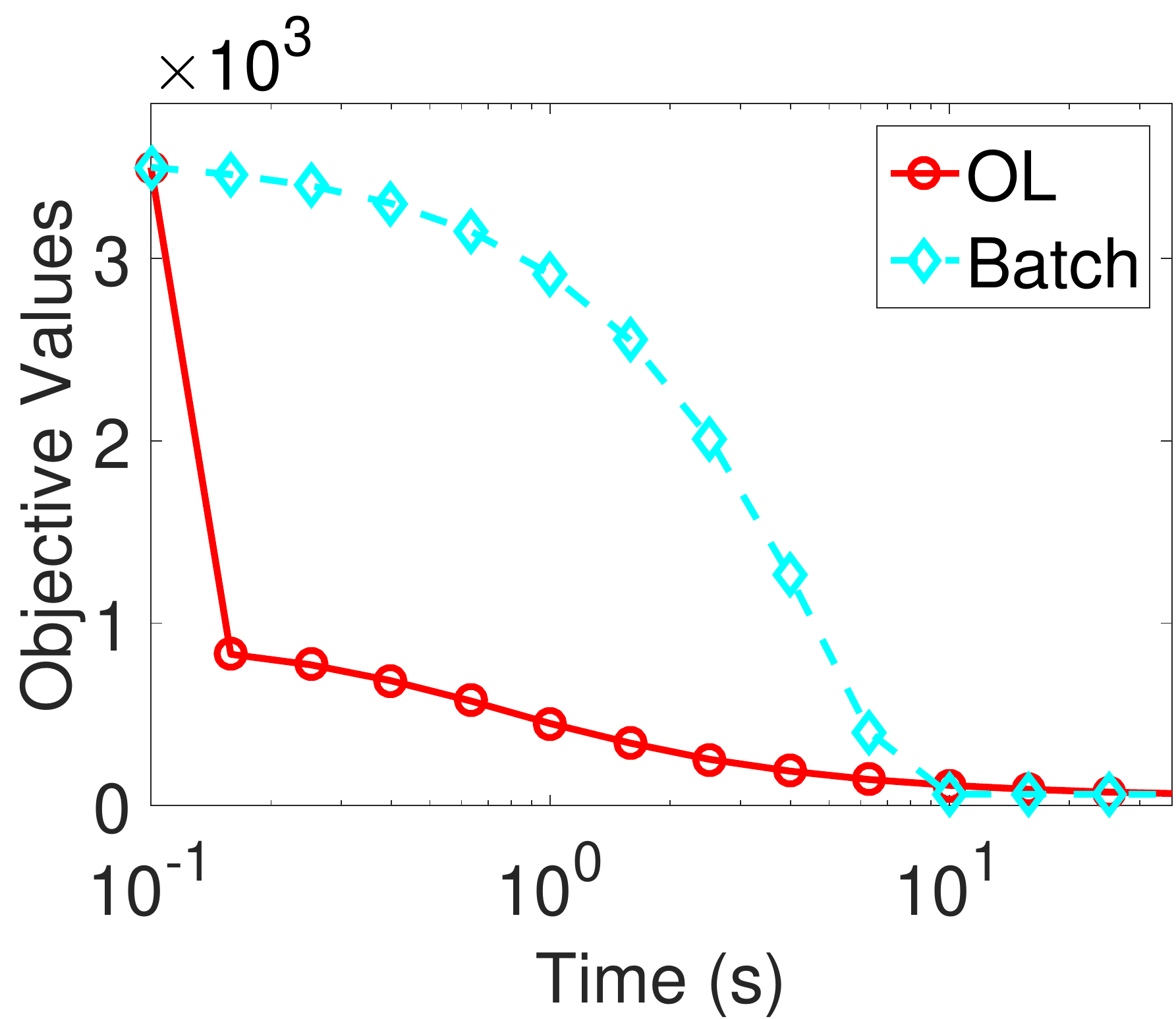}}\vspace{-.4cm}\\
\caption{Objective values versus time (in seconds) of online and batch NMF algorithms for all the divergences in $\barcalD$.}\vspace{-.5cm} \label{fig:comparison}
\end{figure}

\begin{figure}[t!] 
\subfloat[Subject A]{\includegraphics[width=.49\columnwidth]{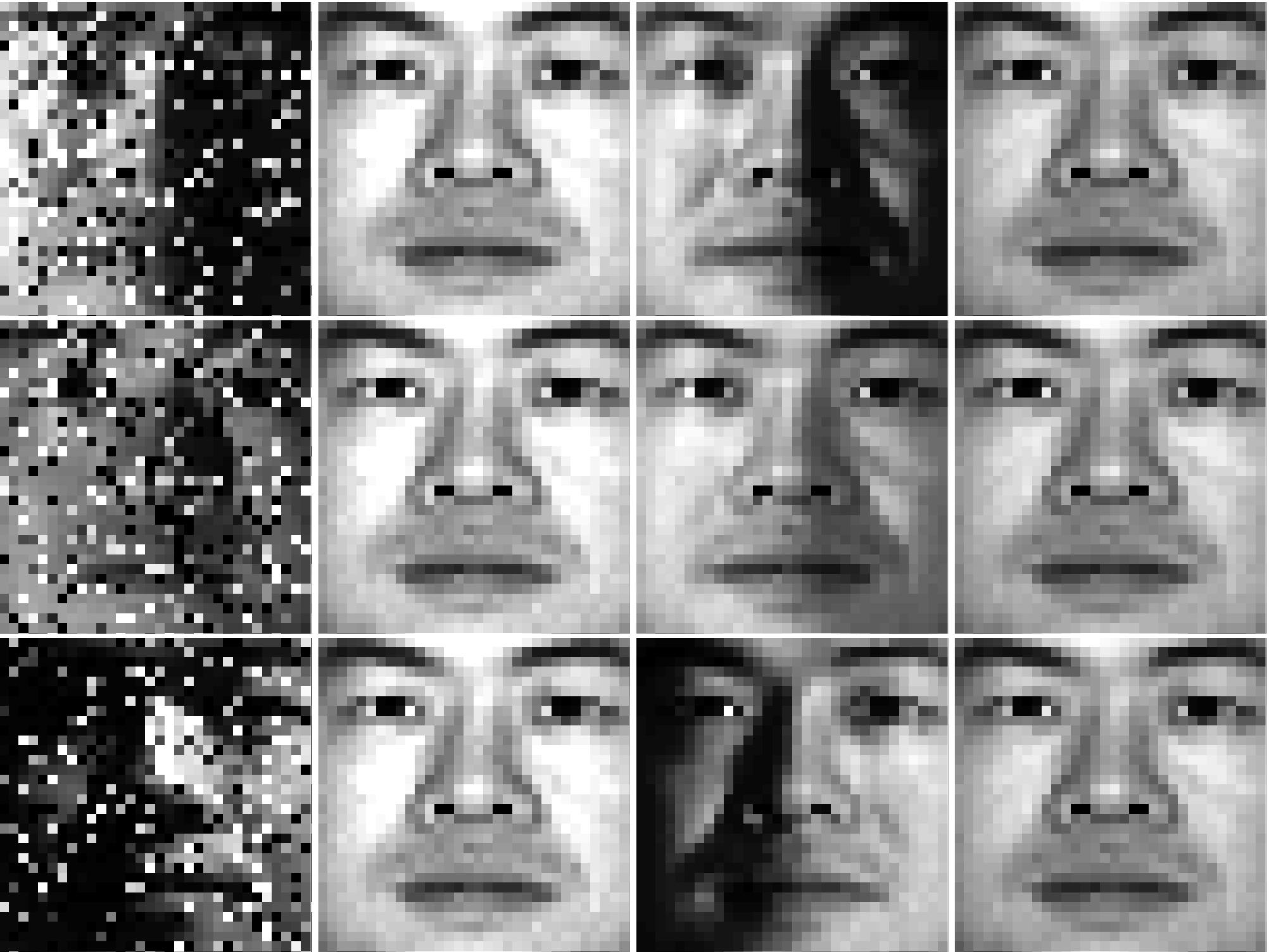}}\hfill
\subfloat[Subject B]{\includegraphics[width=.49\columnwidth]{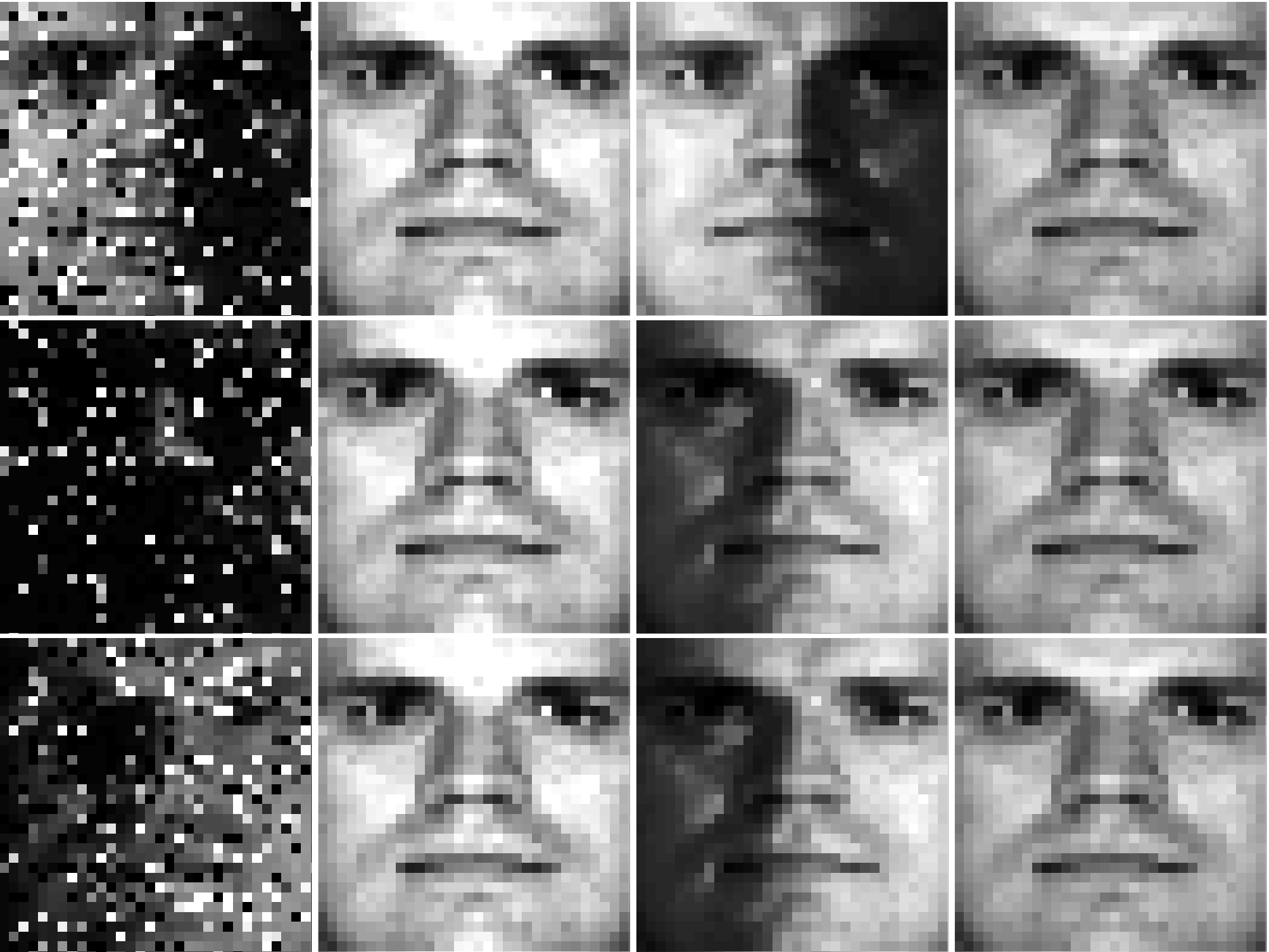}}\vspace{-.35cm}\\
\subfloat[Subject C]{\includegraphics[width=.49\columnwidth]{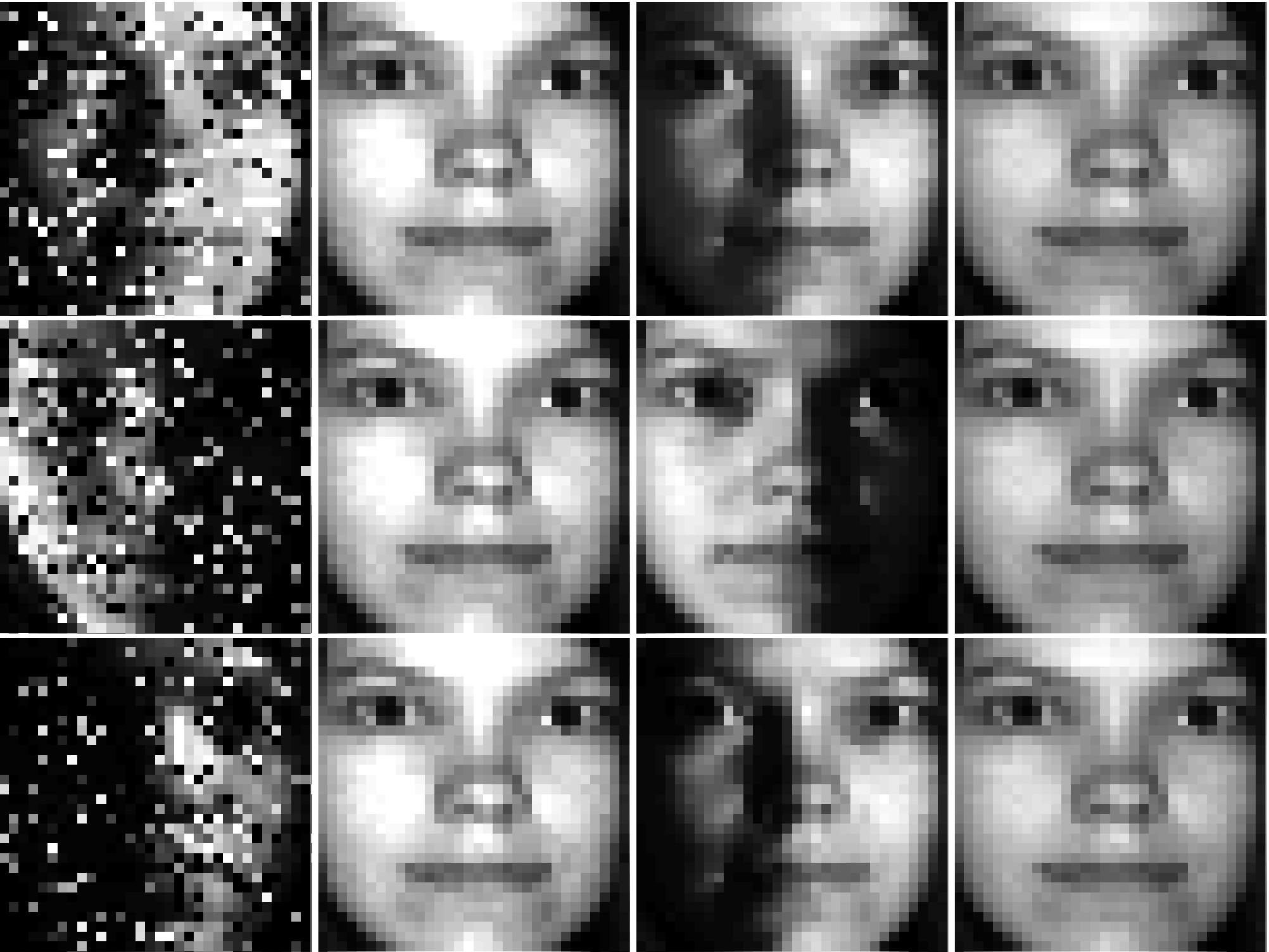}}\hfill
\subfloat[Subject D]{\includegraphics[width=.49\columnwidth]{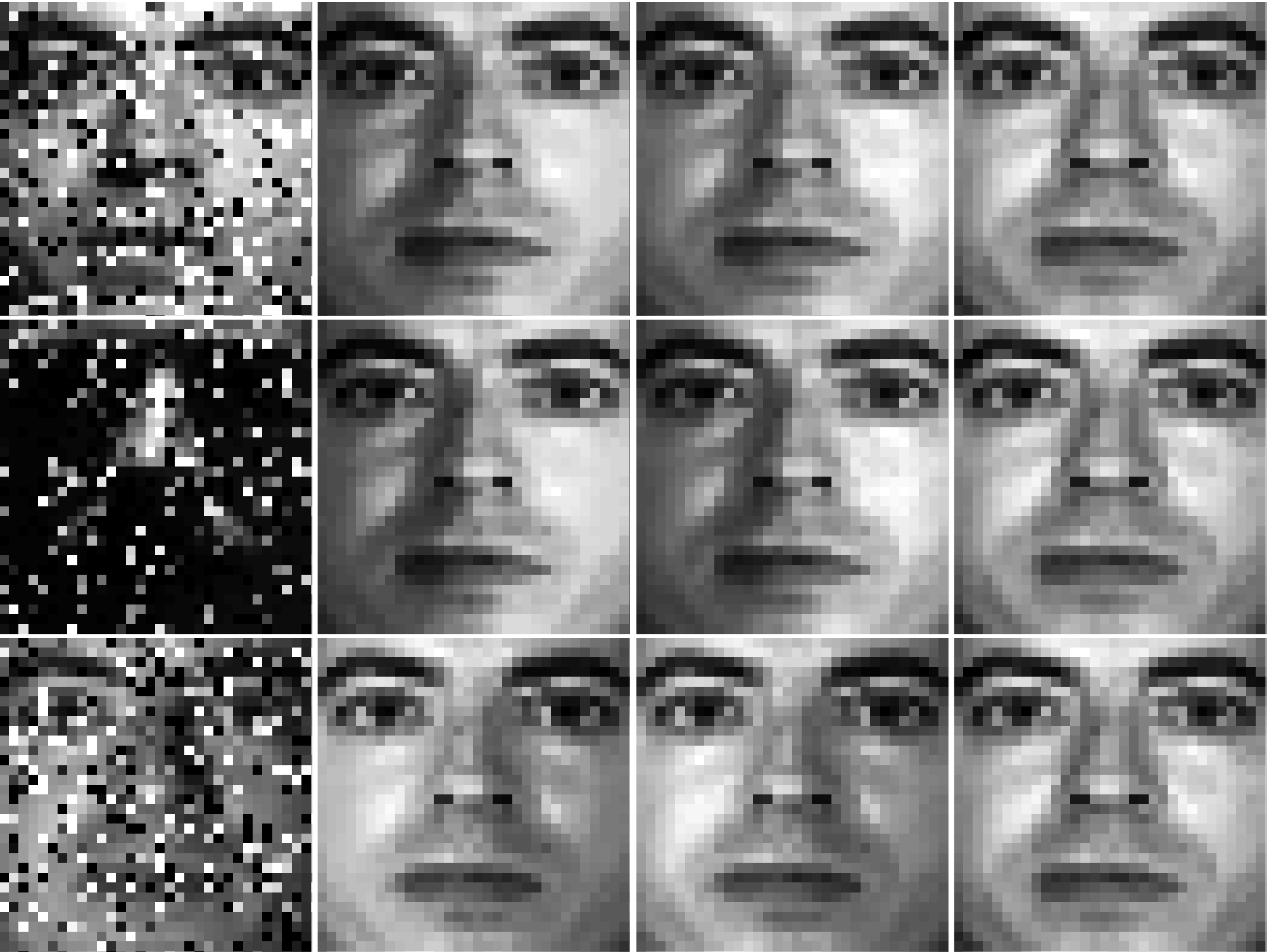}}\vspace{-.4cm}\\
\caption{Contaminated and reconstructed images for four subjects from YaleB face dataset. For each subject, from the left, the first column shows three images with salt-and-pepper noise and different illumination conditions. The second to the fourth column show the reconstructed images by the algorithms {\sf OL}, {\sf OL-Wang} and {\sf Batch} respectively.}\vspace{-.5cm}\label{fig:image_recons}
\end{figure}

\newpage
\begin{table}[t]\footnotesize
\centering
\caption{Average running times (in seconds) of the online and batch NMF algorithms with the Huber loss on the {\tt YaleB} dataset (with standard deviations shown in the parentheses).} \label{fig:runnning_time_YaleB}
\setlength\tabcolsep{2pt}
\begin{tabular}{|c|c|c|c|c|}\hline
Algorithms& Subject A & Subject B & Subject C & Subject D \\\hline
{\sf OL} & $67.86$ (4.59) & $64.35$ (5.39) & $76.45$ (4.58) & $113.09$ (6.68)\\\hline
{\sf OL-Wang} & $73.61$ (5.91) & $88.22$ (4.31) & $96.11$ (8.75)& $152.38$ (9.54)\\\hline
{\sf Batch} & $666.68$ (9.53) & $512.49$ (10.75)& $1270.45$ (12.96) & $755.69$ (7.98)\\\hline
\end{tabular}\vspace{-.5cm}
\end{table}



\newpage\onecolumn
\section*{\huge Supplemental Material for ``Online Nonnegative Matrix Factorization with Outliers''}

\renewcommand{\thedefinition}{S-\arabic{definition}}
\renewcommand{\thelemma}{S-\arabic{lemma}}
\renewcommand{\thecorollary}{S-\arabic{corollary}}
\renewcommand{\theequation}{S-\arabic{equation}}
\renewcommand{\thesection}{S-\arabic{section}}
\renewcommand{\theremark}{S-\arabic{remark}}
\renewcommand{\thefigure}{S-\arabic{figure}}

\setcounter{section}{0}
\setcounter{definition}{0}
\setcounter{lemma}{0}
\setcounter{corollary}{0}
\setcounter{equation}{0}
\setcounter{remark}{0}
\setcounter{figure}{0}

\section{Implementations of $\Pi_\calC$ and $\Pi_\calH$} \label{sec:proj_CH}
The projection operator $\Pi_\calC$ in \eqref{eq:upd_dict} can be implemented in a  straightforward manner if the data point lies in $\calC'\defeq\{\bW\in\bbR_+^{F\times K}\,|\,\norm{\bW_{i:}}_1\ge\epsilon, \forall\,i\in[F]\}$. Otherwise, if there exists $i\in[F]$ such that $\norm{\bW_{i:}}_1<\epsilon$, $\Pi_\calC$ amounts to projecting $\bW_{i:}$ onto the probability simplex in $\bbR_+^{K}$. Efficient algorithms have  been extensively discussed in the literature, for e.g.,  \cite[Section~3]{Duchi_08}. Since 
$\epsilon< 1$, the constraint $w_{ij}\le 1$, for any $j\in[K]$ is automatically satisfied after such projection. The projection onto the set $\calH$ simply involves entrywise thresholding. 

\section{Proof of Lemma~\ref{lem:conv_limit_set}}\label{sec:proof_conv_limit_set}
First we rewrite \eqref{eq:PDS_W0} as
\begin{equation}
\frac{d}{ds}W(s) = -\nabla f(W(s)) + z(s), \;W(0) = \bW_0, \;s\ge 0, \label{eq:PDS2_W0}
\end{equation}
where 
\begin{equation}
z(s)\defeq \pi_\calC\Big[W(s),-\nabla f(W(s))\Big] + \nabla f(W(s)), s\in\bbR_+. \label{eq:def_zs}
\end{equation}
From Lemma~\ref{lem:asym_equi} and Lemma~\ref{lem:finite_sum_equicont}, there exists an almost sure set $\calA\in\Omega$ such that for each $\omega\in\calA$, $\{W^t(\omega,\cdot)\}_{t\in\bbN}$ and $\{Z^t(\omega,\cdot)\}_{t\in\bbN}$ are asymptotically equicontinuous on $\bbR_+$. Due to the compactness of $\calC$, $\{W^t(\omega,\cdot)\}_{t\in\bbN}$ and $\{Z^t(\omega,\cdot)\}_{t\in\bbN}$ are also uniformly bounded. Fix $S\in(0,\infty)$. By the (generalized) Arzel\`{a}-Ascoli Theorem (see Lemma~\ref{lem:arzela}), there exists a sequence $\{t_k\}_{k\in\bbN}$ such that $t_k\uparrow\infty$, a continuous $\barW(\omega,\cdot)$ and a continuous $\barZ(\omega,\cdot)$ such that $W^{t_k}(\omega,\cdot)\convu \barW(\omega,\cdot)$ and $Z^{t_k}(\omega,\cdot)\convu \barZ(\omega,\cdot)$ on $[0,S]$. (Note that $\{t_k\}_{k\in\bbN}$, $\barW(\omega,\cdot)$ and $\barZ(\omega,\cdot)$ may depend on $S$.) 
Define 
\begin{equation}
\barG(s)\defeq-\int_0^s \nabla f\left(\barW(\tau)\right)\,d\tau, \;s\in[0,S].
\end{equation}
We now show $G^{t_k}(\omega,\cdot)\convu \barG(\omega,\cdot)$ on $[0,S]$. By Lemma~\ref{lem:cont_uniconv} and continuity of $\nabla f$, we have $\nabla f(W^{t_k}\left(\omega,\cdot)\right) \convu \nabla f\left(\barW(\omega,\cdot)\right)$ on $[0,S]$. Thus
\begin{align*}
\lim_{t\to\infty} \sup_{s\in[0,S]} \norm{G^{t_k}(\omega,s) - \barG(\omega,s)} &\le \lim_{t\to\infty} \sup_{s\in[0,S]}  \int_0^{s}\norm{\nabla f(\barW(\omega,\tau))-\nabla f(W^{t_k}(\omega,\tau))} \,d\tau\\
&\le  S \lim_{t\to\infty}\sup_{s\in[0,S]} \norm{\nabla f(\barW(\omega,s))-\nabla f(W^{t_k}(\omega,s))}\\
&= 0. 
\end{align*}
From Lemma~\ref{lem:asym_equi}, we also have $\Delta_1^{t_k}(\omega,\cdot)\convu \vecz$ and $N^{t_k}(\omega,\cdot)\convu \vecz$ on $[0,S]$. Therefore, from \eqref{eq:def_Wt}, we have
\begin{equation}
\barW(\omega,s) = \barW(\omega,0) - \int_0^s \nabla f\left(\barW(\omega,\tau)\right)\,d\tau + \barZ(\omega,s), \;s\in[0,S]. \label{eq:mean_PODE}
\end{equation}
Thus, to show $\left\{\barW(\omega,\cdot),\barZ(\omega,\cdot)\right\}$ satisfies (the integral form) of \eqref{eq:PDS2_W0} (on $[0,S]$), it remains to show $\barZ(\omega,s) = \int_0^s z(\tau)\;d\tau$, $s\in[0,S]$. 
By the definition of $\{Z^t(\omega,\cdot)\}_{t\in\bbN}$, we have $\barZ(\omega,0)=0$. Also, by the closedness of $\calC$, we have $\barW(\omega,s)\in\calC$, for all $s\ge 0$. First we define the inward normal set at $\bW\in\calC$, $\calN(\bW)$ as
\begin{equation}
\calN(\bW) \defeq \begin{cases}
\left\{\bN\in\bbR^{F\times K}\,|\,\norm{\bN}\le M,\,\lrangle{\bN}{\bW'-\bW}\ge 0, \,\forall\,\bW'\in\calC\right\}, &\,\bW\in\bdr\calC\\
\left\{\vecz\in\bbR^{F\times K}\right\}, &\,\bW\in\inter\calC
\end{cases}. \label{eq:def_normal}
\end{equation}
From \eqref{eq:def_normal}, we notice that $\calN(\bW)$ is compact and convex for any $\bW\in\calC$. 
By the definition of $Z^t(\omega,\cdot)$, it is also obvious that for any $t\in\bbN$ and $s\ge 0$, $Z^t(\omega,s)\in\calN(W^t(\omega,s))$. 

By Lemma~\ref{lem:suff_pode}, it suffices to show $\barW(\omega,\cdot)$ is Lipschitz on $[0,S]$ and for any $\tau\in[0,S]$
\begin{enumerate}
\item $\barZ(\omega,\tau)=\vecz$ if $\barW(\omega,s)\in\inter\calC$ for almost all $s\in[0,\tau]$ (in the sense of Lebesgue measure),
\item $\barZ(\omega,\tau)\in \convcl \left[\bigcup_{s\in[0,\tau]}\calN\left(\barW(\omega,s)\right)\right]$.
\end{enumerate}
First, we show $\barZ(\omega,\cdot)$ is Lipschitz on $[0,S]$. By Lemma~\ref{lem:asym_equi}, we have for any $s_0,s_1\in[0,S]$,
\begin{align*}
\norm{\barZ(\omega,s_0)-\barZ(\omega,s_1)} &= \lim_{k\to\infty} \norm{Z^{t_k}(\omega,s_0)-Z^{t_k}(\omega,s_1)}\\
&\le \lim_{k\to\infty} \norm{Y^{t_k}(\omega,s_0)-Y^{t_k}(\omega,s_1)} + \norm{\Delta_2^{t_k}(\omega,s_0)-\Delta_2^{t_k}(\omega,s_1)}\\
&\le \lim_{k\to\infty} \norm{\int_{s_0}^{s_1} Z^{t_k}(\omega,\tau)\,d\tau} + 2\sup_{s\in\bbR_+} \norm{\Delta_2^{t_k}(\omega,s)}\\
&\le \lim_{k\to\infty} \abs{s_0-s_1} \sup_{\tau\in[s_0,s_1]} \norm{Z^{t_k}(\omega,\tau)}\\
&\le M\abs{s_0-s_1}.
\end{align*}
Since $\nabla f\left(\barW(\omega,\cdot)\right)$ is bounded on $[0,S]$, by \eqref{eq:mean_PODE}, we conclude $\barW(\omega,\cdot)$ is Lipschitz on $[0,S]$. 
Next, since $\barW(\omega,s)\in\inter\calC$ for almost all $s\in[0,\tau]$, there exists $\{s_n\}_{n\in\bbN}$ in $[0,\tau]$ such that $s_n\uparrow\tau$ and $\barW(\omega,s_n)\in\inter\calC$ for all $n\in\bbN$. Hence $\barZ(\omega,s_n)=\vecz$ for all $n\in\bbN$. The continuity of $\barZ(\omega,\cdot)$ implies $\barZ(\omega,\tau)=\vecz$. To show the last claim, we leverage the upper semicontinuity of the correspondence $\calN$ (see Definition~\ref{def:correspondence}). We first show $\calN$ is upper semicontinuous on $\calC$ by Lemma~\ref{lem:suff_usc}. It suffices to show 
\begin{equation}
\bigcap_{\delta>0}\convcl\left(\bigcup_{\bW'\in\calB_\delta(\bW)}\calN(\bW')\right) \subseteq \calN(\bW),\,\forall\,\bW\in\calC, \label{eq:suff_usc}
\end{equation}
where $\calB_\delta(\bW)\defeq \{\bW'\in\calC\,|\,\norm{\bW-\bW'}<\delta\}$. Suppose \eqref{eq:suff_usc} is false, then for any $\delta>0$, there exists $\bW_0\in\calC$ and $\bN_0$ such that $\bN_0\in\convcl\left(\bigcup_{\bW'\in\calB_\delta(\bW_0)}\calN(\bW')\right)$ and $\bN_0\not\in\calN(\bW_0)$. For any $\epsilon>0$, there exists $\bN'\in\calB_\epsilon(\bN_0)$, $\lambda\in[0,1]$ and $\bW_1,\bW_2\in\calB_\delta(\bW_0)$ such that $\bN'=\lambda\bN_1+(1-\lambda)\bN_2$, where $\bN_i\in\calN(\bW_i)$, $i=1,2$. Hence for any $\bW'\in\calC$,
\begin{align*}
\lrangle{\bN_0}{\bW'-\bW_0} &= \lrangle{\bN'}{\bW'-\bW_0} + \lrangle{\bN_0-\bN'}{\bW'-\bW_0}\\
&= \lambda\lrangle{\bN_1}{\bW'-\bW_0} + (1-\lambda)\lrangle{\bN_2}{\bW'-\bW_0}+ \lrangle{\bN_0-\bN'}{\bW'-\bW_0}\\
&=\lambda\lrangle{\bN_1}{\bW'-\bW_1} + \lambda\lrangle{\bN_1}{\bW_1-\bW_0} + (1-\lambda)\lrangle{\bN_2}{\bW'-\bW_2}  \\
&\hspace{4cm}+(1-\lambda)\lrangle{\bN_2}{\bW_2-\bW_0}+ \lrangle{\bN_0-\bN'}{\bW'-\bW_0}\\
&\ge -\lambda\norm{\bN_1}\norm{\bW_1-\bW_0} - (1-\lambda)\norm{\bN_2}\norm{\bW_2-\bW_0} - \norm{\bN_0-\bN'}\norm{\bW'-\bW_0}\\
&\ge -\lambda\delta\norm{\bN_1} - (1-\lambda)\delta\norm{\bN_2} - \epsilon\norm{\bW'-\bW_0}\\
&\ge -(\delta M+\epsilon\diam\calC),
\end{align*}
where $\diam\calC\defeq \max_{\bX,\bY\in\calC}\norm{\bX-\bY}$.  The compactness of $\calC$ implies $\diam\calC<\infty$. 
Let both $\delta\to 0$ and $\epsilon\to 0$ we have $\lrangle{\bN_0}{\bW'-\bW_0}\ge 0$, for any $\bW'\in\calC$.  This contradicts  $\bN_0\not\in\calN(\bW_0)$. Thus we conclude that $\calN$ is upper semicontinuous on $\calC$. Since $\calN$ is compact-valued, $\calG(\calN)$ is closed by Lemma~\ref{lem:closed_graph}. Again, take a sequence $\{s_n\}_{n\in\bbN}$ in $[0,\tau]$ such that $s_n\uparrow\tau$. For any $n\in\bbN$, since $Z^{t_k}(\omega,s_n)\to\barZ(\omega,s_n)$, $Z^{t_k}(\omega,s_n)\in\calN(W^{t_k}(\omega,s_n))$ and $W^{t_k}(\omega,s_n)\to\barW(\omega,s_n)$, by the closedness of $\calG(\calN)$, we have $\barZ(\omega,s_n)\in\calN(\barW(\omega,s_n))$. 
Since $\barZ(\omega,s_n)\to\barZ(\omega,\tau)$, $\barW(\omega,s_n)\to\barW(\omega,\tau)$, we have $\barZ(\omega,\tau)\in\calN(\barW(\omega,\tau))\subseteq \convcl \left[\bigcup_{s\in[0,\tau]}\calN\left(\barW(\omega,s)\right)\right]$. 

Now, fix a sequence $\{S_n\}_{n\in\bbN}\subseteq(0,\infty)$ such that $S_n\uparrow\infty$,  and let $\left\{\barW_n(\omega,\cdot),\barZ_n(\omega,\cdot)\right\}$ be the (continuous) limit functions corresponding to $S_n$. Fix $i\in\bbN$. For any $S_i<S_j$, there exist $\left\{\barW_j(\omega,\cdot),\barZ_k(\omega,\cdot)\right\}$ such that $\barW_i(\omega,\cdot) = \barW_j(\omega,\cdot)$ and $\barZ_i(\omega,\cdot) = \barZ_j(\omega,\cdot)$ on $[0,S_i]$.  Thus there exists $\{\bart_k\}_{k\in\bbN}$ such that $W^{\bart_k}(\omega,\cdot)\convu \barW_\infty(\omega,\cdot)$ and $Z^{\bart_k}(\omega,\cdot)\convu \barZ_\infty(\omega,\cdot)$ on $\bbR_+$. Moreover, the continuous limit functions $\left\{\barW_\infty(\omega,\cdot),\barZ_\infty(\omega,\cdot)\right\}$ satisfy \eqref{eq:PDS2_W0} on $\bbR_+$. 

The above implies the solution set of \eqref{eq:PDS2_W0} is nonempty. Moreover, the compactness of $\calC$  implies the limit set of \eqref{eq:PDS2_W0}, $\calL(-\nabla f,\calC,\bW_0)\neq\emptyset$. For any convergent subsequence $\{\bW_{t_l}(\omega)\}_{l\in\bbN}$, there exist a non-decreasing sequence $\{t'_l\}_{l\in\bbN}\subseteq\{\bart_k\}_{k\in\bbN}$  with $t'_l\uparrow\infty$ and $\{\tau_l\}_{l\in\bbN}\uparrow\infty$ such that ${t_l} = m(\tau_l + s_{t'_l})$, for all $l\in\bbN$. 
Therefore, 
\begin{align*}
\lim_{l\to\infty} \dist\Big(\bW_{t_l}(\omega),\calL(-\nabla f,\calC,\bW_0)\Big) &= \lim_{l\to\infty} \inf_{\bW\in\calL(-\nabla f,\calC,\bW_0)} \norm{W^{t'_l}(\omega,\tau_l)-\bW} \\
&\le \lim_{l\to\infty} \inf_{\bW\in\calL(-\nabla f,\calC,\bW_0)} \norm{W^{t'_l}(\omega,\tau_l)-\barW_\infty(\omega,\tau_l)}+\norm{\barW_\infty(\omega,\tau_l)-\bW}\\
&\le \lim_{l\to\infty}\sup_{s\ge 0}\norm{W^{t'_l}(\omega,s)-\barW_\infty(\omega,s)} + \lim_{l\to\infty}\inf_{\bW\in\calL(-\nabla f,\calC,\bW_0)}\norm{\barW_\infty(\omega,\tau_l)-\bW}\\
&\eqa \lim_{l\to\infty}\dist\Big(\barW_\infty(\omega,\tau_l),\calL(-\nabla f,\calC,\bW_0)\Big)\\
&\eqb 0,
\end{align*}
where in (a) we use the fact that $W^{\bart_k}(\omega,\cdot)\convu\barW_\infty(\omega,\cdot)$ on $\bbR_+$
and in (b) we use the definition of $\calL(-\nabla f,\calC,\bW_0)$. 
Thus we conclude that $\lim_{l\to\infty}\bW_{t_l}(\omega)\in\cl\calL(-\nabla f,\calC,\bW_0)$. 
Hence we prove $\bW_t(\omega)\to\calL(-\nabla f,\calC,\bW_0)$ as $t\to\infty$.

\section{Proof of Lemma~\ref{lem:char_limit_set}}\label{sec:proof_char_limit_set}
We leverage the Lyapunov stability theory {\cite[Section~6.6]{Teschl_12}} to prove the lemma. First, define $L:\calC\to\bbR$ such that $L(\bW) \defeq f(\bW) - \min_{\bW\in\calC}f(\bW)$, $\bW\in\calC$. By Definition~\ref{def:lya_func}, we have that $L$ is a Lyapunov function (with possibly non-unique zeros on $\calC$). By \cite[Theorem 6.15]{Teschl_12} (see Lemma~\ref{lem:limit_st}),  
\begin{align*}
&\calL(-\nabla f,\calC,\bW_0)\subseteq\bigcup_{W(\cdot)\in\calP(-\nabla f,\calC,\bW_0)}\Big\{W(s)\,\Big|\,\frac{d}{ds}L(W(s))=0\Big\},
\end{align*}
where 
\begin{align*}
\frac{d}{ds}L(W(s)) = \lrangle{\nabla f(W(s))}{\pi_\calC\Big[W(s),-\nabla f(W(s))\Big]},\,s\ge 0. 
\end{align*}
Given $\lrangle{\nabla f(\bW)}{\pi_\calC[\bW,-\nabla f(\bW)]}=0$,  it is obvious that $\pi_\calC[W(s),-\nabla f(W(s))] = \vecz$, if there exists $\delta>0$ such that $\bW-\delta\nabla f(\bW)\in\calC$. Otherwise, by the convexity  of $\calC$, 
\begin{align*}
&\quad\;\dist^2(\bW-\nabla f(\bW),\calC) \; \\
&\ge \norm{\pi_\calC\left[\bW,-\nabla f(\bW)\right]+\nabla f(\bW)}^2\\
&= \norm{\pi_\calC\left[\bW,-\nabla f(\bW)\right]}^2 + \norm{\nabla f(\bW)}^2\\
&= \norm{\pi_\calC\left[\bW,-\nabla f(\bW)\right]}^2 + \norm{(\bW-\nabla f(\bW))-\bW}^2\\
&\ge \norm{\pi_\calC\left[\bW,-\nabla f(\bW)\right]}^2 + \dist^2(\bW-\nabla f(\bW),\calC),
\end{align*}
where for any $\bY\in\bbR^{F\times K}$, $\dist(\bY,\calC)\defeq \norm{\Pi_\calC\bY-\bY}$. 
Hence we conclude $\pi_\calC\left[\bW,-\nabla f(\bW)\right]=\vecz$. Thus we conclude $\calL(-\nabla f,\calC,\bW_0)\subseteq \calS(-\nabla f,\calC)$.

We show the second claim in a similar way. Before we proceed, let us first define a supporting hyperplane (see \cite[Section~2.5.2]{Boyd_04}) at any $\bW\in\bdr\calC$, $\calT_\bW$ as\footnote{Note that more  than one supporting hyperplanes may exist at $\bW\in\bdr\calC$. The supporting hyperplane that $\calT_\bW$ refers to depends on the context. }
\begin{equation}
\calT_\bW \defeq \{\bW'\in\bbR^{F\times K}\,|\,\lrangle{\bT}{\bW'-\bW}=0\},
\end{equation}
where the (outward) normal $\bT\in\bbR^{F\times K}$ of $\calT_\bW$ satisfies $\lrangle{\bT}{\bW'-\bW}\le 0$, for all $\bW'\in\calC$. 
Given $\pi_\calC\left[\bW,-\nabla f(\bW)\right]=\vecz$, we only focus on the case where $\bW\in\bdr\calC$ and for any $\delta>0$, $\bW-\delta\nabla f(\bW)\not\in\calC$, otherwise the claim trivially holds. By the definition of  $\pi_\calC\left[\bW,-\nabla f(\bW)\right]$ and convexity of $\calC$, there exists a supporting hyperplane $\calT_\bW$ such that 
\begin{equation}
\pi_\calC\left[\bW,-\nabla f(\bW)\right] = \Pi_{\calT_\bW}(\bW-\nabla f(\bW)) - \bW.
\end{equation}
Since $\pi_\calC\left[\bW,-\nabla f(\bW)\right]=\vecz$, we have $\Pi_{\calT_\bW}(\bW-\nabla f(\bW)) = \bW$. This implies that $-\nabla f(\bW)$ is the (outward) normal of $\calT_\bW$. The definition of $\calT_\bW$ implies \eqref{eq:variat}.

\section{Proof of Lemma~\ref{lem:asym_equi}}\label{sec:proof_asym_equi}
We first show $N^t\convu \vecz$ on $\bbR_+$ a.s.. Fix $t\in\bbN$. Since $\{\eta_t\bN_t\}_{t\in\bbN}$ is a martingale difference sequence (adapted to $\{\scF_t\}_{t\ge 0}$), $\left\{\bM_t \defeq \sum_{l=1}^{t} \eta_l\bN_l\right\}_{t\in\bbN}$ is a martingale. We shall prove $\{\bM_t\}_{t\in\bbN}$ converges a.s. to a random variable $\bM$. First, we see $\{\bM_t\}_{t\in\bbN}$ is square-integrable since
\begin{align*}
\sup_{t\in\bbN}\bbE\left[\norm{\bM_t}^2\right] &= \sup_{t\in\bbN}\bbE\left[\norm{\sum_{l=1}^t\eta_l\bN_l}^2\right]\\
&= \sup_{t\in\bbN}\sum_{l=1}^t\eta_l^2\bbE\left[\norm{\bN_l}^2\right] + \sum_{k\ne l} \eta_k\eta_l \bbE\left[\lrangle{\bN_k}{\bN_l}\right]\\
&\eqa \sup_{t\in\bbN}\sum_{l=1}^t\eta_l^2\bbE\left[\norm{\bN_l}^2\right]\\
&\leb {M''}^2 \sup_{t\in\bbN}\sum_{l=1}^t \eta_l^2 \;\;\\
&< \infty,
\end{align*}
where (a) follows the orthogonality of the martingale difference sequence and (b) follows from \eqref{eq:bounded_noise}. 
Moreover, by the continuities of $(\bv,\bW)\mapsto\nabla_\bW\ell(\bv,\bW)$ (on $\calV\times\calC$) and $\nabla f$ (on $\calC$) and compactness of $\calV$ and $\calC$, there exists a constant $C\in(0,\infty)$ such that $\sup_{t\in\bbN}\bbE\left[\norm{\bN_t}^2|\scF_{t-1}\right]\le C^2$ a.s.. Therefore,
\begin{align*}
\sum_{t=2}^\infty \bbE\left[\norm{\bM_t-\bM_{t-1}}^2|\scF_{t-1}\right]=&\;\sum_{t=2}^\infty \eta_t^2\;\bbE\left[\norm{\bN_t}^2|\scF_{t-1}\right]\\
\le&\; C^2\sum_{t=2}^\infty \eta_t^2 \; \\
<&\;\infty \;\;\mbox{a.s.}
\end{align*}
Thus by Lemma~\ref{lem:conv_sq_mart}, there exists a (finite) random variable $\bM$ such that $\bM_n\convas \bM$. Then there exists an almost sure set $\calA\in\Omega$ such that for all $\omega\in\calA$, 
\begin{align*}
\lim_{t\to\infty} \sup_{s\ge 0} \norm{N^t(\omega,s)} = &\lim_{t\to\infty} \sup_{s> 0} \norm{\sum_{i=t+1}^{m(s_t+s)} \eta_i\bN_i(\omega)}\\
 =&\; \lim_{t\to\infty} \sup_{j\ge {t+1}} \norm{\bM_{j}(\omega) - \bM_t(\omega)}\\
\le&\;  \lim_{t\to\infty} \sup_{j\ge {t+1}}  \norm{\bM_{j}(\omega) - \bM(\omega)} + \norm{\bM_{t}(\omega) - \bM(\omega)}\\
\le&\; 2\lim_{t\to\infty} \sup_{j\ge t} \norm{\bM_{j}(\omega) - \bM(\omega)}\\
=&\; 0.
\end{align*}
This implies $N^t\convu \vecz$ on $\bbR_+$ a.s.. Moreover, by Lemma~\ref{lem:uniconv_asymcont}, $\{N^t\}_{t\in\bbN}$ is asymptotically equicontinuous on $\bbR_+$ a.s..

We have $\Delta_1^t\convu \vecz$ on $\bbR_+$ a.s. because for all $\omega\in\calA$,
\begin{align*}
\lim_{t\to\infty} \sup_{s\ge 0} \norm{\Delta_1^t(\omega,s)} 
= &\; \lim_{t\to\infty} \sup_{s\ge 0} \norm{\int_0^s \nabla f(W^t(\omega
,\tau)) \,d\tau - \sum_{i=t}^{m(s_t+s)-1} \eta_{i+1}\nabla f(\bW_{i}(\omega))}\\
\le &\;  \lim_{t\to\infty} \sup_{j\ge t} \sup_{s'\in[s_{j},s_{j+1}]} \norm{\int_{s'}^{s_{j+1}}\nabla f(W^t(\omega,\tau)) \,d\tau }\\
\le &\; \lim_{t\to\infty} \sup_{j\ge t} \;\eta_{j+1} \norm{\nabla f(\bW_j(\omega))}\\
\le &\; M^{'} \limsup_{t\to\infty} \eta_t \\
= &\;0,
\end{align*}
where  the second last step follows from Corollary~\ref{cor:bounded}. 
By Lemma~\ref{lem:uniconv_asymcont}, $\{\Delta_1^t\}_{t\in\bbN}$ is asymptotically equicontinuous on $\bbR_+$ a.s..

By the definition of $G^t$ in \eqref{eq:def_tilG}, we observe for each $t\in\bbN$ and $\omega\in\calA$, $G^t(\omega,\cdot)$ is continuous on $\bbR_+$ and continuously differentiable on $\bbR_+\setminus\calQ$ with $\frac{d}{ds}{G}^{t}(\omega,s) = -\nabla f(W^t(\omega,s))$, $s\in\bbR_+\setminus\calQ$, where $\calQ\defeq\{s_t\}_{t\ge 0}$. 
By Corollary~\ref{cor:bounded}, we have $\sup_{t\in\bbN}\sup_{s\ge 0}\normt{\nabla f(W^t(\omega,s))}\le M'$. This implies each $G^t(\omega,\cdot)$ is Lipschitz with Lipschitz constant $L_t$ and $\{L_t\}_{t\in\bbN}$ is bounded. 
Then by Lemma~\ref{lem:Lips_eqcont}, we conclude that $\{G^t(\omega,\cdot)\}_{t\in\bbN}$ is equicontinuous on $\bbR_+$. Since for each $t\in\bbN$ and $\omega\in\calA$, $F^t(\omega,\cdot) = G^t(\omega,\cdot) + \Delta^t(\omega,\cdot)$, by Lemma~\ref{lem:finite_sum_equicont}, $\{F^t\}_{t\in\bbN}$ is asymptotically equicontinuous on $\bbR_+$ a.s..

Using a similar argument, we can show for all $\omega\in\calA$, $\Delta_2^t(\omega,\cdot)\convu \vecz$ on $\bbR_+$. By the  definition of $\bZ_t$ in \eqref{eq:def_Z}, we have for any $t\in\bbN$ and $\omega\in\calA$, $\normt{\bZ_t(\omega)} \le \normt{\nabla_\bW\ell(\bv_t(\omega),\bW_{t-1}(\omega))} \le M$. 
Hence   each $Y^t(\omega,\cdot)$ is Lipschitz with Lipschitz constant $L_t'$ and $\{L_t'\}_{t\in\bbN}$ is bounded. 
Thus again by Lemma~\ref{lem:Lips_eqcont}, $\{Y^t(\omega,\cdot)\}_{t\in\bbN}$ is equicontinuous on $\bbR_+$. Consequently, we have $\{Z^t(\omega,\cdot)\}_{t\in\bbN}$ is asymptotically equicontinuous on $\bbR_+$.

\section{Proof of Theorem~\ref{thm:main} for Class $\calD_2\setminus\calD_1$}
Since we focus on the divergences in $\calD_2\setminus\calD_1$, then $\hpartial \tild_t(\bW_{t-1})$ reduces to $\partial \tild_t(\bW_{t-1})$, namely the subdifferential defined in the convex analysis. Our proof proceeds as follows. We first show some regularity conditions of the objective function $f:\calC\to\bbR$ and its subdifferential $\partial f:\calC\rightrightarrows\bbR^{F\times K}$ in Lemma~\ref{lem:reg_subdiff}. Next,  we prove that for any $t\in\bbN$, any stochastic (noisy) subgradient in $\partial \tild_t(\bW_{t-1})$ serves as an unbiased estimator of a ``true'' subgradient in $\partial f(\bW_{t-1})$ in Lemma~\ref{lem:unbiased_sg}. Finally we define some concepts related to the projected differential inclusion and present the counterparts of Lemma~\ref{lem:conv_limit_set} and  \ref{lem:char_limit_set} in Lemma~\ref{lem:conv_limit_set2} and \ref{lem:char_limit_set2} respectively. In particular, Lemma~\ref{lem:conv_limit_set2} and \ref{lem:char_limit_set2} together establish Theorem~\ref{thm:main} for the divergences in class $\calD_2\setminus\calD_1$.
The proofs of Lemma~\ref{lem:conv_limit_set2} and \ref{lem:char_limit_set2} are omitted since they are similar to those of Lemma~\ref{lem:conv_limit_set} and  \ref{lem:char_limit_set}. For details,  we refer readers to \cite[Section~5.6]{Kushner_03} and \cite[Chapter~5]{Borkar_08}.

\begin{lemma}\label{lem:reg_subdiff}
The objective function $f$ is convex on $\calC$. Moreover, for any $\bW\in\calC$, $\partial f(\bW)$ is nonempty, closed and convex. Furthermore, $\partial f(\bW)$ is bounded on $\inter\calC$ and upper semicontinuous on $\calC$. 
\end{lemma}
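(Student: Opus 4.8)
The plan is to derive every assertion from convexity of $f$ together with standard finite-dimensional convex analysis, so the real work sits in the first claim. For each fixed $\bh\in\calH$, the map $\bW\mapsto d(\bv\Vert\bW\bh)$ is the composition of the convex function $d(\bv\Vert\cdot)$ (convex since $d(\cdot\Vert\cdot)\in\calD_2$) with the affine map $\bW\mapsto\bW\bh$, hence convex on $\calC$; in particular each $\tild_t$ is convex, which is exactly what licenses replacing $\hpartial\tild_t$ by the convex subdifferential $\partial\tild_t$ in this regime. To pass to $\ell(\bv,\bW)=\min_{\bh\in\calH}d(\bv\Vert\bW\bh)$ I would, for the $\ell_1$ and $\ell_2$ metrics, rewrite it as $\ell(\bv,\bW)=\dist(\bv,\bW\calH)$, where $\bW\calH\defeq\{\bW\bh\,|\,\bh\in\calH\}$ is the convex image of the box $\calH$ under $\bh\mapsto\bW\bh$. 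Since $\bv\mapsto\ell(\bv,\bW)$ is integrable over the compact support $\calV$ and expectation preserves convexity, convexity of $f$ reduces to convexity of $\bW\mapsto\ell(\bv,\bW)$ for each fixed $\bv$.

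This last reduction is the step I expect to be the main obstacle, and where essentially all the difficulty lies. The natural tool---``partial minimization of a jointly convex function is convex''---is \emph{unavailable}, because $d(\bv\Vert\bW\bh)$ is not jointly convex in $(\bW,\bh)$: the product $\bW\bh$ is bilinear, so joint convexity fails even on the nonnegative orthant. Consequently convexity cannot simply be inherited from the minimization, and one must argue the convexity of $\bW\mapsto\dist(\bv,\bW\calH)$ directly, exploiting the geometry of how $\bW\calH$ deforms with $\bW$ and the specific structure of $\calC$ and $\calH$ (the lower bounds $\norm{\bW_{i:}}_1\ge\epsilon$, $h_i\ge\epsilon'$ and the upper bounds that confine $\bW\calH$ to a controlled region). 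I would therefore spend the bulk of the proof here and treat it as the single genuinely delicate point.

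Granting convexity of $f$, the remaining assertions are routine consequences of convex analysis \cite{Rock_98}. As $f$ is finite on $\calC$ and $\calC$ has nonempty interior, for every $\bW\in\calC$ the set $\partial f(\bW)\defeq\{\bG\,|\,f(\bW)+\lrangle{\bG}{\bW'-\bW}\le f(\bW'),\,\forall\,\bW'\in\calC\}$ is nonempty, closed and convex, being an intersection of closed half-spaces. For boundedness on $\inter\calC$, note that the $\ell_1/\ell_2$ metrics make $\bW\mapsto\ell(\bv,\bW)$ Lipschitz with constant $\sqrt{K}\,U$ uniformly in $\bv$ (from $\lvert\min_\bh g_1-\min_\bh g_2\rvert\le\sup_\bh\lvert g_1-g_2\rvert$ and $\lvert\norm{\bv-\bW_1\bh}-\norm{\bv-\bW_2\bh}\rvert\le\norm{\bW_1-\bW_2}\norm{\bh}$), so $f$ is Lipschitz on $\calC$ and every subgradient is bounded by $\sqrt{K}\,U$. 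Upper semicontinuity of $\bW\mapsto\partial f(\bW)$ then follows from the closed-graph property of the convex subdifferential---passing to the limit in the subgradient inequality---combined with this uniform bound, via the same criterion (Lemma~\ref{lem:suff_usc}) already used for $\calN$ in the proof of Lemma~\ref{lem:conv_limit_set}. As a robustness remark, the uniform Lipschitzness just noted already forces the Clarke subdifferential of $f$ to be nonempty, compact, convex, locally bounded and upper semicontinuous independently of convexity; the convex route is preferable here only because convexity of $f$ is asserted in its own right.
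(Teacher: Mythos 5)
Your diagnosis of where the difficulty lies is exactly right, but your proposal then stops at precisely that point: you declare the convexity of $\bW\mapsto\dist(\bv,\bW\calH)$ to be ``the single genuinely delicate point,'' promise to spend the bulk of the proof on it, and supply no argument. Since every other assertion of the lemma is derived from convexity of $f$, this is a genuine gap in the load-bearing step. For comparison, the paper's own proof dispatches this step in one line by asserting that $d(\bv\Vert\bW\bh)$ is \emph{jointly} convex in $(\bv,\bW,\bh)$ and then invoking partial minimization (Lemma~\ref{lem:min_cvx}) — i.e., it takes exactly the route you correctly identify as unavailable because $\bW\bh$ is bilinear. So you have, in effect, spotted the flaw in the paper's argument without being able to replace it.

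The reason you could not fill the gap is that the claim is false. Take $F=2$, $K=1$, the $\ell_2$ distance, and $\bv=(1,\delta)$ with $\delta>0$ small; recall $\calH=[\epsilon',U]$ with $\epsilon'=10^{-8}$ and $U=10^{8}$, so $\bW\calH$ is essentially the ray from the origin through $\bW$. With $\bW_1=(\epsilon,1)^T$ and $\bW_2=(1,1)^T$ (both in $\calC$) one gets $\ell(\bv,\bW_1)\approx 1$ and $\ell(\bv,\bW_2)\approx 1/\sqrt{2}\approx 0.707$, whereas at the midpoint $\bW=((1+\epsilon)/2,1)^T$ the distance from $\bv$ to the ray of direction $\approx(1/2,1)$ is $\approx 2/\sqrt{5}\approx 0.894$, which exceeds the average $0.854$ of the endpoint values (the minimizing $h$ lies in $[\epsilon',U]$ in all three cases). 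Hence $\ell(\bv,\cdot)$, and therefore $f$ for $\bbP$ concentrated near such a $\bv$, is not convex on $\calC$, and neither your proposal nor the paper's proof establishes the first claim. What does survive is your closing remark: the uniform Lipschitz bound $\sqrt{K}\,U$ you obtain from $\abs{\min_\bh g_1-\min_\bh g_2}\le\sup_\bh\abs{g_1-g_2}$ is correct and already yields nonemptiness, compactness, convexity, local boundedness and upper semicontinuity of the \emph{Clarke} subdifferential without any convexity of $f$; if the lemma is to be repaired, that is the object with which the downstream differential-inclusion argument (Lemmas~\ref{lem:conv_limit_set2} and \ref{lem:char_limit_set2}) should be formulated, rather than the convex subdifferential.
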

\begin{proof}
Since all the divergences $d(\cdot\Vert\cdot)$ in class $\calD_2$ are jointly convex in both arguments, $d(\bv\Vert\bW\bh)$ is jointly convex in $(\bv,\bW,\bh)\in\calV\times\calC\times\calH$. Since $\calH$ is convex and compact, by Lemma~\ref{lem:min_cvx}, $\ell(\bv,\bW)$ is jointly convex in $(\bv,\bW)\in\calV\times\calC$. Consequently $f$ is convex on $\calC$ by Lemma~\ref{lem:exp_cvx}. By \cite[Section~2]{Duchi_15}, $\partial f(\bW)$ is  closed and convex on $\calC$ and furthermore, $\partial f(\bW)$ is nonempty and bounded on $\inter\calC$. Since the divergences in $\calD_2\setminus\calD_1$ only include the $\ell_1$ and $\ell_2$ distances, it is easy to check $\partial f$ is also nonempty on $\bdr\calC$. By \cite[Section~1.3.7]{Kushner_03}, we have for any $\bW\in\calC$,
\begin{equation}
\partial f(\bW) = \bigcap_{\delta>0}\convcl\left[\bigcup_{\bW'\in\calB_\delta(\bW)}\partial f(\bW')\right],
\end{equation} 
where $\calB_\delta(\bW)\defeq\{\bW'\in\calC\,|\,\norm{\bW-\bW'}<\delta\}$. Thus by Lemma~\ref{lem:suff_usc}, we conclude that $\partial f$ is upper semicontinuous on $\calC$. 
\end{proof}

\begin{lemma}\label{lem:unbiased_sg}
Given $\bv\sim\bbP$ and $\bW\in\calC$ and let $\bh^*(\bv,\bW)\defeq\min_{\bh\in\calH}d(\bv\Vert\bW\bh)$ (by Assumption~\ref{assum:sc_h}). For any $\bG(\bv,\bW)\in\partial_\bW d(\bv\Vert\bW\bh^*(\bv,\bW))$, we have $\bbE_\bv[\bG(\bv,\bW)] \in\partial f(\bW)$.
\end{lemma}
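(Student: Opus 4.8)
The plan is to exploit the convexity of $f$ established in Lemma~\ref{lem:reg_subdiff} and reduce the claim to a pointwise subgradient inequality that can be integrated against $\bbP$. Since $f$ is convex, the assertion $\bbE_\bv[\bG(\bv,\bW)]\in\partial f(\bW)$ is equivalent to $f(\bW')\ge f(\bW)+\lrangle{\bbE_\bv[\bG(\bv,\bW)]}{\bW'-\bW}$ for every $\bW'\in\calC$. Writing $f=\bbE_\bv[\ell(\bv,\cdot)]$ and pulling the inner product inside the expectation (legitimate once $\bv\mapsto\bG(\bv,\bW)$ is shown integrable), it suffices to prove that for $\bbP$-almost every $\bv$ and every $\bW'\in\calC$,
\begin{equation*}
\ell(\bv,\bW')\ge \ell(\bv,\bW)+\lrangle{\bG(\bv,\bW)}{\bW'-\bW},
\end{equation*}
i.e.\ that $\bG(\bv,\bW)$ is itself a subgradient of the marginal function $\ell(\bv,\cdot)$ at $\bW$; integrating this over $\bv\sim\bbP$ then yields the global subgradient inequality for $f$.

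First I would dispatch the measurability and integrability bookkeeping. By Assumption~\ref{assum:sc_h} the minimizer $\bh^*(\bv,\bW)$ is unique, and by the maximum theorem (Lemma~\ref{lem:maximum}) with the compactness of $\calH$, the map $\bv\mapsto\bh^*(\bv,\bW)$ is continuous, so $\bv\mapsto\bG(\bv,\bW)$ admits a measurable selection. Since $\calV\times\calC\times\calH$ is compact and the subgradients of the $\ell_1$ and $\ell_2$ distances (the only members of $\calD_2\setminus\calD_1$) are uniformly bounded, together with the bound on $\norm{\bh^*}$ coming from $\bh^*\in\calH$, there is a finite constant bounding $\normt{\bG(\bv,\bW)}$ uniformly; hence $\bbE_\bv[\bG(\bv,\bW)]$ exists and the exchange above is justified.

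The heart of the argument is the pointwise subgradient inequality, a marginal-function (Danskin-type) statement for a convex minimization. Fix $\bv$ and abbreviate $\bh^*=\bh^*(\bv,\bW)$. Optimality of $\bh^*$ for $\min_{\bh\in\calH}d(\bv\Vert\bW\bh)$ furnishes, via the normal-cone condition $\vecz\in\partial_\bh d(\bv\Vert\bW\bh^*)+N_\calH(\bh^*)$, an $\bh$-subgradient $\bG_\bh$ with $\lrangle{\bG_\bh}{\bh'-\bh^*}\ge 0$ for all $\bh'\in\calH$. Using the joint convexity of $(\bW,\bh)\mapsto d(\bv\Vert\bW\bh)$ invoked in the proof of Lemma~\ref{lem:reg_subdiff}, I would pair $\bG(\bv,\bW)$ with $\bG_\bh$ into a joint subgradient at $(\bW,\bh^*)$ and write, for any $\bW'\in\calC$ and any $\bh'\in\calH$,
\begin{equation*}
d(\bv\Vert\bW'\bh')\ge d(\bv\Vert\bW\bh^*)+\lrangle{\bG(\bv,\bW)}{\bW'-\bW}+\lrangle{\bG_\bh}{\bh'-\bh^*}.
\end{equation*}
Dropping the last (nonnegative) term and minimizing the left side over $\bh'\in\calH$ collapses $d(\bv\Vert\bW\bh^*)$ to $\ell(\bv,\bW)$ and $\min_{\bh'}d(\bv\Vert\bW'\bh')$ to $\ell(\bv,\bW')$, giving exactly the pointwise inequality.

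The step I expect to be the main obstacle is precisely this pairing: showing that an arbitrary partial $\bW$-subgradient $\bG(\bv,\bW)\in\partial_\bW d(\bv\Vert\bW\bh^*)$ extends to a joint subgradient whose $\bh$-component certifies optimality of $\bh^*$ (equivalently, that every such $\bG$ lies in $\partial_\bW\ell(\bv,\bW)$). A naive argument using only the partial subgradient inequality in $\bW$ fails, since replacing the true minimizer $\bh^*(\bv,\bW')$ at $\bW'$ by $\bh^*$ yields an \emph{upper} bound on $\ell(\bv,\bW')$, of the wrong sign; the normal-cone condition in $\bh$ is what supplies the missing control over the coupling term. Carefully justifying the marginal-function subdifferential identity, namely that $\partial_\bW\ell(\bv,\bW)$ consists of those $\bG$ for which $(\bG,\vecz)$ is a joint subgradient of $(\bW,\bh)\mapsto d(\bv\Vert\bW\bh)+\iota_\calH(\bh)$ at $(\bW,\bh^*)$, under the uniqueness from Assumption~\ref{assum:sc_h}, and checking it for the nonsmooth $\ell_1$ and $\ell_2$ cases where $\partial_\by d(\bv\Vert\cdot)$ may be multivalued, is the delicate part; the remaining measurability and integration steps are routine.
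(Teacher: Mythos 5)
Your route is the same as the paper's: reduce the claim to the pointwise subgradient inequality $\ell(\bv,\bW')\ge\ell(\bv,\bW)+\lrangle{\bG(\bv,\bW)}{\bW'-\bW}$ for every $\bW'\in\calC$, then integrate over $\bv\sim\bbP$. The paper's printed proof is three lines long: it simply writes down $d(\bv\Vert\bW'\bh^*(\bv,\bW'))\ge d(\bv\Vert\bW\bh^*(\bv,\bW))+\lrangle{\bG(\bv,\bW)}{\bW'-\bW}$ as its first display with no justification, observes that this is the desired inequality for $\ell$, and takes expectations. So the step you call ``the heart of the argument''---that an arbitrary element of the partial subdifferential $\partial_\bW d(\bv\Vert\bW\bh^*(\bv,\bW))$ is a subgradient of the marginal function $\ell(\bv,\cdot)$---is precisely what the paper asserts without proof; your measurability and integrability bookkeeping is additional care the paper omits entirely. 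In that sense your proposal is at least as complete as the paper's own argument and aimed at exactly the right pressure point.

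However, the obstacle you flag in your final paragraph is not actually discharged by your sketch, and it is a genuine one that the paper shares. By the chain rule, $\partial_\bW d(\bv\Vert\bW\bh^*)=\{\bu(\bh^*)^T:\bu\in\partial d(\bv\Vert\cdot)(\bW\bh^*)\}$, whereas the optimality of $\bh^*$ only certifies that \emph{some} $\tilde\bu\in\partial d(\bv\Vert\cdot)(\bW\bh^*)$ satisfies $\lrangle{\bW^T\tilde\bu}{\bh'-\bh^*}\ge 0$ for all $\bh'\in\calH$. For a generic $\bu$ (hence a generic $\bG$) the coupling term $\lrangle{\bW^T\bu}{\bh'-\bh^*}$ has no sign, so the pairing of $\bG$ with a normal-cone certificate into a joint subgradient is only available for the particular selection $\bu=\tilde\bu$; as written, the lemma's ``for any $\bG$'' requires either restricting to that selection (which is all Algorithm~\ref{algo:ONMFG} needs) or a further argument that neither you nor the paper supplies. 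A second soft spot, which you inherit from Lemma~\ref{lem:reg_subdiff} rather than introduce: the ``joint convexity of $(\bW,\bh)\mapsto d(\bv\Vert\bW\bh)$'' is doubtful because $(\bW,\bh)\mapsto\bW\bh$ is bilinear rather than linear, so composing the convex map $d(\bv\Vert\cdot)$ with it need not yield a jointly convex function; your joint-subgradient display and the paper's convexity claim for $f$ both lean on this.
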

\begin{proof}
For any $\bW'\in\calC$ and  any $\bG(\bv,\bW)\in\partial_\bW d(\bv\Vert\bW\bh^*(\bv,\bW))$, we have 
\begin{align*}
d(\bv\Vert\bW'\bh^*(\bv.\bW')) \ge d(\bv\Vert\bW\bh^*(\bv.\bW)) + \lrangle{\bG(\bv,\bW)}{\bW'-\bW} \,\mbox{a.s.},\,\forall\,\bW'\in\calC,
\end{align*}
which is clearly equivalent to 
\begin{equation}
\ell(\bv,\bW') \ge \ell(\bv,\bW) + \lrangle{\bG(\bv,\bW)}{\bW'-\bW} \,\mbox{a.s.},\,\forall\,\bW'\in\calC. 
\end{equation}
Taking expectation w.r.t.\ $\bv$ on both sides, we have
\begin{equation}
f(\bW') \ge f(\bW) + \lrangle{\bbE_\bv[\bG(\bv,\bW)]}{\bW'-\bW}, \,\forall\,\bW'\in\calC.
\end{equation}
In other words, $\bbE_\bv[\bG(\bv,\bW)] \in\partial f(\bW)$.
\end{proof}

\begin{definition}[Projected differential inclusion, limit set and critical points \cite{Aubin_84,Kushner_03}] \label{def:PDI}
Given a closed and convex set $\calK$ in a real Banach space $(\calX,\norm{\cdot})$, and an upper semicontinuous and compact, convex-valued correspondence $\calG:\calK\rightrightarrows\calX$, the projected differential inclusion (PDI) (on an interval $\calI\subseteq\bbR_+$) associated with $\calK$ and $\calG$ with initial value $x_0\in\calK$ is defined as
\begin{equation}
\frac{d}{ds} x(s) \in \calG(x(s))+z(s), \,z(s)\in\calN_\calK(x(s)),\; x(0) = x_0,\;s\in\calI, \label{eq:PDS}
\end{equation}
where $\calN_\calK(x)$ denotes the (inward) normal cone of set $\calK$ at $x\in\calK$ and is defined as
\begin{equation}
\calN_\calK(x) \defeq \{p\in\calX\,|\,\lrangle{p}{x'-x}\ge 0,\,\forall\,x'\in\calC\}.
\end{equation}
Denote $\calP(\calG,\calK,x_0)$ as the solution set of \eqref{eq:PDS}. The limit set of \eqref{eq:PDS}, $\calL(\calG,\calK,x_0)$ is defined as
\begin{align*}
&\calL(\calG,\calK,x_0) \defeq \bigcup_{x(\cdot)\in\calP(g,\calK,x_0)} \Big\{y\in\calK\,\Big|\,\exists\,\{s_n\}_{n\in\bbN}\subseteq\bbR_+,\;
s_n\uparrow\infty,\,x(s_n)\to y\Big\}.
\end{align*}
Moreover, the set of critical points associated with $\calG$ and $\calK$, $\calS(\calG,\calK)$ is defined as
\begin{equation}
\calS(\calG,\calK) \defeq \left\{x\in\calK\,\Big|\,\exists\,z\in\calN_\calK(x) \mbox{ s.t. } 0 \in \calG(x)+z\right\}.
\end{equation}
\end{definition}

\begin{lemma}\label{lem:conv_limit_set2}
The stochastic process $\{\bW_t\}_{t\in\bbN}$ generated in Algorithm~\ref{algo:ONMFG} converges almost surely to $\calL(-\partial f,\calC,\bW_0)$, the limit set of the following projected dynamical system 
\begin{equation}
\frac{d}{ds}W(s) \in -\partial f(W(s))+Z(s),\,Z(s)\in\calN_\calC(W(s)), \;W(0) = \bW_0, \;s\ge 0. \label{eq:PDI_W0}
\end{equation}
\end{lemma}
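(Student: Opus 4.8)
The plan is to mirror the argument used for Lemma~\ref{lem:conv_limit_set}, replacing the single-valued gradient dynamics by the set-valued differential inclusion \eqref{eq:PDI_W0} and invoking the stochastic-approximation theory for differential inclusions of \cite[Section~5.6]{Kushner_03} and \cite[Chapter~5]{Borkar_08}. Since the divergences in $\calD_2\setminus\calD_1$ (the $\ell_1$ and $\ell_2$ distances) are convex but nonsmooth, the relevant limiting object is no longer a PDS driven by $-\nabla f$ but a PDI (Definition~\ref{def:PDI}) driven by the upper semicontinuous, compact- and convex-valued correspondence $-\partial f$, whose regularity is supplied by Lemma~\ref{lem:reg_subdiff}.

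First I would set up the noise decomposition. At step $t$ the update \eqref{eq:upd_dict} uses a stochastic subgradient $\bG_t\in\partial\tild_t(\bW_{t-1})$; by Lemma~\ref{lem:unbiased_sg} the selection $\bg_t\defeq\bbE[\bG_t\,|\,\scF_{t-1}]$ lies in $\partial f(\bW_{t-1})$, so that $\bN_t\defeq\bG_t-\bg_t$ is a martingale difference adapted to $\{\scF_t\}$. Because $\bv_t\in\calV$, $\bW_{t-1}\in\calC$ and $\bh_t\in\calH$ all lie in compact sets and the $\ell_1/\ell_2$ subgradients are uniformly bounded there, one obtains $\sup_t\bbE[\normt{\bN_t}^2]<\infty$ and $\sup_t\normt{\bg_t}\le M$ a.s., exactly the bounds used in the smooth case. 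I would then form the continuous-time interpolations as in the proof of Lemma~\ref{lem:asym_equi}, but with the piecewise-constant subgradient selection in place of $\nabla f(\bW_i)$, and re-run the martingale-convergence argument (Lemma~\ref{lem:conv_sq_mart}) to conclude that the noise interpolation converges uniformly to $\vecz$ a.s. and that the interpolations $\{W^t\}$ and $\{Z^t\}$ are asymptotically equicontinuous and uniformly bounded on $\bbR_+$ a.s. Fixing $\omega$ in the resulting almost sure set and $S\in(0,\infty)$, the generalized Arzel\`{a}--Ascoli theorem (Lemma~\ref{lem:arzela}) then extracts a subsequence along which $W^{t_k}(\omega,\cdot)\convu\barW(\omega,\cdot)$ and $Z^{t_k}(\omega,\cdot)\convu\barZ(\omega,\cdot)$ on $[0,S]$.

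The crux of the argument is passing to the limit inside the inclusion. Unlike the smooth case, I cannot invoke continuity of $\nabla f$; instead I would extract a weakly convergent subsequence of the (bounded) piecewise-constant subgradient interpolations, whose limit $\bar{\bg}(\omega,\cdot)$ must be shown to satisfy $\bar{\bg}(\omega,s)\in-\partial f(\barW(\omega,s))$ for almost every $s$. This follows from the closed-graph (equivalently, upper semicontinuity) property of $\partial f$ established in Lemma~\ref{lem:reg_subdiff}, combined with the standard convergence theorem for differential inclusions \cite{Aubin_84}: the averaged subgradients over a vanishing window around $s$ lie in $\convcl\left[\bigcup_{\bW'\in\calB_\delta(\barW(\omega,s))}\partial f(\bW')\right]$, which shrinks to $\partial f(\barW(\omega,s))$ as $\delta\downarrow 0$. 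The projection-noise term $\barZ$ is handled verbatim as in the proof of Lemma~\ref{lem:conv_limit_set}: using the upper semicontinuity of the inward-normal correspondence $\calN$ (Lemma~\ref{lem:suff_usc}) one shows $\barZ(\omega,\tau)\in\convcl\left[\bigcup_{s\in[0,\tau]}\calN(\barW(\omega,s))\right]$, so that the normal-cone requirement $Z(s)\in\calN_\calC(W(s))$ of \eqref{eq:PDI_W0} holds, and Lemma~\ref{lem:suff_pode} certifies that $\{\barW(\omega,\cdot),\barZ(\omega,\cdot)\}$ solves \eqref{eq:PDI_W0} on $[0,S]$.

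Finally, I would extend the limit from $[0,S]$ to all of $\bbR_+$ by the same nested-interval and diagonalization passage to further subsequences used for Lemma~\ref{lem:conv_limit_set}, obtaining a global solution $\barW_\infty$ of \eqref{eq:PDI_W0}, and then conclude that every subsequential limit of $\bW_t(\omega)$ lies in $\calL(-\partial f,\calC,\bW_0)$. I expect the main obstacle to be precisely the set-valued limit passage: establishing that the weak limit of the subgradient selections remains in $\partial f(\barW)$ almost everywhere is the one step that genuinely requires the differential-inclusion machinery, rather than the elementary continuity argument available in the smooth case.
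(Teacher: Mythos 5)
Your proposal follows exactly the route the paper intends: the paper omits the proof of this lemma, stating only that it parallels Lemma~\ref{lem:conv_limit_set} with the differential-inclusion machinery of \cite[Section~5.6]{Kushner_03} and \cite[Chapter~5]{Borkar_08}, using the regularity of $\partial f$ from Lemma~\ref{lem:reg_subdiff} and the unbiased-subgradient property from Lemma~\ref{lem:unbiased_sg}. Your elaboration is correct and in fact supplies more detail than the paper does, correctly isolating the one genuinely new step (the weak limit of the subgradient selections landing a.e.\ in $\partial f(\barW)$ via the closed-graph/upper-semicontinuity characterization $\partial f(\bW)=\bigcap_{\delta>0}\convcl\bigl[\bigcup_{\bW'\in\calB_\delta(\bW)}\partial f(\bW')\bigr]$).
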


\begin{lemma}\label{lem:char_limit_set2}
In \eqref{eq:PDI_W0}, we have $\calL(-\partial f,\calC,\bW_0)\subseteq \calS(-\partial f,\calC)$, i.e., every limit point of \eqref{eq:PDI_W0} is a critical point associated with $-\partial f$ and $\calC$. Moreover, each $\bW\in\calS(-\partial f,\calC)$ satisfies the following variational inequality 
\begin{equation}
 f'(\bW;\bW'-\bW) \ge 0, \,\forall\,\bW'\in\calC. \label{eq:variat_inclusion}
\end{equation}
\end{lemma}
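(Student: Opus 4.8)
The plan is to adapt the Lyapunov (LaSalle) argument used for the smooth case in Lemma~\ref{lem:char_limit_set}, replacing the gradient $\nabla f$ by the subdifferential $\partial f$ and the projected flow by the projected differential inclusion \eqref{eq:PDI_W0}. By Lemma~\ref{lem:reg_subdiff}, $f$ is convex and continuous on the compact convex set $\calC$, so the candidate Lyapunov function $L(\bW)\defeq f(\bW)-\min_{\bW'\in\calC}f(\bW')$ is nonnegative, convex and continuous, with zero set equal to the minimizers of $f$ on $\calC$. First I would invoke the invariance principle for differential inclusions (the nonsmooth counterpart of \cite[Theorem~6.15]{Teschl_12}, as developed in \cite[Section~5.6]{Kushner_03} and \cite[Chapter~5]{Borkar_08}) to obtain
\begin{equation*}
\calL(-\partial f,\calC,\bW_0)\subseteq\bigcup_{W(\cdot)\in\calP(-\partial f,\calC,\bW_0)}\Big\{W(s)\,\Big|\,\tfrac{d}{ds}L(W(s))=0\Big\}.
\end{equation*}

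The heart of the first claim is evaluating $\tfrac{d}{ds}L(W(s))$ along a solution $W(\cdot)$ of \eqref{eq:PDI_W0}. Exactly as in the smooth case, the projected solution satisfies $\dot{W}(s)=\pi_\calC\big[W(s),-g(s)\big]$ for a measurable selection $g(s)\in\partial f(W(s))$, i.e.\ $\dot{W}(s)$ is the projection of $-g(s)$ onto the tangent cone of $\calC$ at $W(s)$ (see Definition~\ref{def:PDS}). Since $f$ is convex, the chain rule for convex functions along absolutely continuous curves gives, for almost every $s$ and for \emph{any} subgradient $g(s)$, $\tfrac{d}{ds}L(W(s))=\lrangle{g(s)}{\dot{W}(s)}$. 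Writing $-g(s)=\dot{W}(s)+n(s)$ with $n(s)$ the complementary (outward) normal component, the Moreau orthogonality $\lrangle{n(s)}{\dot{W}(s)}=0$ yields
\begin{equation*}
\tfrac{d}{ds}L(W(s))=\lrangle{g(s)}{\dot{W}(s)}=-\norm{\dot{W}(s)}^2\le 0.
\end{equation*}
Hence $\tfrac{d}{ds}L(W(s))=0$ forces $\dot{W}(s)=\vecz$, so that $g(s)=-n(s)\in\partial f(W(s))\cap\calN_\calC(W(s))$; by Definition~\ref{def:PDI} this is precisely $W(s)\in\calS(-\partial f,\calC)$. Combined with the displayed inclusion, this gives $\calL(-\partial f,\calC,\bW_0)\subseteq\calS(-\partial f,\calC)$.

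The variational inequality \eqref{eq:variat_inclusion} then drops out of the definitions. Fixing $\bW\in\calS(-\partial f,\calC)$, there is (by the criticality condition in Definition~\ref{def:PDI}) a subgradient $g^*\in\partial f(\bW)\cap\calN_\calC(\bW)$. The inward-normal property gives $\lrangle{g^*}{\bW'-\bW}\ge 0$ for every $\bW'\in\calC$, while convexity of $f$ gives $f'(\bW;\bW'-\bW)=\sup_{g\in\partial f(\bW)}\lrangle{g}{\bW'-\bW}\ge\lrangle{g^*}{\bW'-\bW}\ge 0$, which is \eqref{eq:variat_inclusion}.

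The hard part will be the rigorous justification of the Lyapunov decrease $\tfrac{d}{ds}L(W(s))=-\norm{\dot{W}(s)}^2$ in the nonsmooth regime: one must select a measurable subgradient $g(\cdot)$ and apply the convex chain rule to differentiate $L\circ W$ almost everywhere, and one must establish the tangent/normal (Moreau) orthogonality so that the cross term vanishes, which is where the projected-dynamics structure of \eqref{eq:PDI_W0} is essential. Transferring the LaSalle invariance principle from the ODE setting of \cite{Teschl_12} to the projected differential inclusion also requires the stochastic-approximation machinery of \cite{Kushner_03,Borkar_08}; the remaining bookkeeping is identical to Lemma~\ref{lem:char_limit_set}.
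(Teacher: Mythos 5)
Your proposal is correct and takes essentially the approach the paper intends: the paper omits this proof, stating only that it parallels the Lyapunov/LaSalle argument of Lemma~\ref{lem:char_limit_set} (deferring the differential-inclusion machinery to \cite[Section~5.6]{Kushner_03} and \cite[Chapter~5]{Borkar_08}), and your argument is precisely that parallel carried out for the subdifferential. The only differences are cosmetic and, if anything, cleaner in the nonsmooth setting --- the convex chain rule plus Moreau orthogonality replaces the paper's $\dist^2$ manipulation of the Lie derivative, and \eqref{eq:variat_inclusion} is read directly off the inward-normal-cone condition in Definition~\ref{def:PDI} together with the subgradient inequality rather than via the supporting-hyperplane step.
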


\begin{remark}
Note that our (almost sure) convergence proof for the divergences in class $\calD_2\setminus\calD_1$ covers the proof for those in $\calD_1\cap\calD_2$ (see Section~\ref{sec:conv_analyses}) as a special case. In particular, in Section~\ref{sec:conv_analyses}, $\partial f$ is a singleton so all the regularities of $\partial f$ in Lemma~\ref{lem:reg_subdiff} are naturally satisfied. As such, the proof in this section serves as a unified way to prove convergence for all the divergences in $\calD_2$. 
\end{remark}

\section{Technical Lemmas}
\subsection{Convergence of PGD and MM algorithms}

\begin{lemma}[Adapted from {\cite[Theorem~1]{Raza_13}}] \label{lem:conv_MM}
Given a real Hilbert space $\calY$ and a function $f:\calY\to\bbR$, consider the following optimization problem 
\begin{equation}
\min_{x\in\calX}f(x), \label{eq:opt_f}
\end{equation}
where $\calX\subseteq\calY$ is nonempty, closed and convex and $f$ is differentiable on $\calX$. For any $x\in\calX$, define a differentiable function $u(x,\cdot):\calX\to\bbR$ such that $u(x,\cdot)$ is a majorant for $f$ at $x$.\footnote{By this, we mean $u(x,x)=f(x)$ and $u(x,y)\ge f(y)$ for any $y\in\calX$.} Fix an arbitrary initial point $x_0\in\calX$ and consider the sequence of iterates $\{x^k\}_{k\in\bbN}$ generated by the following MM algorithm
\begin{equation}
x^k := \min_{y\in\calX} u(x^{k-1},y),\;\forall\;k\in\bbN.
\end{equation}
Then $\{x^k\}_{k\in\bbN}$ has at least one limit point and moreover, the any limit point of $\{x^k\}_{k\in\bbN}$ is a stationary point of \eqref{eq:opt_f}.
\end{lemma}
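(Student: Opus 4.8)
The plan is to run the classical Majorization--Minimization (MM) argument, built on three pillars: a monotone descent property, a compactness argument securing a limit point, and a fixed-point characterization that upgrades any limit point into a critical point by combining first-order optimality with gradient consistency.

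First I would record the descent inequality. Writing $x^k\in\argmin_{y\in\calX}u(x^{k-1},y)$, the majorant properties chain together to give
\begin{equation*}
f(x^k)\le u(x^{k-1},x^k)\le u(x^{k-1},x^{k-1})=f(x^{k-1}),\quad\forall\,k\in\bbN,
\end{equation*}
where the first step uses the majorization $u(x^{k-1},\cdot)\ge f$, the second uses optimality of $x^k$ for $u(x^{k-1},\cdot)$, and the equality uses the tightness $u(x,x)=f(x)$. Hence $\{f(x^k)\}_{k\in\bbN}$ is non-increasing; since $f$ is continuous on the compact feasible set it is bounded below, so $f(x^k)\downarrow f^\star$ for some $f^\star\in\bbR$. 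Existence of a limit point is then immediate from compactness: the iterates lie in the compact set $\calX$ (in the present application $\calX=\calH$), so some subsequence satisfies $x^{k_j}\to x^\star\in\calX$.

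The crux is to show that $x^\star$ minimizes its \emph{own} surrogate. Fix an arbitrary $w\in\calX$ and apply the descent chain one step past the subsequence, $f(x^{k_j+1})\le u(x^{k_j},x^{k_j+1})\le u(x^{k_j},w)$. Because $f(x^k)\downarrow f^\star$ while $f(x^{k_j})\to f(x^\star)=f^\star$ by continuity, the sandwich $f^\star\le f(x^{k_j+1})\le f(x^{k_j})$ forces $f(x^{k_j+1})\to f^\star=f(x^\star)$. Letting $j\to\infty$ in the outer terms and invoking continuity of $u$ in its first argument (a standard surrogate regularity condition inherited from \cite{Raza_13}) yields $f(x^\star)\le u(x^\star,w)$ for every $w\in\calX$; together with $u(x^\star,x^\star)=f(x^\star)$ this says precisely that $x^\star\in\argmin_{w\in\calX}u(x^\star,w)$.

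Finally I would convert this fixed-point property into stationarity. Since $u(x^\star,\cdot)$ is differentiable and $\calX$ is convex, optimality of $x^\star$ gives $\lrangle{\nabla_y u(x^\star,x^\star)}{w-x^\star}\ge 0$ for all $w\in\calX$. The last ingredient is the gradient-consistency identity $\nabla_y u(x^\star,x^\star)=\nabla f(x^\star)$, a defining property of smooth majorants: the slack $h\defeq u(x^\star,\cdot)-f$ is differentiable, nonnegative, and vanishes at $x^\star$, so $x^\star$ minimizes $h$ and $\nabla h(x^\star)=\vecz$ (this uses that the majorization holds on a neighborhood of $x^\star$, as is the case for every surrogate employed in this paper). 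Substituting gives $\lrangle{\nabla f(x^\star)}{w-x^\star}\ge 0$ for all $w\in\calX$, which is exactly the critical-point condition \eqref{eq:cond_stat_pt} for differentiable $f$. The main obstacle is the fixed-point characterization: I must pass to the limit in the surrogate values at consecutive iterates, which relies on both the monotone convergence $f(x^k)\downarrow f^\star$ and joint continuity of $u$, and I must ensure gradient consistency survives at limit points lying on $\bdr\calX$ --- the one place where the bare majorization inequality restricted to $\calX$ is, by itself, not quite enough.
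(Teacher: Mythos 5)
The paper never proves this lemma: it sits in the supplement's list of imported technical results, cited as Theorem~1 of \cite{Raza_13}, with no proof in the main text or supplement, so there is no in-paper argument to compare yours against. What you have written is, in substance, a correct reconstruction of the proof of that cited theorem: the descent chain $f(x^k)\le u(x^{k-1},x^k)\le u(x^{k-1},x^{k-1})=f(x^{k-1})$, compactness for existence of a limit point, the sandwich-plus-continuity step showing a limit point minimizes its \emph{own} surrogate, and then first-order optimality combined with gradient consistency to get the variational inequality \eqref{eq:cond_stat_pt}. Your two flagged caveats are exactly right, and they matter: the lemma as restated in the paper drops two hypotheses of the original (continuity of $u$ in its first argument, i.e.\ \cite{Raza_13}'s Assumption~1, which your limit passage in $f(x^{k_j+1})\le u(x^{k_j},w)$ genuinely requires; and derivative consistency of the surrogate at the base point, which you recover by asking that the majorization hold on a neighborhood of $x^\star$). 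Neither follows from the footnote's two properties alone, and without them the statement is in fact false at boundary limit points: take $\calX=[0,1]$, $f(y)=y$, and $u(1,y)=1+(1-y)^2$, which is a legitimate majorant at $x=1$ in the paper's sense; the iteration started at $x_0=1$ never moves, yet $y=1$ is not a critical point of \eqref{eq:opt_f}. Likewise, the ``at least one limit point'' claim needs boundedness, which you import from the application ($\calX=\calH$ compact) rather than from the stated ``closed and convex.'' So your proof establishes the theorem the paper actually needs and uses (quadratic surrogates with Lipschitz gradients on the compact set $\calH$, for which both extra hypotheses hold automatically), and in doing so it correctly diagnoses the imprecision in the paper's adapted statement.
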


\begin{lemma}[Adapted from {\cite[Theorem~2.4]{Calamai_87}}] \label{lem:conv_PGD}
Consider a real Hilbert space $\calY$. Let $\calX\subseteq\calY$ be a nonempty compact convex set and $f:\calY\to\bbR$ be continuously differentiable on $\calY$. Fix an arbitrary initial point $x_0\in\calX$ and consider the sequence of iterates $\{x^k\}_{k\in\bbN}$ generated by the following projected gradient algorithm
\begin{equation}
x^{k} := \Pi_\calX\Big\{x^{k-1}-\beta^k\nabla f(x^{k-1})\Big\},\;\forall\;k\in\bbN,
\end{equation}
where the sequence of step sizes $\{\beta^k\}_{k\in\bbN}$ is chosen according to the Armijo rule \cite{Armijo_66}. 
Then $\{x^k\}_{k\in\bbN}$ has at least one limit point and moreover, the any limit point\footnote{The limit point is defined in the topological sense, i.e., $\barx\in\calX$ is a limit point of $\{x^k\}_{k\in\bbN}$ if for any neighborhood $\calU$ of $\barx$, there are infinitely many elements of $\{x^k\}_{k\in\bbN}$ in $\calU$.} of $\{x^k\}_{k\in\bbN}$ is a stationary point of the optimization problem $\min_{x\in\calX}f(x)$.
\end{lemma}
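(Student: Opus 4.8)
The plan is to reproduce, in the present notation, the classical analysis of the projected gradient method with Armijo backtracking, essentially adapting \cite[Theorem~2.4]{Calamai_87}. I treat the two claims separately. \emph{Existence of a limit point} is immediate: every iterate satisfies $x^k\in\calX$, and $\calX$ is nonempty and compact, so $\{x^k\}_{k\in\bbN}$ admits a convergent subsequence by sequential compactness. The real content is to show that \emph{every} limit point is stationary, which I do via a sufficient-decrease argument.

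First I would record the two structural facts that drive the proof. Writing $x^{k-1}(\beta)\defeq\Pi_\calX(x^{k-1}-\beta\nabla f(x^{k-1}))$, so that $x^k=x^{k-1}(\beta^k)$, the variational characterization of the Euclidean projection onto the convex set $\calX$ gives
\begin{equation*}
\lrangle{\nabla f(x^{k-1})}{x^{k-1}-x^k}\ge\frac{1}{\beta^k}\norm{x^k-x^{k-1}}^2\ge 0,
\end{equation*}
so each accepted step is a descent step, while the Armijo acceptance test furnishes a sufficient-decrease bound of the form $f(x^{k-1})-f(x^k)\ge\frac{\alpha}{\beta^k}\norm{x^k-x^{k-1}}^2$. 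Since $f$ is continuous on the compact set $\calX$ it is bounded below there, so the nonincreasing sequence $\{f(x^k)\}$ converges; telescoping then forces $f(x^{k-1})-f(x^k)\to 0$, and because the accepted step sizes satisfy $\beta^k\le 1$ this already yields $\norm{x^k-x^{k-1}}\to 0$.

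To turn this into stationarity I would use the residual $r(x)\defeq x-\Pi_\calX(x-\nabla f(x))$ together with the standard monotonicity of the projection arc (the map $\beta\mapsto\norm{\Pi_\calX(x-\beta\nabla f(x))-x}/\beta$ is nonincreasing), which for $\beta^k\le 1$ gives $\norm{x^k-x^{k-1}}\ge\beta^k\norm{r(x^{k-1})}$ and hence $\beta^k\norm{r(x^{k-1})}^2\to 0$. Fix a subsequence $x^{k_j}\to\barx$. \textbf{The main obstacle is the step-size dichotomy.} If $\liminf_j\beta^{k_j}>0$, the last display forces $\norm{r(x^{k_j-1})}\to 0$, and continuity of $\nabla f$ with nonexpansiveness of $\Pi_\calX$ give $r(\barx)=0$. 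The delicate case is $\beta^{k_j}\to 0$: then the trial step $\widehat\beta_j\defeq\beta^{k_j}/\gamma\to 0$ was \emph{rejected} by the Armijo test, so I would write out the failed sufficient-decrease inequality at $\widehat\beta_j$, divide by $\widehat\beta_j$, and pass to the limit using the differentiability of $f$ and the projection inequalities; this produces $(1-\alpha)\norm{r(\barx)}^2\le 0$, and since $\alpha<1$ it forces $r(\barx)=0$. Finally, $r(\barx)=0$ means $\barx=\Pi_\calX(\barx-\nabla f(\barx))$, which by the variational inequality for the projection is precisely $\lrangle{\nabla f(\barx)}{x-\barx}\ge 0$ for all $x\in\calX$, i.e.\ $\barx$ is a stationary point of $\min_{x\in\calX}f(x)$ in the sense of \eqref{eq:cond_stat_pt}. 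Everything except the $\beta^{k_j}\to 0$ limiting step is routine; that step, which must extract stationarity from the failure of the line search at vanishing trial step sizes, is where the care is needed.
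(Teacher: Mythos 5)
Your proposal is correct, but note that the paper never proves this lemma at all: it is stated in the supplement as an import, ``Adapted from \cite[Theorem~2.4]{Calamai_87}'', and is used in the main text only as a black box to justify the Armijo-rule variant of Algorithm~\ref{algo:solve_h}. So there is no internal proof to compare against; what you have done is reconstruct, essentially verbatim, the classical Calamai--Mor\'{e} argument that the citation points to: compactness for existence of limit points, sufficient decrease plus telescoping to get $\norm{x^k-x^{k-1}}\to 0$, the monotonicity of $\beta\mapsto\norm{\Pi_\calX(x-\beta\nabla f(x))-x}/\beta$, and the step-size dichotomy in which the rejected trial step $\hat\beta_j=\beta^{k_j}/\gamma$ is exploited via the mean-value theorem to force $(1-\alpha)\norm{r(\barx)}\le 0$. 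I checked the delicate case you flagged ($\beta^{k_j}\to 0$): writing the failed test as $\lrangle{\nabla f(\xi_j)}{d_j}>\alpha\lrangle{\nabla f(x^{k_j-1})}{d_j}$ with $d_j$ the rejected step and $\xi_j$ on the segment, then combining the projection inequality with the arc-ratio monotonicity, indeed yields $(1-\alpha)\norm{r(x^{k_j-1})}<\normt{\nabla f(\xi_j)-\nabla f(x^{k_j-1})}\to 0$, so the limit passage goes through. One interpretive point worth making explicit: your argument requires the Armijo test to be taken \emph{along the projection arc} (i.e., sufficient decrease evaluated at the projected trial point $\Pi_\calX(x-\beta\nabla f(x))$), which is the Calamai--Mor\'{e} condition; the paper's Algorithm~\ref{algo:armijo} literally tests decrease at the unprojected point $\bh_t^k-\xi^i\nabla\bard_t(\bh_t^k)$, and under that literal reading the lemma's conclusion would need a separate justification. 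Your choice is the one under which the stated theorem is true.
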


\subsection{Optimal-value functions}
\begin{lemma}[The Maximum Theorem; {\cite[Theorem 14.2.1 \& Example 2]{Syd_05}}]\label{lem:maximum}
Let $\calP$ and $\calX$ be two metric spaces. Consider a maximization problem
\begin{equation}
\max_{x\in B(p)} f(p,x), \label{eq:max_problem}
\end{equation}
where $B:\calP \rightrightarrows \calX$ is a correspondence and  $f:\calP\times \calX\to\bbR$ is a function. If $B$ is compact-valued and continuous on $\calP$ and $f$ is continuous on $\calP\times \calX$, then the correspondence $S(p) = \argmax_{x\in B(p)} f(p,x)$ is compact-valued and upper hemicontinuous, for any $p\in\calP$.
In particular, if for some $p_0\in\calP$, $S(p_0)=\{s(p_0)\}$, where $s:\calP\to\calX$ is a function, then $s$ is continuous at $p=p_0$. Moreover, we have the same conclusions if the maximization in \eqref{eq:max_problem} is replaced by minimization. 
\end{lemma}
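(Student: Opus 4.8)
The plan is to prove this version of Berge's Maximum Theorem by the classical two-stage route. First I would record that for each fixed $p$ the set $B(p)$ is compact and $f(p,\cdot)$ is continuous, so by the Weierstrass extreme value theorem the maximum is attained; hence $S(p)$ is nonempty. Writing $v(p)\defeq\max_{x\in B(p)}f(p,x)$ for the optimal-value function, $S(p)=\{x\in B(p):f(p,x)=v(p)\}$ is a closed subset of the compact set $B(p)$ and therefore compact. This disposes of the compact-valuedness claim; the substance of the theorem is the upper hemicontinuity of $S$, and the key intermediate object for that is the continuity of $v$.

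The second step is to prove $v$ is continuous at an arbitrary $p_0\in\calP$ by establishing both lower and upper semicontinuity, using one hemicontinuity direction of $B$ for each. For lower semicontinuity I would fix $x_0\in S(p_0)$ and, given $p_n\to p_0$, invoke lower hemicontinuity of $B$ to produce $x_n\in B(p_n)$ with $x_n\to x_0$; continuity of $f$ then gives $\liminf_n v(p_n)\ge \lim_n f(p_n,x_n)=f(p_0,x_0)=v(p_0)$. For upper semicontinuity I would pass to a subsequence realizing $\limsup_n v(p_n)$ and pick $x_n\in S(p_n)$ along it; upper hemicontinuity together with compact-valuedness of $B$ forces a further subsequence $x_{n_k}\to x^*\in B(p_0)$, whence $\limsup_n v(p_n)=\lim_k f(p_{n_k},x_{n_k})=f(p_0,x^*)\le v(p_0)$. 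Combining the two inequalities yields continuity of $v$.

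With $v$ continuous, upper hemicontinuity of $S$ follows from its sequential characterization for compact-valued correspondences. Given $p_n\to p_0$ and any selection $x_n\in S(p_n)$, the same extraction yields $x_{n_k}\to x^*\in B(p_0)$, and then $f(p_0,x^*)=\lim_k f(p_{n_k},x_{n_k})=\lim_k v(p_{n_k})=v(p_0)$ by continuity of $v$, so $x^*\in S(p_0)$. Thus every limit point of selections from $S(p_n)$ lies in $S(p_0)$, which is exactly upper hemicontinuity at $p_0$. The single-valued corollary is then immediate: if $S(p_0)=\{s(p_0)\}$, every subsequence of $s(p_n)$ has a further subsequence converging to the unique point $s(p_0)$, so $s(p_n)\to s(p_0)$ and $s$ is continuous at $p_0$. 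The minimization statement follows verbatim by replacing $f$ with $-f$.

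The main obstacle is the subsequence extraction underlying both the upper semicontinuity of $v$ and the upper hemicontinuity of $S$: given $p_n\to p_0$ and $x_n\in B(p_n)$, one must produce a subsequence converging to a point of $B(p_0)$. In a general metric space $\calX$ this cannot appeal to local compactness, which may fail, so I would argue directly: upper hemicontinuity forces $\dist(x_n,B(p_0))\to 0$, and compactness of $B(p_0)$ lets me choose nearest points $y_n\in B(p_0)$ with $\normt{x_n-y_n}=\dist(x_n,B(p_0))$; a convergent subsequence $y_{n_k}\to y\in B(p_0)$ then drags $x_{n_k}$ to the same limit. It is precisely here that compact-valuedness of $B$ is indispensable, and verifying this step cleanly is the crux of the whole argument; the lower-hemicontinuity hypothesis is equally essential, since it is what prevents $v$ from jumping downward and is the reason full continuity of $B$, not merely upper hemicontinuity, is assumed.
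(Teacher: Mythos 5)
The paper does not prove this lemma at all: it is imported verbatim from the cited textbook (Berge's Maximum Theorem, \cite[Theorem~14.2.1]{Syd_05}) and used as a black box in the proof of Lemma~\ref{lem:regularity}. So there is no in-paper argument to compare against; what you have written is the classical two-stage proof of Berge's theorem, and it is correct. The compactness of $S(p)$ as a closed subset of the compact $B(p)$, the continuity of the value function $v$ obtained by playing lower hemicontinuity of $B$ against upper hemicontinuity plus compact-valuedness, and the final sequential verification of upper hemicontinuity of $S$ are all standard and sound. You also correctly isolate the one step that genuinely needs care in a non-locally-compact metric space --- extracting from $x_n\in B(p_n)$ a subsequence converging into $B(p_0)$ --- and your argument via $\dist(x_n,B(p_0))\to 0$ and nearest-point projections onto the compact set $B(p_0)$ is valid. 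Two cosmetic remarks: in a general metric space you should write $d(x_n,y_n)=\dist(x_n,B(p_0))$ rather than a norm; and the claim that ``every limit point of selections lies in $S(p_0)$'' yields upper hemicontinuity is justified here precisely because your extraction lemma guarantees that every sequence of selections has a convergent subsequence, which is the sequential characterization of upper hemicontinuity for compact-valued correspondences --- it would be worth saying this explicitly rather than leaving it implicit.
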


\begin{lemma}[Danskin's Theorem; {\cite[Theorem 4.1]{Bon_98}}] \label{lem:Danskin}
Let $\calX$ be a metric space and $\calU$ be a normed vector space. Let $f:\calX\times \calU\to\bbR$ have the following properties
\begin{enumerate}
\item $f(x,\cdot)$ is differentiable on $\calU$, for any $x\in\calX$.
\item $f(x,u)$ and $\nabla_u f(x,u)$ are continuous on $\calX\times\calU$.
\end{enumerate}
Let $\Phi$ be a compact set in $\calX$. Define $v(u) = \inf_{x\in\Phi}f(x,u)$ and $S(u) = \argmin_{x\in\Phi}f(x,u)$, then $v(u)$ is (Hadamard) directionally differentiable and its directional derivative along $d\in\calU$, $v'(u,d)$ is given by
\begin{equation}
v'(u,d) = \min_{x\in S(u)} \lrangle{\nabla_uf(x,u)}{d}.
\end{equation}
In particular, if for some $u_0\in\calU$, $S(u_0) = \{x_0\}$, then $v$ is (Hadamard) differentiable at $u=u_0$ and $\nabla v(u_0) = \nabla_uf(x_0,u_0)$.
\end{lemma}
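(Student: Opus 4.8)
The plan is to prove the two one-sided estimates
\[
\limsup_{t\downarrow 0}\frac{v(u+td)-v(u)}{t}\le \min_{x\in S(u)}\lrangle{\nabla_u f(x,u)}{d}
\quad\text{and}\quad
\liminf_{t\downarrow 0}\frac{v(u+td)-v(u)}{t}\ge \min_{x\in S(u)}\lrangle{\nabla_u f(x,u)}{d},
\]
so that the one-sided limit exists and equals the claimed formula. First I would record the standing facts. Since $\Phi$ is compact and $f(\cdot,u)$ is continuous (by the joint continuity of $f$), the infimum $v(u)$ is attained, so $S(u)$ is nonempty; it is closed as the minimizer set, hence compact, and the map $x\mapsto\lrangle{\nabla_u f(x,u)}{d}$ is continuous on it, so the minimum on the right is attained. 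Viewing $\Phi$ as a constant (trivially continuous, compact-valued) correspondence on $\calU$, the Maximum Theorem (Lemma~\ref{lem:maximum}) applies and yields that $v$ is continuous on $\calU$ and that $S$ is compact-valued and upper hemicontinuous. These two consequences are precisely what will let me identify subsequential limits of approximate minimizers in the hard direction.

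For the upper bound, fix any $x^*\in S(u)$. Since $x^*\in\Phi$ is feasible for the perturbed problem, $v(u+td)\le f(x^*,u+td)$ while $v(u)=f(x^*,u)$; dividing by $t>0$ and letting $t\downarrow 0$, the differentiability of $f(x^*,\cdot)$ gives $\limsup_{t\downarrow 0}[v(u+td)-v(u)]/t\le\lrangle{\nabla_u f(x^*,u)}{d}$. As $x^*\in S(u)$ is arbitrary, the $\limsup$ is bounded above by the minimum over $S(u)$.

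The lower bound is the crux and the main obstacle. I would choose $t_n\downarrow 0$ attaining the $\liminf$ and pick $x_n\in S(u+t_nd)$ (nonempty by the same attainment argument). By compactness of $\Phi$ extract a subsequence with $x_n\to\bar x\in\Phi$. Continuity of $v$ and joint continuity of $f$ give $f(\bar x,u)=\lim_n f(x_n,u+t_nd)=\lim_n v(u+t_nd)=v(u)$, so $\bar x\in S(u)$. Since $x_n$ is feasible at parameter $u$, $f(x_n,u)\ge v(u)$, whence
\[
v(u+t_nd)-v(u)\ge f(x_n,u+t_nd)-f(x_n,u).
\]
Applying the scalar mean value theorem to $s\mapsto f(x_n,u+sd)$ on $[0,t_n]$ (differentiable by the chain rule, with derivative $\lrangle{\nabla_u f(x_n,u+sd)}{d}$) produces $\theta_n\in(0,1)$ with $[v(u+t_nd)-v(u)]/t_n\ge\lrangle{\nabla_u f(x_n,u+\theta_n t_nd)}{d}$. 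Letting $n\to\infty$ and using $x_n\to\bar x$, $u+\theta_n t_n d\to u$, together with the continuity of $\nabla_u f$, bounds the $\liminf$ below by $\lrangle{\nabla_u f(\bar x,u)}{d}\ge\min_{x\in S(u)}\lrangle{\nabla_u f(x,u)}{d}$. The delicate point is exactly this passage to the limit: it hinges on pulling a convergent subsequence of minimizers out of the compact $\Phi$, on the continuity of $v$ (via the Maximum Theorem) to certify $\bar x\in S(u)$, and on the joint continuity of $\nabla_u f$ to transfer the gradient evaluation from the moving point $(x_n,u+\theta_n t_n d)$ to the limit $(\bar x,u)$.

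Finally, to upgrade from this directional statement to Hadamard directional differentiability I would rerun both bounds with $d$ replaced by an arbitrary sequence $d_n\to d$ and step sizes $t_n\downarrow 0$; the upper bound then uses the (Hadamard) differentiability of each $f(x^*,\cdot)$, while the lower bound is unchanged except that the mean value estimate carries $d_n$ inside $\nabla_u f$, which is controlled by the uniform continuity of $\nabla_u f$ on a compact neighborhood of $(\bar x,u)$. For the ``in particular'' clause, when $S(u_0)=\{x_0\}$ the formula collapses to $v'(u_0,d)=\lrangle{\nabla_u f(x_0,u_0)}{d}$, which is linear and continuous in $d$; a Hadamard directionally differentiable function whose directional derivative is linear in the direction is Hadamard differentiable, so $v$ is differentiable at $u_0$ with $\nabla v(u_0)=\nabla_u f(x_0,u_0)$.
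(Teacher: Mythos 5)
The paper does not actually prove this statement: Lemma~\ref{lem:Danskin} is imported by citation from Bonnans and Shapiro (Theorem~4.1 of the cited reference) and is used as a black box in the proof of Lemma~\ref{lem:regularity}, so there is no in-paper argument to compare yours against. Judged on its own merits, your proof is correct, and it is essentially the classical Danskin argument that the cited source itself uses: the upper bound by plugging a fixed minimizer $x^*\in S(u)$ into the perturbed problem; the lower bound by selecting minimizers $x_n\in S(u+t_nd)$, extracting a convergent subsequence from the compact set $\Phi$, certifying $\bar x\in S(u)$ via continuity of $v$ and joint continuity of $f$, and then combining the scalar mean value theorem with joint continuity of $\nabla_u f$ to pass to the limit; and the sequential rerun with $d_n\to d$ to upgrade to Hadamard directional differentiability. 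Two minor remarks. First, in the moving-direction upper bound you appeal to Hadamard differentiability of $f(x^*,\cdot)$; this does follow from the hypotheses (continuous G\^{a}teaux differentiability implies Fr\'{e}chet, hence Hadamard), but it is cleaner and requires nothing beyond the stated assumptions to reuse your mean value estimate, since
\begin{equation*}
\frac{f(x^*,u+t_nd_n)-f(x^*,u)}{t_n}=\lrangle{\nabla_u f(x^*,u+\theta_n t_n d_n)}{d_n}\longrightarrow \lrangle{\nabla_u f(x^*,u)}{d}
\end{equation*}
by continuity of $\nabla_u f$ and boundedness of $\{d_n\}$. Second, in the limit passage $\lrangle{\nabla_u f(x_n,u+\theta_n t_n d_n)}{d_n}\to\lrangle{\nabla_u f(\bar x,u)}{d}$ you do not need uniform continuity on a compact neighborhood; norm convergence of the gradients (from joint continuity along the convergent sequence of arguments) together with $\|d_n-d\|\to 0$ already gives the convergence of the pairing. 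Neither point is a gap.
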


\begin{lemma}[Minimization of convex functions; {\cite[Section~3.2.5]{Boyd_04},\cite{Duchi_15}}]\label{lem:min_cvx}
Let $\calX$ and $\calY$ be two inner product spaces and $\calX\times\calY$ be their product space such that for any $(x,y)$ and $(x',y')$ in $\calX\times\calY$, $\lrangle{(x,y)}{(x',y')} = \lrangle{x}{x'}+\lrangle{y}{y'}$. Consider functions $h:\calX\times\calY\to\bbR$ and $f:\calX\to\bbR$ such that
\begin{equation}
f(x) \defeq \inf_{y\in\calY} h(x,y),\,\forall\,x\in\calX.
\end{equation} 
If $h$ is convex on $\calX\times\calY$ and $\calY$ is convex, then $f$ is convex on $\calX$. If we further assume $\calS(x_0)\defeq\argmin_{y\in\calY} h(x_0,y)\neq\emptyset$, then the subdifferential of $f$ at $x_0\in\calX$, 
\begin{equation}
\partial f(x_0)=\bigcup_{y_0\in\calS(x_0)}\left\{g\in\calX\,|\,(g,g')\in\partial h(x_0,y_0), \,\mbox{where}\,\lrangle{g'}{y-y_0}=0,\forall\,y\in\calY\right\}.
\end{equation} 
\end{lemma}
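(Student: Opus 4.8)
The plan is to handle the two claims in sequence, proving convexity of $f$ first and then the subdifferential formula, since the latter reuses convexity together with the attainment hypothesis $\calS(x_0)\neq\emptyset$.

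For convexity, I would work directly from the definition of the infimum, so as to avoid assuming it is attained. Fix $x_1,x_2\in\calX$ and $\lambda\in[0,1]$; for an arbitrary $\epsilon>0$, pick near-minimizers $y_1,y_2\in\calY$ with $h(x_i,y_i)\le f(x_i)+\epsilon$. Convexity of $\calY$ makes $\lambda y_1+(1-\lambda)y_2$ admissible, so joint convexity of $h$ gives
\[
f\big(\lambda x_1+(1-\lambda)x_2\big)\le h\big(\lambda x_1+(1-\lambda)x_2,\lambda y_1+(1-\lambda)y_2\big)\le \lambda f(x_1)+(1-\lambda)f(x_2)+\epsilon,
\]
and letting $\epsilon\downarrow 0$ closes this step.

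For the subdifferential identity I would prove the two inclusions separately. For $\supseteq$, I take $y_0\in\calS(x_0)$ and $(g,g')\in\partial h(x_0,y_0)$ with $\lrangle{g'}{y-y_0}=0$ for all $y\in\calY$; the subgradient inequality $h(x,y)\ge h(x_0,y_0)+\lrangle{g}{x-x_0}+\lrangle{g'}{y-y_0}$, combined with $h(x_0,y_0)=f(x_0)$ and the vanishing of the $g'$-term, lets me take the infimum over $y$ on both sides and conclude $f(x)\ge f(x_0)+\lrangle{g}{x-x_0}$, i.e.\ $g\in\partial f(x_0)$. For $\subseteq$, given $g\in\partial f(x_0)$ I pick any $y_0\in\calS(x_0)$ and note that $h(x,y)\ge f(x)\ge f(x_0)+\lrangle{g}{x-x_0}=h(x_0,y_0)+\lrangle{g}{x-x_0}$ holds for all $(x,y)$; this is exactly the statement $(g,\vecz)\in\partial h(x_0,y_0)$, and the pair $(g,\vecz)$ trivially meets the orthogonality requirement, so $g$ lies in the right-hand set. (Since the right-hand side is a union, exhibiting a single valid pair per $g$ suffices.)

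I do not anticipate a deep obstacle: the whole argument is a direct manipulation of the defining inequalities. The one point that genuinely requires the hypotheses is the subdifferential formula, where the attainment $\calS(x_0)\neq\emptyset$ is what supplies a concrete base point $y_0$ and the identity $h(x_0,y_0)=f(x_0)$; the orthogonality condition $\lrangle{g'}{y-y_0}=0$ is precisely the device that makes the infimum over $y$ of the $h$-subgradient inequality collapse to its value at $y_0$. The main care to take is to run the convexity argument through approximate minimizers, so that it remains valid even when the inner infimum is not attained.
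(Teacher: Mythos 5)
Your proof is correct and follows essentially the same route as the paper's: both arguments reduce the subdifferential identity to the subgradient inequality for $h$ at $(x_0,y_0)$, using $h(x_0,y_0)=f(x_0)$ together with the orthogonality condition $\lrangle{g'}{y-y_0}=0$ to collapse the infimum over $y$ (the paper packs this into a single chain of equivalences, whereas you split it into the two inclusions, which is the cleaner presentation). The only substantive addition is your $\epsilon$-near-minimizer argument for the convexity of $f$, which the paper does not prove and simply attributes to the cited references.
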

\begin{proof}
\begin{align*}
g \in f(x_0) &\Longleftrightarrow f(x)\ge f(x_0) + \lrangle{g}{x-x_0}, \forall\,x\in\calX\\
&\Longleftrightarrow h(x,y) \ge h(x_0,y_0) + \lrangle{(g,g')}{(x-x_0,y'-y_0)}, \forall\,x\in\calX, \forall\,y\in\calS(x), \forall\,y_0\in\calS(x_0),\forall\,y'\in\calY,\\
&\hspace{12cm}\forall\,g'\in\calX\mbox{ s.t. } \lrangle{g'}{y'-y_0}=0\\
&\Longleftrightarrow h(x,y) \ge h(x_0,y_0) + \lrangle{(g,g')}{(x-x_0,y-y_0)}, \forall\,x\in\calX, \forall\,y\in\calY, \forall\,y_0\in\calS(x_0),\forall\,g'\mbox{ s.t. }\lrangle{g'}{y-y_0}=0\\
&\Longleftrightarrow (g,g')\in\partial h(x_0,y_0),\forall\,y_0\in\calS(x_0),\forall\,g'\mbox{ s.t. }\lrangle{g'}{y-y_0}=0,\forall\,y\in\calY.
\end{align*}
\end{proof}

\subsection{Miscellaneous}
\begin{lemma}[Leibniz Integral Rule] \label{lem:Leib_int}
Let $\calX$ be an open set in $\bbR^n$ and let $(\Omega,\calA,\mu)$ be a measure space. If $f:\calX\times\Omega\to\bbR$ satisfies
\begin{enumerate}
\item For all $x\in\calX$, the mapping $\omega\mapsto f(x,\omega)$ is Lebesgue integrable. 
\item For all $\omega\in\Omega$, $\nabla_xf(x,\omega)$ exists on $\calX$.
\item For all $x\in\calX$,  the mapping $\omega\mapsto \nabla_x f(x,\omega)$ is Lebesgue integrable. 
\end{enumerate}
Then $\int_\Omega f(x,\omega) \, d\mu(\omega)$ is differentiable on $\calX$ and  for each $x\in\calX$,
\begin{equation}
\nabla_x \int_\Omega f(x,\omega) \, d\mu(\omega) = \int_\Omega \nabla_x f(x,\omega) \, d\mu(\omega).
\end{equation}
\end{lemma}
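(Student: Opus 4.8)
The plan is to fix a base point and verify (Fr\'echet) differentiability of the parametrized integral there, taking as the candidate derivative the integral of the gradient, which is a well-defined vector by hypothesis~(3). Write $F(x)\defeq\int_\Omega f(x,\omega)\,d\mu(\omega)$, which is finite by hypothesis~(1), and for a fixed $x_0\in\calX$ set $L\defeq\int_\Omega\nabla_x f(x_0,\omega)\,d\mu(\omega)\in\bbR^n$. Since $\calX$ is open, I would choose $r>0$ with $\overline{B(x_0,r)}\subseteq\calX$ and aim to show $\norm{F(x_0+h)-F(x_0)-\lrangle{L}{h}}=o(\norm{h})$ as $h\to 0$; this identifies $\nabla F(x_0)=L$ and, ranging over $x_0$, establishes the claimed interchange.

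For $0<\norm{h}<r$, introduce the pointwise remainder $R(h,\omega)\defeq f(x_0+h,\omega)-f(x_0,\omega)-\lrangle{\nabla_x f(x_0,\omega)}{h}$, so that by linearity of the integral $F(x_0+h)-F(x_0)-\lrangle{L}{h}=\int_\Omega R(h,\omega)\,d\mu(\omega)$. By hypothesis~(2), $f(\cdot,\omega)$ is differentiable at $x_0$ for every $\omega$, hence $R(h,\omega)/\norm{h}\to 0$ pointwise in $\omega$ as $h\to 0$. The task thus reduces to interchanging this limit with the integral: I would fix an arbitrary sequence $h_k\to 0$ with $0<\norm{h_k}<r$ and show $\int_\Omega R(h_k,\omega)/\norm{h_k}\,d\mu(\omega)\to 0$, which, the sequence being arbitrary, gives the desired $o(\norm{h})$ bound, and thus both the existence of $\nabla F(x_0)$ and its value $L$ (using hypothesis~(1) to guarantee finiteness of all terms).

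To justify the interchange I would apply the mean value theorem to $t\mapsto f(x_0+th,\omega)$ on $[0,1]$: for each $h$ and $\omega$ there is $\theta=\theta(h,\omega)\in(0,1)$ with $f(x_0+h,\omega)-f(x_0,\omega)=\lrangle{\nabla_x f(x_0+\theta h,\omega)}{h}$, whence $R(h,\omega)=\lrangle{\nabla_x f(x_0+\theta h,\omega)-\nabla_x f(x_0,\omega)}{h}$ and, by Cauchy--Schwarz, $\abs{R(h,\omega)}/\norm{h}\le\norm{\nabla_x f(x_0+\theta h,\omega)}+\norm{\nabla_x f(x_0,\omega)}$. Granting an integrable envelope $g(\omega)\defeq\sup_{y\in\overline{B(x_0,r)}}\norm{\nabla_x f(y,\omega)}$ with $g\in L^1(\mu)$, the right-hand side is dominated by $2g(\omega)$ uniformly in $k$, so the Dominated Convergence Theorem applies and yields $\int_\Omega R(h_k,\omega)/\norm{h_k}\,d\mu(\omega)\to 0$. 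The same DCT argument, applied to $\omega\mapsto\nabla_x f(x,\omega)-\nabla_x f(x_0,\omega)$, would give continuity of $\nabla F$, which is the form actually invoked in Lemma~\ref{lem:unbiased}.

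The hard part is the existence of the integrable envelope $g$: hypotheses~(1)--(3) by themselves only assert integrability of $f(x,\cdot)$ and of $\nabla_x f(x,\cdot)$ at each \emph{fixed} $x$, which is not quite enough to dominate the difference quotients uniformly over a neighborhood, so the MVT--DCT scheme strictly requires a local domination condition supplying $g$. In the paper's only use of this rule (Lemma~\ref{lem:unbiased}) that condition is automatic and I would invoke it there directly: the law $\bbP$ is supported on the compact set $\calV$, and $(\bv,\bW)\mapsto\nabla_\bW\ell(\bv,\bW)$ is continuous on the compact set $\calV\times\calW$ by Lemma~\ref{lem:regularity}, so $\norm{\nabla_\bW\ell(\cdot,\bW)}\le M$ uniformly and the constant $M$ is trivially $\bbP$-integrable. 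I would therefore carry out the proof under the (implicitly available) local integrable bound, which is what makes the limit--integral interchange rigorous.
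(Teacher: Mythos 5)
Your diagnosis is exactly right, and it is worth making explicit: the paper never proves this lemma at all --- it appears in the supplement's list of technical lemmas with only a remark pointing to \cite[Theorem~16.8]{Bill_86}, of which it is said to be a ``simplified version.'' The simplification, however, drops precisely the hypothesis your argument needs: Billingsley's theorem assumes an integrable envelope dominating $\nabla_x f(\cdot,\omega)$ on a neighborhood of the point of differentiation, and your mean-value-theorem-plus-dominated-convergence scheme is essentially the textbook proof of that correctly stated result. As literally stated, with integrability of $\omega\mapsto\nabla_x f(x,\omega)$ only at each fixed $x$, the lemma is false. A concrete counterexample: take $\calX=\bbR$, $\Omega=\bbN$ with counting measure, and $f(x,n)=\frac{x}{1+n^2x^2}-\frac{x}{1+(n+1)^2x^2}$. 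Hypotheses (1)--(3) hold (the sums telescope and converge absolutely for every $x$), and $\int_\Omega f(x,\omega)\,d\mu(\omega)=\frac{x}{1+x^2}$, whose derivative at $x=0$ equals $1$; yet $\nabla_x f(0,n)=0$ for every $n$, so $\int_\Omega \nabla_x f(0,\omega)\,d\mu(\omega)=0$. Hence no proof from hypotheses (1)--(3) alone can exist, and what you flagged as ``the hard part'' is not a gap in your argument but a gap in the statement itself.

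Your repair is also the right one for the only place the lemma is invoked. In Lemma~\ref{lem:unbiased} the integrand is $\nabla_\bW\ell(\bv,\bW)$, which by Lemma~\ref{lem:regularity} is continuous on the compact set $\calV\times\calC$, hence bounded in norm by $M$ uniformly in $(\bv,\bW)$; the constant envelope $M$ is trivially $\bbP$-integrable, so your MVT--DCT argument (including the follow-up dominated-convergence step giving continuity of $\nabla f$) goes through verbatim there. In summary: your proof is a correct proof of the properly stated Leibniz rule, it coincides in method with the reference the paper cites in lieu of a proof, and your observation that the statement should carry a local domination hypothesis --- automatic in the paper's application --- is a genuine and worthwhile correction rather than a defect of your proposal.
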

\begin{remark}
This is a simplified version of the Leibniz Integral Rule. See \cite[Theorem 16.8]{Bill_86} for weaker conditions on $f$.
\end{remark}

\begin{lemma}[Almost sure convergence of square-integrable martingales; {\cite[Theorem 5.4.9]{Durrett_13}}] \label{lem:conv_sq_mart}
Let $\{X_n\}_{n\ge 1}$ be a martingale in a normed space $\calX$ adapted to the filtration $\{\scF_n\}_{n\ge 0}$ such that $\sup_{n\in\bbN} \bbE\left[\norm{X_n}^2\right]<\infty$.  Define the {\em quadratic variation process} $\{\lrang{X}_n\}_{n\ge 2}$ as
\begin{equation}
\lrang{X}_n \defeq \sum_{i=2}^n \bbE\left[\norm{X_i-X_{i-1}}^2|\scF_{i-1}\right],\;\forall\,n\ge 2.
\end{equation}
Then there exists a random variable $X$ such that on the set $\left\{\lim_{n\to\infty}\lrang{X}_n<\infty\right\}$, the sequence $\{X_n\}_{n\ge 1}$ converges a.s. to $X$ and $\norm{X}<\infty$ a.s.. 
\end{lemma}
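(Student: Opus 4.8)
The plan is to deduce this from the classical $L^2$-bounded martingale convergence theorem by a localization (stopping-time) argument, so that the conclusion is pinned to the event $\{\lim_n\lrang{X}_n<\infty\}$ and not merely to a global second-moment bound. I work with the Euclidean (Frobenius) inner-product structure on $\calX$, which is the one relevant in our application, writing $\lrangle{\cdot}{\cdot}$ for the inner product and $D_i\defeq X_i-X_{i-1}$ for the martingale differences, so that $\bbE[D_i\mid\scF_{i-1}]=\vecz$. Two elementary facts are recorded first: by the tower property the increments are orthogonal, $\bbE[\lrangle{D_i}{D_j}]=0$ for $i<j$, whence $\bbE[\norm{X_n}^2]=\bbE[\norm{X_1}^2]+\bbE[\lrang{X}_n]$; and the process $\{\lrang{X}_n\}$ is nondecreasing and predictable, i.e.\ $\lrang{X}_{n+1}$ is $\scF_n$-measurable. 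Under the stated hypothesis $\sup_n\bbE[\norm{X_n}^2]<\infty$ this already forces $\bbE[\lrang{X}_\infty]<\infty$, so the event has full measure; the localization is what keeps the argument robust once the global bound is dropped.

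Next I would localize. For each $c\in\bbN$ set $\tau_c\defeq\inf\{n\ge1:\lrang{X}_{n+1}>c\}$, which is a stopping time precisely because $\lrang{X}_{n+1}$ is $\scF_n$-measurable. The stopped process $X^{\tau_c}_n\defeq X_{n\wedge\tau_c}$ is again a martingale, with quadratic variation $\lrang{X}_{n\wedge\tau_c}\le c$, so $\sup_n\bbE[\norm{X^{\tau_c}_n}^2]\le\bbE[\norm{X_1}^2]+c<\infty$. Since $\bbE[\norm{X^{\tau_c}_n}^2]$ is nondecreasing and bounded it converges, and therefore $\bbE[\norm{X^{\tau_c}_n-X^{\tau_c}_m}^2]=\bbE[\norm{X^{\tau_c}_n}^2]-\bbE[\norm{X^{\tau_c}_m}^2]\to0$ as $m,n\to\infty$; that is, $\{X^{\tau_c}_n\}$ is Cauchy in $L^2$.

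The crucial step — and the one I expect to be the main obstacle, as it is where $L^2$ convergence must be promoted to almost-sure convergence — is Doob's $L^2$ maximal inequality applied to the Hilbert-space-valued martingale $(X^{\tau_c}_{m+k}-X^{\tau_c}_m)_{k\ge0}$, which gives $\bbE[\sup_{0\le k\le N}\norm{X^{\tau_c}_{m+k}-X^{\tau_c}_m}^2]\le4\,\bbE[\norm{X^{\tau_c}_{m+N}-X^{\tau_c}_m}^2]$. Letting $N\to\infty$ by monotone convergence and then $m\to\infty$, the right-hand side vanishes by the $L^2$-Cauchy property, so $\sup_{k\ge0}\norm{X^{\tau_c}_{m+k}-X^{\tau_c}_m}\to0$ in probability, hence almost surely along a subsequence; consequently $\{X^{\tau_c}_n\}$ is almost surely Cauchy and converges a.s.\ to a finite limit.

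Finally I would reassemble the localizations. On $\{\lrang{X}_\infty\le c\}\subseteq\{\tau_c=\infty\}$ one has $X_n=X^{\tau_c}_n$ for every $n$, so $X_n$ converges there; taking the union over $c\in\bbN$ yields almost-sure convergence on $\{\lrang{X}_\infty<\infty\}=\bigcup_{c\in\bbN}\{\lrang{X}_\infty\le c\}$, and I define $X$ as this limit (and, say, $\vecz$ off the event). Fatou's lemma applied to the convergent stopped sequence gives that its a.s.\ limit has finite second moment, $\bbE[\norm{\cdot}^2]\le\liminf_n\bbE[\norm{X^{\tau_c}_n}^2]<\infty$, hence is finite a.s.; since $X$ agrees with this limit on $\{\lrang{X}_\infty\le c\}$, we conclude $\norm{X}<\infty$ almost surely on $\{\lrang{X}_\infty<\infty\}$, completing the proof.
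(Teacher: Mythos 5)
This lemma is imported in the paper by citation to Durrett (Theorem~5.4.9) and is not proved there, so there is no in-paper argument to compare against; your localization proof is essentially the standard textbook one behind that citation. The argument is correct: the predictability of $\lrang{X}_{n+1}$ makes $\tau_c$ a stopping time, the stopped martingale is $L^2$-bounded by $\bbE[\norm{X_1}^2]+c$ via orthogonality of increments, Doob's $L^2$ maximal inequality (valid for the finite-dimensional Euclidean norm, since $\norm{\cdot}$ of a martingale is a submartingale) upgrades the $L^2$-Cauchy property to a.s.\ convergence, and patching over $c\in\bbN$ recovers the conclusion on $\{\lrang{X}_\infty<\infty\}$. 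No gaps.
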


\begin{lemma}[Expectation of convex functions; {\cite{Duchi_15}}]\label{lem:exp_cvx}
Let $(\calU,\scA,\nu)$ be a probability space and $h:\calX\times\calU$ be a function such that for each $u\in\calU$, $x\mapsto h(x,u)$ is convex on $\calX$, where $\calX$ is a convex set equipped with an inner product $\lrangle{\cdot}{\cdot}$. Define $f(x)\defeq\bbE_u(x,u)$, for any $x\in\calX$. Then $f$ is convex on $\calX$. Fix any $x_0\in\calX$.  Then for any $g_{x_0}(u)\in\partial_x h(x_0,u)$, $\bbE_u[g_{x_0}(u)]\in\partial f(x_0)$. 
\end{lemma}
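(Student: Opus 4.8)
The plan is to establish the two assertions in turn, each by applying the relevant definition pointwise in $u$ and then integrating. Throughout I assume the (implicit) measurability and integrability conditions needed for the expectations below to be well defined; these are the only nontrivial hypotheses and I return to them at the end.

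First I would prove that $f$ is convex. Fix $x_1,x_2\in\calX$ and $\lambda\in[0,1]$. Since $\calX$ is convex, $\lambda x_1+(1-\lambda)x_2\in\calX$, and for each $u\in\calU$ the convexity of $h(\cdot,u)$ gives $h(\lambda x_1+(1-\lambda)x_2,u)\le\lambda h(x_1,u)+(1-\lambda)h(x_2,u)$. Integrating this inequality against $\nu$ and using the monotonicity and linearity of the expectation yields $f(\lambda x_1+(1-\lambda)x_2)\le\lambda f(x_1)+(1-\lambda)f(x_2)$, which is exactly convexity of $f$ on $\calX$.

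Next I would prove the subgradient interchange. Fix $x_0\in\calX$ and a selection $g_{x_0}(u)\in\partial_x h(x_0,u)$. By the definition of the (convex-analytic) subdifferential, for $\nu$-almost every $u$ and every $x\in\calX$ we have $h(x,u)\ge h(x_0,u)+\lrangle{g_{x_0}(u)}{x-x_0}$. Taking expectation over $u$ on both sides and using linearity of the inner product with respect to the integral, the right-hand side becomes $f(x_0)+\lrangle{\bbE_u[g_{x_0}(u)]}{x-x_0}$, so that $f(x)\ge f(x_0)+\lrangle{\bbE_u[g_{x_0}(u)]}{x-x_0}$ for all $x\in\calX$. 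By the definition of $\partial f(x_0)$ this is precisely the statement $\bbE_u[g_{x_0}(u)]\in\partial f(x_0)$.

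The main obstacle is not the inequalities themselves but the measure-theoretic bookkeeping. I would need $u\mapsto g_{x_0}(u)$ to be measurable and $\nu$-integrable so that $\bbE_u[g_{x_0}(u)]$ is a well-defined (finite) element of $\calX$, and I would need to justify the exchange $\bbE_u[\lrangle{g_{x_0}(u)}{x-x_0}]=\lrangle{\bbE_u[g_{x_0}(u)]}{x-x_0}$. In the finite-dimensional setting relevant to this paper the latter exchange is just componentwise linearity of the integral, while in general it follows from the Bochner-integral and dominated-convergence machinery once integrability is in hand. The only point requiring an external tool is producing a measurable selection $u\mapsto g_{x_0}(u)$ of the subdifferential correspondence $u\mapsto\partial_x h(x_0,u)$, which is supplied by standard measurable-selection theorems; given such a selection the argument is otherwise elementary.
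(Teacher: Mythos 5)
Your argument is correct: pointwise convexity integrates to convexity of $f$, and integrating the subgradient inequality $h(x,u)\ge h(x_0,u)+\lrangle{g_{x_0}(u)}{x-x_0}$ over $u$ gives exactly the defining inequality for $\bbE_u[g_{x_0}(u)]\in\partial f(x_0)$; your caveats about measurability of the selection and the interchange of expectation with the inner product are the right ones to flag. Note that the paper itself supplies no proof of this lemma---it is stated as a technical fact imported from \cite{Duchi_15}---so there is nothing to compare against beyond observing that yours is the standard argument underlying that reference.
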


\subsection{Asymptotic Equicontinuity and Uniform Convergence}
In this section, unless otherwise mentioned, we assume the sequences of functions $\{f_n\}_{n\in\bbN}$ and $\{g_n\}_{n\in\bbN}$ are defined on a common  metric space $(\calX,d)$ and mapped to a common metric space $(\calY,\rho)$.

\begin{lemma}[Uniform convergence implies asymptotic equicontinuity] \label{lem:uniconv_asymcont}
If a sequence of functions $\{f_n\}_{n\in\bbN}$ 
converges uniformly to a continuous function $f$ on $\calX$, then it is asymptotically equicontinuous on $\calX$.
\end{lemma}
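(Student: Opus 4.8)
The plan is to give the standard three-epsilon argument, verifying the definition of asymptotic equicontinuity pointwise at an arbitrary $x\in\calX$. Fix $x\in\calX$ and $\veps>0$. I will produce a single $\delta>0$ and a threshold $N\in\bbN$ so that the $\sup$ appearing inside the $\limsup$ is controlled uniformly for all large $n$, which then bounds the $\limsup$ itself.

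First I would invoke the continuity of the limit function $f$ at $x$ to obtain $\delta>0$ such that $d(x,x')<\delta$ implies $\rho\big(f(x),f(x')\big)<\veps/4$. Next I would use the hypothesis $f_n\convu f$ on $\calX$ to obtain $N\in\bbN$ such that $\rho\big(f_n(y),f(y)\big)<\veps/4$ for all $n\ge N$ and \emph{all} $y\in\calX$; uniformity in $y$ is exactly what lets me apply this bound simultaneously at the points $x$ and $x'$. Then for any $n\ge N$ and any $x'$ with $d(x,x')<\delta$, the triangle inequality in $(\calY,\rho)$ gives
\begin{equation*}
\rho\big(f_n(x),f_n(x')\big)\le \rho\big(f_n(x),f(x)\big)+\rho\big(f(x),f(x')\big)+\rho\big(f(x'),f_n(x')\big)<\tfrac{3\veps}{4}.
\end{equation*}
Taking the supremum over all such $x'$ yields $\sup_{x':\,d(x,x')<\delta}\rho\big(f_n(x),f_n(x')\big)\le 3\veps/4$ for every $n\ge N$, and hence
\begin{equation*}
\limsup_{n\to\infty}\;\sup_{x':\,d(x,x')<\delta}\rho\big(f_n(x),f_n(x')\big)\le \tfrac{3\veps}{4}<\veps,
\end{equation*}
which is precisely the defining inequality for asymptotic equicontinuity at $x$. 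Since $x\in\calX$ was arbitrary, $\{f_n\}_{n\in\bbN}$ is a.e.c.\ on $\calX$.

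There is no real obstacle here; the result is elementary. The only point requiring mild care is obtaining the \emph{strict} inequality $<\veps$ demanded by the definition, rather than $\le\veps$. I handle this by splitting into $\veps/4$ rather than $\veps/3$, so the triangle-inequality bound is $3\veps/4$ and survives both the supremum (which turns a strict pointwise bound into a weak one) and the $\limsup$, while still remaining strictly below $\veps$. I would also emphasize in the write-up that the essential ingredient is the \emph{uniformity} of the convergence in the second step: a merely pointwise $f_n\to f$ would not let me bound $\rho(f_n(x'),f(x'))$ simultaneously over the whole $\delta$-neighborhood of $x$, and the argument would collapse.
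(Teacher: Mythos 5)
Your proposal is correct and follows essentially the same three-epsilon argument as the paper's own proof (the paper splits into $\epsilon/6$ pieces to land at $\epsilon/2$, while you use $\epsilon/4$ to land at $3\epsilon/4$; the difference is cosmetic). Your explicit remark about why uniformity, rather than pointwise convergence, is needed is a nice touch but matches the spirit of the paper's argument.
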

\begin{proof}
Fix $\epsilon>0$. Since $f_n\convu f$, there exists $N\in\bbN$ such that for all $n\ge N$, $\sup_{x\in\calX} \abs{f_n(x)-f(x)}<\epsilon/6$. Fix $x_0\in\calX$. Then there exists $\delta>0$ such that $\sup_{x'\in \calN_\delta(x)} \rho(f(x),f(x')) <\epsilon/6$, where $\calN_\delta(x)\defeq\{x'\in\calX:d(x,x')<\delta\}$. Thus for all $n\ge N$, $\sup_{x'\in \calN_\delta(x)} \rho(f_n(x),f_n(x')) \le \rho(f_n(x),f(x)) + \sup_{x'\in \calN_\delta(x)} \rho(f(x),f(x')) +  \sup_{x'\in \calN_\delta(x)} \rho(f_n(x'), f(x')) < \epsilon/2$. This shows $\limsup_{n\to\infty} \;\sup_{x'\in\calN_\delta(x)} \rho(f_n(x),f_n(x')) < \epsilon$. Since this holds for all $x\in\calX$, we complete the proof. 
\end{proof}

\begin{lemma}[Lipschitzness implies equicontinuity] \label{lem:Lips_eqcont}
Given a sequence of continuous functions $\{f_n\}_{n\in\bbN}$. If each $f_n$ is Lipschitz on $\calX$ with Lipschitz constant $L_n$ and there exists $M\in(0,\infty)$ such that $\sup_{n\ge 1} L_n\le M$, then $\{f_n\}_{n\in\bbN}$ is equicontinuous on $\calX$. 
\end{lemma}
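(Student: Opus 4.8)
The plan is to exploit the uniform bound $M$ on the Lipschitz constants to produce a single $\delta$ that works simultaneously for every $f_n$ at a given point; this is precisely what distinguishes equicontinuity from mere pointwise continuity of each $f_n$. First I would fix an arbitrary $x\in\calX$ and an arbitrary $\epsilon>0$, and set $\delta\defeq\epsilon/(2M)$, which is well-defined and positive since $M\in(0,\infty)$. For any index $n\in\bbN$ and any $x'\in\calX$ with $d(x,x')<\delta$, the Lipschitz property of $f_n$ together with the uniform bound $L_n\le M$ gives
\begin{equation*}
\rho\bigl(f_n(x),f_n(x')\bigr)\le L_n\, d(x,x')\le M\, d(x,x')< M\delta=\epsilon/2.
\end{equation*}
The crucial feature is that this estimate does not depend on $n$. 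Hence, taking the supremum over both $x'\in\{x'\in\calX:d(x,x')<\delta\}$ and $n\in\bbN$ yields
\begin{equation*}
\sup_{n\in\bbN}\;\sup_{x'\in\calX:d(x,x')<\delta}\rho\bigl(f_n(x),f_n(x')\bigr)\le \epsilon/2<\epsilon,
\end{equation*}
which is exactly the equicontinuity condition at $x$.

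Because $x\in\calX$ was arbitrary, the same argument (with the same $\delta$, which in fact depends only on $\epsilon$ and $M$, not on $x$) establishes equicontinuity at every point of $\calX$, so $\{f_n\}_{n\in\bbN}$ is equicontinuous on $\calX$. I expect no genuine obstacle here: the only point requiring mild care is the choice of $\delta$, where I take $\delta=\epsilon/(2M)$ rather than $\epsilon/M$ so that the concluding inequality is strict, matching the strict inequality in the definition of equicontinuity (the supremum $\sup_{x':d(x,x')<\delta}M\,d(x,x')$ equals $M\delta$ and need not be attained). The essential mechanism is simply that a common Lipschitz bound $M$ decouples the modulus of continuity from the index $n$, which is the whole content of the claim.
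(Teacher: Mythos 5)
Your proof is correct: the uniform bound $M$ on the Lipschitz constants yields a single $\delta=\epsilon/(2M)$ that works for all $n$ simultaneously, which is exactly the equicontinuity condition in the paper's Definition of e.c., and your extra care with $\epsilon/2$ to secure the strict inequality is sound. The paper itself states this lemma without proof (treating it as standard), and your argument is the canonical one that would fill that gap, so there is nothing to compare beyond noting that you supplied what the authors omitted.
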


\begin{lemma}[Finite sum preserves asymptotic equicontinuity] \label{lem:finite_sum_equicont}
Let $\{f_n\}_{n\in\bbN}$ and $\{g_n\}_{n\in\bbN}$ be both asymptotically equicontinuous on $\calX$. Assume the metric $\rho$ is translation-invariant (for e.g., induced by a norm). Then $\{f_n+g_n\}_{n\in\bbN}$  is asymptotically equicontinuous on $\calX$.
\end{lemma}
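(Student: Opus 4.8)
The plan is to deduce the asymptotic equicontinuity of the sum directly from that of the two summands, using the triangle inequality for $\rho$ together with its translation-invariance to decouple the two contributions additively. Fix an arbitrary point $x\in\calX$ and an arbitrary $\epsilon>0$; it suffices to exhibit a radius $\delta>0$ witnessing asymptotic equicontinuity of $\{f_n+g_n\}_{n\in\bbN}$ at $x$ for this $\epsilon$. First I would apply the hypothesis that $\{f_n\}_{n\in\bbN}$ is asymptotically equicontinuous at $x$ to obtain $\delta_1>0$ with $\limsup_{n\to\infty}\sup_{x':d(x,x')<\delta_1}\rho(f_n(x),f_n(x'))<\epsilon/2$, and symmetrically obtain $\delta_2>0$ for $\{g_n\}_{n\in\bbN}$ with the same bound $\epsilon/2$. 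Taking $\delta\defeq\min\{\delta_1,\delta_2\}$ makes both estimates valid simultaneously on the ball $\{x':d(x,x')<\delta\}$.

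The key step is to bound the increment of the sum on this ball. For any such $x'$, the triangle inequality gives
\[ \rho\big((f_n+g_n)(x),(f_n+g_n)(x')\big)\le \rho\big(f_n(x)+g_n(x),f_n(x')+g_n(x)\big)+\rho\big(f_n(x')+g_n(x),f_n(x')+g_n(x')\big). \]
Translation-invariance is exactly what is needed here: subtracting the common summand $g_n(x)$ from the first pair and $f_n(x')$ from the second pair collapses the right-hand side to $\rho(f_n(x),f_n(x'))+\rho(g_n(x),g_n(x'))$. Taking the supremum over $x'$ in the $\delta$-ball and then the $\limsup$ over $n$, and invoking the subadditivity $\limsup_n(a_n+b_n)\le\limsup_n a_n+\limsup_n b_n$, yields
\[ \limsup_{n\to\infty}\sup_{x':d(x,x')<\delta}\rho\big((f_n+g_n)(x),(f_n+g_n)(x')\big)\le \frac{\epsilon}{2}+\frac{\epsilon}{2}=\epsilon. \]
Since $x$ and $\epsilon$ were arbitrary, this is precisely asymptotic equicontinuity of $\{f_n+g_n\}_{n\in\bbN}$ on $\calX$.

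I do not anticipate a substantive obstacle: the argument is elementary once the two hypotheses are isolated. The only points demanding care are choosing a single radius valid for both sequences (resolved by the minimum) and applying translation-invariance in the correct direction, so that the mixed quantity $f_n(x')+g_n(x)$ cancels cleanly in both terms; without translation-invariance the increment of the sum would not split into a sum of the individual increments. I would note in passing that replacing every $\limsup$ by a plain $\sup$ over $n$ turns the same argument into a proof that finite sums preserve ordinary equicontinuity as well, which is the form implicitly used when combining $\{G^t\}$ and $\{\Delta_1^t\}$ in Lemma~\ref{lem:asym_equi}.
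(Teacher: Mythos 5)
Your proof is correct and follows essentially the same route as the paper's: the same choice $\delta=\min\{\delta_1,\delta_2\}$, the same intermediate point $f_n(x')+g_n(x)$ in the triangle inequality, and the same use of translation-invariance to split the increment of the sum into the two individual increments. The only cosmetic difference is that you conclude with $\le\epsilon$ rather than $<\epsilon$, which is immaterial since $\epsilon$ is arbitrary.
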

\begin{proof}
Fix an $\epsilon>0$ and $x\in\calX$, there exist $\delta_1>0$ and $\delta_2>0$ respectively such that 
\begin{align*}
\limsup_{n\to\infty} \sup_{x'\in\calX:d(x,x')<\delta_1} \rho(f_n(x),f_n(x')) &< \epsilon/2,\\
\limsup_{n\to\infty} \sup_{x'\in\calX:d(x,x')<\delta_2} \rho(g_n(x),g_n(x')) &< \epsilon/2.
\end{align*}
Take $\delta=\min(\delta_1,\delta_2)$, we have
\begin{align*}
&\;\limsup_{n\to\infty} \sup_{x'\in\calX:d(x,x')<\delta} \rho((f_n+g_n)(x),(f_n+g_n)(x')) \\
\le&\; \limsup_{n\to\infty} \sup_{x'\in\calX:d(x,x')<\delta} \rho((f_n+g_n)(x),f_n(x')+g_n(x)) + \rho(f_n(x')+g_n(x),f_n(x')+g_n(x'))\\
\le&\; \limsup_{n\to\infty} \sup_{x'\in\calX:d(x,x')<\delta} \rho(f_n(x),f_n(x')) + \rho(g_n(x),g_n(x')) \\
<&\; \epsilon. 
\end{align*}
\end{proof}

\begin{lemma}[Continuous transformation preserves uniform convergence] \label{lem:cont_uniconv}
Assume $\calX$ to be compact. Let $g:\calY\to\calZ$ be a continuous function, where $(\calZ,r)$ is a metric space. If $\{f_n\}_{n\in\bbN}$ uniformly converges to a continuous function $f$ on $\calX$, then $\{g\circ f_n\}_{n\in\bbN}$ uniformly converges to $g\circ f$ on $\calX$.  
\end{lemma}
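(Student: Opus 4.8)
The plan is to convert the continuity of $g$ together with the compactness of $\calX$ into a single uniform modulus of continuity along the relevant part of $\calY$, and then transport the uniform convergence $f_n\convu f$ through it. The one structural fact I will use is that $K\defeq f(\calX)$ is a compact subset of $\calY$, being the continuous image of the compact set $\calX$ under $f$. Note that the maps $f_n$ themselves need not be continuous; only the continuity of the limit $f$ and the compactness of $\calX$ enter, and they enter solely to produce this compact set $K$.

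Concretely, I would fix $\epsilon>0$ and proceed as follows. For each $y\in K$, continuity of $g$ at $y$ yields $\delta_y>0$ with $r(g(y'),g(y))<\epsilon/3$ whenever $\rho(y',y)<\delta_y$. The open balls $\{y'\in\calY:\rho(y',y)<\delta_y/2\}$, indexed by $y\in K$, cover $K$; by compactness finitely many centres $y_1,\dots,y_m$ suffice, and I set $\delta\defeq\min_{1\le i\le m}\delta_{y_i}/2>0$. Since $f_n\convu f$ on $\calX$, I then choose $N\in\bbN$ so that $\sup_{x\in\calX}\rho(f_n(x),f(x))<\delta$ for all $n\ge N$.

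The verification is then a triangle-inequality chain anchored at the finite centres. Fix $n\ge N$ and $x\in\calX$. Because $f(x)\in K$, there is an index $i$ with $\rho(f(x),y_i)<\delta_{y_i}/2<\delta_{y_i}$, whence $r(g(f(x)),g(y_i))<\epsilon/3$. Moreover $\rho(f_n(x),y_i)\le\rho(f_n(x),f(x))+\rho(f(x),y_i)<\delta+\delta_{y_i}/2\le\delta_{y_i}$, so $r(g(f_n(x)),g(y_i))<\epsilon/3$ as well. Adding the two estimates gives $r(g(f_n(x)),g(f(x)))<2\epsilon/3<\epsilon$, and since this bound is independent of $x$, it shows $\sup_{x\in\calX} r\big(g(f_n(x)),g(f(x))\big)<\epsilon$ for every $n\ge N$, i.e.\ $g\circ f_n\convu g\circ f$ on $\calX$.

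The only point requiring care---and the reason I route the argument through a finite subcover rather than through Heine--Cantor---is that $\calY$ is an arbitrary metric space, so a closed neighbourhood of the compact set $K$ need not be compact and $g$ need not be uniformly continuous on it. Extracting finitely many centres from the cover of $K$ manufactures a single $\delta$ that plays the role of a uniform modulus of continuity ``just outside'' $K$, which is exactly what the bound $\rho(f_n(x),f(x))<\delta$ can exploit. In the finite-dimensional setting of the intended application ($\calY=\bbR^{F\times K}$), one could instead take a compact closed $\delta$-neighbourhood of $K$ and invoke uniform continuity of $g$ there, but the subcover argument needs no such extra structure.
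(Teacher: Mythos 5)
Your proof is correct, and it follows the same basic strategy as the paper's---exploit the compactness of $K=f(\calX)$ and the continuity of $g$, then push the uniform convergence of $f_n$ through the resulting modulus. The difference is in how that modulus is obtained, and here your version is actually the more careful one. The paper invokes Heine--Cantor to get uniform continuity of $g$ \emph{on the set $f(\calX)$ only}, i.e.\ a $\delta$ such that $r(g(y),g(y'))<\epsilon$ for all $y,y'\in f(\calX)$ with $\rho(y,y')<\delta$, and then applies this bound to the pair $\big(f_n(x),f(x)\big)$. But $f_n(x)$ need not lie in $f(\calX)$, so that final step is not literally licensed by the stated modulus; one needs uniform continuity of $g$ on a neighbourhood of $f(\calX)$, and in a general metric space $\calY$ such a neighbourhood need not be compact. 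Your finite-subcover construction with balls of radius $\delta_y/2$ manufactures exactly the ``modulus just outside $K$'' that closes this gap, at the cost of one extra triangle inequality through the centres $y_i$. In short: same idea, but your route is the one that is airtight for arbitrary $(\calY,\rho)$, while the paper's shortcut implicitly assumes the modulus extends off $f(\calX)$ (which is harmless in the intended application $\calY=\bbR^{F\times K}$ but not justified as written).
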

\begin{proof}
First, since $\calX$ is compact and $f$ is continuous on $\calX$, $f(\calX)$ is compact in $\calY$. Since $g$ is continuous on $\calY$, $g$ is uniformly continuous on $f(\calX)$. Fix $\epsilon>0$. there exists a $\delta>0$ such that for all $y,y'\in f(\calX)$ and $\rho(y,y')<\delta$, $r(g(y),g(y'))<\epsilon$. Since $f_n\convu f$ on $\calX$, there exits a $K\in\bbN$ such that for all $n\ge K$ and $x\in\calX$, $\rho(f(x),f_n(x))<\delta$. Consequently, $r(g(f_n(x)),g(f(x)))<\epsilon$. This implies $g\circ f_n\convu g\circ f$ on $\calX$. 
\end{proof}

\begin{lemma}[Generalized Arzel\`{a}-Ascoli Theorem \cite{Yin_05}]\label{lem:arzela}
If the sequence of functions $\{f_n\}_{n\ge 1}$ is asymptotically equicontinuous and uniformly bounded on $\calX$ (assumed to be compact), then there exists a subsequence $\{f_{n_k}\}_{k\ge 1}$ that converges uniformly to a continuous function $f$ on $\calX$. 
\end{lemma}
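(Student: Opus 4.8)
The plan is to adapt the classical proof of the Arzel\`{a}--Ascoli theorem, the only genuinely new ingredient being that asymptotic equicontinuity (a.e.c.) is weaker than ordinary equicontinuity: the modulus of continuity is controlled only in the limit $n\to\infty$ rather than uniformly over all $n$. The argument I would run has four stages: (i) upgrade the \emph{pointwise} a.e.c.\ hypothesis to an \emph{asymptotic uniform} equicontinuity statement using compactness of $\calX$; (ii) extract, by a diagonal argument over a countable dense subset of $\calX$, a subsequence converging pointwise on that set; (iii) show this subsequence is uniformly Cauchy, hence uniformly convergent; and (iv) verify the limit $f$ is continuous by passing to the limit in the equicontinuity estimate.

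For stage (i), I would fix $\epsilon>0$. The a.e.c.\ hypothesis gives, for each $x\in\calX$, a radius $\delta_x>0$ and—since $\limsup_{n}a_n<c$ forces $a_n<c$ for all large $n$—an index $N_x$ with $\sup_{d(x,x')<\delta_x}\rho(f_n(x),f_n(x'))<\epsilon/2$ for every $n\ge N_x$. Covering $\calX$ by the balls $\{x': d(x,x')<\delta_x/2\}$ and taking a finite subcover centred at $y_1,\dots,y_p$, I would set $\delta=\min_i\delta_{y_i}/2$ and $N=\max_i N_{y_i}$; a triangle inequality through the relevant centre $y_i$ then yields $\rho(f_n(x),f_n(x'))<\epsilon$ whenever $d(x,x')<\delta$ and $n\ge N$. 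For stage (ii), compactness makes $\calX$ separable, so I would fix a dense set $\{w_j\}_{j\in\bbN}$; because $\{f_n\}$ is uniformly bounded and the range lies in a finite-dimensional space (as in every application here, where $\calY=\bbR^{F\times K}$, so bounded sets are precompact), a Cantor diagonalization produces a subsequence $\{f_{n_k}\}$ for which $\{f_{n_k}(w_j)\}_k$ converges for every $j$.

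For stage (iii), given $\epsilon>0$ I would take $\delta,N$ from stage (i) with $\epsilon/3$, cover $\calX$ by finitely many balls of radius $\delta$ whose centres $w_{j_1},\dots,w_{j_q}$ lie in the dense set, and use pointwise convergence at these finitely many centres to find $K$ (with $n_k\ge N$ for $k\ge K$) so that $\rho(f_{n_k}(w_{j_s}),f_{n_l}(w_{j_s}))<\epsilon/3$ for all $s$ and all $k,l\ge K$. For arbitrary $x\in\calX$ lying in the ball about some $w_{j_s}$, a three-term triangle inequality bounds $\rho(f_{n_k}(x),f_{n_l}(x))$ by $\epsilon$, uniformly in $x$; thus $\{f_{n_k}\}$ is uniformly Cauchy and, by completeness of $\calY$, converges uniformly to some $f$, i.e.\ $f_{n_k}\convu f$. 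For stage (iv), fixing $x,x'$ with $d(x,x')<\delta$ and letting $k\to\infty$ in $\rho(f_{n_k}(x),f_{n_k}(x'))<\epsilon/3$ (valid for large $k$), continuity of $\rho$ gives $\rho(f(x),f(x'))\le\epsilon/3$, which is exactly (uniform) continuity of $f$.

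The main obstacle is stage (i): in the classical theorem equicontinuity is already uniform in $n$, whereas here I must convert the per-point, asymptotic-in-$n$ control into a single pair $(\delta,N)$ valid uniformly over $\calX$ and over all $n\ge N$. This is precisely where compactness of $\calX$ is essential and where the $\limsup$ in the definition of a.e.c.\ must be read carefully to produce the threshold $N$. Once asymptotic uniform equicontinuity is secured, stages (ii)--(iv) are essentially the classical argument, with the clause ``for all $n$'' replaced throughout by ``for all sufficiently large $n$.''
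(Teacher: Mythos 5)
Your proof is correct. The paper states this lemma without proof, citing \cite{Yin_05}, so there is no in-paper argument to compare against; your four-stage adaptation of the classical Arzel\`{a}--Ascoli proof is the standard route, and the one genuinely non-classical step---upgrading pointwise asymptotic equicontinuity to a single pair $(\delta,N)$ valid uniformly over the compact $\calX$ and for all $n\ge N$---is handled correctly via the finite-subcover and triangle-inequality argument, after which the diagonal extraction, uniform Cauchy estimate, and the direct verification of continuity of the limit (which rightly does not assume continuity of the individual $f_n$) go through as you describe. Your caveat that pointwise precompactness of the values is what is really needed, and that uniform boundedness supplies it here only because the codomain is finite-dimensional, is also apt.
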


\subsection{Projected Dynamical Systems and Lyapunov Stability Theory}

\begin{lemma}[Adapted from {\cite[Theorem 3.1, Chapter 4]{Kushner_03}}] \label{lem:suff_pode}
Assume \eqref{eq:mean_PODE} holds with $\barZ(\omega,0)=0$ and $\barW(\omega,s)\in\calC$, for all $s\ge 0$. Denote $\lambda$ as the Lebesgue measure on $\bbR$. If $\barW(\omega,\cdot)$ is Lipschitz on $\bbR_+$ and for any $\tau>0$, 
\begin{enumerate}
\item $\barZ(\tau)=\vecz$ if $\barW(\omega,s)\in\inter\calC$ for all $s\in\calT$, where $\calT$ is any set in $[0,\tau]$ with $\lambda(\calT) = \tau$,
\item $\barZ(\tau)\in \convcl \left[\bigcup_{s\in[0,\tau]}\calN\left(\barW(\omega,s)\right)\right]$,
\end{enumerate}
where  $\calN$ is defined in \eqref{eq:def_normal}, then 
\begin{equation}
\barZ(s) = \int_0^s z(\tau)\;d\tau, 
\end{equation}
where $z:\bbR_+\to\bbR^{F\times K}$ is defined in \eqref{eq:def_zs}. 
\end{lemma}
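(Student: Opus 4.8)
The plan is to show that the limit trajectory $\barW(\omega,\cdot)$ is itself a solution of the projected ODE \eqref{eq:PDS2_W0}, i.e.\ that $\tfrac{d}{ds}\barW(\omega,s)=\pi_\calC[\barW(\omega,s),-\nabla f(\barW(\omega,s))]$ for almost every $s$; granting this, integrating from $0$ and using $\barZ(\omega,0)=\vecz$ together with \eqref{eq:mean_PODE} and the definition \eqref{eq:def_zs} of $z$ immediately gives $\barZ(\omega,s)=\int_0^s z(\tau)\,d\tau$. The first, routine step is regularity: by \eqref{eq:mean_PODE}, $\barZ(\omega,\cdot)$ is the sum of the Lipschitz map $\barW(\omega,\cdot)$ and the $C^1$ term $\int_0^\cdot \nabla f(\barW(\omega,\tau))\,d\tau$, whose integrand is bounded by Corollary~\ref{cor:bounded}; hence $\barZ(\omega,\cdot)$ is Lipschitz as well. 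Both functions are therefore absolutely continuous and differentiable a.e., and at each point of differentiability $\tfrac{d}{ds}\barW=-\nabla f(\barW)+\tfrac{d}{ds}\barZ$, so it remains only to identify $\tfrac{d}{ds}\barZ(\omega,s)$ with $z(s)$ a.e.

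Second, I would invoke the variational characterization of $\pi_\calC$ for the closed convex set $\calC$: for $x\in\calC$ and any $v$, the vector $w=\pi_\calC[x,v]$ is the projection of $v$ onto the tangent cone $T_\calC(x)$, and it is the unique $w$ satisfying (A) $w\in T_\calC(x)$, (B) $w-v\in\calN(x)$ with $\calN$ the inward normal cone of \eqref{eq:def_normal}, and (C) the complementarity $\lrangle{w}{w-v}=0$. Taking $x=\barW(\omega,s)$, $v=-\nabla f(\barW(\omega,s))$, $w=\tfrac{d}{ds}\barW$, and noting $w-v=\tfrac{d}{ds}\barZ$, the target identity reduces to checking (A), (B), (C) a.e. Property (A) is elementary: since $\barW(\omega,s)\in\calC$ for all $s$ and $\calC$ is convex, for a.e.\ $s$ the difference quotients $\tfrac1h(\barW(\omega,s+h)-\barW(\omega,s))$ lie in $\tfrac1h(\calC-\barW(\omega,s))\subseteq T_\calC(\barW(\omega,s))$, and the closed tangent cone retains the limit $\tfrac{d}{ds}\barW$.

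Third, for (B) I would use a Lebesgue differentiation argument. On the open set of times where $\barW(\omega,s)\in\inter\calC$, hypothesis (1) forces $\barZ(\omega,\cdot)$ to be locally constant, so $\tfrac{d}{ds}\barZ=\vecz\in\calN(\barW(\omega,s))$ there. At a boundary time $s$, writing $\tfrac{d}{ds}\barZ(\omega,s)=\lim_{h\downarrow0}\tfrac1h\big(\barZ(\omega,s+h)-\barZ(\omega,s)\big)$, the increment lies (by the local form of hypothesis (2)) in $\convcl[\bigcup_{\tau\in[s,s+h]}\calN(\barW(\omega,\tau))]$; by continuity of $\barW(\omega,\cdot)$ and the upper semicontinuity of the correspondence $\calN$ on $\calC$ (established via \eqref{eq:suff_usc} and Lemma~\ref{lem:suff_usc}), these sets shrink to $\calN(\barW(\omega,s))$ as $h\downarrow0$, and closedness and convexity then yield $\tfrac{d}{ds}\barZ(\omega,s)\in\calN(\barW(\omega,s))$. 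Consistency with the truncation in \eqref{eq:def_normal} follows from the $M$-Lipschitz bound on $\barZ(\omega,\cdot)$, which gives $\normt{\tfrac{d}{ds}\barZ}\le M$ a.e.

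The main obstacle is property (C), the complementarity $\lrangle{\tfrac{d}{ds}\barW}{\tfrac{d}{ds}\barZ}=0$, which expresses that the reflection does no tangential work; (A) and (B) alone only give $\lrangle{\tfrac{d}{ds}\barW}{\tfrac{d}{ds}\barZ}\ge 0$, since the tangent and inward-normal cones of a convex set are mutually polar. I would extract (C) from hypothesis (1), which confines the active times of $\barZ$ to $\bdr\calC$, together with the minimal-force nature of the projection that forces the constraining velocity to be orthogonal to the admissible motion. A cleaner alternative that bypasses (C) entirely is to note that, by (B), $\barW(\omega,\cdot)$ is an absolutely continuous solution of the differential inclusion $\tfrac{d}{ds}\barW\in-\nabla f(\barW)+\calN(\barW)$ on $\calC$; since the normal-cone operator of the convex set $\calC$ is maximal monotone and $\nabla f$ is Lipschitz, this inclusion with $\barW(\omega,0)=\bW_0$ has a unique solution, which coincides with the projected-ODE solution of \eqref{eq:PDS2_W0} whose velocity is precisely the minimal-norm, hence projected, selection. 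Either route delivers $\tfrac{d}{ds}\barZ(\omega,s)=z(s)$ a.e., and integration completes the proof.
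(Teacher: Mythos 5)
The paper itself never proves this lemma: it is stated in the supplement as ``adapted from'' Kushner--Yin and invoked as a black box inside the proof of Lemma~\ref{lem:conv_limit_set}, so your attempt can only be judged against the standard argument it is meant to stand in for. Your architecture is the right one --- show $\barW(\omega,\cdot)$ and $\barZ(\omega,\cdot)$ are Lipschitz, hence differentiable a.e., and identify $\frac{d}{ds}\barZ(\omega,s)$ with $z(s)$ at a.e.\ $s$ via the Moreau-type characterization (A)--(C) of $\pi_\calC$ --- but the decisive step (B) has a genuine gap. Your argument needs the \emph{increment} conditions: that $\barZ(\omega,\cdot)$ is constant on any interval during which $\barW(\omega,\cdot)$ stays in $\inter\calC$, and that $\barZ(\omega,s+h)-\barZ(\omega,s)\in \convcl\left[\bigcup_{\tau\in[s,s+h]}\calN\left(\barW(\omega,\tau)\right)\right]$. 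Neither follows from hypotheses (1)--(2) as stated, which are anchored at time $0$: once the trajectory has spent a set of times of positive measure on $\bdr\calC$, hypothesis (1) is vacuous for every later $\tau$, and hypothesis (2) only constrains the position of $\barZ(\omega,\tau)$ inside a convex hull that never shrinks, so $\barZ$ may keep drifting during a later interior excursion without violating either hypothesis (take $\calC=[0,1]$, a sojourn at the endpoint $0$ followed by an interior excursion, to make this concrete). What you call ``the local form of hypothesis (2)'' is therefore strictly stronger than what is given; it is in fact what Kushner--Yin actually assume, and the paper's adaptation is loose on exactly this point, but as written your proof cannot be run from the stated hypotheses.

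The second gap is (C), which you flag as the main obstacle, though here the situation is the reverse: your routes are inadequate, but the obstacle is illusory. Route (a) is not a proof --- the ``minimal-force nature of the projection'' is precisely the conclusion to be established. Route (b) (uniqueness for $\frac{d}{ds}\barW\in-\nabla f(\barW)+\calN(\barW)$ via maximal monotonicity and Gronwall) is viable in principle, but it needs $\nabla f$ to be one-sided Lipschitz, whereas the paper's Lemmas~\ref{lem:regularity}--\ref{lem:unbiased} only give continuity of $\nabla f$ on $\calC$, and it also presupposes existence of a solution of \eqref{eq:PDS2_W0}, which in the paper's logic is deduced \emph{from} this lemma. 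In fact (C) is free once (B) holds: at a.e.\ $s$ the Lipschitz curve $\barW(\omega,\cdot)$ has a two-sided derivative; the forward difference quotients put $\frac{d}{ds}\barW$ in the tangent cone $T$ of $\calC$ at $\barW(\omega,s)$, the backward ones put it in $-T$, and any element of $\calN(\barW(\omega,s))$ has nonnegative inner product with every element of $T$, so $\lrangle{\frac{d}{ds}\barW}{\frac{d}{ds}\barZ}$ is simultaneously $\ge 0$ and $\le 0$, i.e.\ $=0$. With that observation your proof collapses to a single task: establishing (B), which requires working from the increment-form hypotheses (the form the paper's own proof of Lemma~\ref{lem:conv_limit_set} effectively verifies, since it shows $\barZ(\omega,\tau)\in\calN(\barW(\omega,\tau))$ pointwise) rather than from conditions (1)--(2) as literally stated.
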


\begin{definition}[Lyapunov function and its Lie derivative; {\cite[Section~6.6]{Teschl_12}}]\label{def:lya_func}
Consider the PDS given in \eqref{eq:PDS}. Assume the normed space $(\calX,\norm{\cdot})$ is equipped with the inner product $\lrangle{\cdot}{\cdot}$.  Fix $x_0\in\calK$ and choose a neighborhood of $x_0$ in $\calK$, denoted as $\calU(x_0)$. A continuously differentiable function $L:\calU(x_0)\to\bbR_+$ is called a Lyapunov function if $L(x_0)=0$, $L(x)>0$ for any $x\in\calU(x_0)\setminus\{x_0\}$ and for any $x(\cdot)\in\calP(g,\calK,x_0)$,
\begin{equation}
L(x(t_1)) \le L(x(t_0)), \,\forall\,t_0, t_1\in\calI, t_0<t_1, \;\mbox{s.t.}\;\{x(t_0),x(t_1)\}\subseteq \calU(x_0)\setminus\{x_0\}.
\end{equation}
Moreover, for any $x(\cdot)\in\calP(g,\calK,x_0)$, the Lie derivative of $L$ on $\calI$, $\frac{d}{ds}L(x(s))$ is given by
\begin{equation}
\frac{d}{ds}L(x(s)) = \lrangle{\nabla_x L(x(s))}{x'(s)}, \,\forall\,s\in\calI.
\end{equation}
\end{definition}

\begin{lemma}[All limit points are stationary; {\cite[Theorem 6.15]{Teschl_12}}] \label{lem:limit_st}
Consider the PDS given in \eqref{eq:PDS}. Let $L:\calU\subseteq\calK\to\bbR_+$ be a Lyapunov function with possibly non-unique zeros (i.e., $L$ may only be positive semidefinite on $\calU$). Suppose each solution $x(\cdot)\in\calP(g,\calK,x_0)$ is contained in $\calU$, then $L$ is constant on $\calL(g,\calK,x_0)\cap\calU$. In other words, the Lie derivative of $L$ vanishes on $\calL(g,\calK,x_0)\cap\calU$.
\end{lemma}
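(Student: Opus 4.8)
The plan is to run the classical LaSalle invariance argument, adapted to the projected dynamical system (PDS) of Definition~\ref{def:PDS}. First I would extract the two defining properties of the Lyapunov function $L$ from Definition~\ref{def:lya_func}: it is nonnegative, hence bounded below, and $s\mapsto L(x(s))$ is non-increasing along every solution $x(\cdot)\in\calP(g,\calK,x_0)$ that remains in $\calU$. A bounded, monotone scalar function converges, so for each such solution $L(x(s))$ decreases to a limit $c\ge 0$ as $s\uparrow\infty$.

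Next I would transfer this limiting value onto the limit set. Fix any $y$ in the $\omega$-limit set of a solution $x(\cdot)$, i.e.\ $x(s_n)\to y$ for some $s_n\uparrow\infty$; continuity of $L$ (it is continuously differentiable by Definition~\ref{def:lya_func}) then forces $L(y)=\lim_n L(x(s_n))=c$. Hence $L\equiv c$ on the $\omega$-limit set of that trajectory. When the solution of \eqref{eq:PDS} from the fixed initial point $x_0$ is unique---which holds for the PDS \eqref{eq:PDS_W0} by the regularity of $g=-\nabla f$ established in Lemmas~\ref{lem:regularity}--\ref{lem:unbiased} together with \cite{Dupuis_93}---the set $\calP(g,\calK,x_0)$ is a single trajectory, $\calL(g,\calK,x_0)\cap\calU$ is exactly its $\omega$-limit set, and $L$ is constant there with one common value. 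I would stress that even absent uniqueness the preceding step already gives $L\equiv c$ on each individual trajectory's $\omega$-limit set, which is the form actually used in the proof of Lemma~\ref{lem:char_limit_set}.

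For the infinitesimal reformulation I would invoke invariance of the $\omega$-limit set under the flow: any solution issued from a point $y\in\calL(g,\calK,x_0)\cap\calU$ remains inside that set, where $L$ has just been shown to equal the constant $c$. Therefore $L(x(s))\equiv c$ along such a solution and, differentiating, the Lie derivative $\frac{d}{ds}L(x(s))=\lrangle{\nabla_x L(x(s))}{x'(s)}$ vanishes on $\calL(g,\calK,x_0)\cap\calU$, which is precisely the ``in other words'' clause.

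The main obstacle is that the two facts the smooth-ODE version of LaSalle's principle takes for granted---well-posedness and invariance of the $\omega$-limit set---are more delicate for a \emph{projected} system, whose right-hand side $\pi_\calK[\,\cdot\,,g(\cdot)]$ is only piecewise continuous across $\bdr\calK$. I would discharge both by appealing to the Dupuis--Nagurney existence, uniqueness, and continuous-dependence theory for PDS \cite{Dupuis_93} and to compactness of $\calK$, rather than re-deriving flow invariance by hand; this is exactly why the statement is phrased as ``adapted from \cite[Theorem~6.15]{Teschl_12}.'' A minor point to keep clean is that $L$ is only positive semidefinite (non-unique zeros), so I must avoid any step relying on strict positivity of $L$ away from a single equilibrium---the monotone-convergence argument above never does.
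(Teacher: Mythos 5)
The paper does not actually prove this lemma: it is stated in the supplemental ``Technical Lemmas'' section as an imported result, ``Adapted from \cite[Theorem~6.15]{Teschl_12}'', and is used as a black box in the proof of Lemma~\ref{lem:char_limit_set}. So there is no in-paper argument to compare against; your proposal has to stand on its own, and it essentially does. What you write is the standard LaSalle invariance argument: monotone convergence of $s\mapsto L(x(s))$ to a limit $c$, transfer of $c$ to the $\omega$-limit set by continuity of $L$, and then positive invariance of the $\omega$-limit set to convert constancy of $L$ into vanishing of the Lie derivative. You also correctly flag the two points where the ODE textbook proof does not transfer verbatim to a PDS --- well-posedness and invariance of the limit set for the discontinuous right-hand side $\pi_\calK[\cdot,g(\cdot)]$ --- and discharge them by appeal to \cite{Dupuis_93}, which is exactly the role the word ``adapted'' plays in the paper's citation. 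Your observation that, absent uniqueness, one only gets a (possibly trajectory-dependent) constant on each individual $\omega$-limit set, and that this weaker per-trajectory form is all that the proof of Lemma~\ref{lem:char_limit_set} consumes, is a genuine and worthwhile sharpening of the statement as written.

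One small point you could make more explicit: a PDS solution is only absolutely continuous, and $s\mapsto\pi_\calK[x(s),g(x(s))]$ need not be continuous across $\bdr\calK$, so differentiating $L(x(s))\equiv c$ yields $\lrangle{\nabla L(x(s))}{x'(s)}=0$ only for almost every $s$, and upgrading this to ``the Lie derivative vanishes at every point of $\calL(g,\calK,x_0)\cap\calU$'' requires either the one-sided derivative definition of $\pi_\calK$ from Definition~\ref{def:PDS} or a short limiting argument. This does not affect how the lemma is used downstream (the containment displayed at the start of Section~\ref{sec:proof_char_limit_set} only needs the a.e.\ statement along trajectories), but it is the one place where your sketch glosses over a PDS-specific technicality rather than naming it.
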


\subsection{Correspondence and Upper Semicontinuity}
For further details, see \cite[Chapter 1]{Aubin_84}. 

\begin{definition}[Correspondence and its graph] \label{def:correspondence}
Given two metric spaces $(\calX,d)$ and $(\calY,\rho)$, a correspondence $\calF:\calX\rightrightarrows\calY$ maps esch $x\in\calX$ to a subset $\calF(x)$ in $\calY$. The graph of $\calF$, $\calG(\calF)$ is defined as
\begin{equation}
\calG(\calF) \defeq \left\{(x,y)\in\calX\times\calY\,|\,y\in\calF(y)\right\}. 
\end{equation}
\end{definition}

\begin{definition}[Upper semicontinuous correspondence]\label{def:usc}
A correspondence $\calF$ as defined in Definition~\ref{def:correspondence} is called upper semicontinuous at $x_0\in\calX$ if for any open set $\calU\subseteq\calY$ such that $\calF(x_0)\subseteq\calU$, there exists an open set $\calV\subseteq\calX$ such that $x_0\in\calV$ and $\calF(x)\subseteq\calU$ for any $x\in\calV$. 
\end{definition}

\begin{lemma}[Closed graph property; {\cite[Proposition 2 \& Colrollary 1]{Aubin_84}}]\label{lem:closed_graph}
Let the correspondence $\calF$ be given in Definition~\ref{def:correspondence}.
\begin{enumerate}
\item If for each $x\in\calX$, $\calF(x)$ is closed, and $\calF$ is upper semicontinuous, then $\calG(\calF)$ is closed (in $\calX\times\calY$).
\item If $(\calY,\rho)$ is a compact metric space, and $\calG(\calF)$ is closed, then $\calF$ is upper semicontinuous. 
\end{enumerate}
\end{lemma}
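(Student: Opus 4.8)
The plan is to establish both implications by reducing the topological hypotheses to their sequential characterizations, which is legitimate since $\calX$ and $\calY$ are metric spaces. For part (1) I would argue by contradiction, using upper semicontinuity to separate a prospective limit point from a closed value set; for part (2) I would exploit compactness of $\calY$ to extract a convergent subsequence and then invoke closedness of $\calG(\calF)$.

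For part (1), suppose each $\calF(x)$ is closed and $\calF$ is upper semicontinuous, and let $(x_n,y_n)\in\calG(\calF)$ with $(x_n,y_n)\to(x_0,y_0)$; the goal is $y_0\in\calF(x_0)$. Assume not. If $\calF(x_0)=\emptyset$, then $\emptyset$ is an open set containing $\calF(x_0)$, so upper semicontinuity yields an open $\calV\ni x_0$ with $\calF(x)=\emptyset$ on $\calV$, contradicting $y_n\in\calF(x_n)$ for large $n$. Otherwise set $\epsilon\defeq\rho(y_0,\calF(x_0))>0$, positive because $\calF(x_0)$ is closed and $y_0\notin\calF(x_0)$, and define the open tube $\calU\defeq\{y\in\calY:\rho(y,\calF(x_0))<\epsilon/2\}$. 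Then $\calF(x_0)\subseteq\calU$, while the triangle inequality gives $\calU\cap B(y_0,\epsilon/2)=\emptyset$. Upper semicontinuity provides an open $\calV\ni x_0$ with $\calF(x)\subseteq\calU$ for $x\in\calV$; for $n$ large both $x_n\in\calV$ and $y_n\in B(y_0,\epsilon/2)$ hold, forcing $y_n\in\calF(x_n)\subseteq\calU$ and $y_n\in B(y_0,\epsilon/2)$ simultaneously, contradicting disjointness. Hence $\calG(\calF)$ is closed.

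For part (2), assume $\calY$ compact and $\calG(\calF)$ closed, and suppose $\calF$ fails to be upper semicontinuous at some $x_0$: there is an open $\calU\supseteq\calF(x_0)$ such that every ball $B(x_0,1/n)$ contains a point $x_n$ with $\calF(x_n)\not\subseteq\calU$, so one may pick $y_n\in\calF(x_n)\setminus\calU$. Then $x_n\to x_0$, and by compactness of $\calY$ a subsequence satisfies $y_{n_k}\to y_0\in\calY$; since $\calY\setminus\calU$ is closed and contains each $y_{n_k}$, the limit obeys $y_0\notin\calU$. But $(x_{n_k},y_{n_k})\in\calG(\calF)$ converges to $(x_0,y_0)$, so closedness of $\calG(\calF)$ gives $y_0\in\calF(x_0)\subseteq\calU$, contradicting $y_0\notin\calU$. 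Thus $\calF$ is upper semicontinuous on $\calX$.

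The routine parts, namely verifying that $y\mapsto\rho(y,\calF(x_0))$ is continuous so that $\calU$ is open, and that metrizability of $\calX$ lets me replace arbitrary neighborhoods of $x_0$ by the balls $B(x_0,1/n)$, are standard. I expect the conceptual crux to be part (2): the main obstacle is pinpointing where compactness of $\calY$ is indispensable. It is exactly the step guaranteeing that $\{y_n\}\subseteq\calY\setminus\calU$ admits a convergent subsequence; without it the closed-graph hypothesis alone does not yield upper semicontinuity, since closed graph is strictly weaker than upper semicontinuity when the range is not compact. Keeping track of this, and handling the empty-value case in part (1) so that the separation argument never invokes a vanishing $\epsilon$, are the only points that genuinely require care.
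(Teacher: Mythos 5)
Your proof is correct. Note, however, that the paper itself never proves this lemma: it is quoted directly from Aubin's book (cited as Proposition~2 and Corollary~1 of \cite{Aubin_84}) and used as a black box in the proof of Lemma~\ref{lem:conv_limit_set}, so there is no in-paper argument to compare against. What you have written is the standard textbook proof, correctly reconstructed: in part (1) the sequential characterization of closedness (legitimate since $\calX\times\calY$ is metric), the positivity of $\rho(y_0,\calF(x_0))$ from closedness of the value set, the open $\epsilon/2$-tube $\calU$ around $\calF(x_0)$ disjoint from $B(y_0,\epsilon/2)$, and upper semicontinuity to trap $y_n$ in both sets; in part (2) the negation of upper semicontinuity along balls $B(x_0,1/n)$, sequential compactness of the metric space $\calY$ to extract $y_{n_k}\to y_0\in\calY\setminus\calU$, and closedness of $\calG(\calF)$ to force $y_0\in\calF(x_0)\subseteq\calU$. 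Your handling of the edge case $\calF(x_0)=\emptyset$ in part (1) is a detail many write-ups omit, and your remark that compactness of $\calY$ is indispensable in part (2) (closed graph being strictly weaker than upper semicontinuity otherwise) is accurate. The only substantive effect of your work relative to the paper is that it makes the supplemental material self-contained where the authors chose citation instead.
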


\begin{lemma}[Sufficient conditions for upper semicontinuity; {\cite[Section~1.1]{Aubin_84}}] \label{lem:suff_usc}
Let the correspondence $\calF$ be given in Definition~\ref{def:correspondence}. Assume $\calF$ is compact-valued on $\calX$. Fix $x\in\calX$. If $\calF$ satisfies
\begin{equation}
\bigcap_{\delta>0}\convcl\left(\bigcup_{z\in\calB_\delta(x)}\calF(z)\right) = \calF(x),
\end{equation}
then $\calF$ is upper semicontinuous at $x$. Here $\convcl\calS$ denotes the closed convex hull of a set $\calS$ and $\calB_\delta(x)\defeq \{z\in\calX\,|\,d(x,z)<\delta\}$. 
\end{lemma}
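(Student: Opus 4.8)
The plan is to argue by contradiction through a sequential characterization of upper semicontinuity, exploiting that the sets appearing in the hypothesis are nested as the radius shrinks. Throughout I write $K_\delta \defeq \convcl\big(\bigcup_{z\in\calB_\delta(x)}\calF(z)\big)$, so that the assumed identity reads $\bigcap_{\delta>0}K_\delta=\calF(x)$. First I would record the monotonicity: if $\delta_1\le\delta_2$ then $\calB_{\delta_1}(x)\subseteq\calB_{\delta_2}(x)$, hence $\bigcup_{z\in\calB_{\delta_1}(x)}\calF(z)\subseteq\bigcup_{z\in\calB_{\delta_2}(x)}\calF(z)$ and therefore $K_{\delta_1}\subseteq K_{\delta_2}$. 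Thus $\{K_\delta\}_{\delta>0}$ is a decreasing family of closed sets whose intersection is the compact set $\calF(x)$.

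Next, suppose toward a contradiction that $\calF$ fails to be upper semicontinuous at $x$ in the sense of Definition~\ref{def:usc}. Then there is an open set $\calU\subseteq\calY$ with $\calF(x)\subseteq\calU$ such that no open neighborhood $\calV$ of $x$ satisfies $\calF(z)\subseteq\calU$ for all $z\in\calV$. Using the balls $\calB_{1/n}(x)$ as a neighborhood base (available because $\calX$ is a metric space), for each $n\in\bbN$ I would select $z_n\in\calB_{1/n}(x)$ and a point $y_n\in\calF(z_n)$ with $y_n\notin\calU$. By construction $z_n\to x$, and since $z_n\in\calB_{1/n}(x)$ we have $y_n\in\bigcup_{z\in\calB_{1/n}(x)}\calF(z)\subseteq K_{1/n}$.

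The crux is then to extract a convergent subsequence $y_{n_k}\to\bar y$. Granting this, the proof closes quickly. For any fixed $\delta>0$, once $1/n_k\le\delta$ we have $y_{n_k}\in K_{1/n_k}\subseteq K_\delta$ by the monotonicity above, and since $K_\delta$ is closed the limit satisfies $\bar y\in K_\delta$; as $\delta>0$ was arbitrary, $\bar y\in\bigcap_{\delta>0}K_\delta=\calF(x)$ by hypothesis. On the other hand each $y_{n_k}$ lies in the closed set $\calY\setminus\calU$, so the limit satisfies $\bar y\notin\calU$, contradicting $\calF(x)\subseteq\calU$. This contradiction establishes upper semicontinuity of $\calF$ at $x$.

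The main obstacle is exactly the extraction of the convergent subsequence, i.e.\ the relative compactness of $\{y_n\}$, which is not automatic in an arbitrary metric space $\calY$. I would supply it from the ambient structure in which the lemma is applied: here $\calY=\bbR^{F\times K}$ is finite-dimensional and the correspondence of interest is $\calN$, whose values are uniformly norm-bounded by $M$. Consequently $\bigcup_{z\in\calB_\delta(x)}\calN(z)$ is bounded, so $K_\delta$ is closed and bounded, hence compact, for every $\delta>0$; thus $\{y_n\}$ is contained in a compact set and the desired subsequence exists. More generally the argument goes through verbatim whenever $\calF$ is locally relatively compact near $x$ (for instance whenever some $K_{\delta_0}$ is compact), which is the only ingredient beyond the assumed identity that the proof requires.
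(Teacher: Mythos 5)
The paper never proves this lemma: it is imported as a black box from Aubin's book (\cite[Section~1.1]{Aubin_84}), so there is no in-paper argument to compare yours against. Judged on its own, your argument is correct and is the natural one: negating Definition~\ref{def:usc} gives $z_n\to x$ and $y_n\in\calF(z_n)\setminus\calU$, the monotonicity and closedness of your sets $K_\delta$ force any subsequential limit $\bar y$ of $\{y_n\}$ into $\bigcap_{\delta>0}K_\delta=\calF(x)\subseteq\calU$, while closedness of $\calY\setminus\calU$ keeps $\bar y$ outside $\calU$, a contradiction.

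Your caveat about extracting the convergent subsequence is not pedantry; it is precisely where the lemma, read over an arbitrary normed target space, fails. In $\calY=\ell^2$ take $\calX=[0,1]$, $\calF(0)=\{0\}$, $\calF(1/n)=\{e_n\}$ (canonical basis vectors) and $\calF(z)=\{0\}$ elsewhere: every element of $\convcl\left(\{0\}\cup\{e_n:n>1/\delta\}\right)$ has vanishing coordinates up to index $1/\delta$, so $\bigcap_{\delta>0}K_\delta=\{0\}=\calF(0)$, yet $\calF$ is not upper semicontinuous at $0$. So a hypothesis of the kind you add (some $K_{\delta_0}$ compact, equivalently local boundedness in finite dimension) is genuinely necessary, and it does hold at both places the paper invokes the lemma: the values of $\calN$ are uniformly bounded by $M$, and $\partial f$ is uniformly bounded since $f$ is Lipschitz in the $\ell_1$/$\ell_2$ cases. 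If you want a proof covering every nonempty-compact-valued correspondence into $\bbR^{F\times K}$ with no side condition, add the unbounded case: if $\norm{y_n}\to\infty$, pass to a subsequence with $y_n/\norm{y_n}\to d$, pick any $p\in\calF(x)\subseteq K_\delta$, and note $[p,y_n]\subseteq K_\delta$ whenever $1/n\le\delta$ by convexity; letting $n\to\infty$ and using closedness places the entire ray $\{p+sd\,:\,s\ge 0\}$ in every $K_\delta$, hence in $\calF(x)$, contradicting its compactness. With that one extra case, your argument is a complete, self-contained proof of the lemma in the form the paper actually uses it.
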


\section{Experiment Results for Section~\ref{sec:insensitivity}}

The plots of objective values versus time of {\sf OL} with different values of $\tau$, $K$ and $a$ are shown in Figure~\ref{fig:param_tau}, \ref{fig:param_K} and \ref{fig:param_a} respectively. 

\begin{figure}[t]
\subfloat[IS]{\includegraphics[width=.475\columnwidth]{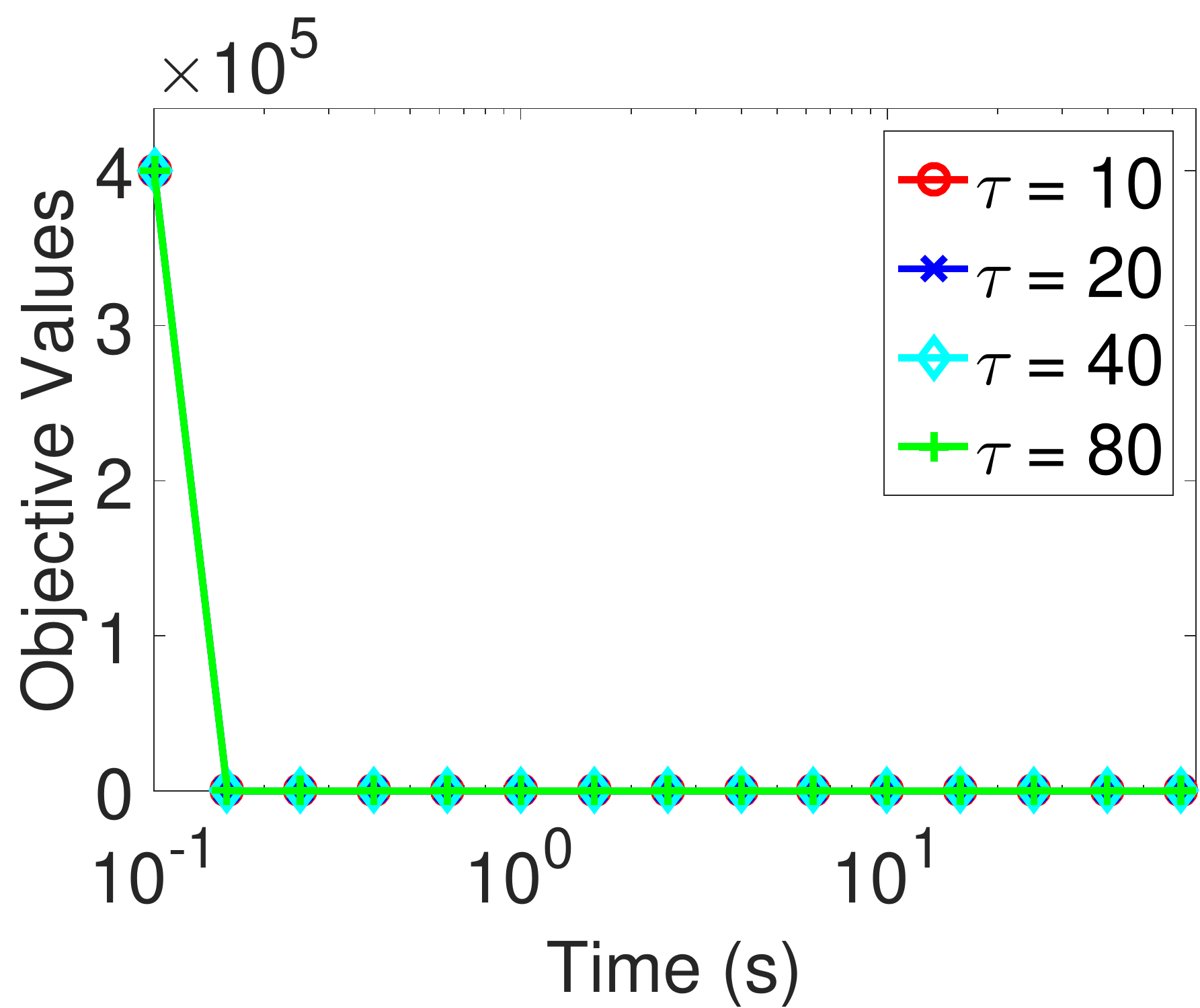}}\hfill
\subfloat[KL]{\includegraphics[width=.475\columnwidth,height=.36\columnwidth]{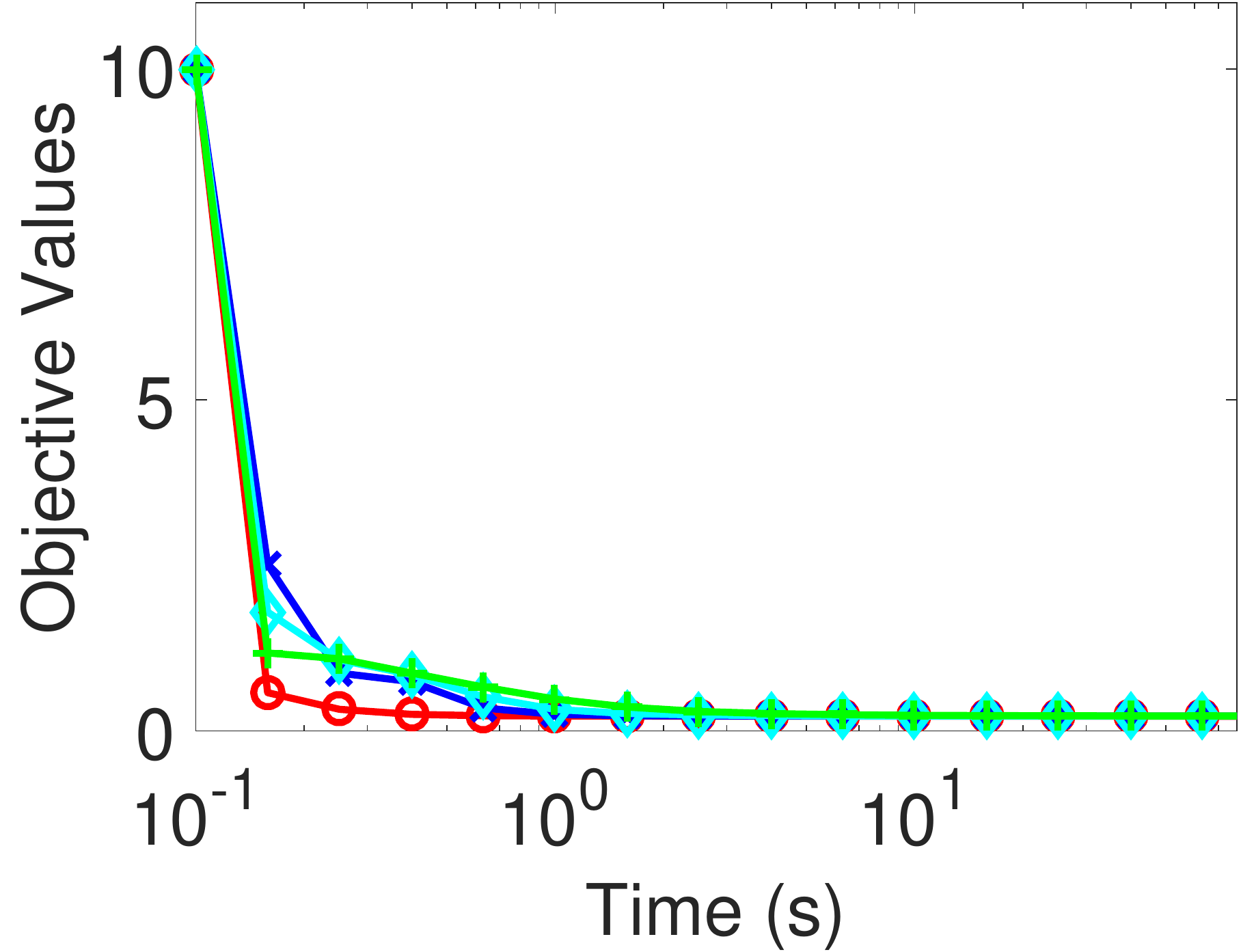}}\vspace{-.4cm}\\
\subfloat[Squared-$\ell_2$]{\includegraphics[width=.475\columnwidth]{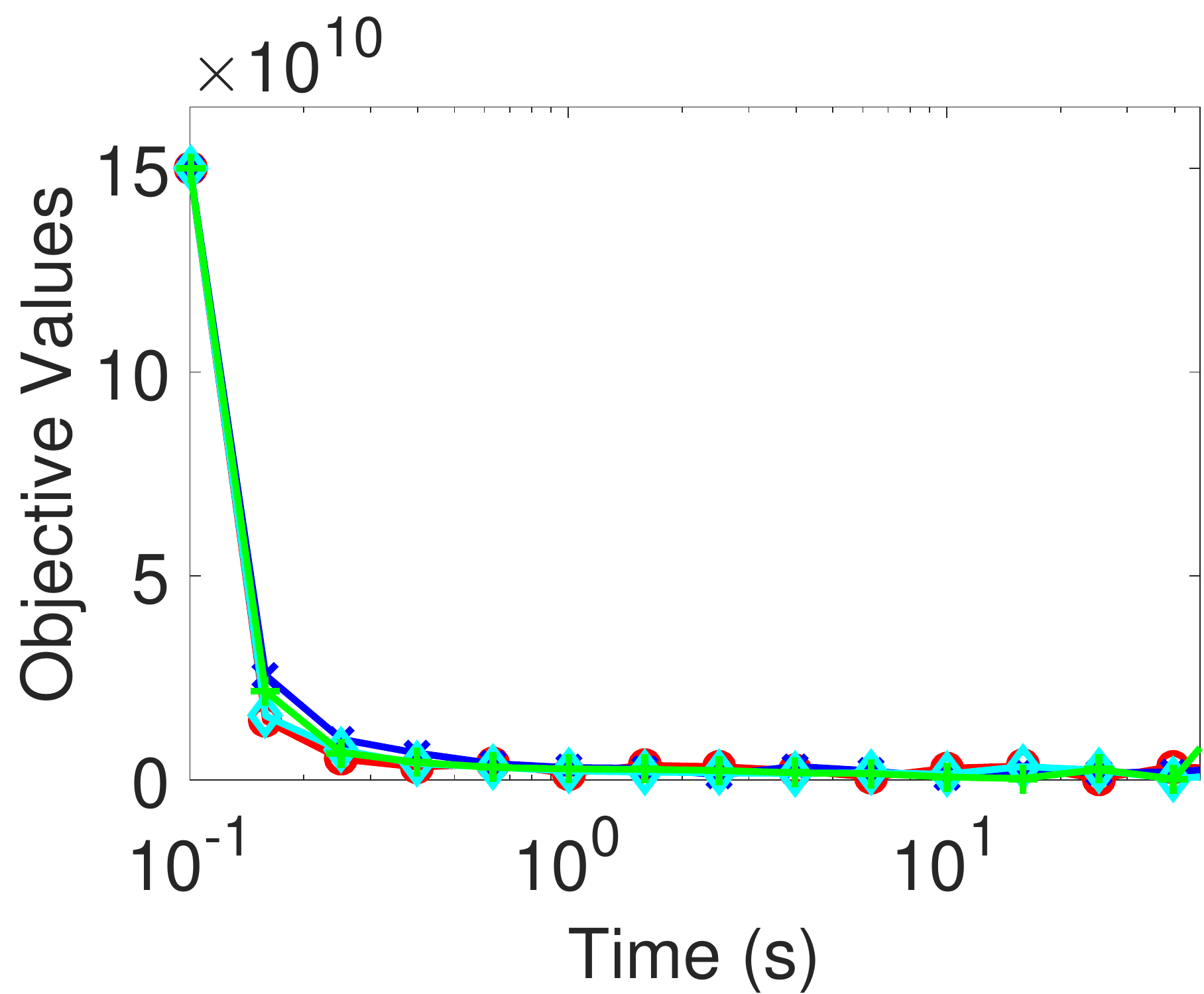}}\hfill
\subfloat[Huber]{\includegraphics[width=.475\columnwidth]{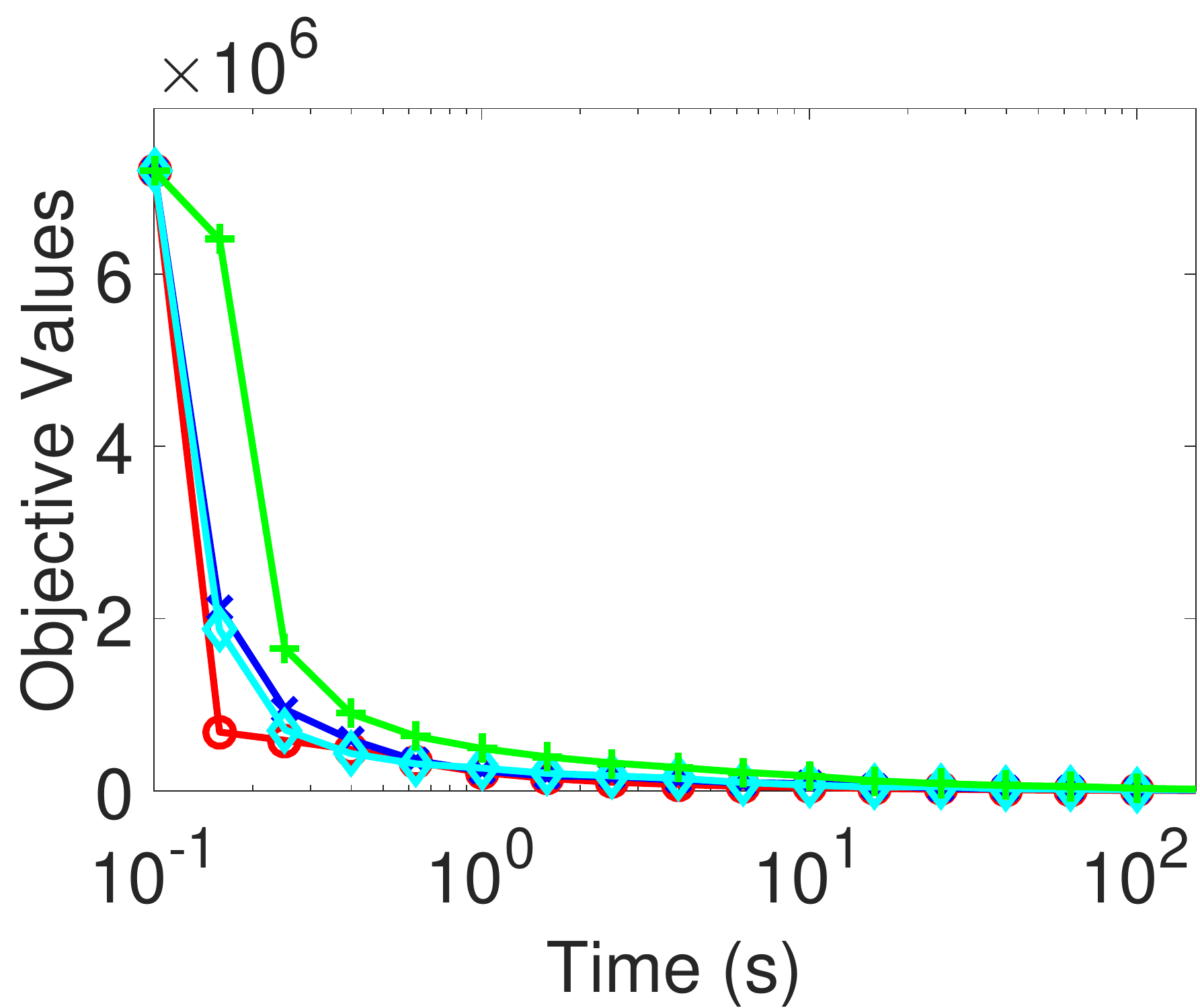}}\vspace{-.4cm}\\
\subfloat[$\ell_1$]{\includegraphics[width=.475\columnwidth]{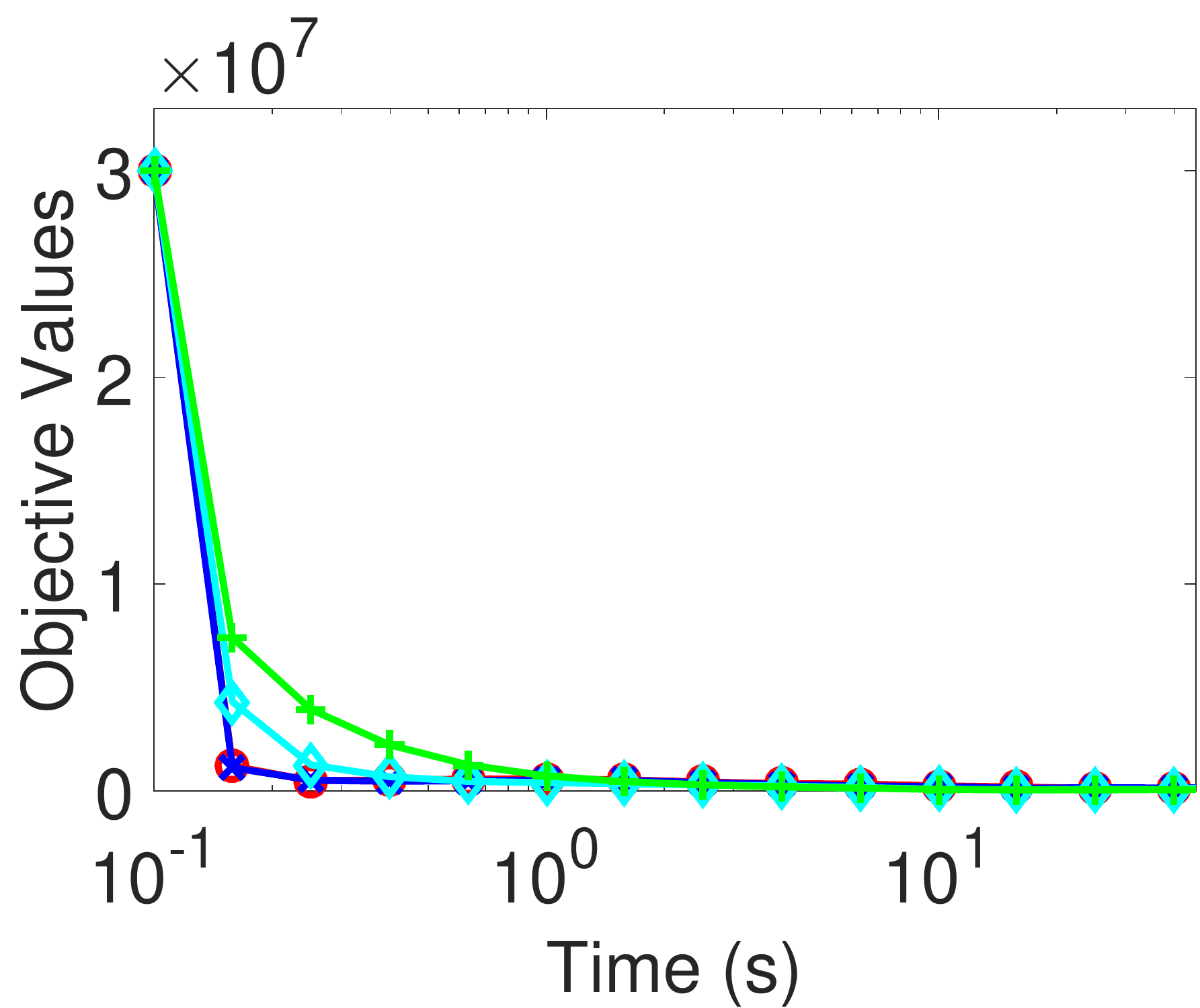}}\hfill
\subfloat[$\ell_2$]{\includegraphics[width=.475\columnwidth,height=.395\columnwidth]{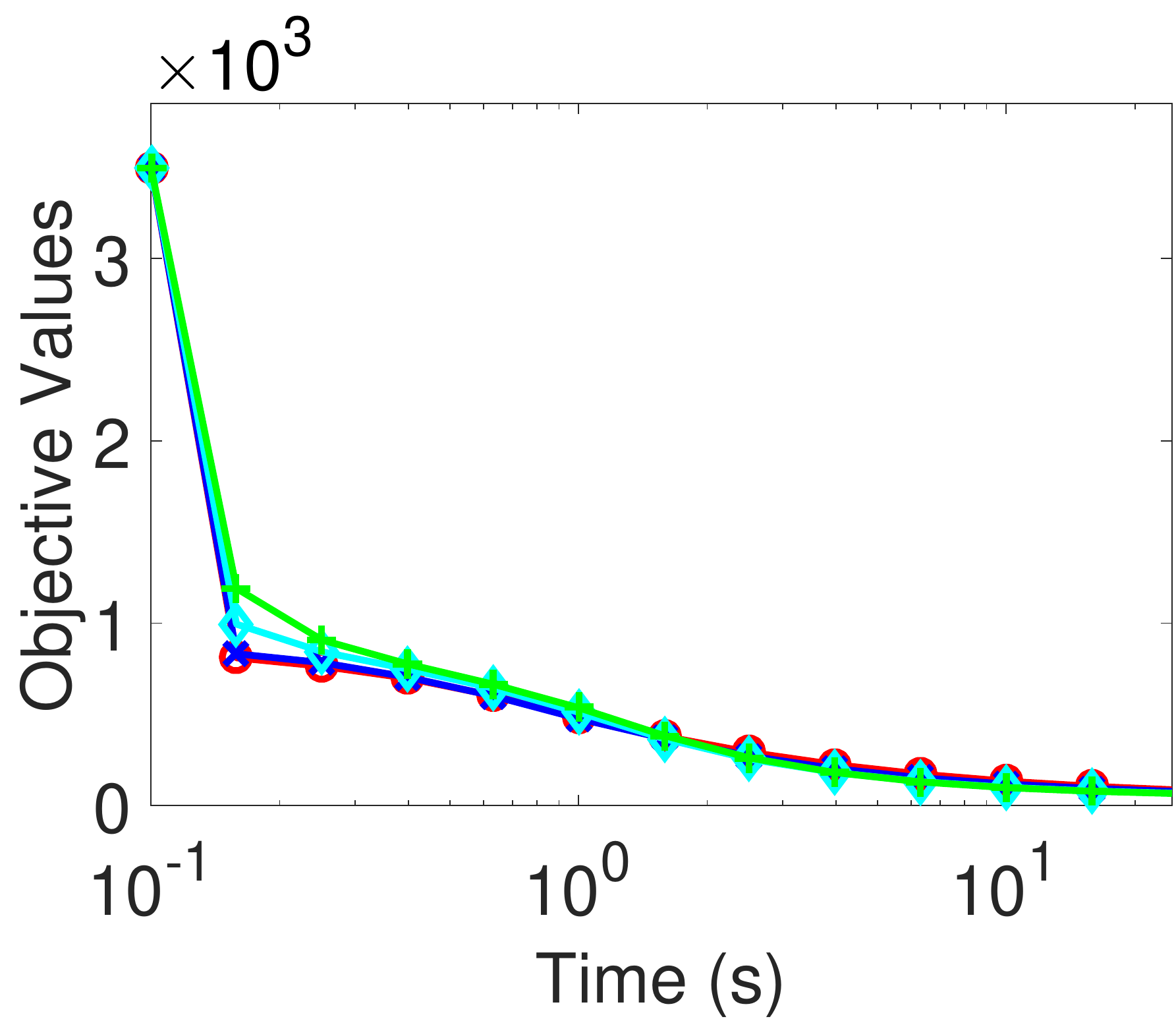}}\vspace{-.4cm}\\
\caption{Objective values versus time (in seconds) of \ol with different values of $\tau$ for all the divergences in $\barcalD$. $K$ and $a$ are in the canonical setting.}\vspace{-.5cm} \label{fig:param_tau}
\end{figure}

\begin{figure}[t]
\subfloat[IS]{\includegraphics[width=.475\columnwidth]{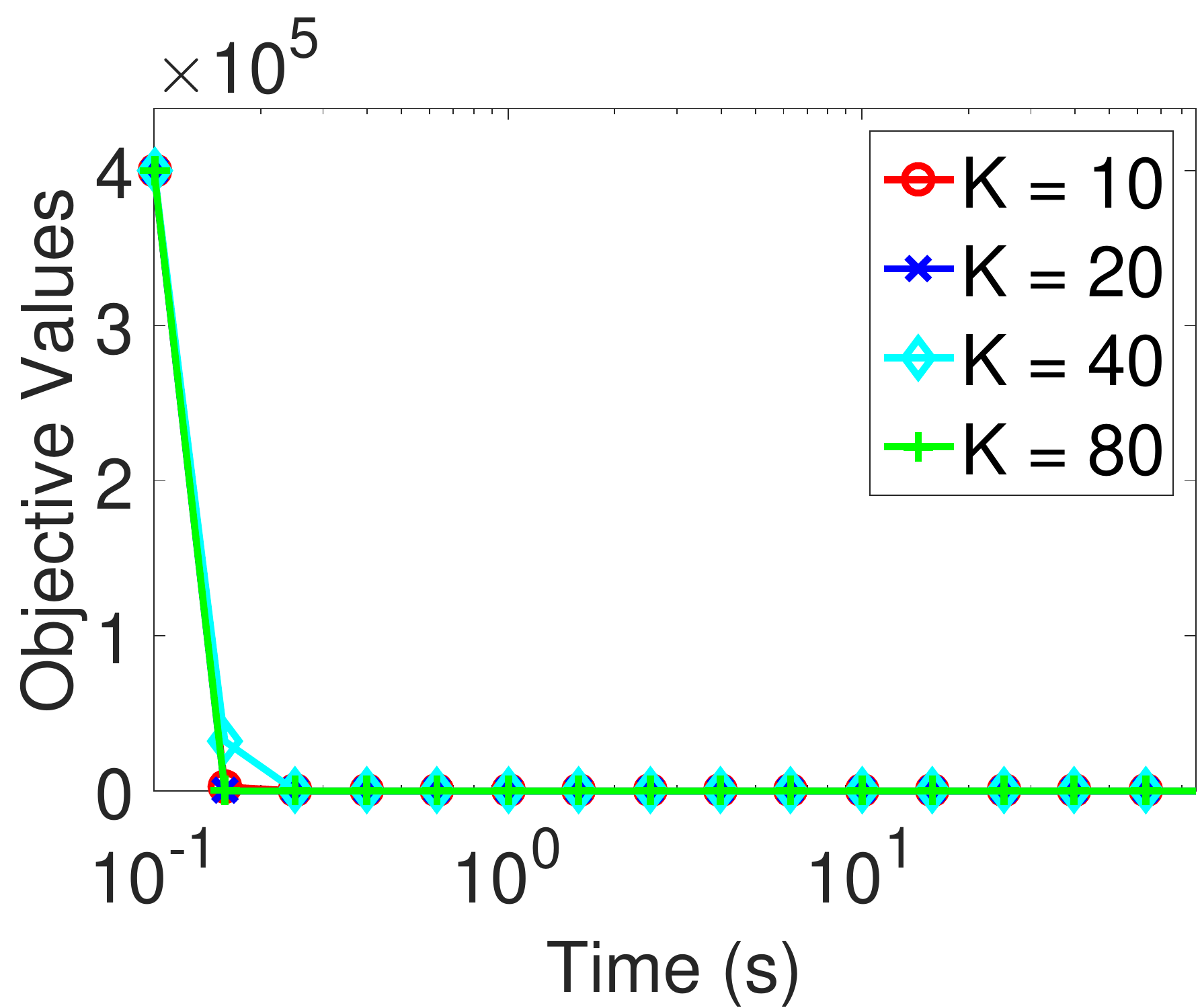}}\hfill
\subfloat[KL]{\includegraphics[width=.475\columnwidth,height=.36\columnwidth]{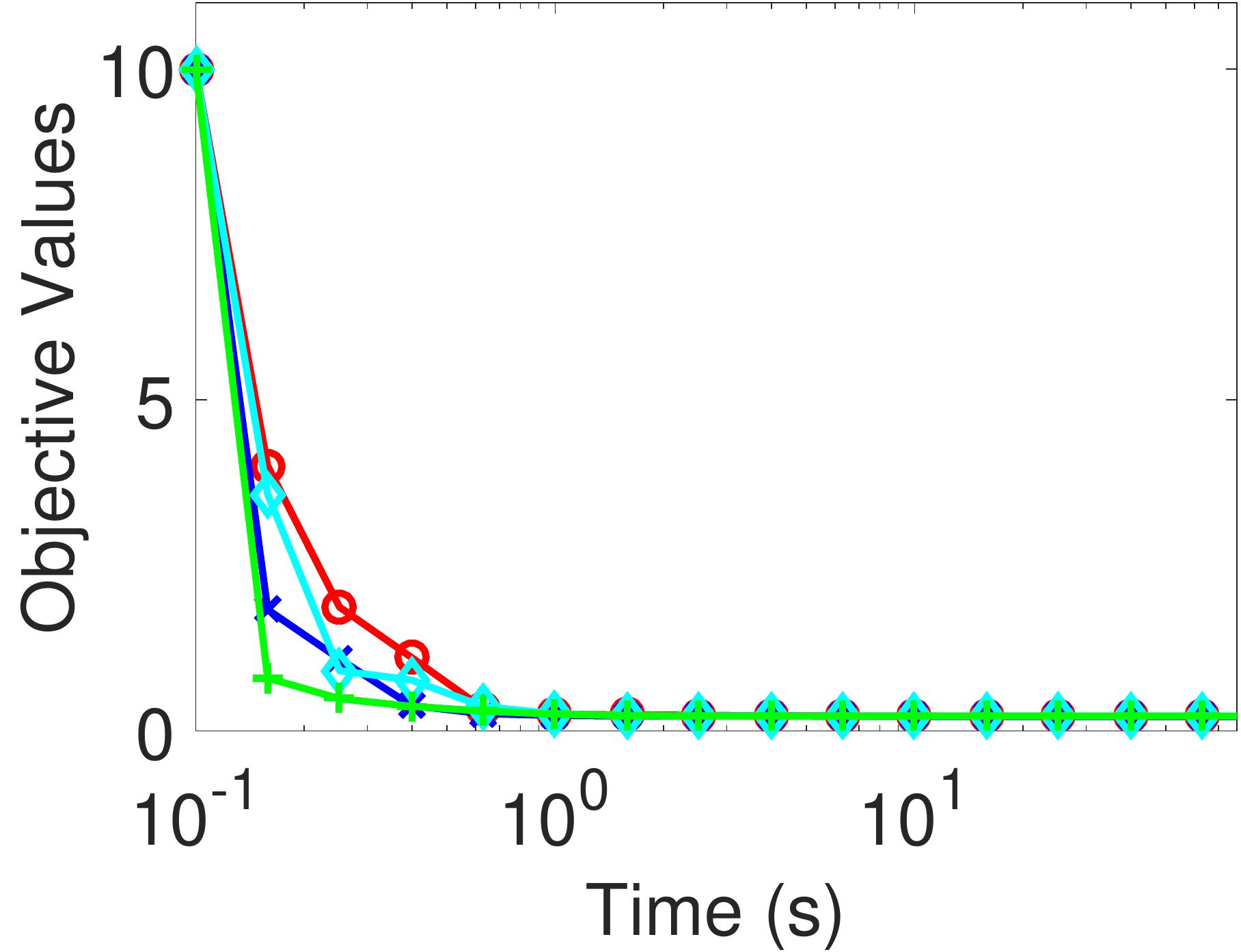}}\vspace{-.4cm}\\
\subfloat[Squared-$\ell_2$]{\includegraphics[width=.475\columnwidth]{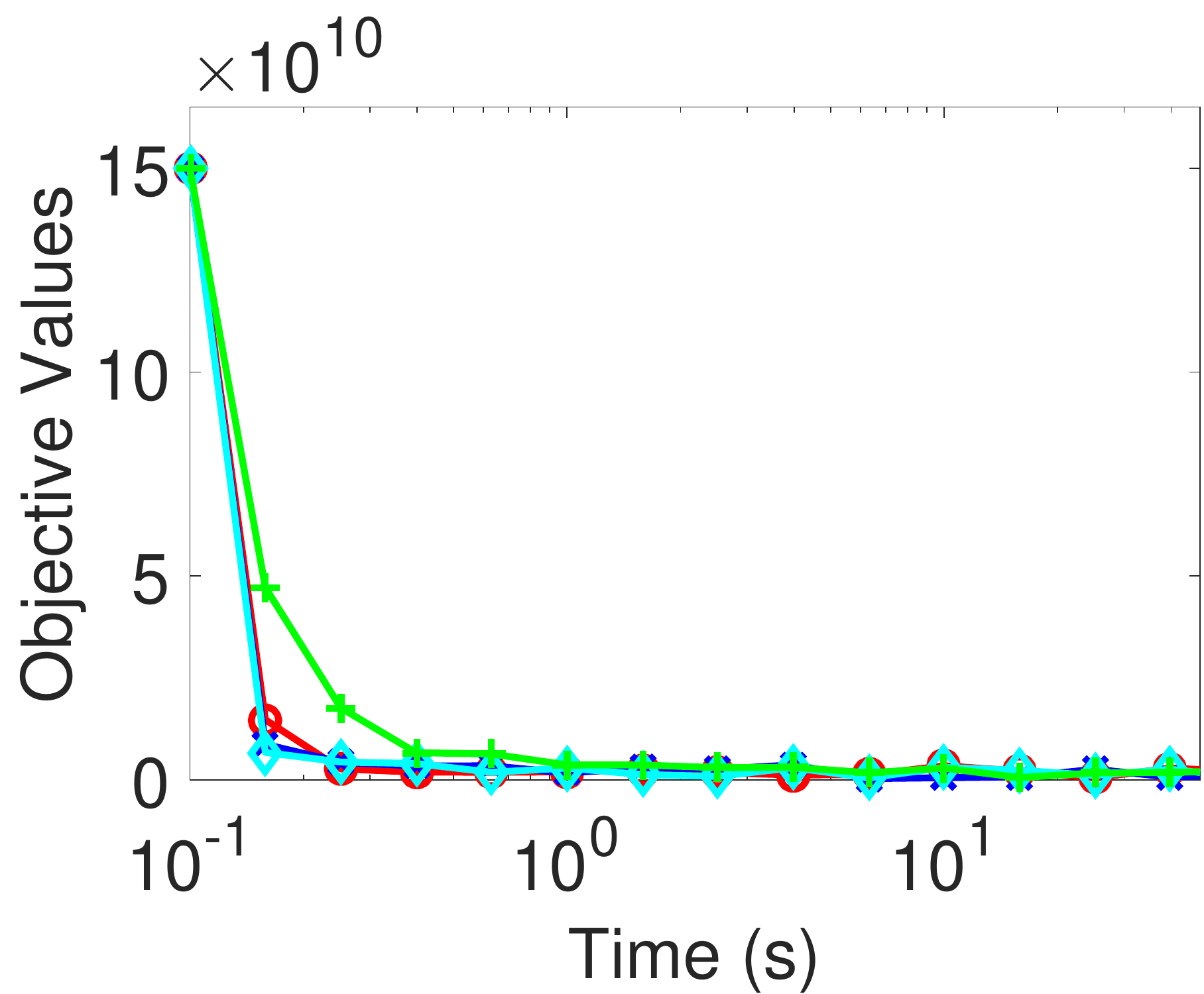}}\hfill
\subfloat[Huber]{\includegraphics[width=.475\columnwidth]{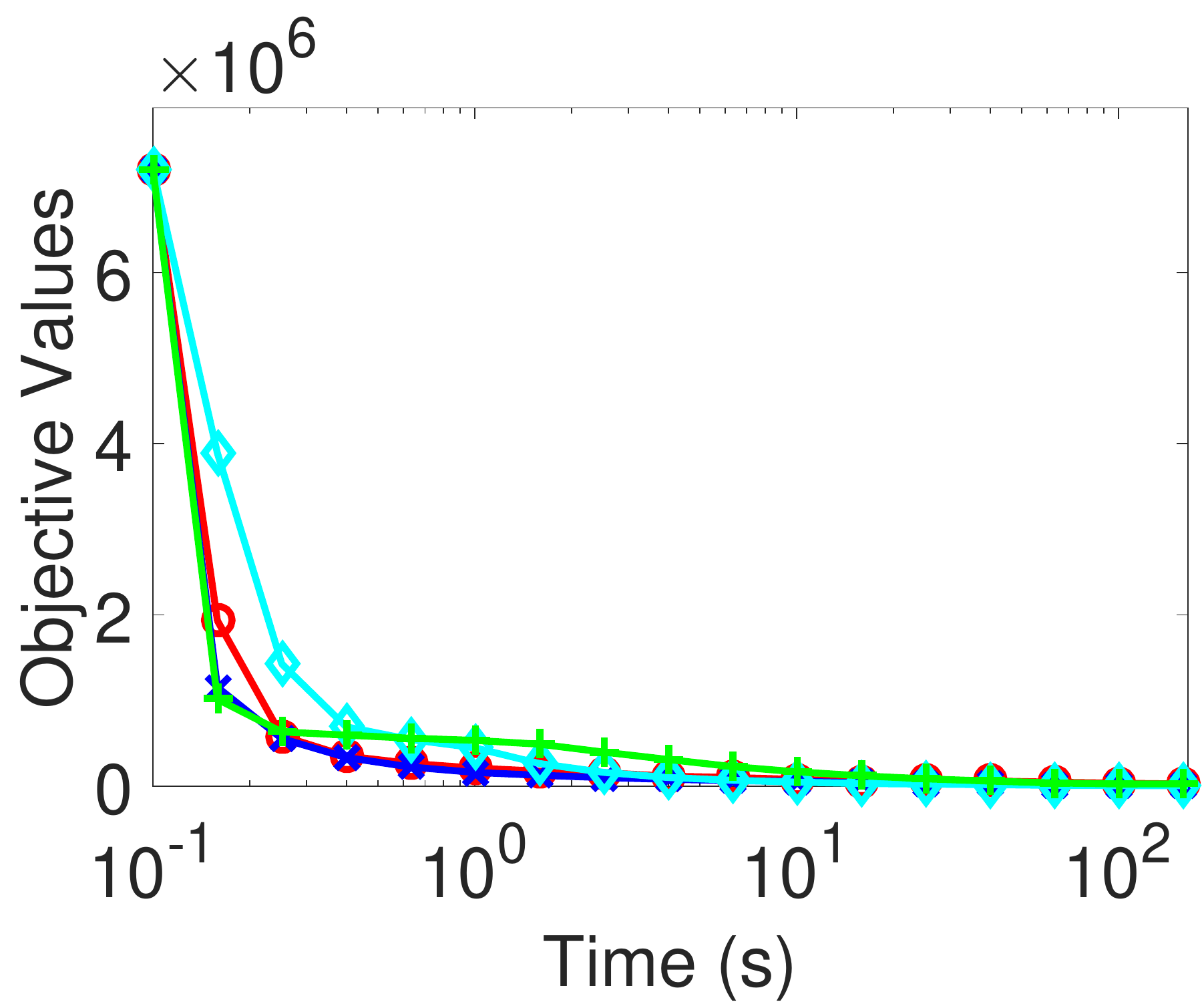}}\vspace{-.4cm}\\
\subfloat[$\ell_1$]{\includegraphics[width=.475\columnwidth]{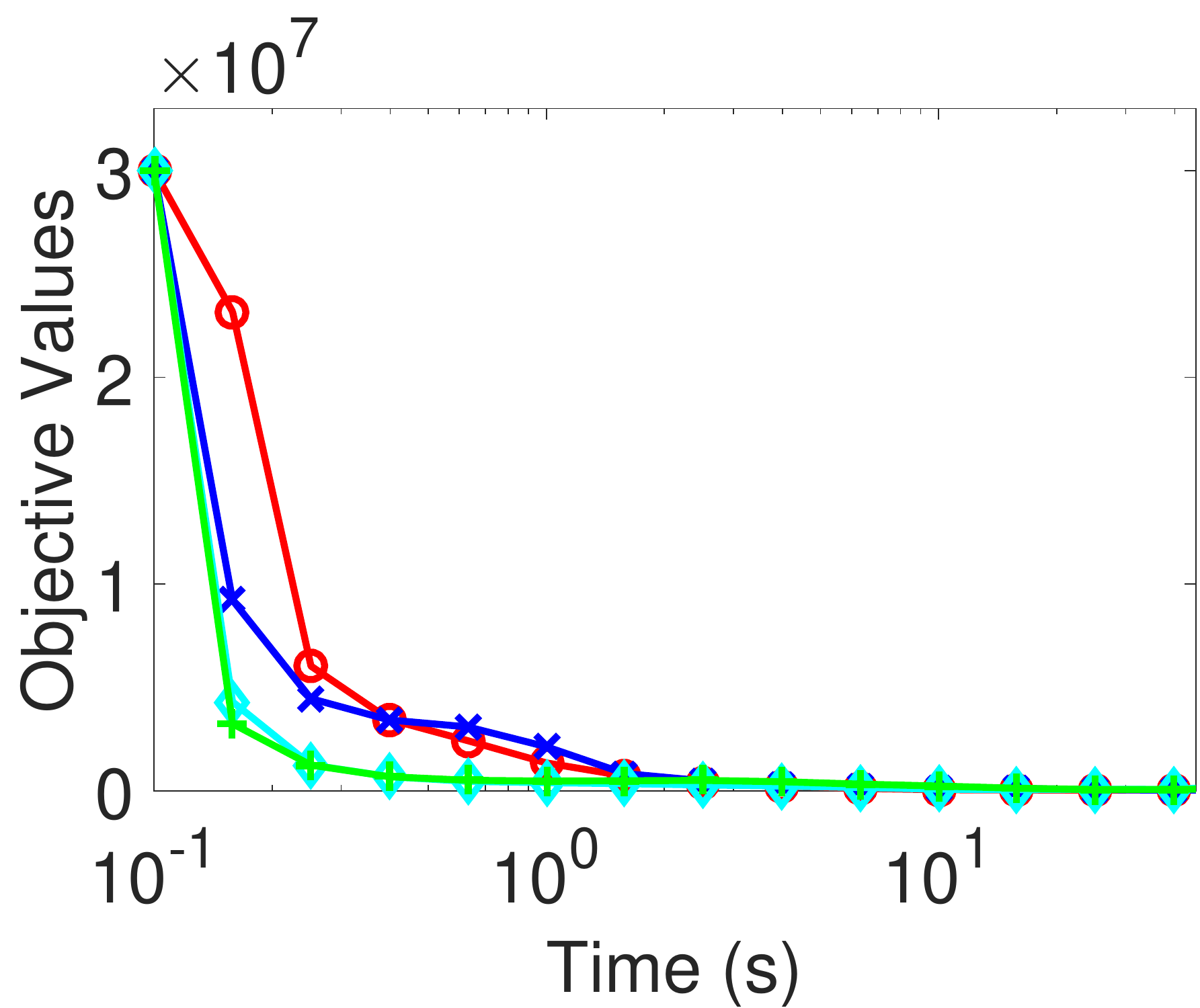}}\hfill
\subfloat[$\ell_2$]{\includegraphics[width=.475\columnwidth,height=.395\columnwidth]{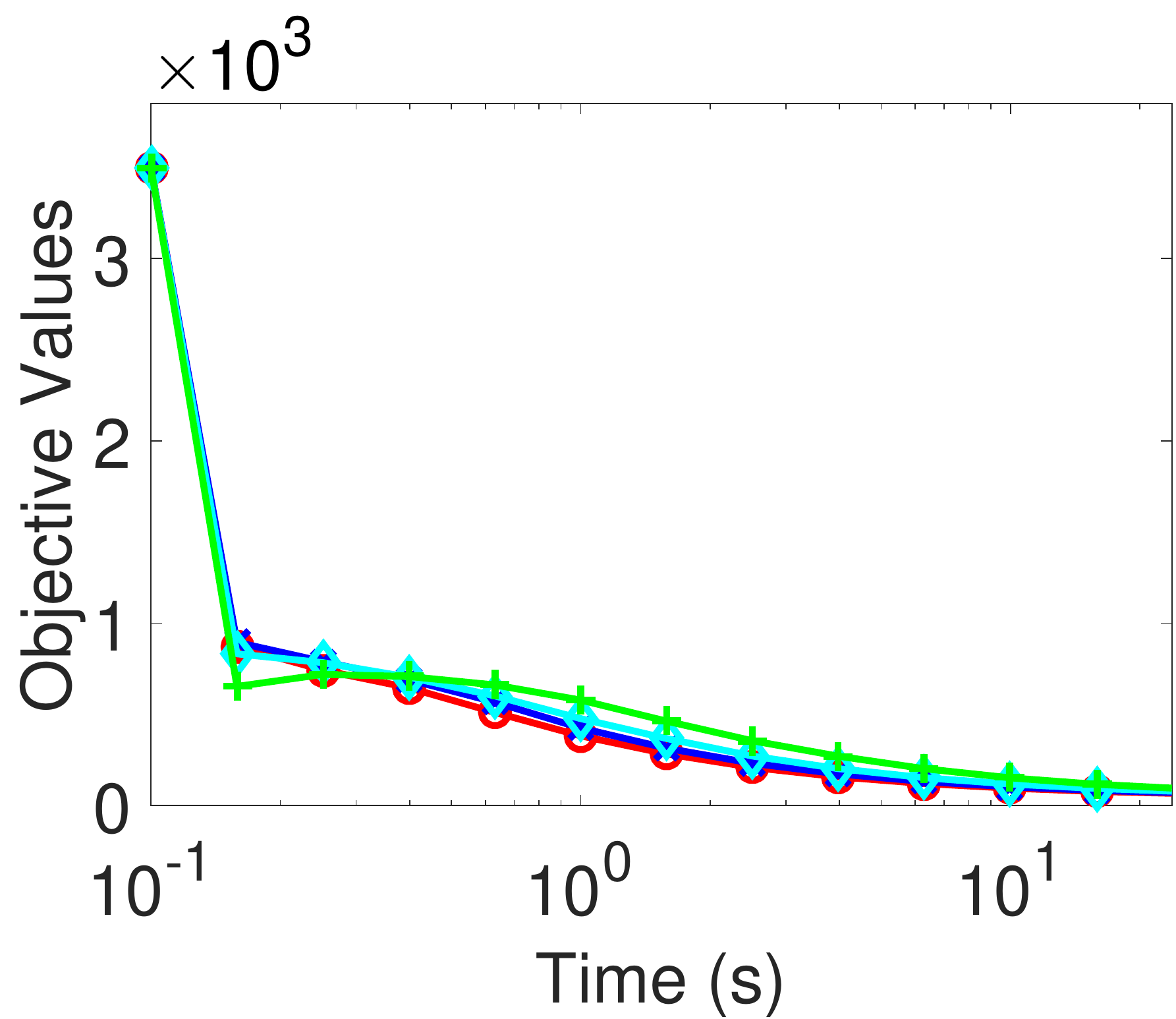}}\vspace{-.4cm}\\
\caption{Objective values versus time (in seconds) of \ol with different values of $K$ for all the divergences in $\barcalD$. $\tau$ and $a$ are in the canonical setting.}\vspace{-.5cm} \label{fig:param_K}
\end{figure}

\begin{figure}[t]
\subfloat[IS]{\includegraphics[width=.475\columnwidth]{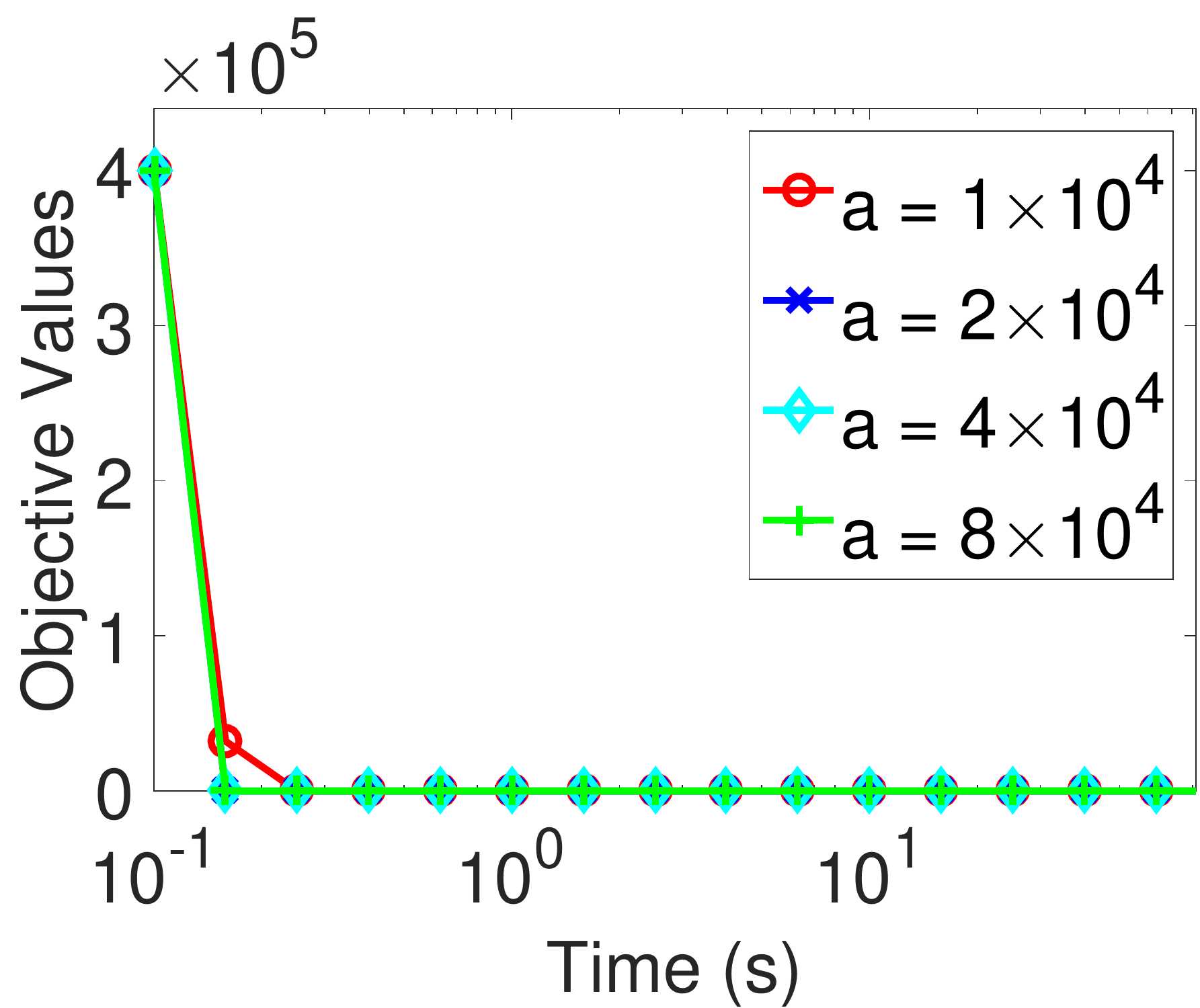}}\hfill
\subfloat[KL]{\includegraphics[width=.475\columnwidth,height=.36\columnwidth]{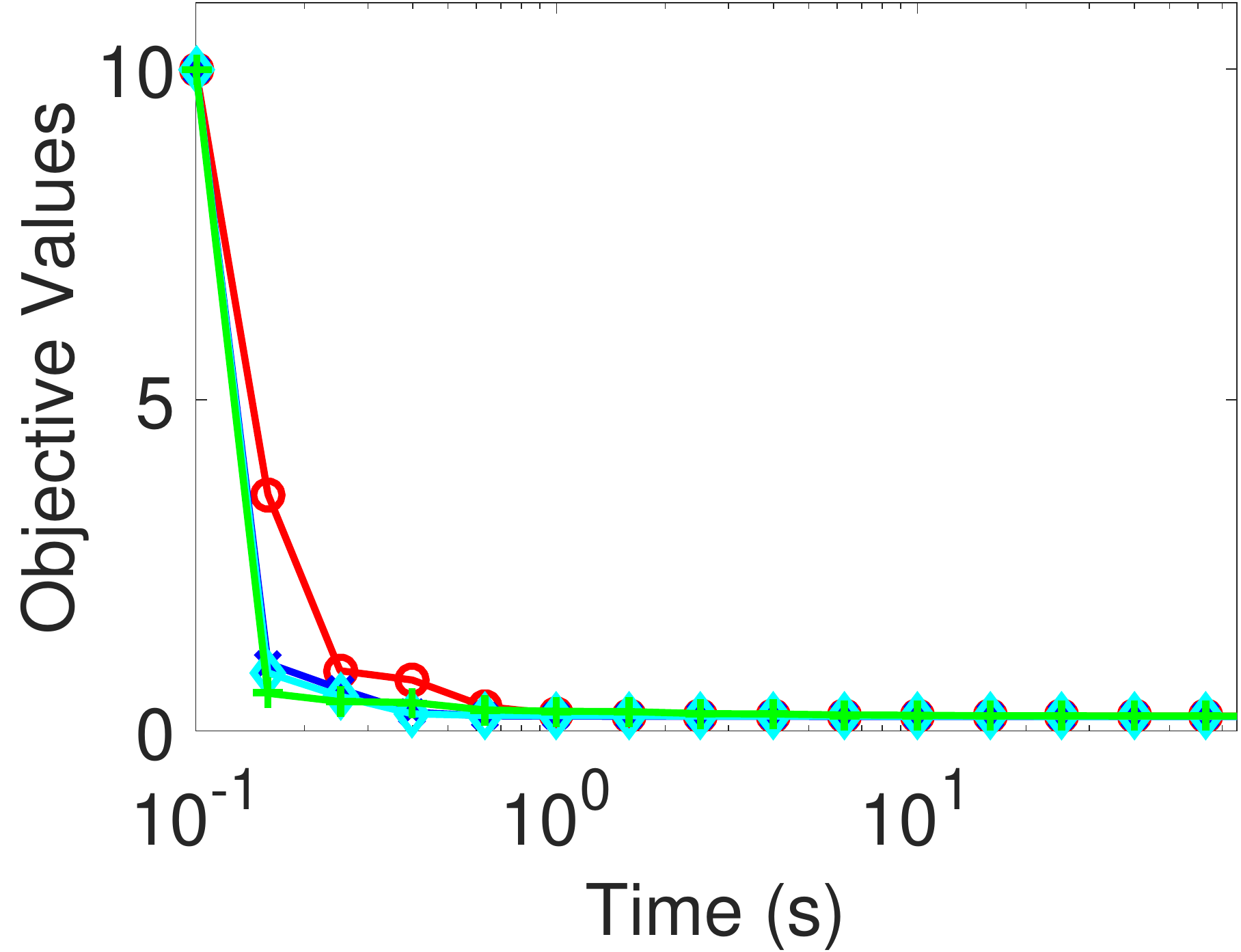}}\vspace{-.4cm}\\
\subfloat[Squared-$\ell_2$]{\includegraphics[width=.475\columnwidth]{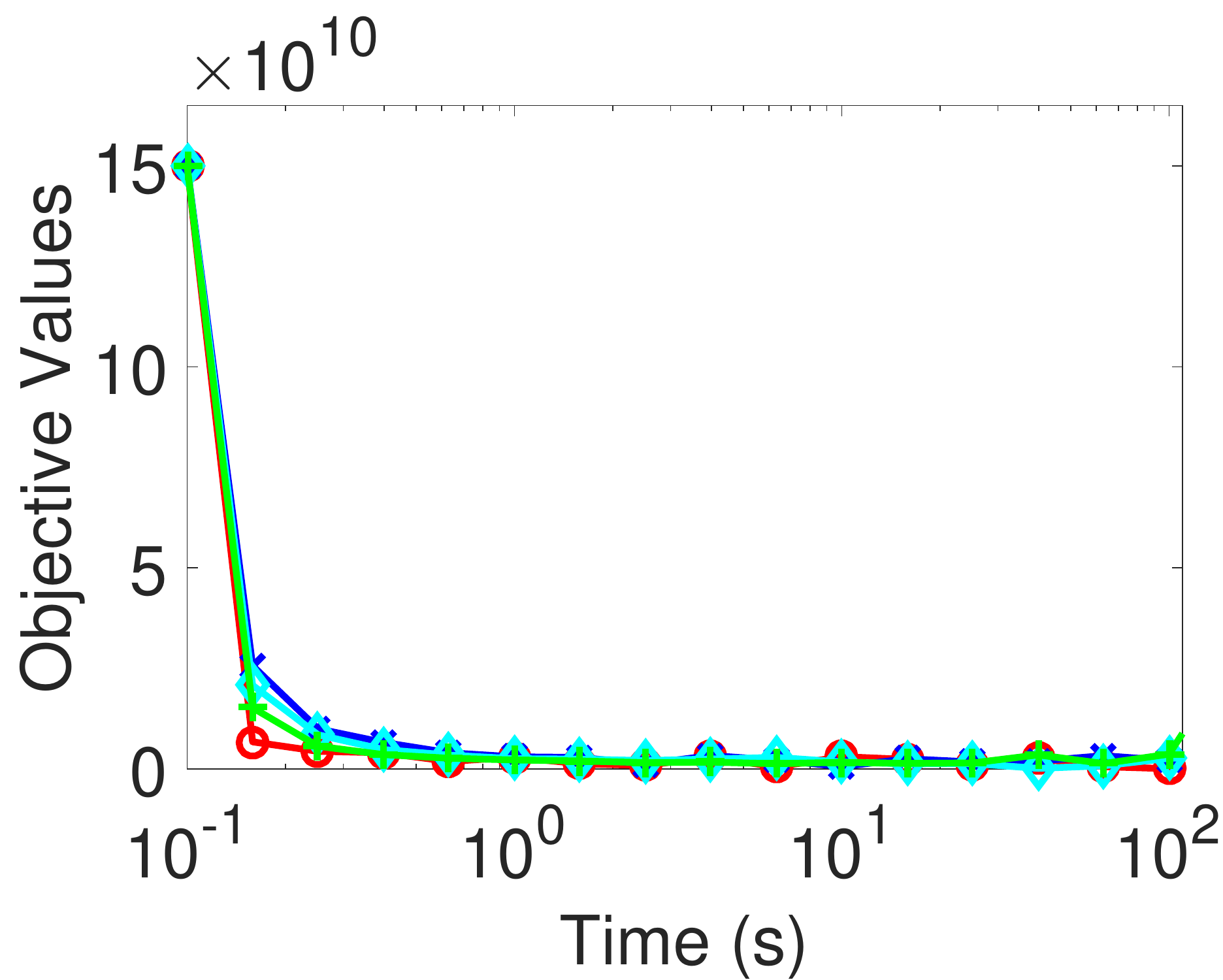}}\hfill
\subfloat[Huber]{\includegraphics[width=.475\columnwidth,height=.38\columnwidth]{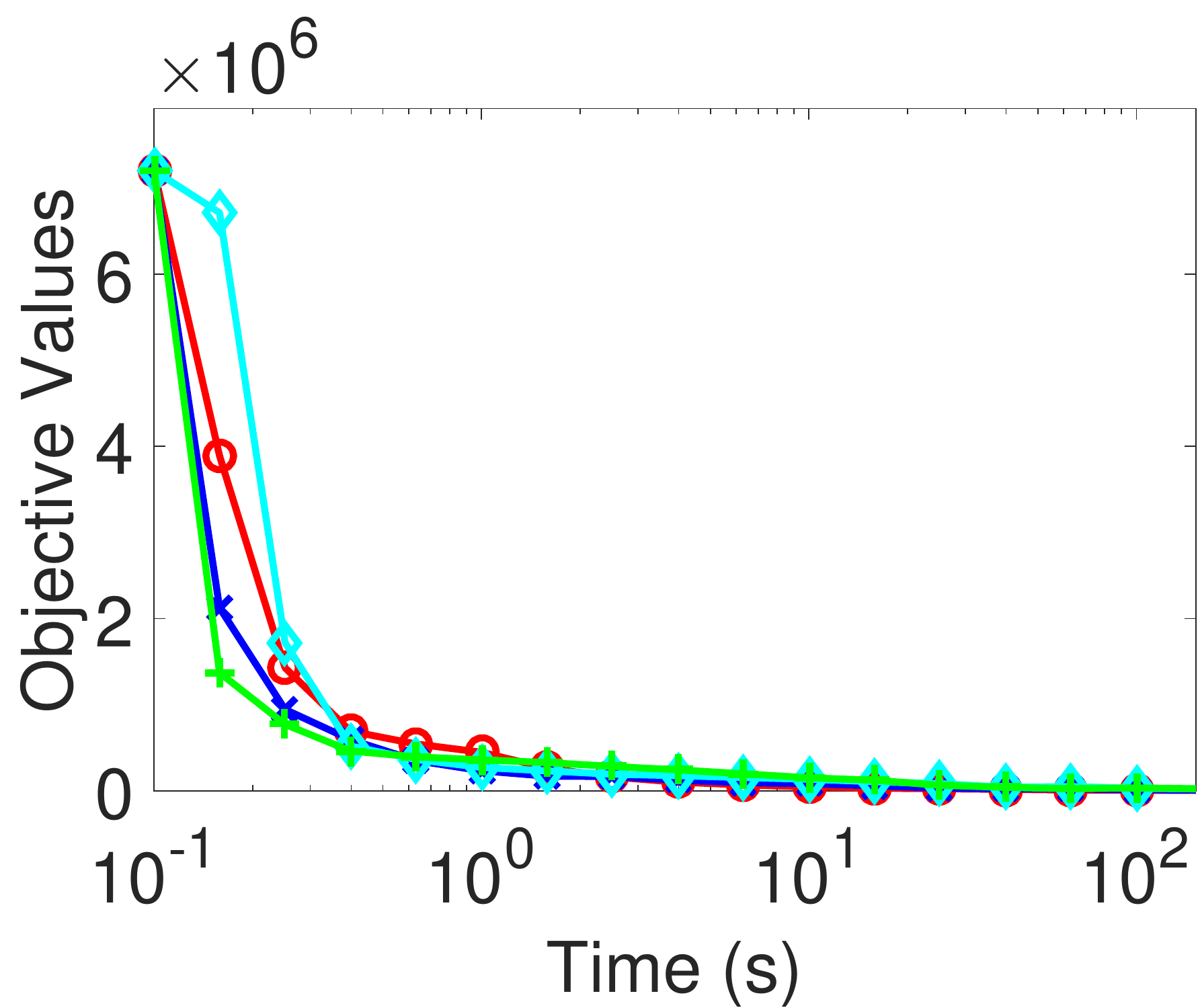}}\vspace{-.4cm}\\
\subfloat[$\ell_1$]{\includegraphics[width=.475\columnwidth]{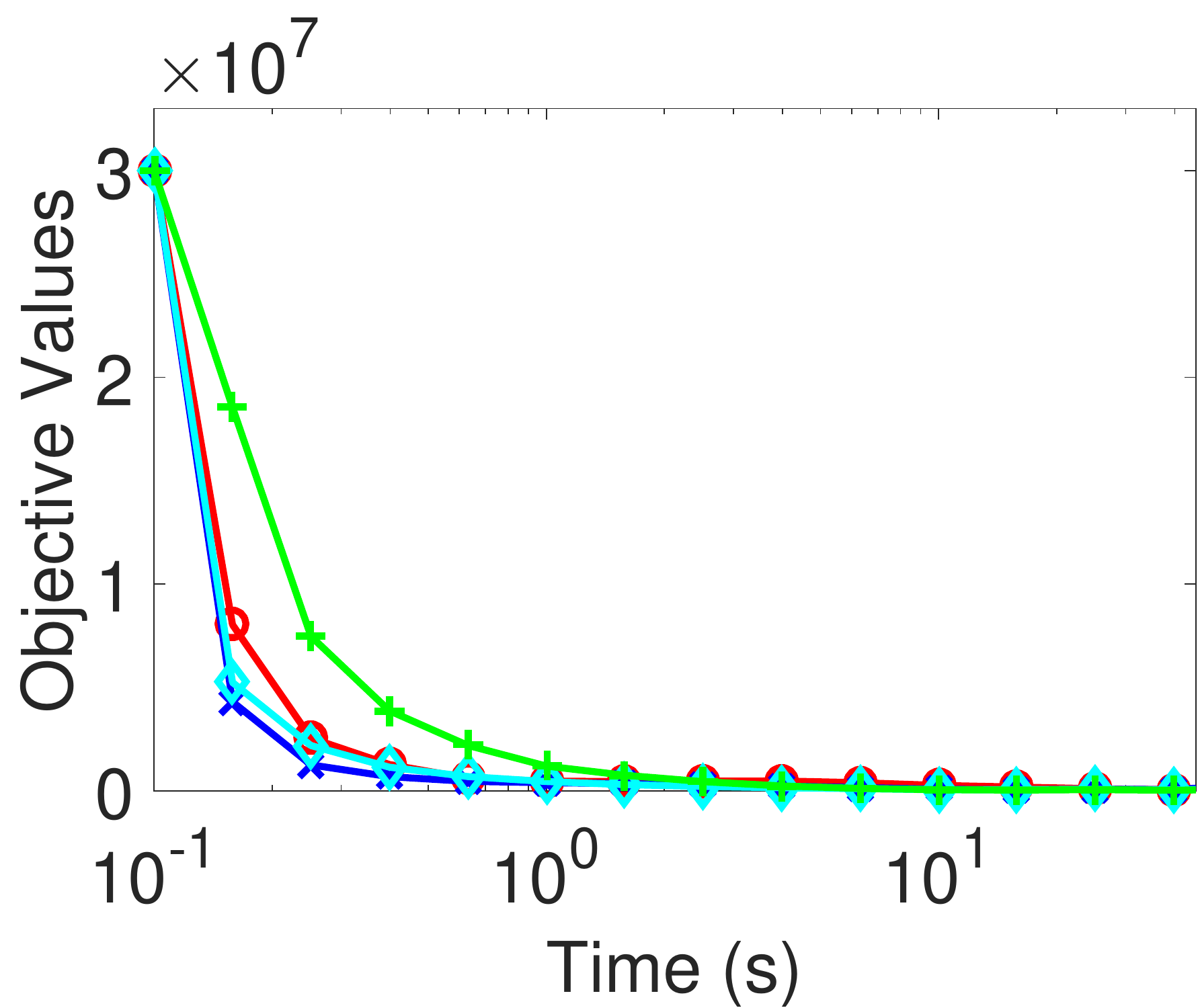}}\hfill
\subfloat[$\ell_2$]{\includegraphics[width=.475\columnwidth,height=.395\columnwidth]{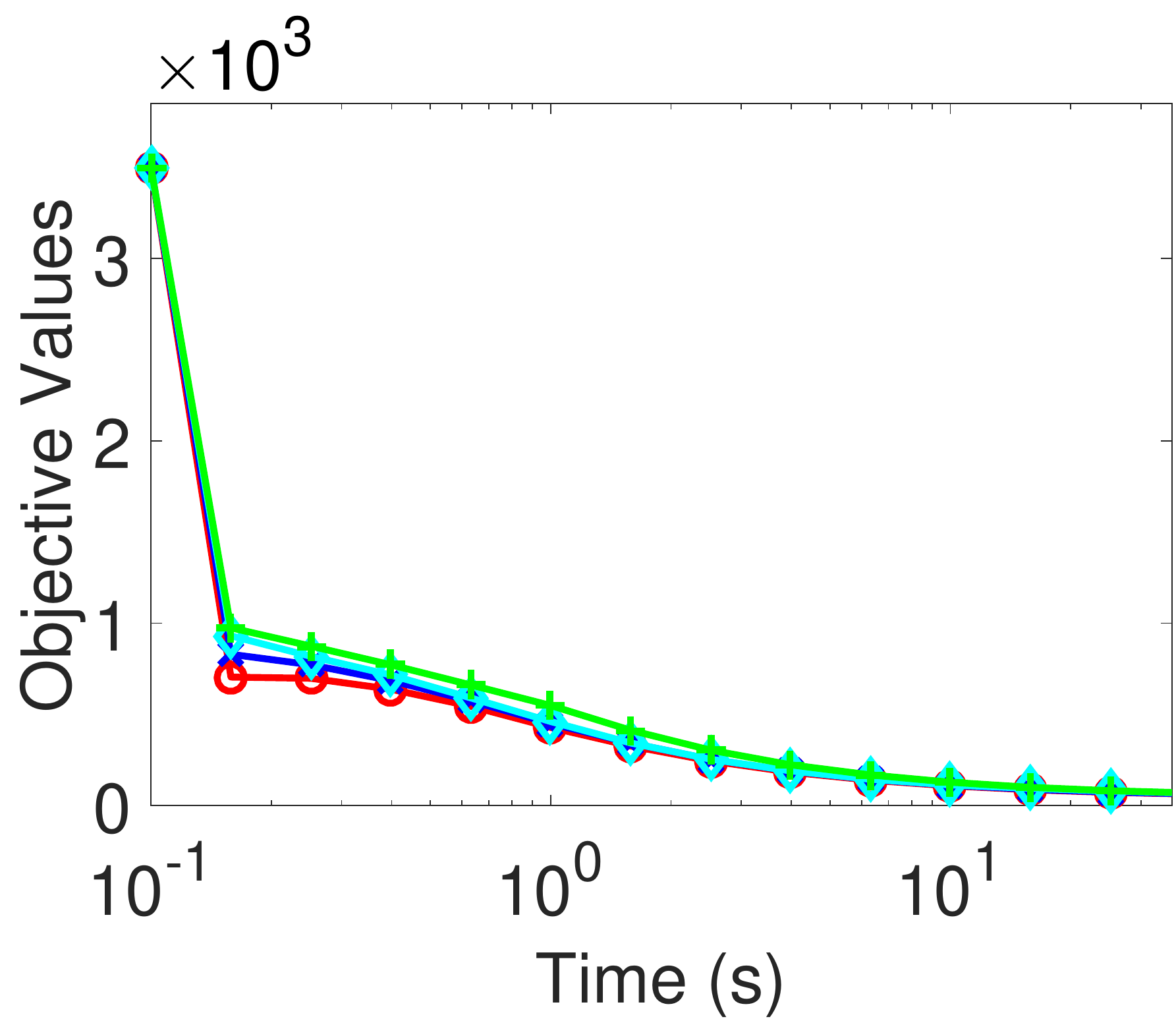}}\vspace{-.4cm}\\
\caption{Objective values versus time (in seconds) of \ol with different values of $a$ for all the divergences in $\barcalD$. $\tau$ and $K$ are in the canonical setting.}\vspace{-.5cm} \label{fig:param_a}
\end{figure}

\bibliographystyle{IEEEtran}
\bibliography{ORNMF_ref,RNMF_ref,bregman_ref,stoc_ref,math_opt,stat_ref,dataset}
\end{document}